\documentclass[10pt]{article} %
\usepackage[accepted]{tmlr}

\usepackage[utf8]{inputenc} %
\usepackage[T1]{fontenc}    %
\usepackage{hyperref}       %
\usepackage{url}            %
\usepackage{booktabs}       %
\usepackage{amsfonts}       %
\usepackage{nicefrac}       %
\usepackage{microtype}      %
\usepackage{xcolor}         %
\usepackage{soul}

\usepackage[linesnumberedhidden,ruled,vlined, algo2e]{algorithm2e}
\usepackage[normalem]{ulem}
\usepackage[table]{xcolor}
\usepackage{algorithm}
\usepackage{algpseudocode}
\usepackage{amsmath}
\usepackage{cleveref}
\usepackage[inline]{enumitem}
\usepackage{graphicx}
\usepackage{multirow}
\usepackage{pifont}
\usepackage{rotating}
\usepackage{tcolorbox}
\usepackage{upgreek}

\def\Checkmark{\ding{51}}
\def\Idd{\mathrm{I}_d}
\def\Pmeasure{\mathcal{P}}
\def\gauss{\mathrm{N}}
\def\densityGaussian{\gauss}
\def\eqsp{\;}
\def\mapT{\mathrm{T}}
\def\normcte{\mathcal{Z}}
\def\pibase{\piprior}
\def\piprior{\pi^{\text{base}}}
\def\rmd{\mathrm{d}}
\def\rset{\mathbb{R}}
\def\sigmamax{\sigma_{\text{max}}}
\def\sigmamin{\sigma_{\text{min}}}
\def\xmark{\ding{55}}
\definecolor{byzantine}{rgb}{0.74, 0.2, 0.64}
\definecolor{darkgreen}{RGB}{125, 255, 125}
\definecolor{lightgreen}{RGB}{230, 230, 230}
\definecolor{mediumgreen}{RGB}{200, 255, 200}
\newcommand{\abs}[1]{\left\lvert#1\right\rvert}
\newcommand{\norm}[1]{\left\lVert#1\right\rVert}
\newcommand{\tcmb}[1]{{\color{blue} #1}}
\newcommand{\tcmr}[1]{{\color{red} #1}}
\newcommand{\tcmv}[1]{{\color{green!60!black} #1}}
\newcommand{\tcmblight}[1]{{\color{cyan} #1}}
\newcommand{\tcmgrey}[1]{{\color{white!40!black} #1}}
\SetKwInput{Input}{Input}
\SetKwInput{Output}{Output}
\algnewcommand{\LeftComment}[1]{ \tcmb{\(\triangleright\) #1}}
\def\ie{\textit{i.e.}}

\def\fthetaalone{f^{\theta}}
\def\gthetaalone{g^{\theta}}
\def\pthetaalone{p^{\theta}}
\def\sthetaalone{\mathbf{s}^{\theta}}
\def\uthetaalone{\mathbf{u}^{\theta}}
\def\Uthetaalone{\mathbf{U}^{\theta}}
\def\Ethetaalone{\mathcal{E}^{\theta}}
\newcommand{\gthetanox}[1]{\gthetaalone_{#1}}
\newcommand{\sthetanox}[1]{\sthetaalone_{#1}}
\newcommand{\uthetanox}[1]{\uthetaalone_{#1}}
\newcommand{\Uthetanox}[1]{\Uthetaalone_{#1}}
\newcommand{\Ethetanox}[1]{\Ethetaalone_{#1}}
\newcommand{\ftheta}[1]{\fthetaalone(#1)}
\newcommand{\gtheta}[2]{\gthetanox{#1}\left(#2\right)}
\newcommand{\stheta}[2]{\sthetanox{#1}\left(#2\right)}
\newcommand{\utheta}[2]{\uthetanox{#1}\left(#2\right)}
\newcommand{\Utheta}[2]{\Uthetanox{#1}\left(#2\right)}
\newcommand{\Etheta}[2]{\Ethetanox{#1}\left(#2\right)}
\def\sfnalone{\mathbf{s}}
\def\Hfnalone{\mathbf{H}}
\newcommand{\sfnnox}[1]{\sfnalone_{#1}}
\newcommand{\Hfnnox}[1]{\Hfnalone_{#1}}
\newcommand{\sfn}[2]{\sfnnox{#1}\left(#2\right)}
\newcommand{\Hfn}[2]{\Hfnnox{#1}\left(#2\right)}

\usepackage{amsthm}
\usepackage{etoolbox} 
\usepackage{aliascnt}

\crefname{theorem}{theorem}{Theorems}
\Crefname{Theorem}{Theorem}{Theorems}
\newtheorem*{lemma_nonumber*}{Lemma}
\newaliascnt{lemma}{theorem}
\newtheorem{lemma}[lemma]{Lemma}
\aliascntresetthe{lemma}
\crefname{lemma}{lemma}{lemmas}
\Crefname{Lemma}{Lemma}{Lemmas}
\newaliascnt{corollary}{theorem}
\newtheorem{corollary}[corollary]{Corollary}
\aliascntresetthe{corollary}
\crefname{corollary}{corollary}{corollaries}
\Crefname{Corollary}{Corollary}{Corollaries}
\newaliascnt{proposition}{theorem}
\newtheorem{proposition}[proposition]{Proposition}
\aliascntresetthe{proposition}
\crefname{proposition}{proposition}{propositions}
\Crefname{Proposition}{Proposition}{Propositions}
\newtheorem{assumption}{Assumption}
\crefname{assumption}{assumption}{assumptions}
\Crefname{Assumption}{Assumption}{Assumptions}
\crefname{figure}{figure}{figures}
\Crefname{Figure}{Figure}{Figures}
\crefname{appendix}{appendix}{appendices}
\Crefname{appendix}{Appendix}{Appendices}

\title{Diffusion-based Annealed Boltzmann Generators : benefits, pitfalls and hopes}

\author{%
  \name Louis Grenioux$^{1,2}$\thanks{Both authors contributed equally.}
  \email lgrenioux@flatironinstitute.org \\
  \vspace{-0.5cm}
  \AND
  \name Maxence Noble$^{1}$\footnotemark[1]
  \email maxence.noble@gmail.com \\
  \AND
  \addr $^{1}$ CMAP, CNRS, École polytechnique, Institut Polytechnique de Paris, 91120 Palaiseau, France \\
  \addr $^{2}$ Center for Computational Mathematics, Flatiron Institute, New York, NY, USA
}

\begin{document}

\maketitle

\begin{abstract}
	Sampling configurations at thermodynamic equilibrium is a central challenge in statistical physics. Boltzmann Generators (BGs) address this problem by pairing a generative model with a Monte Carlo (MC) correction scheme, yielding asymptotically consistent samples from an unnormalized target density. However, most existing BGs rely on classic MC mechanisms such as importance sampling, which (i) impose strong constraints on the backbone model (typically requiring exact and efficient likelihood evaluation) and (ii) suffer from severe scalability issues in high-dimensional, multi-modal settings. This work investigates BGs built around annealed Monte Carlo (aMC) schemes, which mitigate the limitations of classic MC by bridging a simple reference distribution to the target through a sequence of intermediate densities. In this context, diffusion models (DMs) are particularly appealing backbones: they are powerful generative models and naturally induce density paths that have been leveraged in prior aMC-based methods. We provide an empirical meta-analysis of this DM-based aMC-BG design choice on controlled yet challenging synthetic benchmarks based on multi-modal Gaussian mixtures, varying inter-mode separation, number of modes, and dimensionality. To disentangle learning effects from inference effects, we first study an idealized setting in which the DM is perfectly learned, and then turn to realistic settings where the DM is trained from data. Even in the idealized regime, we find that standard aMC integrations of DMs that rely only on first-order stochastic denoising kernels systematically fail in the proposed scenarios. In contrast, incorporating second-order denoising kernels can substantially improve performance when the required covariance information is available. Motivated by this gap, we propose an alternative aMC integration based on deterministic first-order transport maps derived from DMs; empirically, this approach consistently outperforms its stochastic first-order counterpart, albeit at increased computational cost. Overall, while results in the perfect-learning regime suggest that exploiting DM-induced dynamics within aMC is a promising route to building effective BGs, our experiments with learned DMs show that DM–aMC combinations still struggle to produce accurate BGs in practice. We attribute this limitation primarily to inaccuracies in DM log-density estimation. Code available at \url{https://github.com/h2o64/dabg}.
\end{abstract}

\newpage

\section{Introduction}\label{sec:intro}

Sampling configurations from the Boltzmann distribution of a system $\pi(x)\propto \exp(-\mathcal{E}(x))$, where $\mathcal{E}(x)$ denotes the potential energy of configuration $x$, is a foundational and long-standing challenge. Reliable access to samples from $\pi$ underpins the estimation of many key observables which, in turn, govern macroscopic behavior. Hence, efficient Boltzmann sampling is central to a broad range of applications, from characterizing biomolecular function and accelerating drug discovery to materials design and the study of complex statistical-physics models \citep{liu2001monte,Krauth2006statistical,stoltz2010free, ohno2018computational, frenkel2023understanding}.

The core difficulty of sampling stems from the geometry of realistic energy landscapes. In many practical settings, the energy $\mathcal{E}$ is high-dimensional, non-smooth, and highly rugged, with numerous metastable basins (referred to as ``modes'') separated by high barriers. This structure severely challenges classical simulation-based approaches such as Molecular Dynamics (MD) and Markov Chain Monte Carlo (MCMC), whose generated samples follow dynamics prone to trapping in local minima, thus requiring a computationally prohibitive number of successive steps to mix across modes. The resulting samples are strongly correlated, leading to large statistical inefficiencies.

Boltzmann Generators (BGs) \citep{noe2019boltzmann} address this bottleneck by amortizing sampling cost through training a generative model $\pthetaalone$ to approximate $\pi$, followed by a correction step that turns proposals from $\pthetaalone$ into samples from the target $\pi$. Modern BGs predominantly rely on normalizing flows (NFs), either discrete (DNFs) \citep{Rezende2015variational, Papamakarios2021normalizing} or the more expressive continuous variant (CNFs) \citep{chen2018neural, grathwohl2019ffjord}, because they support efficient sampling and (in principle) tractable density evaluations. For NFs, the natural correction mechanism is to embed proposals into Monte Carlo (MC) schemes, most prominently Importance Sampling (IS) \citep{Muller2019neural, noe2019boltzmann, kohler2020equivariant, klein2023equivariant, klein2024transferable} and MCMC \citep{Albergo2019flow, Gabrie2022adaptive, DelDebbio2022machine, Brofos2022adaptation, Samsonov2022local, Cabezas2024markovian}. However, these strategies are highly sensitive to the overlap between $\pthetaalone$ and $\pi$ \citep{Agapiou2017Importance, Grenioux2023sampling}: in high dimension or for highly multi-modal targets, even small modeling errors can yield extremely poor correction capabilities. Moreover, in the CNF setting, evaluating $\pthetaalone(x)$ accurately is itself expensive, as it requires solving a neural ODE.

Recently introduced Diffusion Models (DMs) \citep{Sohl-Dickstein2015deep, Ho2020denoising, Song2021score} are generative models that have achieved state-of-the-art performance across many data modalities \citep{Kong2021diffwave, ho2022imagenvideohighdefinition, Karras204analyzing, Abramson2024alphafold}, and thus provide a natural alternative to NFs as the backbone of Boltzmann generators. We review DMs in detail in \Cref{sec:diffusion_models}. Their core principle is to learn how to remove noise from corrupted samples; training across many noise levels yields a sequential generation procedure that maps pure noise to structured data. While DMs often produce higher-fidelity samples than NFs on complex distributions, their inference mechanism does not integrate directly into classical BG pipelines, most notably because their likelihood is typically not available in a tractable form.

This work reviews and extends approaches that turn DMs into BGs by leveraging annealed Monte Carlo (aMC) methods, introduced in \Cref{sec:monte_carlo} as refinements of classical MC schemes. The key idea of aMC is to replace a hard sampling problem by a sequence of easier ones, relying on a user-defined path of intermediate densities that bridges a simple base distribution to the target $\pi$. While many such paths are possible, several recent works have shown that DMs suggest a particularly natural construction; we unify and review these strategies in \Cref{sec:existing_recalib}. Our overarching objective is to address the following question:
\begin{center}
	\textit{How can Diffusion Models yield accurate and efficient Boltzmann Generators?}
\end{center}

To explore this question, we examine two complementary experimental regimes:
\begin{enumerate}[label=\textbf{(\Alph*)}]
	\item \label{item:setting_perfect}\textbf{Idealized regime:} we assume that the DM is perfectly learned, thus isolating the statistical inference errors induced by aMC from errors due to imperfect training;
	\item \label{item:setting_practical}\textbf{Realistic regime:} the DM is trained from biased data, reflecting practical settings.
\end{enumerate}
\newpage
\vspace{-0.5cm}
Our main contributions are the following:
\begin{itemize}[wide, labelindent=0pt]
	\vspace{-0.3cm}
	\item We present a unified review of existing approaches that integrate Diffusion Models into annealed Monte Carlo to build Boltzmann Generators. These methods exploit the sequence of marginal distributions induced by the DM's denoising process as intermediate densities in aMC. In idealized regime \ref{item:setting_perfect}, we show that such DM-informed constructions consistently outperform traditional aMC designs.
	      \vspace{-0.1cm}
	\item We further analyze strategies that leverage the conditional structure of the denoising process, which is naturally available from DMs. In practice, this is achieved through Gaussian approximations of the conditional distributions between consecutive noise levels. We distinguish \emph{first-order} approximations, which match only the conditional mean, from \emph{second-order} approximations, which also incorporate covariance information. In idealized setting \ref{item:setting_perfect}, we find that first-order approximations offer no improvement over a naive, correlation-free baseline (i.e., using marginal densities alone), despite additional access to exact knowledge of conditional means, whereas second-order approximations yield substantial performance gains.
	      \vspace{-0.1cm}
	\item We propose a complementary alternative to Gaussian approximations by introducing deterministic transport maps. Importantly, these maps integrate seamlessly into the aMC framework and require only access to the previously mentioned conditional mean. In idealized regime \ref{item:setting_perfect}, this deterministic approach achieves performance comparable to second-order stochastic methods, at the cost of a small computational overhead, but without requiring covariance estimates.
	      \vspace{-0.1cm}
	\item In realistic regime \ref{item:setting_practical}, where all DM's components are learned from data, we observe a significant performance degradation across all DM-based aMC-BG methods compared to idealized regime \ref{item:setting_perfect}. Our empirical results indicate that this gap could primarily be due to inaccurate approximations of DM's densities.
\end{itemize}
\vspace{-0.1cm}
Although BGs are often benchmarked on molecular systems, we instead focus on controlled yet challenging Gaussian mixture distributions. These widely used targets enable systematic comparison under precisely controlled levels of difficulty \citep{Grenioux2025improving, Noble2024learned}, and crucially allow exact computation of the quantities required in idealized setting \ref{item:setting_perfect}.
\begin{tcolorbox}[title= Multi-modal Gaussian distributions under consideration, colframe=orange!75!black]
	We consider: \textit{(i)} the bimodal distribution of \cite{Grenioux2025improving}, denoted \textit{TwoModes}, which allows one to control both the system's dimensionality, denoted $d$, and the separation between imbalanced modes through a parameter $a > 0$ (larger $a$ implies a larger gap); and \textit{(ii)} the multi-modal target of \cite[Appendix H.1]{Noble2024learned}, denoted \textit{ManyModes}, which features a variable number of modes with non-uniform weights. Formal definitions are recalled in \Cref{app:target_details}.
\end{tcolorbox}

For each target family, we select three representative “edge-case” configurations that combine high dimensionality with strong multi-modality, and are therefore particularly challenging. For \textit{TwoModes}, we consider: close modes in high dimension ($a=1.0$, $d=128$), distant modes in low dimension ($a=10.0$, $d=16$), and an intermediate case ($a=5.0$, $d=64$). For \textit{ManyModes}, we use 4, 16, and 64 modes with dimension fixed to 32. In realistic regime \ref{item:setting_practical}, we additionally evaluate the diffusion-based BGs on instances of the ManyWell distribution \citep{noe2019boltzmann,Midgley2023flow}, which exhibit a substantially more challenging, non-Gaussian energy landscape than the preceding Gaussian mixtures. We consider dimensions 16, 32, and 64, with further details provided in \Cref{app:target_details}. To improve numerical stability and avoid target-specific hyperparameter tuning, all targets are standardized to have zero mean and unit covariance.

We evaluate sampling quality across all targets using three complementary metrics. Our primary metric is the Sliced Wasserstein Distance \citep{Bonneel2015sliced}, denoted Sliced $W_2$, a popular choice that balances statistical accuracy and computational cost, computed between weighted generated and ground-truth samples. Following \citet{Grenioux2025improving,Noble2024learned}, we also report a mode-weight estimation metric, which assesses whether generated samples populate target modes in the correct proportions (see \Cref{app:target_details}). Finally, we estimate the log-normalization constant $\log \mathcal{Z}$ (equal to $0$ in our setting) to assess annealed sampling methods. As sliced $W_2$ is the most comprehensive of the three, the main paper reports only this metric, with the others deferred to \Cref{app:bonus_exp}.

\vspace{-0.2cm}
\paragraph{Code.} All code required to reproduce the experiments and implement the algorithms presented in this paper is publicly available at \url{https://github.com/h2o64/dabg}.

\paragraph{Notation.} For any measurable space $(\mathrm{X}, \mathcal{X})$, we denote by $\Pmeasure(\mathrm{X})$ the space of probability measures defined on $(\mathrm{X}, \mathcal{X})$. Unless specified, if $\mathrm{X}$ is a topological space, then $\mathcal{X}$ is defined as the Borel $\sigma$-field of $\mathrm{X}$. For simplicity, we use the same notation to refer both to a probability distribution and its density wrt the Lebesgue measure when it is defined. In our paper, $\piprior$ denotes a simple distribution that is easy to sample from (for instance, Gaussian), and is referred to as the ``base'' distribution. We denote $\densityGaussian(\mu, \Sigma)$ with $\mu \in \rset^d$ and $\Sigma \in \mathrm{S}_d^{++}$ the multivariate Gaussian distribution with mean $\mu$ and covariance $\Sigma$.
For any Markov kernel $Q:\mathcal{B}(\rset^d)\times\rset^d \to [0,1]$, we denote its conditional density $q(y | x)=Q(\rmd y, x)/\rmd y$ for any $(x,y)\in \rset^d\times \rset^d$. Moreover, for any probability distribution $\mu\in \Pmeasure(\rset^d)$, we denote by $\mu Q\in \Pmeasure(\rset^d)$ the distribution obtained by applying the kernel $Q$ to $\mu$, defined by
$$
	\textstyle(\mu Q)(\rmd y) = \int_{\rset^d}Q(\rmd y, x) \rmd\mu(x)\eqsp.
$$
For ease of reading, we may use the same notation for $Q(y,x)$ and $q(y|x)$ throughout the paper.
For any $\mathrm{C}^1$-diffeomorphism $T:\rset^d \to \rset^d$, we denote by $J_{\mapT}(x)$ the Jacobian matrix of $\mapT$ evaluated at $x$, and by $\mapT_{\#} \mu$ the pushforward of the distribution $\mu$ by $T$. Hence, if $X \sim \mu$, then $\mapT(X) \sim \mapT_{\#} \mu$. By the change-of-variable formula, the density of $\mapT_{\#} \mu$ wrt the Lebesgue measure is given by
\begin{align}\label{eq:change_var_formula}
	\mapT_{\#} \mu(x) = \mu(\mapT^{-1}(x)) \abs{\det J_{\mapT^{-1}}(x)}\eqsp.
\end{align}

\section{Background} \label{sec:brackground}

Before detailing existing DM-based aMC-BG methods (\Cref{sec:existing_recalib}) and presenting our deterministic version (\Cref{sec:determinist}), we first review the key ingredients that underpin these approaches: diffusion models (\Cref{sec:diffusion_models}) and annealed sampling techniques (\Cref{sec:monte_carlo}). Throughout this section, for both generative and sampling frameworks, $\pi$ and $\piprior$ will respectively refer to the target and the base distributions.

\subsection{Diffusion models}\label{sec:diffusion_models}

\paragraph{Forward process.} The stochastic ``noising'' process of DMs that gradually corrupts the data with increasing Gaussian noise is described by a linear SDE of the form
\begin{align}\label{eq:sde-noising}
	\rmd X_t = f(t) X_t \rmd t + g(t) \rmd W_t, \quad X_0 \sim \pi, \quad t \in [0, T]\eqsp,
\end{align}
where $(W_t)_{t\geq 0}$ is a standard Brownian motion, and $f: [0, T] \to \mathbb{R}$ and $g: [0, T] \to (0, \infty)$ are given schedule functions. Marginally, this forward diffusion process can be explicitly defined by
\begin{align}\label{eq:interpolant}
	X_t \stackrel{\text{d}}{=} S(t) X_0 + S(t)\sigma(t) Z, \quad X_0 \sim \pi, \quad Z \sim \densityGaussian(0, \Idd)\eqsp,
\end{align}
where $S(t) = \exp(\int_0^t f(u)\rmd u)$ and $\sigma^2(t) = \int_0^t g^2(u)/ S^2(u) \rmd u$. As a result, the marginal density of $X_t$, denoted by $p_t$, is a convolution of $\pi$ with a Gaussian kernel that writes as
\begin{align}\label{eq:sde-marginal}
	\textstyle p_t(x) = \int_{\mathbb{R}^d} \densityGaussian(x; S(t) x_0, S(t)^2 \sigma^2(t) \Idd) \rmd \pi(x_0)\eqsp.
\end{align}
With an appropriate choice of schedules $f$ and $g$ (or equivalently, $S$ and $\sigma$), the forward process interpolates between $p_0 = \pi$ and $p_T = \piprior$, where $\piprior$ is a Gaussian distribution independent of $\pi$. We refer to \cite{Song2021score} and \cite{Karras2022elucidating} for a detailed presentation of commonly chosen noising schemes. In \Cref{app:sde_noising}, we detail computations related to the widely used Variance Preserving (VP) and Variance Exploding (VE) settings. In practice, the integral in \eqref{eq:sde-marginal} generally cannot be computed in closed form, rendering the marginal density $p_t$ intractable for an arbitrary target distribution $\pi$.

\paragraph{Backward process.}To generate new data, the idea is to reverse time in SDE \eqref{eq:sde-noising} so as to denoise samples from $\piprior$ into samples from $\pi$. Under mild regularity conditions on $f$, $g$, and $\pi$, it can be shown \citep{Anderson1982reverse} that the reverse-time dynamics of the noising SDE is itself governed by another SDE, commonly referred to as the reverse-time or denoising SDE
\begin{align}\label{eq:sde-denoising}
	\rmd X_t = \left[f(t) X_t - g^2(t) \nabla \log p_t(X_t)\right] \rmd t + g(t) \rmd \tilde{B}_t, \quad X_T \sim \piprior\eqsp,
\end{align}
where $(\tilde{B}_t)_{t \geq 0}$ is a reverse-time standard Brownian motion. Interestingly, the stochastic process induced by the denoising SDE has the same marginal distributions $(p_t)_{t \in [0, T]}$ as the stochastic process induced by its deterministic counterpart, called the probability flow ODE (PF-ODE) \citep{Song2021score}
\begin{align}\label{eq:pf_ode}
	\textstyle \rmd X_t = \left[f(t) X_t - \frac{g^2(t)}{2} \nabla \log p_t(X_t)\right] \rmd t, \quad X_T \sim \piprior\eqsp.
\end{align}
Thus, to obtain samples from $\pi$ at inference, one needs to either solve the SDE \eqref{eq:sde-denoising} or the ODE \eqref{eq:pf_ode} backward in time (\ie, from $t = T$ to $t = 0$), starting from noise samples drawn from $\piprior$.
Below, we detail the denoising transition kernels and transport maps associated with approximate numerical solvers for, respectively, the SDE \eqref{eq:sde-denoising} and the ODE \eqref{eq:pf_ode}.

\paragraph{Stochastic transition kernels.}
For $0\leq s < t\leq T$, the conditional distribution of $X_t$ given $X_s=x_s$ is a tractable Gaussian distribution $q_{t|s}(\cdot|x_s)$, called \textit{noising} transition kernel, that writes as
\begin{align}\label{eq:cond_noising}
	\textstyle q_{t|s}(\cdot \mid x_s) = \densityGaussian(\alpha_{t|s} \,x_s, \sigma^2_{t|s}\, \Idd) \eqsp,
\end{align}
where $\alpha_{t|s} = S(t) / S(s)$ and $\sigma^2_{t|s} = S^2(t) [\sigma^2(t) - \sigma^2(s)]$. In contrast, the conditional distribution of $X_s$ given $X_t=x_t$ induced by the denoising SDE \eqref{eq:sde-denoising}, denoted $q_{s|t}(\cdot|x_t)$ and called \textit{denoising} transition kernel, does not have a closed-form expression in general
and is usually approximated by a Gaussian distribution.

A classical way to approximate $q_{s|t}$ is to use a Gaussian distribution whose mean is given by Tweedie's formula \citep{robbins1992empirical},
\begin{align}\label{eq:tweedie_1st}
	\textstyle m_{s|t}(x_t)
	= \mathbb{E}[X_s \mid X_t = x_t]
	= \alpha_{t|s}^{-1}\left(x_t + \sigma_{t|s}^2 \nabla \log p_t(x_t)\right).
\end{align}
A widely used instance of this approach is the Denoising Diffusion Probabilistic Model (DDPM) $\beta$-scheme \citep{Ho2020denoising}, which underlies many large-scale diffusion model implementations and defines
\begin{align}\label{eq:ddpm_1st}
	\textstyle q^{\text{DDPM}}_{s|t}(\cdot \mid x_t)
	= \densityGaussian(m_{s|t}(x_t), \sigma_{t|s}^2 \Idd)\eqsp .
\end{align}
Another possibility is to construct Gaussian kernels by numerically solving the denoising SDE \eqref{eq:sde-denoising}, for example with Euler--Maruyama (EM) or Exponential Integration (EI) schemes, the latter often being more accurate than EM over large time intervals. Related computations are given in \Cref{app:sde_noising}. We call these methods \emph{first-order} because they depend only on the score.

In contrast to first-order methods, \emph{second-order} approximations of the denoising kernel $q_{s|t}(\cdot | x_t)$ also use information from the Hessian $\nabla^2 \log p_t(x_t)$, in addition to the score $\nabla \log p_t(x_t)$. A natural construction is to keep the Tweedie mean \eqref{eq:tweedie_1st}, while replacing the fixed covariance by its second-order counterpart \citep[Appendix A, Lemma 4]{Grenioux2024stochastic}. This yields the state-dependent covariance
\begin{align*}
	\textstyle\Sigma_{s|t}(x_t)
	= \operatorname{Cov}[X_s \mid X_t = x_t]
	= \alpha_{t|s}^{-2}\left(\sigma_{t|s}^2\Idd + \sigma_{t|s}^4 \nabla^2 \log p_t(x_t)\right)\eqsp ,
\end{align*}
and the Gaussian approximation

\begin{align}\label{eq:cond_denoising_2nd}
	q^{\mathrm{DDPM\text{-}2}}_{s|t}(\cdot \mid x_t)
	= \densityGaussian(m_{s|t}(x_t), \Sigma_{s|t}(x_t))\eqsp .
\end{align}

\paragraph{Deterministic transition maps.} In the case of the PF-ODE \eqref{eq:pf_ode}, $q_{s|t}$ degenerates to a Dirac mass, \ie, $X_s=\mathrm{T}_{s|t}(x_t)$ where $\mathrm{T}_{s|t}:\rset^d \to \rset^d$ is the deterministic map that solves the ODE \eqref{eq:pf_ode} backward in time on $[s,t]$. In practice, $\mathrm{T}_{s|t}$ is intractable too, but may be approximated via first-order integration methods. For instance, using the Euler scheme leads to
\begin{align}\label{eq:example_T_denoising}
	\textstyle \mathrm{T}_{s|t}^{\text{EM}}(x_t) = x_t - f(t)(t-s)x_t +\frac{g^2(t)}{2}(t-s)\nabla \log p_t(x_t) \eqsp .
\end{align}
Similarly to the stochastic setting, EI versions of such transition maps can be derived to reduce discretization error, see \Cref{app:sde_noising} for more details.

\paragraph{Training DMs.}

In practice, the score functions $\{\nabla \log p_t\}_{t\in[0,T]}$ and, for second-order methods, the corresponding Hessians $\{\nabla^2 \log p_t\}_{t\in[0,T]}$, are not available in closed form for general target distributions and must therefore be estimated. As a result, data generation relies on approximate dynamics: first, the SDE \eqref{eq:sde-denoising} or ODE \eqref{eq:pf_ode} is approximated through estimated scores (yielding an \emph{estimation error}); second, these approximate dynamics are numerically solved using the tools described above (yielding a \emph{discretization error}).

Score functions are typically learned from data via score-matching losses \citep{Hyvarinen2005estimation, Vincent2011connection, Song2021score, Bortoli2024target}. While Hessians can in principle be obtained by differentiating the learned score network, doing so is computationally prohibitive in practice. Early methods therefore relied on state-independent scalar approximations \citep{Ho2020denoising}. More recent works instead learn diagonal or full-matrix approximations (optionally state-dependent) through dedicated objectives built on top of a pre-trained score model \citep{Nichol2021improved, bao2022analyticdpm, bao22estimating}; see \cite{ou2025improving} for an overview.

Another line of research aims at rather approximating the log-densities $\{\log p_t\}_{t\in [0,T]}$ with neural networks, and then taking the derivative with respect to the input to obtain score or Hessian approximations. Various related objectives have been recently designed, either based on maximum likelihood \citep{Gao2020learning, Zhang2023persistently, Zhu2024learning,Noble2024learned}, consistency via Fokker-Planck equation \citep{Shi2024diffusion, plainer2025consistent}, consistency via Bayes's rule \citep{he2025rneplugandplaydiffusioninferencetime} or multi-label classification \citep{yadin2024classification}. In practice, the dominant strategy remains the score matching approach, which indirectly approximates the DM log-densities by training a neural network to match their gradient \citep{Song2021how, Salimans2021should, Du2023reduce, Phillips2024particle, thornton2025controlled} or their time derivative \citep{guth2025learningnormalizedimagedensities, yu2025density} : for the latter, we will refer to it as ``time'' score matching.

\begin{tcolorbox}[title= Diffusion model under consideration, colframe=orange!75!black]
	In all experiments presented below, the noising diffusion process is chosen to be the linear Variance Preserving (VP) diffusion path \citep{Song2021score} with hyperparameters $(\beta_{\min}, \beta_{\max}, T)=(0.1,20, 1)$, whose exact noising kernel \eqref{eq:cond_noising} is computed in \Cref{lemma:noising_sde_vp} (\Cref{subsec:general_VP}). When using stochastic denoising kernels, we take by default the DDPM kernel \eqref{eq:ddpm_1st} for first-order approaches (additional experiments being reported in \Cref{app:ablation_study} with the EI scheme), and the DDPM-2 scheme \eqref{eq:cond_denoising_2nd} for second-order approaches. When using noising and denoising transport maps, we consider the EI-based ODE integration schemes detailed in \Cref{lemma:noising_ode_vp,lemma:denoising_ode_vp} (\Cref{subsec:general_VP}). Moreover, we set the time discretization $\{t_k\}_{k=0}^K \subset [0, T]$ so as to be constant in log-SNR increments (see \Cref{app:time_disc} for more details). We will refer to the induced sequence of densities $\{p_{t_{k}}\}_{k=0}^{K-1}$ (also denoted $\{p_{{k}}\}_{k=0}^{K-1}$), marginally defined by \eqref{eq:sde-marginal}, as the ``diffusion'' path.
\end{tcolorbox}

\subsection{Standard Monte Carlo \& Annealed sampling} \label{sec:monte_carlo}

This section presents the Monte Carlo tools that are central to all BG methods presented below. We recall that the original purpose of these methods is to generate samples from $\pi$, with only access to its energy function $\mathcal{E}$ up to an additive constant. We begin by reviewing classic techniques, which serve as foundation for the aMC methods introduced afterwards.

\paragraph{Importance Sampling.}
Importance Sampling (IS) is a fundamental Monte Carlo method that approximates expectations taken under $\pi$ using samples drawn from a proposal distribution $\rho$ whose density is tractable. Assuming that $\operatorname{Supp}(\rho)\subset\operatorname{Supp}(\pi)$, any $\pi$-integrable function $\phi$ satisfies
\begin{align*}
	\mathbb{E}_\pi[\phi(X)] = \mathbb{E}_\rho\left[w(X) \phi(X)\right], \quad \text{where } w(x) = \pi(x)/\rho(x) \text{  is the importance weight}.
\end{align*}
In practice, this means that sampling from $\pi$ via IS reduces to (i) sample $N$ particles $\{x^i\}_{i=1}^N$ from $\rho$ and (ii) reweight them using the importance weights $\{w(x^i)\}_{i=1}^N$ \footnote{When the density $\pi$ is only known up to a normalizing constant, as it is often the case in practice, one turns to the \emph{self-normalized} weights $\bar{w}(x^i)=w(x^i)/\sum_{j=1}^N w(x^j)$, which however leads to a biased estimator.}. Although IS is simple to implement, its accuracy critically depends on how well $\rho$ matches $\pi$. In particular, the variance of the importance weights can grow rapidly, potentially exponentially with the dimension, when the mismatch is large \citep{Agapiou2017Importance}.

\paragraph{Markov Chain Monte Carlo.}

Markov Chain Monte Carlo (MCMC) methods are designed to simulate a Markov chain whose stationary distribution is $\pi$, hence generating asymptotically accurate samples.

MCMC methods typically construct their transition mechanism using a proposal distribution $q(y | x)$, which suggests a new state $y$ from the current state $x$. The Metropolis-Hastings (MH) algorithm then corrects this proposal via an acceptance-rejection step to ensure that the chain targets the desired distribution $\pi$. Specifically, given $x$, the proposed $y \sim q(\cdot | x)$ is accepted with probability
\begin{align}\label{eq:rate_MH}
	\textstyle\alpha(x, y) = \min\left(1, \frac{q(x | y) \pi(y)}{q(y | x) \pi(x)}\right)= \min\left(1, \frac{q(x | y) \exp(-\mathcal{E}(y))}{q(y | x) \exp(-\mathcal{E}(x))}\right)\eqsp,
\end{align}
otherwise the new state is set as $x$.
Note that the MH algorithm can be extended to the deterministic case, when $q(\cdot|x)= \delta_{\mathrm{T}(x)}$ for a diffeomorphism $\mathrm{T}: \rset^d\to \rset^d$ that is required to be involutive, \ie, $\mathrm{T} \circ \mathrm{T} = \Idd$. In this case, the acceptance probability only depends on the previous state $x$ and writes
\begin{align}\label{eq:rate_MH_deterministic}
	\textstyle\alpha(x) = \min\left(1, \frac{\mapT_{\#} \pi(x)}{\pi(x)}\right) = \min\left(1, \frac{\exp(-\mathcal{E}(\mathrm{T}(x)) \abs{\det J_{T}(x)}}{\exp(-\mathcal{E}(x))}\right)\eqsp.
\end{align}
This deterministic formulation encompasses the popular Hamiltonian Monte Carlo (HMC) algorithm \citep{Neal2012mcmc}. %
As with IS, the performance of such MH-based samplers hinges on the quality of the proposal. For instance, independent proposals scale poorly with dimension \citep{Grenioux2023sampling}, and multi-modal targets pose additional challenges, as proposals must efficiently explore both within and across the modes. Modern MH variants  \citep{Metropolis1953equation, Duane1987hybrid}, including the Metropolis-Adjusted Langevin Algorithm (MALA) \citep{Roberts1996exponential}, leverage gradient information to improve local mixing but still struggle with global exploration.

While IS and MCMC are fundamental sampling tools, they often fail in high-dimensional or multi-modal settings.
\textit{Annealed} sampling specifically addresses this limitation by breaking the original sampling problem into $K$ sampling problems with gradual complexity, by introducing a sequence of distributions $\{p_k\}_{k=0}^K$ that smoothly bridge between a simple base distribution $p_K = \piprior$
and the target $p_0 = \pi$. We consider such sequence in the rest of this section. By leveraging correlations across this sequence, it is possible to gradually transform samples from $\piprior$ into samples from $\pi$ while avoiding the pitfalls of standard MC methods.

\paragraph{Annealed Importance Sampling.}
Annealed Importance Sampling (AIS) \citep{Neal2001annealed} extends classic IS by defining a joint target distribution $\pi_{0:K}$ over a sequence of variables $(x_0, \ldots, x_K)$ such that its $0$-th marginal is the target distribution $\pi$. Similarly, a joint proposal distribution $\rho_{0:K}$ is built such that its $K$-th marginal is the base distribution $\pi_{\text{prior}}$. Both of these joint distributions are designed recursively as follows
\begin{align}\label{eq:ais_extended}
	\textstyle \pi_{0:K}(x_{0:K}) = \pi(x_0)\prod_{k=0}^{K-1} q_{k+1|k}(x_{k+1}|x_k), \quad \rho_{0:K}(x_{0:K}) = \piprior(x_K)\prod_{k=0}^{K-1} q_{k|k+1}(x_k|x_{k+1})\eqsp,
\end{align}
where $q_{k+1|k}$ and $q_{k|k+1}$ respectively denote \textit{forward} and \textit{backward} Markov transition kernels.
In this case, the importance weights are defined by
\begin{align}\label{eq:w_ais_general}
	w^{\text{AIS}}(x_{0:K})=\frac{\pi_{0:K}(x_{0:K})}{\rho_{0:K}(x_{0:K})} \eqsp
\end{align}
Analogously to IS, sampling from $\pi$ reduces to (i) sample $N$ trajectories of particles $\{x^i_{0:K}\}_{i=1}^N$ from $\rho_{0:K}$ and (ii) reweight the particles $\{x^i_{0}\}_{i=1}^N$ with the importance weights $w^{\text{AIS}}(x_{0:K})$ \footnote{In practice, these weights are also self-normalized as in classic IS.}. However, while easier to achieve than classic IS, the efficiency of AIS also depends on how closely $\rho_{0:K}$ matches $\pi_{0:K}$.
In particular, if there exists a sequence of bridging distributions $\{p_k\}_{k=0}^K$ (\ie, such that $p_0=\pi$ and $p_K=\piprior$) for which the forward and backward kernels satisfy the Bayes rule defined as
\begin{align}\label{eq:bayes_condition}
	p_k(x_k) q_{k+1|k}(x_{k+1}|x_k) = p_{k+1}(x_{k+1}) q_{k|k+1}(x_k|x_{k+1}), \quad \forall k \in \{0, \ldots, K-1\}\eqsp,
\end{align}
then it holds exactly that $\pi_{0:K} = \rho_{0:K}$, \ie, the estimator has minimal variance.

In standard AIS \citep{Neal2001annealed}, the forward and backward kernels are typically chosen to be identical reversible MCMC kernels with respect to a given density path $\{p_k\}_{k=0}^K$ interpolating $\pi$ to $\piprior$, which simplifies the importance weights given in \eqref{eq:w_ais_general} but violates the Bayes consistency condition \eqref{eq:bayes_condition}. %

\paragraph{Sequential Monte Carlo.}
Sequential Monte Carlo (SMC) methods \citep{doucet2001sequential, DelMoral2006sequential} address a major limitation of AIS, namely weight degeneracy, where importance weights progressively concentrate on a few particles—an effect that is particularly severe in high-dimensional settings. While SMC relies on the same forward and backward kernels as AIS, it introduces intermediate resampling steps that effectively decompose a single long AIS trajectory from $p_K$ to $p_0$ into two consecutive AIS procedures. Concretely, an initial AIS run propagates particles from $p_K$ to an intermediate distribution $p_k$ for some $k \in \{1,\dots,K-1\}$; particles are then resampled according to their importance weights to obtain a population representative of $p_k$. A second AIS run, initialized from these resampled particles, subsequently propagates the system from $p_k$ to $p_0$. This mid-trajectory realignment prevents particle collapse, maintains diversity, and significantly reduces weight degeneracy. The construction naturally extends to multiple resampling points by partitioning the path between $\piprior$ and $\pi$ into shorter AIS segments, which substantially reduces the variance of the AIS estimator without increasing the cost of importance-weight evaluations. In practice, SMC methods are often further augmented with MCMC rejuvenation steps at each stage to better align particles with the intermediate distributions, at the expense of additional computational cost.

\paragraph{Replica Exchange.}
Replica Exchange (RE) \citep{Swendsen1986replica, Geyer1991markov, Hukushima1996exchange} is an annealed sampling method that predates AIS and SMC. Unlike these sequential methods, RE correlates the distributions $\{p_k\}_{k=0}^K$ in parallel, rather than through a recursion. The goal is to construct a MCMC algorithm targeting the extended distribution $\bar{\pi}_{0:K}(x_{0:K}) = p_0(x_0) p_1(x_1) \ldots p_K(x_K)$. Its transition kernel is composed of two parts: (i) an exploration kernel that independently applies standard MCMC updates to each $p_k$ in parallel, and (ii) a communication kernel that correlates the different marginals. A basic communication move consists of a deterministic ``swap'' between two consecutive levels $k$ and $k+1$, mapping $(x_0, \ldots, x_k, x_{k+1}, \ldots, x_K)$ to $(x_0, \ldots, x_{k+1}, x_k, \ldots, x_K)$. Since this mapping is involutive, it can be used within the Metropolis–Hastings correction to ensure that the joint distribution $\bar{\pi}_{0:K}$ is stationary. The corresponding acceptance rate obtained from \eqref{eq:rate_MH_deterministic} is given by
\begin{align}\label{eq:w_re}
	\alpha_{k}^{\text{RE}}(x_{0:K}) = \min\left(1, \frac{p_{k+1}(x_k)p_k(x_{k+1})}{p_{k}(x_k)p_{k+1}(x_{k+1})}\right)\eqsp.
\end{align}
By applying these MH-calibrated swaps in parallel between even or odd pairs of indices in $\{0, \ldots, K\}$, one defines the even and odd communication kernels, respectively. These are commonly combined using a uniform mixture to build the full communication kernel. However, recent work suggests that deterministically alternating between even and odd kernels is more effective \citep{Okabe2001replica, Syed2021nonreversible}. We adopt this so-called \textit{non-reversible} strategy in the rest of the paper.

\paragraph{Standard designs of interpolation density paths.}

A central component of all aMC methods is the design of the interpolation density path.
This path is critical to ensure good performance: in AIS and SMC, it governs the overlap between consecutive distributions, which directly affects the variance of the estimators;
in RE, the consecutive overlap controls the probability of accepting swap moves between adjacent levels. When only the unnormalized density of $\pi$ is available, a common choice is the geometric interpolation path \citep{Neal2001annealed, gelman1998importancesamplingext}, defined for all $x \in \rset^d$ by
\begin{align}\label{eq:tempering}
	p_k(x) \propto \pi(x)^{\beta_k} \piprior(x)^{1 - \beta_k},
\end{align}
where the annealing schedule $\{\beta_k\}_{k=0}^K$ is decreasing, and satisfies $(\beta_0, \beta_K) = (1,0)$. We will refer to the collection of unnormalized densities obtained via \eqref{eq:tempering} as the ``tempering'' path.
The major benefit of these paths is their computational efficiency, as they allow for simple evaluations of the scores $\{\nabla \log p_k\}_{k=0}^K$, which are frequently required in MCMC transition kernels, via a linear combination of $\nabla \log \pi$ and $\nabla \log \piprior$. %

However, these paths are usually pathological for multi-modal targets, as they suffer from mass teleportation (also referred to as mode switching), which reflects sudden shifts in probability mass between modes along the interpolation path \citep{woodard2009sufficient, Mate2023learning}. In practice, such sudden shifts undermine the assumed proximity between bridging densities, leading to instability in aMC. Mitigating this issue usually requires either carefully tuning the annealing schedule $\{\beta_k\}_{k=0}^K$ for each target
or using a large number of intermediate levels $K$, which can incur significant computational cost.

The question of how to optimize the annealing schedule has been studied by \cite{Syed2021parallel,Syed2021nonreversible,syed2025optimisedannealedsequentialmonte}, who introduce the \emph{global barrier} $\Lambda$ quantifying the intrinsic difficulty of sampling along a density path, and propose to spread this difficulty by approximating the inverse $\Lambda$ function. For a fixed number of levels $K$, this yields a schedule with constant barrier increments, reducing the variance of the log normalizing constant estimate in AIS/SMC and improving the number of round-trip in RE. In practice, they rely on a progressive sampling phase to estimate the barrier $\Lambda$ and deduce the corresponding schedule, which following their terminology we call the $\Lambda$-optimal schedule. We use it systematically in all tempering-path experiments, giving tempering-based methods their most favorable setting.

\section{Diffusion-based aMC as a Boltzmann Generator backbone : benefits and pitfalls}
\label{sec:existing_recalib}

Diffusion models are a natural fit for aMC schemes, as they inherently define a sequence of intermediate densities that can be leveraged in sampling algorithms such as AIS, SMC, or RE. In \Cref{subsec:tempering_vs_diff}, we show that even a naive integration, simply using the sequence of DM densities as a direct replacement for the classic tempering sequence, can already deliver strong performance, thanks to the favorable properties of the Gaussian convolution paths induced by DMs.
In \Cref{subsec:review_methods}, we review related methodologies, that additionally propose to ``enhance'' standard aMC tools using DM stochastic transition kernels. However, we demonstrate in \Cref{subsec:pitfalls_methods} that those designs are fundamentally limited in challenging multi-modal scenarios.

We emphasize that, although the presented methods involve different hyperparameters, we focus our numerical evaluation solely on the effect of the number of annealing levels (defined as $K\in \{32,64,128,256\}$),
common to all methods,
because it directly controls the overlap between consecutive distributions along the annealing path, a factor highlighted as crucial to the performance of aMC.

\subsection{Of the interest of diffusion-based density paths}
\label{subsec:tempering_vs_diff}

\begin{figure}[t]
	\centering
	\includegraphics[width=\linewidth]{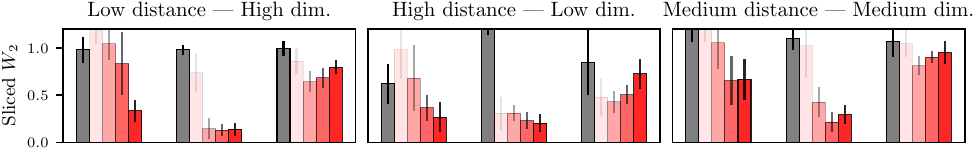}
	\vfill
	\vspace{0.1cm}
	\includegraphics[width=\linewidth]{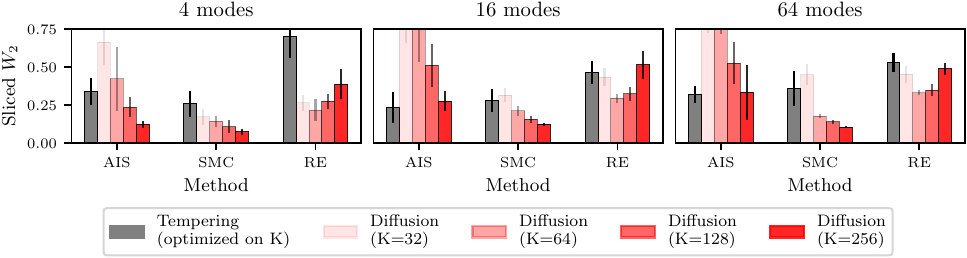}
	\vspace{-0.7cm}
	\caption{\textbf{Sampling results for classic annealed samplers with diffusion (\tcmr{red}) and tempering (\tcmgrey{grey}) paths}, when targeting \textit{TwoModes} \textbf{(Top)} and \textit{ManyModes} \textbf{(Bottom)} in idealized setting \ref{item:setting_perfect}. For tempering paths, we display the best-performing result among all values of $K$. For diffusion paths, we display the results for $K\in \{32,64,128,256\}$ : the darker the bar, the higher $K$. In particular, these configurations do not share the same computational budget. Each result is averaged over 8 runs with 8,192 samples per run.}
	\label{fig:temp_vs_noising}
	\vspace{-0.4cm}
\end{figure}

As noted by \cite{Mate2023learning},
diffusion paths are typically well conditioned and avoid common pitfalls of tempering paths, such as abrupt mode switching. In particular, they preserve the relative mass of the modes throughout the annealing process, leading to more stable sampling dynamics. This explains why diffusion paths consistently outperform tempering paths in aMC, as shown in idealized setting \ref{item:setting_perfect} by \Cref{fig:temp_vs_noising}.

Across all standard aMC methods considered in \Cref{sec:monte_carlo} and across all targets, the perfectly learned diffusion path outperforms the optimally tuned tempering path \eqref{eq:tempering} \emph{for a large range of annealing levels $K$.}\footnote{The only exception is AIS on the \emph{ManyModes} target with 16 and 64 modes, where diffusion achieves performance comparable to tempering only for large values of $K$.}
For AIS and SMC, performance generally improves with increasing $K$, with best results attained at the largest value tested ($K=256$). For RE-based samplers, the dependence on $K$ is less monotonic: while larger $K$ improves local overlap and facilitates swaps, it can also hinder long-range communication between levels, leading to degraded performance beyond a certain point. These empirical conclusions are further supported by the log-normalization estimates reported in \Cref{app:bonus_exp}, which show an even clearer and larger gap between tempering and diffusion-based aMC samplers, with the same dependence on $K$. Overall, our experiments highlight the strength of diffusion over tempering paths, motivating their use when a learned DM is available.

\subsection{Review of existing diffusion-based aMC-BGs}
\label{subsec:review_methods}

Interestingly, DMs provide more than a sequence of intermediate densities: they also grant access to noising and denoising stochastic transition kernels (see, e.g., \eqref{eq:cond_noising} and \eqref{eq:ddpm_1st}, \eqref{eq:cond_denoising_2nd}), which can be strategically exploited to improve both efficiency and robustness. In this section, we review existing extensions of aMC that leverage this additional structure. These approaches assume access to a DM defined on a discrete time grid $\{t_k\}_{k=0}^K \subset [0,T]$, enabling the additional evaluation of the associated noising kernels $\{q_{k+1\mid k}\}_{k=0}^{K-1}$ and denoising kernels $\{q_{k\mid k+1}\}_{k=0}^{K-1}$.

\paragraph{Diffusion-based AIS.} DMs have been successfully integrated into AIS frameworks in recent work \citep{Zhang2024efficient, Zhang2025efficient}.
The core idea consists in using the exact noising transition kernels \eqref{eq:cond_noising} as forward kernels, and \emph{first-order} denoising transition kernels\footnote{Although \cite{Zhang2025efficient} propose to adjust the covariance of the denoising kernels via additional learning, we still consider this approach as 'first-order' as it does not rely on the Hessian functions $\{\nabla^2 \log p_k\}_{k=0}^K$.}, similar to \eqref{eq:ddpm_1st}, as backward kernels, to respectively define the extended target and proposal distributions, see \eqref{eq:ais_extended}. By doing so, only the score functions are needed, not the log-densities.
A key advantage of this approach is that, when the backward kernels match the exact denoising kernels, the forward and backward transitions satisfy the optimal Bayes condition \eqref{eq:bayes_condition}, which ensures that the importance weights exhibit minimal variance.

\paragraph{Diffusion-based SMC.} The exact same
use of DM transition kernels
has recently been extended to the SMC setting through the \emph{Particle Denoising Diffusion Sampler} (PDDS) \citep{Phillips2024particle}, with the EI kernel considered in their numerical experiments.
In contrast to AIS, however, the SMC formulation additionally requires
the intermediate log-densities, up to normalizing constants, in order to perform resampling.

\paragraph{Diffusion-based RE.} In the spirit of PDDS, \cite{zhang2025acceleratedparalleltemperingneural} lately explored the use of
DM transition kernels within the RE framework to propose the \emph{Diffusion-based Accelerated Parallel Tempering} (Diff-APT) sampler. In Diff-APT, the traditional RE swaps between adjacent levels are combined with stochastic refinements inherited from those kernels. Given current states $x_k$ and $x_{k+1}$ at levels $k$ and $k+1$, Diff-APT first samples proposal states $y_{k+1}\sim q_{k+1|k}(\cdot| x_k)$ and $y_k \sim q_{k|k+1}(\cdot | x_{k+1})$, where $q_{k+1|k}$ and $q_{k|k+1}$ respectively denote the exact noising (forward) kernel, see \eqref{eq:cond_noising}, and a \emph{first-order} denoising (backward) kernel, taken as the EM scheme in their experiment, between times $t_k$ and $t_{k+1}$. By exploiting the underlying correlation between noise levels, each chain is moved closer to its corresponding target distribution, respectively $p_{k+1}$ and $p_k$. Then, this stochastic-based swap is calibrated using the MH correction, resulting in the following acceptance probability
\begin{align}\label{eq:re_sde_rate}
	\alpha_k^{\text{RE}}(x_{0:K}, y_{0:K}) = \min\left(1, \frac{p_k(y_k) p_{k+1}(y_{k+1})  q_{k+1|k}(x_{k+1}|y_k)q_{k|k+1}(x_k|y_{k+1})}{p_k(x_k) p_{k+1}(x_{k+1}) q_{k+1|k}(y_{k+1}|x_k) q_{k|k+1}(y_k|x_{k+1})}\right)\eqsp,
\end{align}
defined for any $(x_{0:K}, y_{0,K}) \in \rset^{(K+1)d}\times \rset^{(K+1)d}$. Compared to the standard RE acceptance ratio \eqref{eq:w_re}, this novel expression features four additional terms, which correspond to symmetric evaluations of forward and backward kernels. As in AIS and SMC, if the forward and backward kernels satisfy the Bayes condition \eqref{eq:bayes_condition} %
the proposed swap is systematically accepted, \ie, the acceptance probability \eqref{eq:re_sde_rate} always equals one.

\subsection{First-order approaches fail to bring informative transition information between annealing levels}
\label{subsec:pitfalls_methods}

Although theoretically well motivated, the existing DM-based aMC-BGs reviewed in \Cref{subsec:review_methods} do not yield noticeable improvements over the standard baseline studied in \Cref{subsec:tempering_vs_diff}, in idealized setting \ref{item:setting_perfect} where both log-densities and score functions are assumed to be perfectly known.

In \Cref{fig:diff_two_modes}, we report sampling errors in the perfect-learning regime across all \emph{TwoModes} and \emph{ManyModes} targets. We compare the classical aMC baseline (red bars, the same as in \Cref{fig:temp_vs_noising}) with the aforementioned methods combined with the DDPM scheme \eqref{eq:ddpm_1st} (blue bars). We find that first-order AIS and SMC methods systematically fail to improve over their respective baseline, while first-order RE yields only marginal gains in most cases; however, its overall performance remains substantially worse than that of AIS and SMC. One might ask whether DDPM is the right choice for first-order transition kernels. In \Cref{app:ablation_study}, we show that alternative SDE-based denoising kernels from prior work actually degrade performance, suggesting the issue lies within the choice of \emph{first-order} backward kernels in aMC schemes rather than with DDPM specifically.

To validate this claim, we also consider \emph{second-order} denoising kernels based on the DDPM-2 scheme \eqref{eq:cond_denoising_2nd} (green bars), assuming access to the Hessian functions.\footnote{In our experiments, we only use the diagonal of the exact Hessians, which provides a good compromise between accuracy and computational efficiency in high dimension.}
These kernels consistently yield substantial gains over both the baseline and their first-order counterparts, highlighting the value of higher-order information for guiding transitions along the diffusion density path. Notably, all three second-order aMC samplers reach comparably strong performance and are far less sensitive to $K$: AIS plateaus at $K \geq 128$, SMC at $K \geq 64$, and RE is essentially flat across all $K$. For SMC and RE, we also tested multi-step transition kernels in place of the default single-step kernels, following the RE methodology of \citet{zhang2025acceleratedparalleltemperingneural}; results are reported in \Cref{app:ablation_study}. Under a fixed computational budget, multi-step kernels actually degrade sampling performance within both first- and second-order variants, while leaving unchanged the overall superiority of second-order methods. In practice, using second-order kernels nonetheless requires additional covariance estimation \citep{ou2025improving}, which is beyond the scope of most DM training methods, where only approximations of log-densities and/or scores are available.

\section{Exploiting deterministic transitions of DMs in aMC methods : a new hope ?}
\label{sec:determinist}

In this section, we propose investigating the design of a deterministic diffusion-based aMC-BG. We first describe its general principle
in \Cref{subsec:determinist_general} and detail in \Cref{subsec:deter_components} how to instantiate it concretely for DMs. We demonstrate that, in idealized setting \ref{item:setting_perfect}, our method outperforms previous first-order approaches, while being on par with the second-order stochastic ones. Similarly to \Cref{subsec:review_methods}, we assume that we have access to scores and log-densities from a DM associated to a certain time discretization $\{t_k\}_{k=0}^K \subset [0, T]$.

\subsection{General methodology}
\label{subsec:determinist_general}

\paragraph{From stochastic to deterministic DM dynamics.} To further exploit the potential of aMC sampling methods, we propose to use \emph{deterministic} kernels, by replacing stochastic transition kernels with their deterministic counterparts, which approximate the PF-ODE \eqref{eq:pf_ode} rather than the denoising SDE \eqref{eq:sde-denoising}.
Below, we explain how the aMC framework presented in \Cref{sec:monte_carlo} naturally extends to this setting.

\paragraph{Annealed samplers with deterministic transitions.} In this paragraph, we consider $K$ pairs of candidate transport maps, divided between \emph{forward} maps $\{\mapT_{k+1|k}\}_{k=0}^{K-1}$ and \emph{backward} maps $\{\mapT_{k|k+1}\}_{k=0}^{K-1}$. Moreover, we assume that
\begin{enumerate*}[label=\textbf{(\alph*)}]
	\item \label{ref:maps_a}these maps are $\mathrm{C}^1$-diffeomorphisms, and
	\item \label{ref:maps_b}verify the \emph{per-level mutual invertibility} property, defined for any $k\in \{0, \ldots, K-1\}$ by
\end{enumerate*}
\begin{align}\label{eq:ass_deterministic_maps}
	\mapT_{k+1|k} \circ \mapT_{k|k+1} = \mapT_{k|k+1} \circ \mapT_{k+1|k} = \mathrm{Id}\eqsp.
\end{align}
To exploit the use of these transport maps into aMC samplers, we simply propose to set the forward Markov kernels $\{q_{k+1|k}\}_{k=0}^{K-1}$ and backward Markov kernels $\{q_{k|k+1}\}_{k=0}^{K-1}$ (used as transition kernels between adjacent levels in aMC methods) as Dirac masses defined for any $k\in \{0, \ldots, K-1\}$ by $q_{k+1|k}=\updelta_{\mapT_{k+1|k}}$ and $q_{k|k+1}=\updelta_{\mapT_{k|k+1}}$ respectively.
\newpage
\textit{Adaptation to AIS/SMC instance.} Under this setting, the AIS framework boils down to standard IS targeting $\pi$ with the push-forward of $\pibase$ through all backward maps as proposal. Using the change-of-variables formula, the AIS weight \eqref{eq:w_ais_general} admits the following deterministic version, solely depending on the state $x_0$ :
\begin{align}\label{eq:w_ais_deterministic}
	w^{\text{AIS}}(x_0)= \frac{\pi(x_0)}{(\mapT_{0:K})_{\#} \pibase(x_0)}, \quad \text{ with } \mapT_{0:K} = \mapT_{0|1} \circ \mapT_{1 | 2} \ldots \circ \mapT_{K-1|K}.
\end{align}
Using the chain rule, the determinant of the Jacobian of the full map $\mapT_{0:K}$ appearing in $(\mapT_{0:K})_{\#} \pibase$ (see \eqref{eq:change_var_formula}) can be written as a product of the determinants of Jacobian of the individual maps $\mapT_{k|k+1}$ for $k \in \{0, K-1\}$.

\textit{Adaptation to RE instance.} By substituting Markov kernels with Dirac masses, the resulting swap in RE sampling procedure defines a deterministic map on the full extended space
\begin{align*}
	\bar{\mapT}_k(x_{0:K}) = (x_0, \ldots, \mapT_{k|k+1}(x_{k+1}), \mapT_{k+1|k}(x_k), \ldots, x_K)\eqsp,
\end{align*}
which is guaranteed to be involutive due to assumption \ref{ref:maps_b}. In particular, this property ensures that $\bar{\mapT}_k$ can effectively be integrated within the Metropolis–Hastings algorithm with deterministic proposal, see \eqref{eq:rate_MH_deterministic}. Using the identity $(\bar{\mapT}_k)_{\#} \bar{\pi}(x_{0:K}) = p_0(x_0) \ldots(\mapT_{k|k+1})_{\#} p_{k+1}(x_k)(\mapT_{k+1|k})_{\#} p_k(x_{k+1}) \ldots p_K(x_K)$, we obtain the following acceptance probability:
\begin{align}\label{eq:re_ode_rate}
	\alpha_k^{\text{RE}}(x_{0:K}) = \min\left(1, \frac{(\mapT_{k|k+1})_{\#} p_{k+1}(x_k)(\mapT_{k+1|k})_{\#} p_k(x_{k+1})}{p_k(x_k) p_{k+1}(x_{k+1})}\right)\eqsp.
\end{align}
This swapping mechanism is illustrated in \Cref{fig:swaps}. Note that setting both $\mapT_{k|k+1}$ and $\mapT_{k+1|k}$ as the identity map recovers the standard RE algorithm as a special case.

\begin{figure}[t!]
	\centering
	\includegraphics[width=0.27\linewidth]{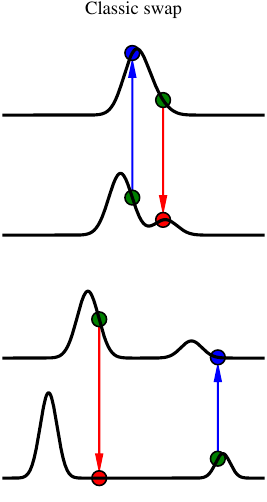}
	\hfill
	\includegraphics[width=0.27\linewidth]{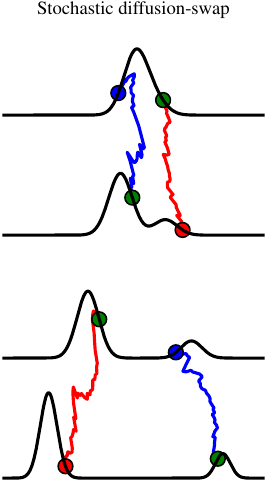}
	\hfill
	\includegraphics[width=0.27\linewidth]{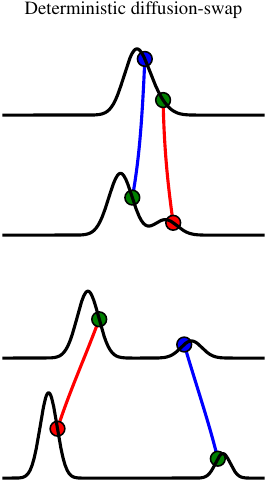}
	\caption{\textbf{Different diffusion-based swapping mechanisms for Replica Exchange.} \textbf{(Left)} standard swap scheme, see \Cref{sec:monte_carlo}, where samples are exchanged directly across noise levels without guidance, potentially moving into low-probability regions. \textbf{(Middle)} DM-based swaps using forward and backward Markov kernels, as proposed by \cite{zhang2025acceleratedparalleltemperingneural} (coined Diff-APT), see \Cref{subsec:review_methods}. \textbf{(Right)} DM-based swaps using forward and backward transport maps under the deterministic framework introduced in \Cref{subsec:determinist_general}. In each panel, black lines denote noise levels; green dots mark the original samples; blue and red paths indicate forward (low to high noise) and backward (high to low noise) trajectories, respectively; and the swapped samples are shown as the resulting blue and green dots. The DM-based swaps better preserve high-probability regions during exchange, enabling theoretically more effective sampling.}
	\label{fig:swaps}
	\vspace{-0.3cm}
\end{figure}

\paragraph{Effective application to diffusion models.} For all aMC methods, the choice of the forward maps $\{\mapT_{k+1|k}\}_{k=0}^{K-1}$ and backward maps $\{\mapT_{k|k+1}\}_{k=0}^{K-1}$  is optimal if those verify the deterministic version of the Bayes rule \eqref{eq:bayes_condition} given by
\begin{align*}
	(\mapT_{k+1|k})_{\#} p_k = p_{k+1}, ~~(\mapT_{k|k+1})_{\#} p_{k+1} = p_k, \quad \eqsp \forall k \in \{0, \ldots, K-1\} \eqsp.
\end{align*}
Indeed, in the case of AIS/SMC samplers, satisfying this identity would enable to get zero-variance in the
estimator,
while this would ensure to maximize the acceptance rate in the RE sampler. Intuitively, this rule reflects the fact that the maps should be chosen so as to perfectly transport particles between adjacent levels to match their target distribution.

The next section discusses two key challenges that arise when implementing these methods in practice using DM's ingredients :
\begin{enumerate}[wide, labelindent=0pt]
	\item \textit{How to design transition maps that verify the invertibility condition \eqref{eq:ass_deterministic_maps} ?} %
	\item \textit{Given those maps, how to compute efficiently the push-forward densities appearing in \eqref{eq:w_ais_deterministic} and \eqref{eq:re_ode_rate} ?}
\end{enumerate}

\subsection{The key components needed for efficient implementation}

\label{subsec:deter_components}

In this section, we first describe how to construct invertible transport maps that approximately solve the probability flow ODE \eqref{eq:pf_ode}. We then present a practical methodology, inspired by residual NFs, for obtaining estimates of the push-forward density terms appearing in \eqref{eq:w_ais_deterministic} and \eqref{eq:re_ode_rate}. As made explicit by the change-of-variables formula \eqref{eq:change_var_formula}, this approach requires (i) the ability to evaluate the transport maps and (ii) the computation (or unbiased estimation) of their Jacobian determinants. In the latter case, the resulting stochasticity is handled via the penalty correction of \citet{Ceperley1999ThePenalty}, which preserves the consistency of the IS-based weights (in AIS/SMC) and the invariant measure of the MH-based swap (in RE). The construction directly extends to the case where the score in the PF-ODE is replaced by an estimate, by simply substituting the estimated score throughout. Our main contributions are summarized in \Cref{tab:design_kernels_maps}. In what follows, we focus on two adjacent noise levels $k$ and $k+1$.

\paragraph{Building invertible transport maps.} To guarantee that the forward map $\mapT_{k+1|k}$ and the backward map $\mapT_{k|k+1}$ are mutually inverse, see \eqref{eq:ass_deterministic_maps}, we cannot simply rely on \emph{explicit} ODE integrators of the form \eqref{eq:example_T_denoising}.
Indeed, forward and backward maps inherited from such first-order approximations
do not, in general, compose to the identity\footnote{Note that this reasoning also applies in the case of EI-based first-order integrators.}. This motivates us to move towards the class of \emph{implicit} integrators, which are extensively used to simulate Hamiltonian dynamics where trajectory invertibility is often a desirable feature. In particular, we propose to design our transition maps via the \emph{Implicit Midpoint} (IM) integrator, as presented in \Cref{prop:midpoint} for the Euler scheme.
\begin{proposition}[IM integrator with Euler scheme]\label{prop:midpoint}
	Let $\mapT_{k+1|k} : \rset^d \to \rset^d$ and $\mapT_{k|k+1} : \rset^d \to \rset^d$ be implicitly defined as
	\begin{align*}
		\textstyle \mapT_{k+1|k}: x_k     \mapsto x_k + \delta_k v\left(t_{k+1/2}, \frac{x_k + \mapT_{k+1|k}(x_k)}{2}\right)\eqsp,   \quad
		\mapT_{k|k+1}: x_{k+1}  \mapsto x_{k+1} - \delta_k v\left(t_{k+1/2}, \frac{\mapT_{k|k+1}(x_{k+1}) + x_{k+1}}{2}\right)\eqsp,
	\end{align*}
	where $t_{k+1/2} = (t_k + t_{k+1})/2$, $\delta_k=t_{k+1} - t_k$ and $v(t, x) = f(t)x - (g(t)^2 / 2) \nabla \log p_t(x)$ is the velocity field of PF-ODE \eqref{eq:pf_ode}. Then, these maps are valid forward and backward integrators of PF-ODE \eqref{eq:pf_ode} on time interval $[t_k, t_{k+1}]$ and satisfy the mutual invertibility condition \eqref{eq:ass_deterministic_maps}.
\end{proposition}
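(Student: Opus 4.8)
The statement bundles two logically independent claims: (a) the two displayed implicit relations are well-posed and define $\mathrm{C}^1$ maps that are consistent one-step integrators of PF-ODE \eqref{eq:pf_ode}, and (b) these maps are mutual inverses, i.e. \eqref{eq:ass_deterministic_maps}. My plan is to settle well-posedness and $\mathrm{C}^1$-regularity first, then dispatch the ``valid integrator'' claim by recognizing the relations as the implicit midpoint scheme, and finally prove \eqref{eq:ass_deterministic_maps} directly from the symmetry of that scheme, reusing the uniqueness obtained in the first step.

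For well-posedness, I would fix $k$ and an input $x_k$ and observe that $\mapT_{k+1|k}(x_k)$ is precisely a fixed point of the map $z\mapsto x_k+\delta_k\,v\!\left(t_{k+1/2},(x_k+z)/2\right)$. Under the regularity standing in \Cref{sec:diffusion_models} --- in particular $v$ Lipschitz in its space argument on the slab $[t_k,t_{k+1}]$, which holds globally for the VP Gaussian-mixture instance considered in the paper --- this map is a contraction whenever $\delta_k$ is small enough that $(\delta_k/2)\,\mathrm{Lip}_x(v)<1$, so Banach's theorem yields existence and, crucially for part (b), uniqueness of the solution; the same applies to $\mapT_{k|k+1}$. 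For smoothness I would apply the implicit function theorem to $F(z,x)=z-x-\delta_k v(t_{k+1/2},(x+z)/2)$: its $z$-derivative $\Idd-(\delta_k/2)\nabla_x v$ is invertible in the same regime, so $\mapT_{k+1|k}$ and $\mapT_{k|k+1}$ are $\mathrm{C}^1$ as soon as $v$ is. The ``valid integrator'' claim then needs only the remark that the two relations are, verbatim, the implicit midpoint (equivalently, Gauss--Legendre order-two) one-step scheme for $\dot x=v(t,x)$ --- run with step $+\delta_k$ from $t_k$ to $t_{k+1}$ for $\mapT_{k+1|k}$, and with step $-\delta_k$ from $t_{k+1}$ to $t_k$ for $\mapT_{k|k+1}$ --- which is a classical consistent one-step method.

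The substance is \eqref{eq:ass_deterministic_maps}, and here the argument is the symmetry of the midpoint rule. Set $x_{k+1}:=\mapT_{k+1|k}(x_k)$ and $\bar x:=(x_k+x_{k+1})/2$, so that $x_{k+1}=x_k+\delta_k v(t_{k+1/2},\bar x)$ by definition. Rearranging gives $x_{k+1}-\delta_k v\!\left(t_{k+1/2},(x_k+x_{k+1})/2\right)=x_k$, which says exactly that $x_k$ solves the implicit relation defining $\mapT_{k|k+1}$ at input $x_{k+1}$; uniqueness then forces $\mapT_{k|k+1}(x_{k+1})=x_k$, hence $\mapT_{k|k+1}\circ\mapT_{k+1|k}=\mathrm{Id}$. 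For the other composition, for any $x_{k+1}$ set $x_k:=\mapT_{k|k+1}(x_{k+1})$ and rearrange its defining relation into $x_{k+1}=x_k+\delta_k v\!\left(t_{k+1/2},(x_k+x_{k+1})/2\right)$, which is the relation defining $\mapT_{k+1|k}(x_k)$; uniqueness gives $\mapT_{k+1|k}(x_k)=x_{k+1}$, hence $\mapT_{k+1|k}\circ\mapT_{k|k+1}=\mathrm{Id}$. Together with the $\mathrm{C}^1$-regularity, each map is a $\mathrm{C}^1$-diffeomorphism whose inverse is the other, which is \eqref{eq:ass_deterministic_maps}.

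I expect the one genuinely delicate point to be the global well-posedness and regularity of the implicitly defined maps: for an arbitrary score the contraction argument needs a smallness condition on $\delta_k$ (or a per-slab global Lipschitz bound on $v$), and without uniqueness the inverse computation in part (b) would break. In the setting of this paper this is harmless, since $v$ is globally Lipschitz in $x$ on each time slab for the chosen log-SNR discretization; I would therefore either record this as a hypothesis of the proposition or, consistent with the phrasing ``implicitly defined,'' take the maps as given and present only the midpoint-symmetry computation together with the consistency remark.
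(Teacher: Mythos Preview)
Your proposal is correct and follows essentially the same approach as the paper, though you supply considerably more detail. The paper's proof merely records that the displayed relations are the implicit midpoint scheme (by pointing to \Cref{lemma:ode_euler_forward,lemma:ode_euler_backward} with $s=t_k$, $t=t_{k+1}$) and declares mutual invertibility ``immediate''; you spell out precisely what makes it immediate, namely the symmetry argument that the defining relation for $\mapT_{k+1|k}$ at $x_k$ rearranges into the defining relation for $\mapT_{k|k+1}$ at $x_{k+1}=\mapT_{k+1|k}(x_k)$, with uniqueness of the fixed point closing the argument. Your additional discussion of well-posedness (Banach) and $\mathrm{C}^1$-regularity (implicit function theorem) is not present in the paper at this point --- the paper defers the contraction argument to \Cref{prop:midpoint_fixed} under \Cref{ass:fixed-point} --- but your instinct to flag the smallness condition on $\delta_k$ as the delicate point is exactly right and matches how the paper handles it downstream.
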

In \Cref{app:proofs_deter}, we provide the proof of the above proposition along with its generalization using the Exponential Integration scheme in \Cref{app:sde_noising}, which offers improved accuracy compared to the Euler scheme when $\delta_k$ is relatively large. Although the maps defined in \Cref{prop:midpoint} cannot be evaluated in closed form as they are by nature \emph{implicit}, they can still be approximated in practice using fixed-point iterations as described in \Cref{prop:midpoint_fixed}, which guarantees convergence of this scheme under certain assumptions detailed below. We refer to \Cref{app:proofs_deter} for the proof of this result as well as \Cref{app:sde_noising} for its EI generalization.
\begin{assumption}[Score smoothness \& discretization error]\label{ass:fixed-point}
	(a) There exists $L_k>0$ such that $\nabla \log p_{t_{k+1/2}}$ is $L_k$-Lipschitz and (b) the step-size $\delta_k$ is sufficiently small \footnote{We provide the exact numerical constants related to this informal assumption in \Cref{app:proofs_deter}.},
	that is $\delta_k =O(1/L_k)$.
\end{assumption}

\begin{proposition}[Fixed-point approximation of the IM integrator]\label{prop:midpoint_fixed}
	Following the same notation as in \Cref{prop:midpoint}, under \Cref{ass:fixed-point}, for any inputs $x_k$ and $x_{k+1}$, the sequences $\{\mapT^{(n)}_{k+1|k}(x_k)\}_{n\in \mathbb{N}}$ and $\{\mapT^{(n)}_{k|k+1}(x_{k+1})\}_{n\in \mathbb{N}}$ that are recursively defined as
	\vspace{-0.3cm}
	\begin{align}
		\textstyle\mapT^{(0)}_{k+1|k}(x_k)     & \textstyle= x_k,      &  & \textstyle\mapT^{(n+1)}_{k+1|k}(x_k) = x_k + \delta_k v\left(t_{k+1/2}, \frac{x_k + \mapT^{(n)}_{k+1|k}(x_k)}{2}\right)\eqsp,\label{eq:fixed_point_forward}                  \\
		\textstyle\mapT^{(0)}_{k|k+1}(x_{k+1}) & \textstyle = x_{k+1}, &  & \textstyle\mapT^{(n+1)}_{k|k+1}(x_{k+1}) = x_{k+1} - \delta_k v\left(t_{k+1/2}, \frac{\mapT^{(n)}_{k|k+1}(x_{k+1}) + x_{k+1}}{2}\right)\eqsp,\label{eq:fixed_point_backward}
	\end{align}
	converge linearly to $\mapT_{k+1|k}(x_k)$ and $\mapT_{k|k+1}(x_{k+1})$, respectively.
\end{proposition}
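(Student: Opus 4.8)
The plan is to recognize each of the two fixed-point recursions as a Picard iteration for a \emph{contraction} on $\rset^d$, and then invoke the Banach fixed-point theorem, which simultaneously delivers existence and uniqueness of the limit together with a geometric — hence linear — convergence rate. Concretely, I would fix the level $k$ and the input $x_k$, define the update map $\Phi_k : \rset^d \to \rset^d$ by $\Phi_k(y) = x_k + \delta_k\, v\big(t_{k+1/2}, (x_k + y)/2\big)$, and observe that \eqref{eq:fixed_point_forward} reads $\mapT^{(n+1)}_{k+1|k}(x_k) = \Phi_k\big(\mapT^{(n)}_{k+1|k}(x_k)\big)$, while a fixed point of $\Phi_k$ is, by definition, exactly the value $\mapT_{k+1|k}(x_k)$ of the implicit map from \Cref{prop:midpoint}.

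The key step is a Lipschitz bound on $\Phi_k$. Writing $v(t,x) = f(t)x - (g(t)^2/2)\nabla\log p_t(x)$ and using \Cref{ass:fixed-point}(a), i.e. that $\nabla\log p_{t_{k+1/2}}$ is $L_k$-Lipschitz, I get for all $y, y' \in \rset^d$
\begin{align*}
  \norm{\Phi_k(y) - \Phi_k(y')} &= \delta_k\, \norm{ v\!\left(t_{k+1/2}, \tfrac{x_k + y}{2}\right) - v\!\left(t_{k+1/2}, \tfrac{x_k + y'}{2}\right)} \\
  &\le \delta_k \left( \abs{f(t_{k+1/2})} + \tfrac{g(t_{k+1/2})^2}{2} L_k \right) \tfrac{\norm{y - y'}}{2}\eqsp,
\end{align*}
where the factor $1/2$ comes from the midpoint averaging. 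Set $\kappa_k := \tfrac{\delta_k}{2}\big(\abs{f(t_{k+1/2})} + \tfrac{g(t_{k+1/2})^2}{2} L_k\big)$. Since $f$ and $g$ are continuous schedules on $[0,T]$, they are bounded, so \Cref{ass:fixed-point}(b) — the quantitative statement $\delta_k = O(1/L_k)$, whose explicit threshold is recorded in \Cref{app:proofs_deter} — guarantees $\kappa_k < 1$. Banach's theorem then applies on the complete space $(\rset^d, \norm{\cdot})$: $\Phi_k$ has a unique fixed point $y^\star = \mapT_{k+1|k}(x_k)$, and, since $\mapT^{(0)}_{k+1|k}(x_k) = x_k$, one gets $\norm{\mapT^{(n)}_{k+1|k}(x_k) - \mapT_{k+1|k}(x_k)} \le \kappa_k^{\,n}\, \norm{x_k - \mapT_{k+1|k}(x_k)}$, i.e. linear convergence (an a posteriori bound $\le \kappa_k^{\,n}(1-\kappa_k)^{-1}\norm{\mapT^{(1)}_{k+1|k}(x_k) - x_k}$ also follows). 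The backward recursion \eqref{eq:fixed_point_backward} is treated verbatim: the map $\Psi_k(y) = x_{k+1} - \delta_k\, v\big(t_{k+1/2}, (y + x_{k+1})/2\big)$ satisfies the same estimate with the same constant $\kappa_k$ (the sign flip is irrelevant to the norm bound), so it too is a $\kappa_k$-contraction and $\{\mapT^{(n)}_{k|k+1}(x_{k+1})\}_n$ converges linearly to its unique fixed point $\mapT_{k|k+1}(x_{k+1})$.

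\textbf{Main obstacle.} The one delicate point is pinning down the contraction threshold: one must combine the $1/2$ from the midpoint average, the step size $\delta_k$, and uniform bounds on $\abs{f}$ and $g^2$ along the grid to ensure $\kappa_k < 1$ — precisely the quantitative content of \Cref{ass:fixed-point}(b). For the Exponential Integration variant the reasoning is structurally identical, but the bookkeeping is heavier, because the linear part of the dynamics is integrated exactly and the Lipschitz constant of the resulting update is expressed through the EI coefficients rather than $\delta_k$ alone; I would carry that computation out in \Cref{app:sde_noising}.
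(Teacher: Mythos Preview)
Your proof is correct and mirrors the paper's own argument almost exactly: define the update map, bound its Lipschitz constant by $\kappa_k = \tfrac{\delta_k}{2}\big(|f(t_{k+1/2})| + \tfrac{g(t_{k+1/2})^2}{2} L_k\big)$, and invoke Banach. In fact your $\kappa_k$ is precisely the constant $c_4(\delta_k, L_k)$ appearing in the paper's formal version of \Cref{ass:fixed-point}, so you have even made explicit the ``it is easy to see'' step that the paper leaves to the reader.
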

\vspace{-0.2cm}
In practice, we only compute the sequences from \Cref{prop:midpoint_fixed} up to a range $M\geq 1$ that ensures a prescribed fixed-point convergence tolerance $\varepsilon >0$, that is, $M$ is of the first order such that
\begin{align}\label{eq:tolerance_fixed_point}
	\textstyle \norm{\mapT^{(M+1)}_{k+1|k}(x_k)-\mapT^{(M)}_{k+1|k}(x_k)}_2\leq \varepsilon \, \text{ and } \, \norm{\mapT^{(M+1)}_{k|k+1}(x_{k+1})-\mapT^{(M)}_{k|k+1}(x_{k+1})}_2\leq \varepsilon,
\end{align}
and we approximate $\mapT_{k+1|k}(x_k)$, resp. $\mapT_{k|k+1}(x_{k+1})$, by the $(M+1)$-th term $\mapT^{(M)}_{k+1|k}(x_k)$, resp. $\mapT^{(M)}_{k|k+1}(x_{k+1})$. While this iterative scheme may introduce numerical errors, we note that potential violations of the invertibility property \eqref{eq:ass_deterministic_maps} could be mitigated through an additional optional rejection step as proposed by \citet{Noble2023unbiased}. We leave the implementation of such a safeguard to future work. In \Cref{app:ablation_study}, we ablate the choice of $M$ across all aMC variants and find that sampling performance is largely insensitive to it; the default value used in our experiments is reported in \Cref{app:training_details}.

\paragraph{Estimating the Jacobian determinants.}
We now turn to the second component of \eqref{eq:change_var_formula}: computing the Jacobian determinants of the transition maps. Since these quantities are generally intractable, we propose a numerical approximation tailored to the recursive structure of the IM integrators introduced in \Cref{prop:midpoint}.
Specifically, we first express their \emph{log-determinant} as a power series, following techniques previously used for contractive residual normalizing flows \citep{Behrmann19invertible,chen2019}. This yields the following proposition, the proof of which is given in \Cref{app:proofs_deter}.
\begin{proposition}[Approximation of the Jacobian log-determinants via power series]\label{prop:compute_log_det}
	Following the same notation as in \Cref{prop:midpoint,prop:midpoint_fixed}, under \Cref{ass:fixed-point}, for any inputs $x_k$ and $x_{k+1}$, and any prescribed fixed-point range $M\geq 1$ satisfying \eqref{eq:tolerance_fixed_point}, the following approximation holds
	\begin{align*}
		\textstyle\log \abs{\operatorname{det}J_{\mapT_{k+1|k}}(x_k)} \approx \sum_{i=0}^{I}a_{k,i} \operatorname{Tr}([A^{(M)}(x_{k})]^{i})\eqsp, \quad
		\log \abs{\operatorname{det}J_{\mapT_{k|k+1}}(x_{k+1})} \approx \sum_{i=0}^{I}b_{k,i} \operatorname{Tr}([B^{(M)}(x_{k+1})]^{i})\eqsp,
	\end{align*}
	where $I\geq 1$ is a prescribed truncation order,
	\begin{align*}
		\textstyle A^{(M)}(x_{k}) = \Hfnalone_{t_{k+1/2}}\left(\frac{x_k + \mapT^{(M)}_{k+1|k}(x_k)}{2}\right), ~~ B^{(M)}(x_{k+1}) =\Hfnalone_{t_{k+1/2}}\left(\frac{x_{k+1} + \mapT^{(M)}_{k|k+1}(x_{k+1})}{2}\right)\eqsp,
	\end{align*}
	$\Hfnalone_{t_{k+1/2}}$ is the Hessian of $\log p_{t_{k+1/2}}$ and $\{a_{k,i}, b_{k,i}\}_{i=0}^I$ are given in \Cref{prop:jac_expansion_series} (see \Cref{subsec:noising-general}).
\end{proposition}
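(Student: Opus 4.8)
The plan is to turn the implicit definition of $\mapT_{k+1|k}$ into a linear fixed-point equation for its Jacobian, solve it in closed form, and then expand the resulting log-determinant as a trace power series. First I would differentiate the identity defining $\mapT_{k+1|k}$ in \Cref{prop:midpoint} with respect to $x_k$: writing $y=\mapT_{k+1|k}(x_k)$, applying the chain rule at the midpoint $\tfrac{x_k+y}{2}$, and using $\nabla_x v(t,\cdot)=f(t)\Idd-\tfrac{g(t)^2}{2}\Hfnalone_t$, one obtains for $J:=J_{\mapT_{k+1|k}}(x_k)$
\begin{align*}
  J = \Idd + C(\Idd + J)\eqsp,\qquad C := \frac{\delta_k}{2}\left(f(t_{k+1/2})\,\Idd - \frac{g(t_{k+1/2})^2}{2}\,\Hfnalone_{t_{k+1/2}}\!\left(\frac{x_k+y}{2}\right)\right)\eqsp.
\end{align*}
Solving gives $J=(\Idd-C)^{-1}(\Idd+C)$. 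As $\Hfnalone_{t_{k+1/2}}$ is symmetric, so is $C$; once $\norm{C}<1$ each eigenvalue $\lambda$ of $C$ obeys $\abs{\lambda}<1$, hence $J$ is symmetric positive definite with eigenvalues $(1+\lambda)/(1-\lambda)>0$, so $\operatorname{det}J>0$ and $\log\abs{\operatorname{det}J}=\log\operatorname{det}(\Idd+C)-\log\operatorname{det}(\Idd-C)$.

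Next I would apply the matrix-logarithm (residual-flow) power series $\log\operatorname{det}(\Idd+X)=\sum_{i\geq 1}\tfrac{(-1)^{i+1}}{i}\operatorname{Tr}(X^i)$ to both terms, which makes the even powers cancel:
\begin{align*}
  \log\abs{\operatorname{det}J} = 2\sum_{i\geq 1,\ i\text{ odd}}\frac{1}{i}\operatorname{Tr}(C^i)\eqsp.
\end{align*}
This converges because $\norm{C}<1$ in the regime of \Cref{ass:fixed-point}: $L_k$-Lipschitzness of $\nabla\log p_{t_{k+1/2}}$ bounds the operator norm of $\Hfnalone_{t_{k+1/2}}$ by $L_k$, and $\delta_k=O(1/L_k)$ with the VP schedule bounded on the grid yields $\norm{C}\leq\tfrac{\delta_k}{2}\big(\abs{f(t_{k+1/2})}+\tfrac{g(t_{k+1/2})^2}{2}L_k\big)<1$, precisely the contraction constant of the fixed-point map of \Cref{prop:midpoint_fixed}. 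Since $f(t_{k+1/2})\Idd$ commutes with the Hessian, the binomial theorem rewrites each $C^i$ as a polynomial in $\Hfnalone_{t_{k+1/2}}(\tfrac{x_k+y}{2})$; because the original series is absolutely convergent the rearrangement is legitimate, and collecting the coefficient of each trace power $\operatorname{Tr}([\Hfnalone_{t_{k+1/2}}(\tfrac{x_k+y}{2})]^{i})$ across the odd indices---absorbing $\delta_k$, $f(t_{k+1/2})$, $g(t_{k+1/2})$ and the $2/i$ weights, with the zeroth power contributing $a_{k,0}\operatorname{Tr}(\Idd)=a_{k,0}d$---gives the exact identity $\log\abs{\operatorname{det}J}=\sum_{i\geq 0}a_{k,i}\operatorname{Tr}([\Hfnalone_{t_{k+1/2}}(\tfrac{x_k+y}{2})]^{i})$ with the numerical coefficients $\{a_{k,i}\}$ of \Cref{prop:jac_expansion_series}. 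Truncating this sum at order $I$, and---since the exact fixed point $y$ is unavailable---replacing it by the $M$-step iterate $\mapT^{(M)}_{k+1|k}(x_k)$, \ie\ replacing $\Hfnalone_{t_{k+1/2}}(\tfrac{x_k+y}{2})$ by $A^{(M)}(x_k)$, yields the stated approximation; the ``$\approx$'' accounts exactly for these two truncations. The backward map is handled identically: its defining equation in \Cref{prop:midpoint} differs only by the sign of $\delta_k$, which turns its Jacobian into $(\Idd+C')^{-1}(\Idd-C')$ with $C'$ built (after the same substitution) from $B^{(M)}(x_{k+1})$, so the same steps produce the coefficients $\{b_{k,i}\}$---equal to $-a_{k,i}$ up to the change of evaluation point---consistently with $\mapT_{k|k+1}$ inverting $\mapT_{k+1|k}$ on matched inputs.

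The main obstacle is the quantitative control, not any single algebraic step: one must check that \Cref{ass:fixed-point} forces $\norm{C}$ (and $\norm{C'}$) strictly below $1$ \emph{uniformly} in the inputs, so that $\Idd\pm C$ are invertible, the logarithmic series converge, the Hessian-power rearrangement is absolutely convergent, and the order-$I$ remainder decays geometrically; and then one must bound the extra perturbation from evaluating the Hessian at the $M$-step iterate rather than the true fixed point, combining Lipschitzness of $\Hfnalone_{t_{k+1/2}}$ with the linear convergence rate of \Cref{prop:midpoint_fixed}. The Jacobian fixed-point identity, the matrix-logarithm expansion, and the combinatorial bookkeeping that defines $\{a_{k,i},b_{k,i}\}$ are otherwise routine.
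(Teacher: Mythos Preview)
Your proposal is correct and follows the same overall strategy as the paper (differentiate the implicit-midpoint relation, write the Jacobian as a product of resolvents, expand via the matrix-logarithm power series, then substitute the fixed-point iterate and truncate), but the algebra is organised differently. You expand in powers of the full matrix $C=\tfrac{\delta_k}{2}\nabla_x v(t_{k+1/2},\cdot)$ and only afterwards binomially re-collect into powers of the Hessian $A=\Hfnalone_{t_{k+1/2}}(\cdot)$. The paper instead factors out the scalar drift part up front, writing $J=c_1(\Idd+c_2 A)^{-1}(\Idd-c_3 A)$ with explicit constants $c_1,c_2,c_3$ depending only on $\delta_k,f(t_{k+1/2}),g(t_{k+1/2})$ (their \Cref{lemma:jac_midpoint}), and then applies a dedicated expansion lemma (their \Cref{corollary:expansion}) directly in $A$ to read off $a_{k,i}=\tfrac{(-1)^ic_2^i-c_3^i}{i}$ for $i\geq 1$ and $a_{k,0}=d\log\abs{c_1}$. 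The two computations are equivalent, and your observation that $b_{k,i}=-a_{k,i}$ matches the paper's closed forms. The paper's route avoids the binomial rearrangement and makes the convergence check simpler: it needs only $L_k\max(\abs{c_2},\abs{c_3})<1$ for the $A$-series, whereas your path uses $\norm{C}<1$ for the $C$-series and then still needs to argue absolute convergence of the rearranged $A$-series (both bounds are supplied by the formal version of \Cref{ass:fixed-point}, so this is not a gap, just extra bookkeeping).
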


Implementing \Cref{prop:compute_log_det} requires estimating traces of the powered midpoint Hessians. When these Hessians are available, we approximate the Jacobian determinants by simply exponentiating the log-determinant expansion, yielding second-order deterministic aMC methods. Otherwise, we estimate the traces using the Hutchinson identity $\operatorname{Tr}(\mathrm{M}) = \mathbb{E}_{v}[v^\top \mathrm{M} v]$, with $v\sim \densityGaussian(0,\Idd)$ \citep{Hutchinson1989astochastic,Avron2011randomized}; in practice, we rather use the lower-variance Hutch++ estimator \citep{meyer2021hutch++}. This only requires Jacobian--vector products, which can be computed efficiently by reverse-mode automatic differentiation. Since the resulting log-determinant estimates are stochastic, it is not immediately clear that IS and MH algorithms remain consistent when importance weights or acceptance probabilities are themselves random quantities.

To address this, we follow the penalty method of \citet{Ceperley1999ThePenalty}, a principled framework that ensures AIS/SMC importance weights remain unbiased and consistent, and that RE MH acceptance probabilities preserve the correct invariant distribution, despite this stochasticity. The induced aMC methods are thus of first-order. Details are deferred to \Cref{app:subsec:unbiased_det}.

Our deterministic formulation introduces two further hyperparameters: the truncation order $I$ of the log-determinant power series (\Cref{prop:compute_log_det}) and the number of Hutchinson random variables (for the first-order variant only). We ablate both in \Cref{app:ablation_study}: (i) sampling performance is largely insensitive to $I$, and (ii) after the penalty correction, the residual stochasticity of the Hutchinson estimator introduces no noticeable bias relative to the deterministic Hessian-based variant. Default values are reported in \Cref{app:training_details}, under which the deterministic variants of the aMC samplers incur only a limited computational overhead compared to their stochastic counterparts.

\begin{table}[t!]
	\centering
	\resizebox{\textwidth}{!}{%
		\begin{tabular}{l|ccc}
			\toprule
			\bf Transition method & \bf Forward design                                 & \bf Backward design                                 & \bf Needs $\nabla^2 \log p_k$ \\
			\midrule
			1st order kernel      & Exact noising kernel \eqref{eq:cond_noising}       & DDPM approx.  \eqref{eq:ddpm_1st}                   & \tcmv{\xmark}                 \\[0.2em]
			2nd order kernel      & Exact noising kernel \eqref{eq:cond_noising}       & DDPM-2 approx. \eqref{eq:cond_denoising_2nd}        & \tcmr{\Checkmark}             \\[0.2em]
			\hline                                                                                                                                                           \\[-0.8em]
			IM map via Hutchinson & Fixed-point approx. \eqref{eq:fixed_point_forward} & Fixed-point approx. \eqref{eq:fixed_point_backward} & \tcmv{\xmark}                 \\[0.2em]
			IM map via Hessian    & Fixed-point approx. \eqref{eq:fixed_point_forward} & Fixed-point approx. \eqref{eq:fixed_point_backward} & \tcmr{\Checkmark}             \\
			\bottomrule
		\end{tabular}
	}
	\vspace{-0.2cm}
	\caption{\textbf{Summary of DM-based transitions used in annealed sampling methods.}
		The top two rows correspond to stochastic transitions: the ``1st order'' row recovers prior work.
		The bottom two rows correspond to the deterministic transitions developed in \Cref{sec:determinist}.
		The last column specifies whether access to the Hessians of the log-densities is required. We recall that the acronym IM stands for \emph{Implicit Midpoint}.}
	\vspace{-0.4cm}
	\label{tab:design_kernels_maps}
\end{table}

\subsection{Empirical comparison between DM-based stochastic and deterministic transitions in aMC samplers}
\label{subsec:deter_better}

In \Cref{fig:diff_two_modes}, we evaluate the deterministic methodology within AIS, SMC, and RE in idealized setting \ref{item:setting_perfect}, across all \emph{TwoModes} and \emph{ManyModes} targets. We compare it against approaches based on stochastic kernels, both first-order and second-order.
Based on the results, we make the following observations:
\begin{enumerate}[wide,labelindent=10pt, label=(\roman*)]
	\vspace{-0.3cm}
	\item
	      \emph{When the Hessian is available}, using deterministic transitions (pink bars) performs on par with the second-order stochastic approach (green bars) across all aMC variants, for each value of $K$.
	      Interestingly, the deterministic method provides even better results for low values of $K$ with AIS/SMC samplers. %
	      \vspace{-0.1cm}
	\item \emph{When the Hessian is not available}, the first-order deterministic variant relying solely on the score functions via the Hutchinson estimator (yellow bars) consistently improves over the standard baseline (red bars) and the use of first-order stochastic kernels (blue bars) presented in prior work, for each value of $K$. This highlights the promise of deterministic mappings in aMC samplers.
	      Remarkably, across all multi-modal scenarios, \emph{the performance gap with the second-order deterministic scheme is barely noticeable} for AIS at large $K$, and is even negligible for SMC, when $K \geq 64$, and for RE, across all values of $K$,
	      proving the effectiveness of our proposed Hutchinson-based statistical estimation to fully exploit first-order information.
	      \vspace{-0.2cm}
\end{enumerate}
\textit{Remark on second-order stochastic kernels.} For the Gaussian denoising kernels given by \eqref{eq:cond_denoising_2nd}, the Hessian appears in the covariance term. As a result, sampling only involves Jacobian–vector products, which can be handled with standard automatic differentiation tools. In contrast, likelihood evaluation additionally requires inverse–Jacobian–vector products through the term $[\Sigma_{s|t}(x_t)]^{-1}(x_s - m_{s|t}(x_t))$, which is substantially more challenging to implement efficiently. While recent work has begun to address this computational bottleneck \citep{siskind2019automatic}, developing practical implementations is an open and promising direction for future research.

\newpage

\begin{figure}[h!]
	\vspace{-0.1cm}
	\centering
	\includegraphics[width=\linewidth]{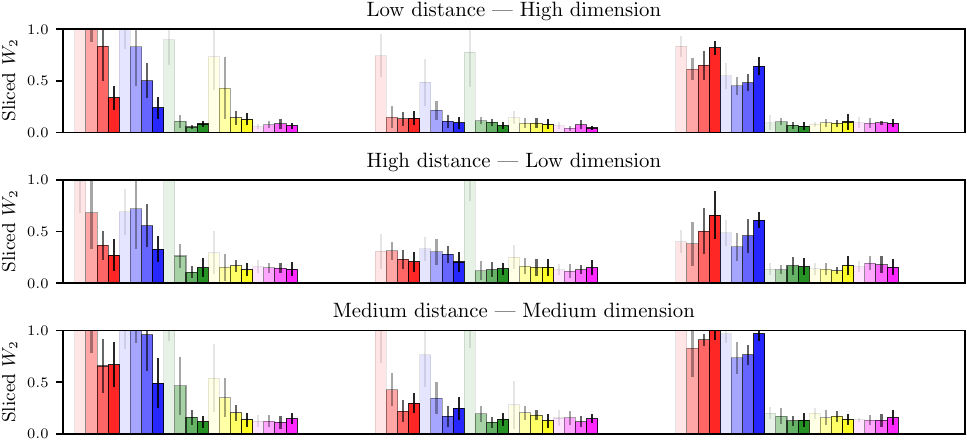}
	\includegraphics[width=\linewidth]{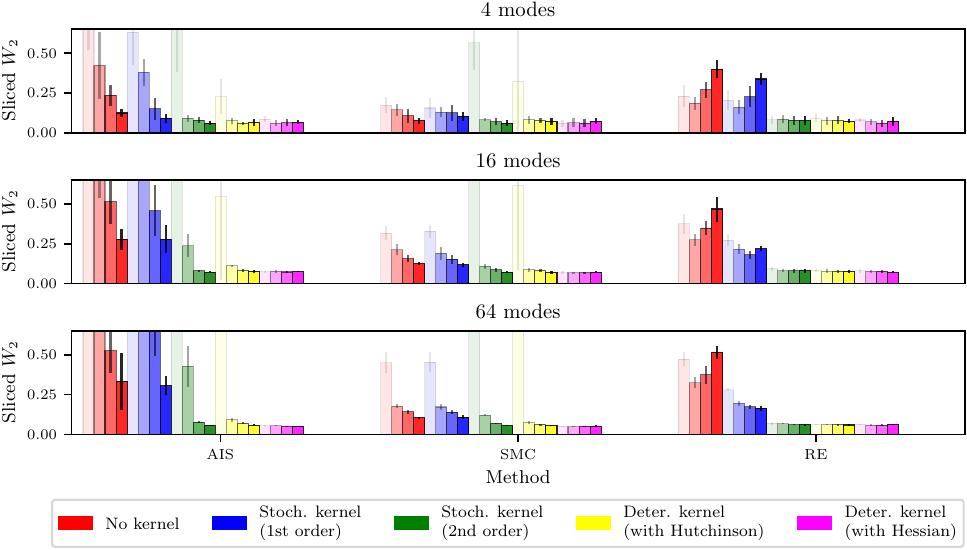}
	\vspace{-0.65cm}
	\caption{\textbf{DM-based aMC-BG results with annealed samplers using different mechanisms}, when targeting \textit{TwoModes} \textbf{(Top three rows)} and \textit{ManyModes} \textbf{(Bottom three rows)} distributions in idealized setting \ref{item:setting_perfect}. Each group of bars with the same color corresponds to a specific aMC method: \tcmr{red} bars refer to methods that do not use DM-based transition kernels (standard baseline);  \tcmb{blue} and \tcmv{green} refer to methods that exploit 1st-order (prior work) and 2nd-order stochastic kernels; \textcolor{byzantine}{pink} and \textcolor{yellow!90!black}{yellow} correspond to variants with deterministic maps, where the log-determinant term is respectively computed from the ground truth diagonal Hessian or via the Hutchinson trick (see \Cref{sec:determinist}). For all settings, we display the results for $K\in \{32,64,128,256\}$ : the darker the bar, the higher $K$ (same range as in \Cref{fig:temp_vs_noising}). In particular, configurations within each bar group do not share the same computational budget. Each result is averaged over 8 runs with 8,192 samples per run.
		We observe that using 1st-order stochastic kernels does not always lead to better performance than the baseline, while 2nd-order stochastic or deterministic kernels provide consistent improvements. Remarkably, using deterministic transitions combined with the Hutchinson trick, which only requires score evaluations, achieve performance comparable to second-order aMC variants.}
	\label{fig:diff_two_modes}
	\vspace{-1cm}
\end{figure}

\newpage

\section{Related Works}
\label{sec:related_work}

\vspace{-0.1cm}
\paragraph{Normalizing flows into annealed sampling.} Normalizing flows have been previously integrated into aMC frameworks.
For instance, \cite{Arbel2021annealed} and \cite{Matthews2022continual} incorporate flows as forward and backward kernels within AIS and SMC algorithms combined with tempering density paths.
Other works such as \cite{Midgley2021bootstrap, Midgley2023flow, Midgley2023se3} consider AIS schemes where the sequence of densities use $\piprior$ as a NF, allowing for better conditioned path. On the other hand, \citet{Invernizzi2022skipping} propose an extension of RE of the form of \eqref{eq:re_ode_rate}, with the key difference being that their deterministic transformations are parameterized by NFs rather than derived from DM-based dynamics.

\vspace{-0.2cm}
\paragraph{Using DMs in aMC for sampling.} This idea has recently seen a growing interest in the generative modeling community. Some works have built upon the AIS backbone with specific choices of transition operators. For instance, \cite{Zhang2024efficient} propose to design both forward and backward stochastic kernels as a mix of exact noising kernels and first-order \emph{explicit} integrators of the PF-ODE, in order to take advantage of the efficiency of deterministic mappings. On the other hand, \cite{Zhang2025efficient} design the backward transition kernels as Gaussian denoising kernels with a flexible scalar variance that is learned,
in the same spirit as second-order kernels. Taking SMC as a sampling backbone, \cite{Phillips2024particle} present a end-to-end algorithm that aims to sample from a target distribution by learning the corresponding DM. This procedure alternates between (i) building a BG toward the target via DM-based SMC (here, the backward transitions are defined as first-order EI kernels) and (ii) updating this DM by minimizing a
score matching objective with the samples from stage (i). To be able to evaluate the intermediate log-densities, the DM is parameterized as a multi-level energy-based model.
More recently, \cite{zhang2025acceleratedparalleltemperingneural} explore the use of DM-based kernels as forward and backward stochastic transitions within a RE framework. Similarly to \cite{Phillips2024particle}, they propose an iterative sampling approach
, that involves RE combined with first-order stochastic kernels.

\vspace{-0.2cm}
\paragraph{Combination of annealed sampling and DMs beyond BGs.}

Diffusion models have also been combined with aMC methods, though not primarily for building BGs. Instead, these approaches leverage DMs for various downstream tasks. For instance, SMC-based approaches have been proposed for conditional generation \citep{wu2023practical}, posterior sampling in Bayesian inverse problems \citep{cardoso2024monte, dou2024diffusion, Janati2024divide, Janati2025bridging}, reward-guided generation and fine-tuning \citep{uehara2024understandingreinforcementlearningbasedfinetuning, kim2025testtime, singhal2025a}, as well as compositional and controlled generation tasks \citep{thornton2025controlled, skreta2025feynmankac}. While these methods use advanced sampling, their primary focus lies in enhancing/extending generation capabilities rather than reweighting DMs with respect to a given target unnormalized density.

\vspace{-0.1cm}
\section{Numerical experiments in a realistic setting}\label{sec:numerics}
\vspace{-0.2cm}
In this section, we evaluate the performance of DM-based aMC-BGs in realistic setting \ref{item:setting_practical}.
This implies that the true dynamics are no longer available and are instead replaced by estimated dynamics driven by learned log-densities and scores. In particular, we assume that we do not have access to second-order information (\ie, the Hessians of the log-densities), as it is often the case in practice. This restricts us to only using zeroth and first order diffusion-based aMC samplers. The purpose of this approach is to compare the practical performance of these samplers with their ideal counterparts described in \Cref{sec:existing_recalib,sec:determinist}, which are affected only by statistical and time-discretization errors. %

\vspace{-0.1cm}
\subsection{Log-density and score learning framework}
\label{subsec:learning_framework}

\vspace{-0.1cm}
\paragraph{Architecture design.} To evaluate DM-aMC BGs under realistic constraints, we first learn DM log-densities and scores simultaneously using available samples.
To do so, we model the log-density $\log p_t(x)$ by a scalar-valued neural network $(t,x)\mapsto -\Etheta{t}{x}$, and deduce an approximation of the score function $\nabla \log p_t(x)$ by taking the negative gradient of $\Ethetaalone$, denoted by $\stheta{t}{x} = -\nabla_x \Etheta{t}{x}$. To ensure correctness at $t = t_0$ close to $0$, we compare two common architectures.
\begin{enumerate}[wide, labelindent=10pt, label=(\alph*)]
	\vspace{-0.3cm}
	\item \textit{Pinned}: we first consider the pinned architecture \citep{Phillips2024particle, zhang2025acceleratedparalleltemperingneural}, defined as
	      \begin{align}\label{eq:pinned}
		      \Etheta{t}{x} = (1 - \ftheta{t} + \ftheta{t_0}) \mathcal{E}(x) + (\ftheta{t} - \ftheta{t_0}) \gtheta{t}{x}\eqsp,
	      \end{align}
	      where $\fthetaalone : [0,T] \to \rset$ is a neural network that solely takes time as input, and $\gthetaalone : [0,T] \times \rset^d \to \rset$ is another neural network conditioned on both $t$ and $x$.

	      While this setting ensures exact recovery of the target distribution $\pi$ at $t_0$, it is known to be difficult to train \citep{du2025feat}, motivating the consideration of the next architecture.
	      \vspace{-0.1cm}
	\item \textit{Hardcoded}: the second architecture is an unconstrained variant inspired by the preconditioned score network used in \citep{Karras2022elucidating, thornton2025controlled}. Since it does not enforce any boundary condition at $t=t_0$, we explicitly correct this during sampling by replacing $\sthetanox{t_0}$ with $-\nabla \mathcal{E}$ and $\Ethetanox{t_0}$ with $\mathcal{E}$. While this approach offers more flexibility during training, it may lead to inaccurate behavior at inference.
\end{enumerate}

\paragraph{Loss design.}For each neural network, we consider seven learning approaches.
We restate their expression in \Cref{app:score_matching} and provide training details in \Cref{app:training_details}. These losses are denoted as follows:
\begin{itemize}[wide, labelindent=0pt]
	\vspace{-0.3cm}
	\item (DSM): \textit{Denoising Score Matching} objective \citep{Song2021score, Karras2022elucidating}, %
	      \vspace{-0.1cm}
	\item (TSM+DSM): \textit{Target Score Matching} objective \citep{Bortoli2024target} with DSM regularization,
	      \vspace{-0.1cm}
	\item (tSM+DSM): \textit{Time Score Matching} objective \citep{yu2025density, guth2025learning} with DSM regularization,
	      \vspace{-0.5cm}
	\item (LFPE+DSM): DSM objective with \textit{Log-density Fokker Planck Equation} regularization \citep{Shi2024diffusion},
	      \vspace{-0.1cm}
	\item (aLFPE+DSM): DSM objective with \textit{approximated LFPE} regularization \citep{plainer2025consistent},
	      \vspace{-0.1cm}
	\item (RNE+DSM): DSM objective with \textit{Radon-Nikodym Estimator} regularization \citep{he2025rneplugandplaydiffusioninferencetime},
	      \vspace{-0.1cm}
	\item (DiffCLF+DSM): DSM objective with \textit{Diffusive Classification} regularization \citep{ouyang2026diffusiveclassificationlosslearning}.
\end{itemize}

\subsection{DM-BGs via aMC seem inherently limited by log-density approximation}\label{subsec:results}

\paragraph{DM-BGs fail in practice.}

\Cref{fig:sm_is_bad} compares zeroth and first order DM-based aMC-BGs in realistic setting \ref{item:setting_practical} on the \emph{TwoModes} intermediate difficulty target, for all DM training objectives introduced above. We consider: (a) the standard aMC setting (red bars); (b) aMC samplers based on first-order stochastic transition kernels (blue bars); and (c) aMC samplers based on first-order deterministic transition maps (yellow bars).
Each BG is combined with both the hardcoded architecture (bar hatching) and the pinned architecture (dot hatching). We also report:
\begin{itemize}[wide, labelindent=0pt]
	\vspace{-0.3cm}
	\item classical tempering-based aMC samplers (grey bars), that were shown to be less accurate than DM-based standard aMC samplers in idealized setting \ref{item:setting_perfect}; see \Cref{subsec:tempering_vs_diff};
	      \vspace{-0.1cm}
	\item simulations of the reverse SDE \eqref{eq:sde-denoising} and ODE \eqref{eq:pf_ode} driven by the approximate score; %
	      \vspace{-0.1cm}
	\item semi-realistic DM-based aMC-BGs (no hatching), where the diffusion-path densities are the analytic ones, as in idealized setting \ref{item:setting_perfect}, while the learned score is used.
\end{itemize}

Within each setting described above, we report the best result over $K \in \{32,64,128,256\}$ for readability.
Overall, for each aMC class and each architecture, the three realistic DM-based BG variants
achieve nearly indistinguishable performance, with no clear improvement over the tempering baseline and, in some cases, a degradation. This contrasts sharply with the idealized setting, where the diffusion-based deterministic approach consistently outperformed both the tempering path and the other diffusion-based alternatives across all aMC variants. More precisely, we observe that:
\begin{enumerate}[wide,labelindent=10pt, label=(\roman*)]
	\vspace{-0.3cm}
	\item \textit{For the Hardcoded architecture}, the resulting BGs perform noticeably worse than the learned reverse SDE/ODE baselines. This suggests that the poor performance is not primarily due to the learned scores, but rather to inaccuracies in the learned log-densities. This interpretation is further supported by the semi-ideal experiments: when only densities are exact, the behavior of the aMC samplers improves substantially, and the results better match those observed in the idealized regime.
	      \vspace{-0.1cm}
	\item \textit{For the Pinned architecture}, the conclusions are even less favorable. This setting is highly prone to training failures, and the learned reverse SDE/ODE simulations already produce strongly biased samples, indicating that the score functions themselves are poorly learned in the considered multi-modal settings.
\end{enumerate}

\newpage

\begin{figure}[h!]
	\vspace{-1cm}
	\centering
	\includegraphics[width=\linewidth]{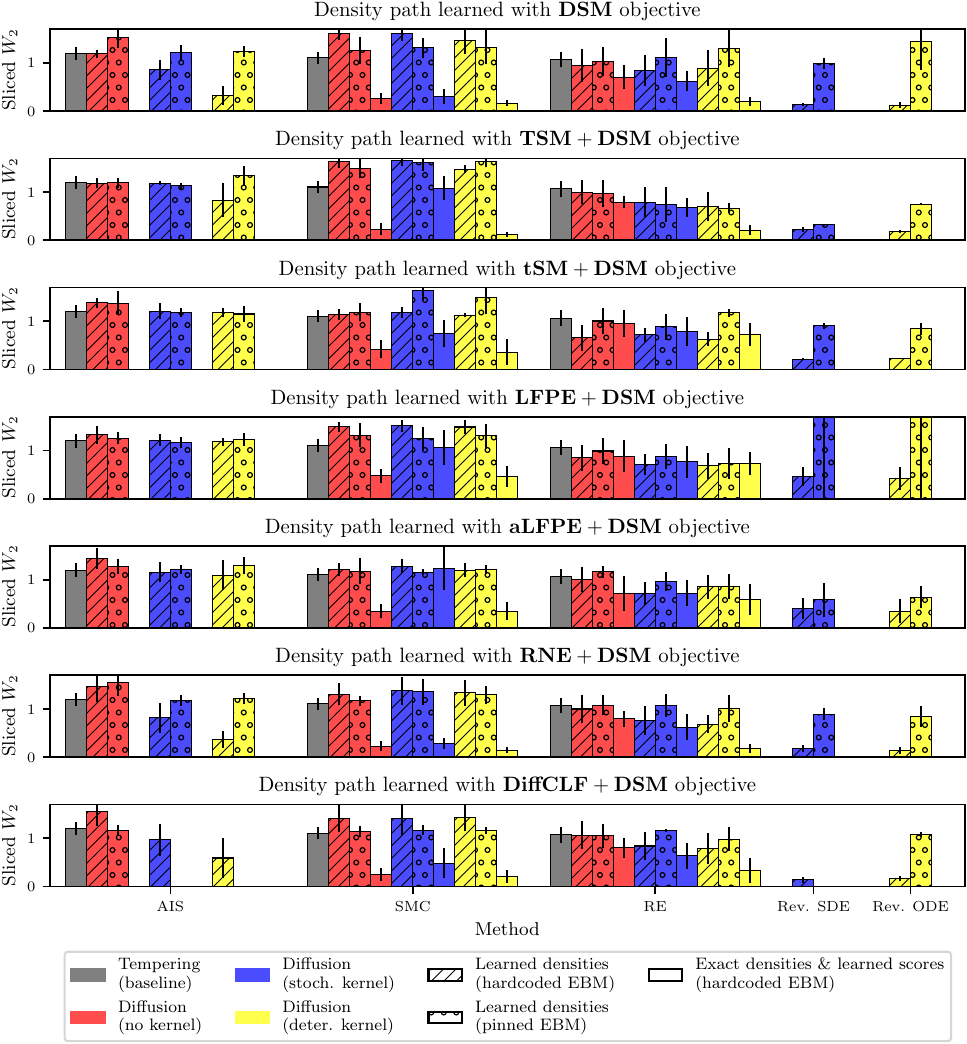}
	\vspace{-0.7cm}
	\caption{\textbf{Realistic results of DM-based aMC-BG}, when targeting \textit{TwoModes} distribution with intermediate difficulty (Medium distance -- Medium dimension) in setting \ref{item:setting_practical} : \textbf{(From top to bottom)} the DM is trained via TSM+DSM, tSM+DSM, aLFPE+DSM, RNE+DSM or DiffCLF+DSM objective with identical computational budget.
		Each group of bars with the same color corresponds to a specific aMC method, except for the last two groups on the right which shows the baseline obtained by directly simulating the reverse SDE \eqref{eq:sde-denoising} and ODE \eqref{eq:pf_ode}.
		Bar colors are consistent with those displayed in \Cref{fig:temp_vs_noising,fig:diff_two_modes}.
		On the other hand, hatching denotes the nature of neural approximator.
		For each method and objective, the number of levels $K$ was optimized individually so as to display the best expected result.
		Each result is averaged over 8 runs with 8,192 samples per run. Missing bars correspond to numerical failures during training.
		We observe that DM-BGs generally underperform compared to directly leveraging the DM alone (\ie, simulating the reverse SDE/ODE), and rarely surpass the classic tempering methods. Running the same realistic experiments on the remaining \emph{TwoModes} and \emph{ManyModes} targets led to the same negative conclusions for all density-learning methods. To avoid overloading the manuscript with redundant results, we only report the numerical results for the \emph{ManyModes} instance with $16$ modes in \Cref{app:bonus_exp}.}
	\label{fig:sm_is_bad}
	\vspace{-1cm}
\end{figure}

\newpage

\begin{figure}[h!]
	\centering
	\includegraphics[width=\linewidth]{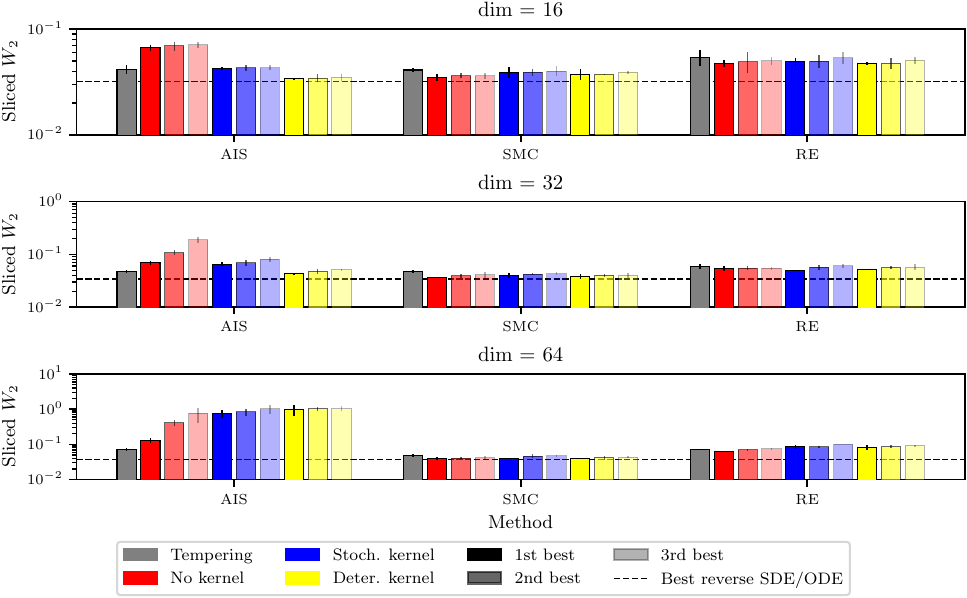}
	\vspace{-0.5cm}
	\caption{\textbf{Performance of DM-based aMC-BGs on \textit{ManyWell}} in practical setting~\ref{item:setting_practical}, for $d=16$ (top), $d=32$ (middle), $d=64$ (bottom). Colors denote AIS, SMC, or RE configurations, as in \Cref{fig:sm_is_bad,fig:temp_vs_noising,fig:diff_two_modes}; gray marks conventional tempering baselines. The dotted line gives the better of the reverse ODE~\eqref{eq:pf_ode} and SDE~\eqref{eq:sde-denoising} baselines. Results are aggregated over density-learning objectives, architectures, and numbers of levels, showing only the top three configurations per dimension and aMC-BG variant (decreasing intensity from best to third). Each bar averages 8 runs of $8{,}192$ samples. Uncalibrated diffusion sampling consistently beats the tempering baselines, but its extension to BGs brings only marginal, largely configuration-independent gains: overall, diffusion-based BGs perform on par with, or slightly better than, tempering, while remaining substantially worse than the ODE/SDE baselines. Additional metrics are given in \Cref{app:bonus_exp}.}
	\label{fig:many_well_w2}
\end{figure}

We complement the practical experiments on Gaussian-mixture targets with results on several instances of the \textit{ManyWell} distribution, for which no corresponding idealized regime is available; see \Cref{fig:many_well_w2}. We retain the same color coding for the DM-based aMC variants as in \Cref{fig:sm_is_bad}, and additionally report the corresponding tempering results and the best reverse ODE/SDE result obtained across density-learning methods. Since architecture and training objective prove to be of secondary importance, we do not present separate results for every combination as in \Cref{fig:sm_is_bad}; instead, for each class of DM-based aMC methods, we display only the three best-performing configurations.

The ManyWell results largely mirror those obtained in the practical Gaussian-mixture setting: across most aMC methods, the three DM-based BG variants achieve nearly indistinguishable performance, with AIS a partial exception, as the ordering observed in the idealized setting appears to persist for $d\in\{16,32\}$. Diffusion-based BGs do not yield a clear improvement over their tempering baselines (the largest, though still marginal, gains appear for SMC) and none of the variants outperforms the full reverse ODE/SDE simulations. These findings highlight a key practical limitation of diffusion-based BGs on challenging multi-modal targets.

\newpage

\begin{figure}[h!]
	\vspace{-0.8cm}
	\centering
	\includegraphics[width=\linewidth]{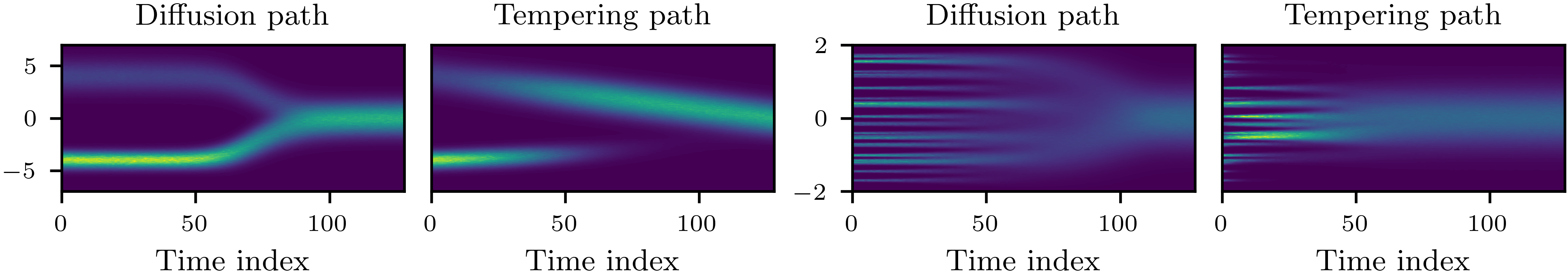}
	\vspace{-0.7cm}
	\caption{\textbf{\underline{Exact} density paths bridging $\pibase$ (last time index) to 1D Gaussian mixtures (first time index).} \textbf{(Left)}: the target is an instance of \textit{TwoModes} defined as $(3/4) \densityGaussian(-4, 0.5^2) + (1/4) \densityGaussian(+4, 1)$, \textbf{(Right)} the target is the 1D instance of \textit{ManyModes} with 32 modes, \textbf{(First and third columns)} diffusion density path, \textbf{(Second and fourth columns)} tempering density path.
		Transport is made on time interval $[0,1]$ with 128 timesteps.
		We observe that the tempering path shows clear mode switching for both of the targets: in the case of the \textit{TwoModes} target, the strongest mode emerges abruptly, while the weakest modes appear rapidly for the \textit{ManyModes} target. On the other hand, the mode weights in the diffusion path remain stable over time, making it more favorable for aMC.}
	\vspace{-0.1cm}
	\label{fig:mode_witching_exact}
\end{figure}

\begin{figure}[h!]
	\centering
	\includegraphics[width=\linewidth]{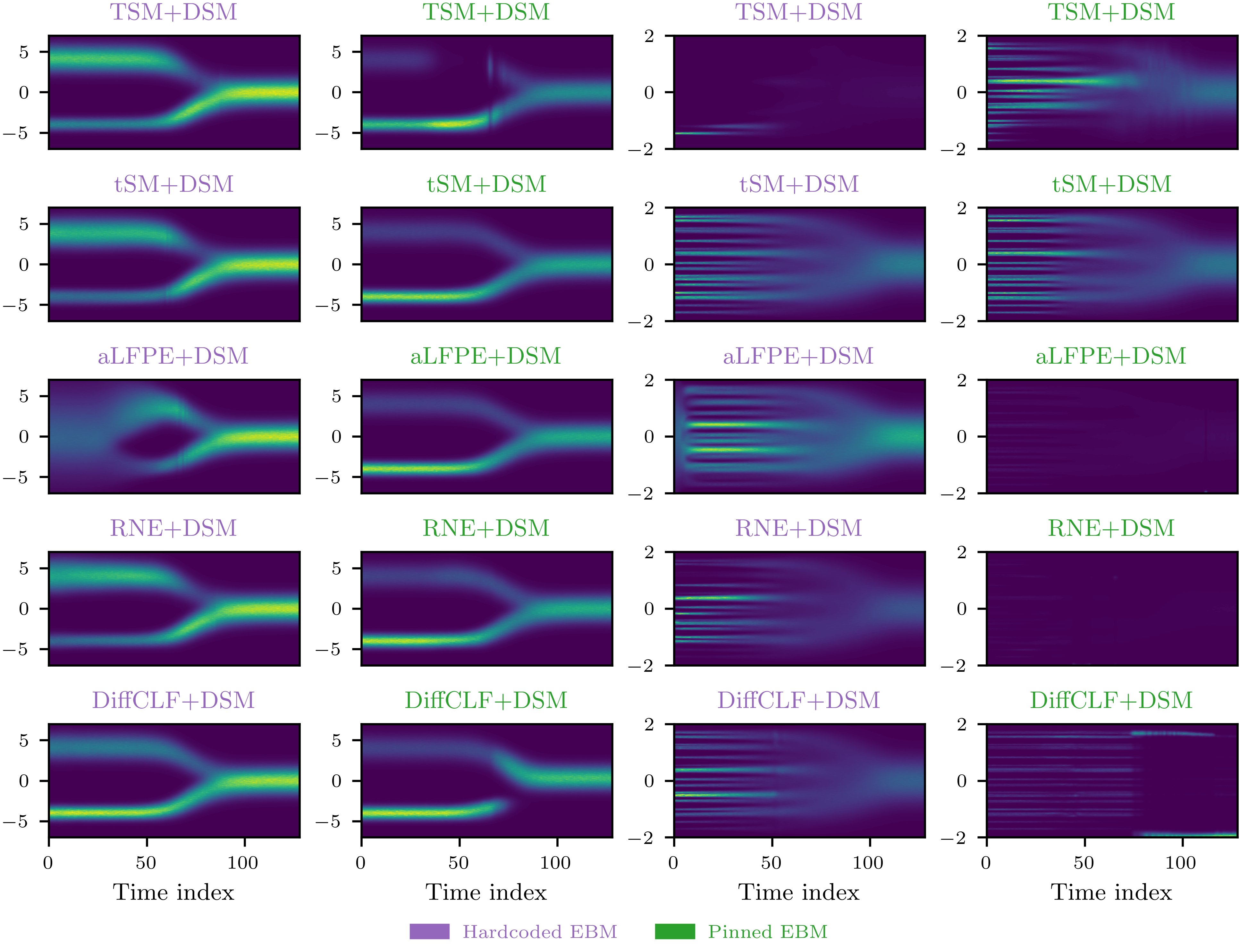}
	\vspace{-0.8cm}
	\caption{\textbf{\underline{Learned} diffusion density paths bridging $\pibase$ (last time index) to the same targets as in \Cref{fig:mode_witching_exact}.} \textbf{(From top to bottom)} the DM is trained via TSM+DSM, tSM+DSM, aLFPE+DSM, RNE+DSM or DiffCLF+DSM objective, with identical computational budget, \textbf{(Left)} \textit{TwoModes} target, \textbf{(Right)} \textit{ManyModes} target, \textbf{(First and third columns)} use of hardcoded EBM, \textbf{(Second and fourth columns)} use of pinned EBM. The same plotting configuration as in \Cref{fig:mode_witching_exact} is used here. When using the hardcoded architecture for both targets, we observe that the density is well learned for large times (near $\pibase$), but often fails to recover the exact mode weights for small times, thus highlighting the mode blindness of related training objectives. While using the pinned architecture enables to closely recover the exact diffusion path for the \textit{TwoModes} target, we observe that this strategy is not successful when increasing the number of modes, with the notable exception of tSM+DSM objective.}
	\vspace{-1.5cm}
	\label{fig:mode_witching_learned}
\end{figure}

\newpage

\paragraph{DM-BG failure cases could be attributed to mode switching in learned log-densities.}

The semi-ideal experiments of \Cref{fig:sm_is_bad}, where exact diffusion-path densities are paired with learned scores, point directly at the log-density estimation as the bottleneck: aMC performance recovers substantially as soon as densities are exact, even though scores remain learned. We hypothesize that the underlying cause is the inherent \emph{mode blindness} of most of the DM training objectives considered here, which likely induces mode switching in the learned density path. This limitation of score-based learning is well documented \citep{Wenliang2021blindness, Zhang2022towards, Shi2024diffusion} and affects all divergences derived from the Fisher divergence or Stein discrepancy: on multi-modal distributions with well-separated modes, these divergences cannot distinguish distributions sharing the same mode locations but differing in mode proportions. Indeed, the score is independent of the normalizing constant and, when evaluated within a single mode, is unaware of the others. Score matching therefore tends to recover the correct shape within each mode (\ie, accurate gradients) but with incorrect relative weights. \Cref{fig:mode_witching_learned} illustrates this phenomenon for the same subset of DM training objectives as in \Cref{fig:sm_is_bad} on 1D Gaussian mixtures (full results in \Cref{app:bonus_exp}), and can be directly compared to the ground-truth diffusion and tempering density paths in \Cref{fig:mode_witching_exact}.

\vspace{0.2cm}
\begin{tcolorbox}[title = The ubiquitous curse of mode blindness for log-density estimation ?]
	Although mode blindness has been empirically documented for the DSM objective, we emphasize that the same issue also affects TSM, due to its score-based formulation. Nevertheless, recent diffusion-based sampling methods have used TSM to learn log-densities \citep{Phillips2024particle,zhang2025acceleratedparalleltemperingneural} in order to incorporate DMs within aMC samplers. By the same reasoning, mode blindness also affects log-density estimation based on score-distillation losses \citep{thornton2025controlled,akhoundsadegh2025progressiveinferencetimeannealingdiffusion}, which may harm the accuracy of SMC sampling in related inference-time alignment tasks.
	\vspace{0.2cm}

	The mode blindness of the other training objectives considered in this paper has been recently investigated by \citet{ouyang2026diffusiveclassificationlosslearning}: their Appendix C theoretically establishes that score matching, time score matching, and Fokker-Planck regularization all suffer from this issue, while their Proposition A.1 shows that RNE coincides with the LFPE objective in the dense-schedule limit, implying the same blindness. The case of tSM is more nuanced, as the authors argue that its susceptibility to mode blindness depends on the target -- a mixed behavior also visible in \Cref{fig:mode_witching_learned} with the hardcoded architecture, where tSM predicts incorrect mode weights on \emph{TwoModes} but accurate ones on \emph{ManyModes}. To address these limitations, \citet{ouyang2026diffusiveclassificationlosslearning} propose DiffCLF. While \Cref{fig:mode_witching_learned} suggests that DiffCLF indeed mitigates mode blindness in simpler cases, the same figure shows it struggles on the \emph{ManyModes} target, consistent with the authors' own observation.
\end{tcolorbox}

\vspace{0.2cm}
Overall, we conjecture that mode switching significantly hinders aMC methods, as well for learned diffusion density paths as for tempering density paths. In SMC (including DM-enhanced variants), resampling must continually correct for imbalanced mode weights, which becomes increasingly challenging in high dimensions. Similarly, in RE, communication between chains is disrupted when mode alignment across levels is inconsistent, although we observe that it may be compensated for by the possibility of moving back and forth between levels during sampling procedure. This instability explains the poor performance of the learned path in \Cref{fig:sm_is_bad}, even when the forward and backward transition kernels (both deterministic and stochastic) are accurate due to well-learned scores.

\newpage
\section{Conclusion \& Limitations}

This work revisits the design of Boltzmann Generators by replacing the standard normalizing-flow/importance-sampling backbone with a diffusion-model backbone embedded in annealed Monte Carlo. We first unify and review prior DM-aMC approaches, which exploit diffusion-induced \emph{stochastic} denoising kernels to facilitate transitions between annealing levels, and we then introduce and study \emph{deterministic} counterparts based on diffusion-derived transport maps. To compare these methods, we conduct an empirical study on multi-modal target distributions, emphasizing challenging characteristics such as inter-mode separation, number of modes, and dimensionality. Our analysis proceeds in two stages: we (i) isolate inference effects by assuming a perfectly learned DM, and (ii) turn to a realistic setting where the DM is trained from data.

In the idealized regime, empirical metrics reveal a non-zero discrepancy between the ground-truth target and the distribution induced by the resulting BG, despite perfect model knowledge. This indicates that aMC inference error alone can produce measurable bias. In this setting, \emph{first-order} stochastic denoising kernels (score-only) often fail to improve over standard aMC baselines, whereas second-order kernels (incorporating Hessian information) and deterministic transitions yield substantially better results. Importantly, our deterministic construction based on a Hutchinson-type estimator remains competitive even without explicit Hessian access, suggesting that deterministic transport can recover much of the benefit of second-order information while relaxing its most demanding requirement.

In the learned regime (e.g., score-matching-like training objectives), the picture changes markedly: the resulting BGs systematically fail across our multi-modal benchmarks, even when the learned scores appear accurate. Our MoG experiments point to inaccuracies in DM log-density estimation as the primary culprit. Specifically, the obtained estimates are mostly mode-blind, as they fail to accurately represent relative mode proportions along the diffusion path in regions where the modes are well separated. As a consequence, such errors directly disrupt sampling and can dominate any gains from improved transitions. In other words, high-quality score estimates are not sufficient to guarantee successful BG construction when the correction step relies on unreliable log-density approximations.

In the spirit of \cite{Grenioux2025improving}, our goal is not to demonstrate scalability but to expose and analyze the fundamental limitations of diffusion-based aMC-BGs in a simple, fully controlled benchmark. The underlying rationale is that methods that do not succeed in these elementary multi-modal settings are unlikely to behave reliably on more complex targets with many modes or ill-conditioned energy landscapes. Accordingly, this work emphasizes failure mechanisms over performance claims, consistent with our largely negative conclusions.

A natural direction for future work concerns the modeling side: the main bottleneck in realistic settings is the mode blindness of current DM log-density estimation techniques, and addressing it appears necessary for reliable sampling. Beyond building a single BG, training schemes that mitigate or eliminate this issue would open the door to using the proposed aMC machinery as an inner loop in iterative diffusion-based training procedures tailored for sampling -- departing from the one-shot correction perspective, in the spirit of adaptive, data-free training strategies \citep{Gabrie2022adaptive, Phillips2024particle, Akhound-Sadegh2024iterated}.

\newpage

\section*{Acknowledgments}
We warmly thank Marylou Gabrié, Alain Durmus, and José Miguel Hernández-Lobato for the insightful discussions and reflections that helped shape and refine this work. Their perspectives and feedback have been invaluable throughout the development of the ideas presented here. This work was performed using HPC resources from GENCI–IDRIS (AD011014860R1, AD011014860R2 and AD011015234R1). This work received government funding managed by the French National Research Agency under France 2030, reference ANR-23-IACL-0005.

\bibliographystyle{tmlr}
\bibliography{main.bib}

\newpage
\appendix
\crefalias{section}{appendix}
\crefalias{subsection}{appendix}
\crefalias{subsubsection}{appendix}
\section*{Organization of the supplementary}

The appendix is organized as follows. \Cref{sec:preliminaries} summarizes general facts that will be useful for proofs and corresponding computations. In \Cref{app:sde_noising}, we describe the general framework of noising diffusion processes, as well as the particular case of Variance-Preserving (consistently used in our experiments) and Variance-Exploding schemes: we notably detail there the formulas related to SDE/ODE integrators and to the computation of the log-determinant terms arising from the use of deterministic transitions in DM-aMC samplers (see \Cref{sec:determinist}). In \Cref{app:proofs_deter}, we provide the proofs of all theoretical results dispensed in \Cref{sec:determinist}. Finally, we precisely detail our experimental setting in \Cref{app:exp_details}, along with additional numerical results.

\section{Preliminaries}\label{sec:preliminaries}

\subsection{Useful lemmas}

\begin{lemma}[Power series expansion of the matrix logarithm] \label{lemma:logarithm} Let $(\alpha,\beta)\in\rset^2$. For any matrix $\mathrm{M} \in \rset^{d \times d}$ satisfying $\norm{\mathrm{M}} < \min(1/\abs{\alpha}, 1/\abs{\beta})$, the following identities hold
	\begin{align*}
		\log\left[(\Idd - \beta\mathrm{M})^{-1} (\Idd +\alpha \mathrm{M})\right] = \sum_{i=1}^{\infty} \frac{\beta^i - (-1)^i \alpha^i}{i} \mathrm{M}^i, \quad
		\log\left[(\Idd + \beta\mathrm{M})^{-1} (\Idd -\alpha \mathrm{M})\right] = \sum_{i=1}^{\infty} \frac{(-1)^i \beta^i - \alpha^i}{i} \mathrm{M}^i,
	\end{align*}
	where $\log$ denotes the matrix logarithm.
\end{lemma}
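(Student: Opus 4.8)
The plan is to reduce the matrix identity to the scalar power series for $\log(1+z)$ applied to the eigenvalues of $\mathrm{M}$, together with an argument that the formal manipulations are justified by absolute convergence. First I would recall the scalar fact that for $|z| < 1$,
\begin{align*}
  \log(1+z) = \sum_{i=1}^{\infty} \frac{(-1)^{i+1}}{i} z^i = -\sum_{i=1}^{\infty} \frac{(-1)^{i}}{i} z^i\eqsp,
\end{align*}
so that $\log(1+\alpha z) - \log(1 - \beta z) = \sum_{i \geq 1} \frac{\beta^i - (-1)^i \alpha^i}{i} z^i$ whenever $|\alpha z| < 1$ and $|\beta z| < 1$, i.e. whenever $|z| < \min(1/|\alpha|, 1/|\beta|)$. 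The goal is then to lift this to matrices, noting that $\log[(\Idd - \beta \mathrm{M})^{-1}(\Idd + \alpha \mathrm{M})] = \log(\Idd + \alpha \mathrm{M}) - \log(\Idd - \beta \mathrm{M})$ because $(\Idd + \alpha \mathrm{M})$ and $(\Idd - \beta \mathrm{M})^{-1}$ are both power series in $\mathrm{M}$ and hence commute, so the matrix logarithm of the product splits as the sum of the logarithms (this splitting needs the spectra of the two factors to avoid the branch cut, which holds under the stated norm bound since all eigenvalues of $\mathrm{M}$ have modulus $< 1$, forcing those of $\Idd \pm c\mathrm{M}$ into the right half-plane near $1$).

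Next I would define the matrix logarithm via its own power series, $\log(\Idd + \mathrm{N}) = \sum_{i\geq 1} \frac{(-1)^{i+1}}{i}\mathrm{N}^i$, valid for $\norm{\mathrm{N}} < 1$. Applying this with $\mathrm{N} = \alpha \mathrm{M}$ (legitimate since $\norm{\alpha \mathrm{M}} = |\alpha|\norm{\mathrm{M}} < 1$) and with $\mathrm{N} = -\beta \mathrm{M}$, and then subtracting term by term, yields
\begin{align*}
  \log(\Idd + \alpha \mathrm{M}) - \log(\Idd - \beta\mathrm{M}) = \sum_{i=1}^{\infty} \frac{(-1)^{i+1}\alpha^i - (-1)^{i+1}(-\beta)^i}{i}\mathrm{M}^i = \sum_{i=1}^{\infty} \frac{\beta^i - (-1)^i\alpha^i}{i}\mathrm{M}^i\eqsp,
\end{align*}
which is the first claimed identity. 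The second identity follows by the substitution $\alpha \mapsto -\beta$, $\beta \mapsto -\alpha$ (or equivalently by swapping the roles of the two factors and negating), which sends $(\Idd - \beta\mathrm{M})^{-1}(\Idd + \alpha\mathrm{M})$ to $(\Idd + \beta\mathrm{M})^{-1}(\Idd - \alpha\mathrm{M})$ and the coefficient $\beta^i - (-1)^i\alpha^i$ to $(-\alpha)^i - (-1)^i(-\beta)^i = (-1)^i\alpha^i - \beta^i$, i.e. to $(-1)^i\beta^i - \alpha^i$ up to the overall sign that is absorbed correctly — I would double-check this sign bookkeeping carefully.

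The main obstacle is not the algebra but the analytic justification: I must confirm that (i) all three power series ($\log(\Idd + \alpha\mathrm{M})$, $\log(\Idd - \beta\mathrm{M})$, and their difference) converge absolutely in operator norm under the hypothesis $\norm{\mathrm{M}} < \min(1/|\alpha|, 1/|\beta|)$, so that term-by-term subtraction is valid and the Cauchy-product rearrangements are harmless, and (ii) the identity $\log(AB) = \log A + \log B$ genuinely holds for $A = (\Idd - \beta\mathrm{M})^{-1}$, $B = \Idd + \alpha\mathrm{M}$. For (ii) the clean route is to avoid invoking the general (and false-in-general) product rule and instead argue directly: both sides are real-analytic functions of the single real parameter along the segment from $\mathrm{M}=0$, they agree at $\mathrm{M}=0$, and their derivatives agree because everything commutes; alternatively, diagonalize or triangularize $\mathrm{M}$ and reduce to the scalar statement on each eigenvalue, the scalar statement being immediate from the convergence of $\log(1+\alpha z) - \log(1-\beta z)$ on $|z| < \min(1/|\alpha|,1/|\beta|)$ and the principal-branch identity there (valid since $1+\alpha z$ and $1-\beta z$ stay in the open right half-plane). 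I expect the paper's proof to take the latter, more elementary eigenvalue route, so I would structure the write-up to first handle $\mathrm{M}$ diagonalizable, then pass to general $\mathrm{M}$ by density (or by upper-triangular form plus continuity of both sides in $\mathrm{M}$).
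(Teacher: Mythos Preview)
Your proposal is correct and is in fact far more detailed than the paper's own proof, which is a single line: ``This is an immediate corollary from [Hall, Theorem 3.6]'' (the standard power-series representation $\log(\Idd + \mathrm{N}) = \sum_{i\geq 1}(-1)^{i+1}\mathrm{N}^i/i$ for $\norm{\mathrm{N}}<1$). Your route---splitting the logarithm of the product as $\log(\Idd+\alpha\mathrm{M}) - \log(\Idd-\beta\mathrm{M})$ using commutativity, then subtracting the two series termwise---is exactly the unpacking of that citation, and your remarks about the branch cut and absolute convergence are the right checks to make. One small correction: the substitution that derives the second identity from the first is $(\alpha,\beta)\mapsto(-\alpha,-\beta)$, not $(\alpha,\beta)\mapsto(-\beta,-\alpha)$ as you wrote (your version lands on $(\Idd+\alpha\mathrm{M})^{-1}(\Idd-\beta\mathrm{M})$ rather than $(\Idd+\beta\mathrm{M})^{-1}(\Idd-\alpha\mathrm{M})$), but you already flagged that the sign bookkeeping needed rechecking.
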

\begin{proof}
	This is an immediate corollary from \cite[Theorem 3.6]{hall2000elementaryintroductiongroupsrepresentations}.
\end{proof}

\begin{corollary}\label{corollary:expansion}Let $(c_1,c_2,c_3)\in\rset^3$ with $c_1\neq 0$. Let $\mathrm{M} \in \rset^{d \times d}$ be a matrix satisfying $\norm{\mathrm{M}} < \min(1/\abs{c_2}, 1/\abs{c_3})$. Define the matrices
	\begin{align*}
		\mathrm{M}_1 = c_1 (\Idd + c_2\mathrm{M})^{-1} (\Idd -c_3 \mathrm{M}) \eqsp, \eqsp
		\mathrm{M}_2 = c_1^{-1} (\Idd - c_3\mathrm{M})^{-1} (\Idd +c_2 \mathrm{M}) \eqsp .
	\end{align*}
	Then we have
	\begin{align*}
		\log \abs{\operatorname{det} \mathrm{M}_1} = d \log \abs{c_1} + \sum_{i=1}^{\infty} \frac{(-1)^i c_2^i - c_3^i}{i} \operatorname{Tr}[\mathrm{M}^i] \eqsp, \eqsp
		\log \abs{\operatorname{det} \mathrm{M}_2} = -d \log \abs{c_1}  + \sum_{i=1}^{\infty} \frac{c_3^i - (-1)^i c_2^i}{i} \operatorname{Tr}[\mathrm{M}^i] \eqsp .
	\end{align*}
\end{corollary}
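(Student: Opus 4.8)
The plan is to derive Corollary~\ref{corollary:expansion} directly from Lemma~\ref{lemma:logarithm} by combining three elementary facts: the matrix identity $\log\abs{\det A} = \mathrm{Re}\,\operatorname{Tr}(\log A)$ for any matrix $A$ whose logarithm is well-defined via the power series, the multiplicativity $\log\det(cB) = d\log\abs{c} + \log\det B$ for a scalar $c\neq 0$, and the trace being linear so that it commutes with the infinite sum appearing in Lemma~\ref{lemma:logarithm}.

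First I would write $\mathrm{M}_1 = c_1 \tilde{\mathrm{M}}_1$ with $\tilde{\mathrm{M}}_1 = (\Idd + c_2\mathrm{M})^{-1}(\Idd - c_3\mathrm{M})$, so that $\log\abs{\det \mathrm{M}_1} = d\log\abs{c_1} + \log\abs{\det \tilde{\mathrm{M}}_1}$. Then I would invoke the second identity of Lemma~\ref{lemma:logarithm} with $(\alpha,\beta) = (c_3, c_2)$: the hypothesis $\norm{\mathrm{M}} < \min(1/\abs{c_2}, 1/\abs{c_3})$ is exactly what is needed to apply it, giving $\log \tilde{\mathrm{M}}_1 = \sum_{i=1}^\infty \frac{(-1)^i c_2^i - c_3^i}{i}\mathrm{M}^i$ as a convergent matrix series. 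Taking $\operatorname{Tr}$ of both sides, using $\log\abs{\det \tilde{\mathrm{M}}_1} = \operatorname{Tr}(\log \tilde{\mathrm{M}}_1)$ (the series having real coefficients, the real-part caveat is vacuous), and exchanging trace and sum by absolute convergence, yields $\log\abs{\det \tilde{\mathrm{M}}_1} = \sum_{i=1}^\infty \frac{(-1)^i c_2^i - c_3^i}{i}\operatorname{Tr}[\mathrm{M}^i]$, which is the claimed formula for $\mathrm{M}_1$. The computation for $\mathrm{M}_2$ is symmetric: write $\mathrm{M}_2 = c_1^{-1}\tilde{\mathrm{M}}_2$ with $\tilde{\mathrm{M}}_2 = (\Idd - c_3\mathrm{M})^{-1}(\Idd + c_2\mathrm{M})$, apply the first identity of Lemma~\ref{lemma:logarithm} with $(\alpha,\beta) = (c_2, c_3)$ to get $\log\tilde{\mathrm{M}}_2 = \sum_{i=1}^\infty \frac{c_3^i - (-1)^i c_2^i}{i}\mathrm{M}^i$, and conclude as before, with the $-d\log\abs{c_1}$ coming from $\log\abs{\det(c_1^{-1}\Idd)}$.

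The only point requiring a little care — and the main (though minor) obstacle — is justifying the identity $\log\abs{\det A} = \operatorname{Tr}(\log A)$ when $\log A$ is only guaranteed to exist as a norm-convergent power series rather than through a spectral/holomorphic functional calculus. This can be handled by noting that $\norm{\tilde{\mathrm{M}}_i - \Idd}$ can be made small under the stated hypothesis (or, more robustly, by a continuity/analyticity argument: both sides are analytic in a scalar parameter $s$ when $\mathrm{M}$ is replaced by $s\mathrm{M}$ with $s$ in a neighborhood of $0$, and they agree to all orders), so the standard Jacobi-type formula $\det(\exp B) = \exp(\operatorname{Tr} B)$ applies with $B = \log A$. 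Alternatively one can cite the same reference \cite{hall2000elementaryintroductiongroupsrepresentations} used for Lemma~\ref{lemma:logarithm}, since the relation $\det\exp = \exp\operatorname{Tr}$ is established there as well. Everything else is bookkeeping.
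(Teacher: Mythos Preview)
Your proposal is correct and follows essentially the same route as the paper: factor out the scalar $c_1$, use the identity $\log\abs{\det A} = \operatorname{Tr}(\log A)$ (the paper cites \cite[Theorem~3.10]{hall2000elementaryintroductiongroupsrepresentations} for this, together with \cite[Lemma~6]{Behrmann19invertible} to drop the absolute value on the individual determinants), and then apply Lemma~\ref{lemma:logarithm} with the same parameter identifications you give. Your discussion of the only delicate point---justifying $\log\abs{\det A} = \operatorname{Tr}(\log A)$ via $\det\exp = \exp\operatorname{Tr}$---is exactly in line with the paper's treatment.
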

\begin{proof}Consider such $(c_1,c_2,c_3)$ and such matrix $\mathrm{M}$. Note that the assumption on $c_2$ and $c_3$ guarantees the invertibility of $\Idd + c_2\mathrm{M}$ and $\Idd - c_3\mathrm{M}$. Regarding $\mathrm{M}_1$, we have
	\begin{align*}
		\log \abs{\operatorname{det} M_1} & = d \log \abs{c_1} + \log \abs{\operatorname{det} \left( (\Idd + c_2\mathrm{M})^{-1} (\Idd -c_3 \mathrm{M})\right)}                                                                                                  \\
		                                  & = d \log \abs{c_1} + \log \abs{\operatorname{det} (\Idd + c_2\mathrm{M})^{-1} \operatorname{det}(\Idd -c_3 \mathrm{M})}                                                                                              \\
		                                  & = d \log \abs{c_1} - \log \abs{\operatorname{det} (\Idd + c_2\mathrm{M})} + \log \abs{\operatorname{det}(\Idd -c_3 \mathrm{M})}                                                                                      \\
		                                  & = d \log \abs{c_1} - \log \operatorname{det} (\Idd + c_2\mathrm{M}) + \log \operatorname{det}(\Idd -c_3 \mathrm{M})             &  & \text{\cite[Lemma 6]{Behrmann19invertible}}                                     \\
		                                  & = d \log \abs{c_1} + \log \operatorname{det} \left((\Idd + c_2\mathrm{M})^{-1}(\Idd -c_3 \mathrm{M}\right))                                                                                                          \\
		                                  & = d \log \abs{c_1} + \operatorname{Tr}\log  \left((\Idd + c_2\mathrm{M})^{-1}(\Idd -c_3 \mathrm{M}\right)) \eqsp .              &  & \text{\cite[Theorem 3.10]{hall2000elementaryintroductiongroupsrepresentations}}
	\end{align*}
	Hence, we obtain the first result of \Cref{corollary:expansion} by using the second statement of \Cref{lemma:logarithm} with $\beta=c_2$ and $\alpha=c_3$. Similar computations with $\mathrm{M}_2$ lead to the second result.
\end{proof}

\subsection{General results on SDE/ODE Exponential Integration}

\begin{assumption}[Integrability conditions on $f$ and $g$]\label{ass:coeff_SDE}Coefficients $f:[0,T]\to \rset$ and $g: [0,T]\to (0,\infty)$ are such that (a) $f$ is integrable on $(0,T)$ and (b) $g$ is integrable on $(0,T)$.
\end{assumption}

\begin{lemma}[SDE Exponential Integration] \label{lemma:ei_sde} Let $T>0$ and $b\in \rset^d$. Consider the SDE defined on $[0,T]$ by $\rmd Y_t= f(t) (Y_t +b) \rmd t + g(t) \rmd B_t$, where coefficients $f$ and $g$ verify \Cref{ass:coeff_SDE}. Then, for any pair of time-steps $(s,t)$ such that $T\geq t>s\geq 0$, the conditional distribution of $Y_t$ given $Y_s=y_s\in \rset^d$, denoted by $q_{t|s}(\cdot|y_s)$, verifies
	\begin{align*}
		\textstyle{q_{t|s}(\cdot|y_s)= \densityGaussian\left(\exp(\int_{s}^{t}f(u) \rmd u) y_s + \left(\exp(\int_{s}^{t}f(u)\rmd u)-1\right)b , \int_s^{t}g^2(u) \exp(2 \int_{u}^{t}f(r) \rmd r)\rmd u \,\Idd\right)} \eqsp .
	\end{align*}
\end{lemma}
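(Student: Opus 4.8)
The plan is to reduce the affine SDE to a homogeneous linear one and then solve it explicitly by an integrating factor, reading off the mean and covariance from the resulting Gaussian representation. First I would set $U_t = Y_t + b$, so that the drift $f(t)(Y_t+b)$ becomes $f(t)U_t$ and $U_t$ satisfies the homogeneous linear SDE $\rmd U_t = f(t) U_t \rmd t + g(t)\rmd B_t$ with shifted initial condition $U_s = y_s + b$ (conditionally on $Y_s = y_s$).

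Next I would introduce the deterministic integrating factor $\Phi(t) = \exp(\int_s^t f(u)\rmd u)$, which is well defined and absolutely continuous on $[s,T]$ by \Cref{ass:coeff_SDE}(a), and apply It\^o's formula to $V_t = \Phi(t)^{-1} U_t$. Since $\Phi$ has finite variation and no martingale part, the two $f(t)U_t\rmd t$ contributions cancel and one is left with $\rmd V_t = \Phi(t)^{-1} g(t)\rmd B_t$. Integrating from $s$ to $t$ (using $V_s = U_s$, as $\Phi(s)=1$) and multiplying back by $\Phi(t)$ gives
\begin{align*}
  U_t = \Phi(t)(y_s + b) + \int_s^t \exp\left(\int_u^t f(r)\rmd r\right) g(u)\rmd B_u,
\end{align*}
whence $Y_t = U_t - b = \Phi(t) y_s + (\Phi(t) - 1) b + \int_s^t \exp(\int_u^t f(r)\rmd r) g(u)\rmd B_u$, which matches the asserted mean once one notes $\Phi(t) = \exp(\int_s^t f(u)\rmd u)$.

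Finally I would identify the law of this expression conditionally on $Y_s = y_s$. The stochastic integral of the deterministic integrand $u\mapsto \exp(\int_u^t f(r)\rmd r) g(u)$ against the $d$-dimensional Brownian motion $(B_u)$ is a centered Gaussian vector; its covariance is isotropic because $B$ has independent coordinates, and equals $(\int_s^t g^2(u)\exp(2\int_u^t f(r)\rmd r)\rmd u)\,\Idd$ by It\^o's isometry applied coordinate-wise. This variance integral is finite since $u\mapsto \exp(\int_u^t f(r)\rmd r)$ is continuous, hence bounded, on the compact $[s,t]$ by \Cref{ass:coeff_SDE}(a), while $g^2$ is integrable there by \Cref{ass:coeff_SDE}(b). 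Adding back the deterministic mean $\Phi(t)y_s + (\Phi(t)-1)b$ shows that $q_{t|s}(\cdot\mid y_s)$ is exactly the Gaussian claimed in the statement.

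I expect the only genuinely delicate point to be the regularity bookkeeping: justifying the integrating-factor manipulation and the resulting closed form when $f$ is merely integrable rather than continuous, so that $\Phi$ is absolutely continuous but possibly not $\mathrm{C}^1$. An equivalent route that sidesteps applying It\^o's formula to $V_t$ is to guess the closed form above, verify directly that it solves the SDE, invoke pathwise uniqueness for linear SDEs with integrable coefficients, and then compute the mean and covariance from the explicit representation; I would fall back on this if cleaning up the integrating-factor argument becomes cumbersome.
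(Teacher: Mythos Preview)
Your proof is correct and follows essentially the same integrating-factor approach as the paper: the paper applies the factor $\zeta(t)=\exp(-\int_0^t f)$ directly to $Y_t$ and carries the resulting $b$-drift through, whereas you first shift $U_t=Y_t+b$ to kill the constant and then apply $\Phi(t)^{-1}$, but both routes reduce to the same Wiener integral and conclude via It\^o's isometry. Your extra care about the regularity of $\Phi$ when $f$ is only integrable is a nice addition that the paper does not make explicit.
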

\begin{proof} Assume \Cref{ass:coeff_SDE}. Define the function $\zeta: t\in [0,T] \to \exp(-\int_{0}^{t}f(u) \rmd u)$ and consider the stochastic process $(Z_t)_{t\in [0,T]}$ defined by $Z_t= \zeta(t) Y_t$ for any $t\in [0,T]$. By Îto's formula, we have $\rmd Z_t=f(t)\zeta(t) b \rmd t +\zeta(t) g(t)\rmd B_t= -\dot{\zeta}(t) b \rmd t + \zeta(t) g(t) \rmd B_t$. Therefore, for any time-steps $(s,t)$ such that  $T\geq t>s\geq 0$, we have
	\begin{align*}
		\textstyle \zeta(t) Y_{t}-\zeta(s) Y_{s}= \{\zeta(s) -\zeta(t)\}b+  \int_{s}^{t}\zeta(u) g(u) \rmd B_u \eqsp ,
	\end{align*}
	and then
	\begin{align*}
		\textstyle Y_{t}= \exp(\int_{s}^{t}f(u) \rmd u) Y_s + \left( \exp(\int_{s}^{t}f(u)\rmd u)-1\right)b + \int_{s}^{t}g(u)\exp( \int_{u}^{t}f(r) \rmd r)\rmd B_u \eqsp,
	\end{align*}
	which gives the result using Îto's isometry and that $Y_s$ is independent from $(B_t-B_s)_{t\in [s,T]}$.
\end{proof}

The following lemma can be seen as the limit of \Cref{lemma:ei_sde} in the deterministic regime, \ie, when $g(t)=0$ for any $t\in [0,T]$.

\begin{lemma}[ODE Exponential Integration]\label{lemma:ei_ode} Let $T>0$ and $b\in \rset^d$. Consider the ODE defined on $[0,T]$ by $\rmd Y_t = f(t) [Y_t + b] \rmd t$, where coefficient $f$ verifies \Cref{ass:coeff_SDE}. Then, for any pair of time-steps $(s,t)$ such that $T\geq t>s\geq 0$, the ODE solution $Y_t$ given $Y_s=y_s\in \rset^d$ is defined by
	\begin{align*}
		\textstyle Y_t = \exp\left(\int_s^t f(u) \rmd u\right)y_s + \left(\exp\left(\int_s^t f(u) \rmd u\right) -1\right) b\eqsp.
	\end{align*}
\end{lemma}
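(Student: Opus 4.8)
The plan is to mimic the integrating-factor argument used in the proof of \Cref{lemma:ei_sde}, specialized to the deterministic case $g\equiv 0$, so that the stochastic integral term simply disappears. Concretely, I introduce the integrating factor $\zeta(t)=\exp(-\int_0^t f(u)\rmd u)$, which is well-defined and absolutely continuous on $[0,T]$ by \Cref{ass:coeff_SDE}, with $\dot\zeta(t)=-f(t)\zeta(t)$ for almost every $t$.

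Next I consider the auxiliary process $Z_t=\zeta(t)Y_t$. Differentiating and using the ODE $\dot Y_t=f(t)(Y_t+b)$ gives, for a.e.\ $t$,
\[
  \dot Z_t=\dot\zeta(t)Y_t+\zeta(t)\dot Y_t=-f(t)\zeta(t)Y_t+\zeta(t)f(t)(Y_t+b)=f(t)\zeta(t)b=-\dot\zeta(t)b .
\]
Since the right-hand side is integrable on $(s,t)$, I integrate from $s$ to $t$ to obtain $\zeta(t)Y_t-\zeta(s)Y_s=\bigl(\zeta(s)-\zeta(t)\bigr)b$, and then solve for $Y_t$:
\[
  Y_t=\frac{\zeta(s)}{\zeta(t)}Y_s+\Bigl(\frac{\zeta(s)}{\zeta(t)}-1\Bigr)b .
\]
Finally I identify $\zeta(s)/\zeta(t)=\exp\bigl(\int_s^t f(u)\rmd u\bigr)$ from the definition of $\zeta$, which yields exactly the claimed formula. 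Alternatively, one can read off the same conclusion directly from \Cref{lemma:ei_sde} by setting $g\equiv 0$: the Gaussian there degenerates to a Dirac mass located at the stated mean, i.e.\ the deterministic trajectory.

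The only point requiring a little care — rather than a genuine obstacle — is the regularity bookkeeping: with $f$ merely integrable, $Y$ should be understood as an absolutely continuous (Carathéodory) solution, so that the product rule and the fundamental theorem of calculus applied to $Z_t$ hold in the absolutely-continuous / almost-everywhere sense; standard linear ODE theory then also provides uniqueness, so the derived expression is \emph{the} solution. No further difficulty is expected, and the proof is essentially the proof of \Cref{lemma:ei_sde} with the Brownian term removed.
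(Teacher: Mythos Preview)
Your proof is correct and follows essentially the same integrating-factor approach as the paper: both introduce $\zeta(t)=\exp(-\int_0^t f(u)\,\rmd u)$ (the paper writes this as $\exp(-F(t))$), differentiate $Z_t=\zeta(t)Y_t$, integrate, and unwind. Your additional remarks on Carath\'eodory regularity and the observation that the result is the $g\equiv 0$ degeneration of \Cref{lemma:ei_sde} are accurate and slightly more detailed than the paper's version, but the core argument is identical.
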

\begin{proof}
	Let $0 \leq s < t \leq T$, set $Z_t = \exp(-F(t)) Y_t$, where $F(t) = \int_0^t f(u) \rmd u$, then
	\begin{align*}
		\rmd Z_t = f(t) \exp(-F(t)) b \rmd t\eqsp,
	\end{align*}
	which implies that
	\begin{align*}
		Z_t = Z + \left(\exp(-F(s))-\exp(-F(t))\right) b\eqsp,
	\end{align*}
	which gives the result.
\end{proof}

\subsection{Review of score and energy matching methods}
\label{app:score_matching}
Consider the noising diffusion process given by the SDE \eqref{eq:sde-noising}. In this section, we review a selection of methods used to learn the scores, the time scores and/or the log-densities (\ie, energies) of the marginal distributions $(p_t)_{t\in [0,T]}$ associated to this process, based on samples from the target distribution $\pi$ with unnormalized density $\gamma$. While the presented score matching (and time score matching) techniques have widely been experimented within the diffusion model community, the evoked log-density estimation (also referred to as energy matching) approaches are much more recent, and only provide a small glimpse into the pretty young field of research to which they belong. In the following, we will denote $r(t)=S(t)\sigma(t)$, where coefficients $S$ and $\sigma$ are introduced in \eqref{eq:interpolant} to marginally characterize diffusion models. We adopt a consistent notation for neural networks: $\Uthetanox{t}$ is used to learn the log-density $\log p_t$, $\sthetanox{t}$ to learn the score $\nabla \log p_t$, and $\uthetanox{t}$ to learn the time score $\partial_t \log p_t$.

\paragraph{Denoising Score Matching (DSM) \citep{Song2021score}.} This is the standard score matching loss used in the diffusion-based generative modeling literature. It relies on the so-called Tweedie identity
\begin{align*}
	\nabla \log p_t(x_t)=\mathrm{E}[\nabla \log q_{t|0}(x_t|X_0)] \eqsp , \eqsp X_0 \sim q_{0|t}(\cdot|x_t) \eqsp ,
\end{align*}
where $q_{t|0}$ is the \emph{tractable} noising transition kernel between times $0$ and $t$, see \eqref{eq:cond_noising}, and $q_{0|t}$ is the related denoising transition kernel, which verifies by Bayes property $q_{0|t}(x_0|x_t)\propto \gamma(x_0)q_{t|0}(x_t|x_0)$. This gives rise to the following objective for estimating the score function $(t,x)\mapsto\nabla \log p_t(x)$ by a neural network $(t,x)\mapsto \stheta{t}{x}$
\begin{align*}
	\mathcal{L}_{\text{DSM}}(\theta)=  \mathrm{E}\left[\norm{\stheta{t}{X_t} + \frac{Z}{r(t)}}_2^2\right] \eqsp, \eqsp X_t =  S(t) X_0 + r(t) Z \eqsp,
\end{align*}
where $t \sim \mathrm{U}(0,T)$, $X_0 \sim \pi$ and $Z\sim \densityGaussian(0, \Idd)$. In practice, one rather uses a reweighted version of this objective given by
\begin{align*}
	\tilde{\mathcal{L}}_{\text{DSM}}(\theta) = \mathrm{E}\left[r^2(t) \norm{\stheta{t}{X_t} + \frac{Z}{r(t)}}_2^2\right] = \mathrm{E}\left[\norm{r(t)  \stheta{t}{X_t} + Z}_2^2\right] \eqsp.
\end{align*}
To further reduce its variance with respect to the noise variable, one may consider applying the antithetic trick on $Z$ variable and thus obtain the loss function
\begin{align*}
	\tilde{\mathcal{L}}^{\text{anti}}_{\text{DSM}}(\theta) =\mathrm{E}\left[\frac{1}{2}\norm{r(t)  \stheta{t}{X_t} + Z}_2^2 + \frac{1}{2}\norm{r(t)  \stheta{t}{X_t^{-}}  - Z}_2^2\right] \eqsp, \\
	\text{with }X_t =  S(t) X_0 + r(t) Z \eqsp , \eqsp X_t^{-} =  S(t) X_0 - r(t) Z .
\end{align*}
In practice, the DSM objective described above still exhibits high variance. To ensure its robustness and stability for large-scale applications, an equivalent objective, coined EDM, was proposed by \cite{Karras2022elucidating}, which specifically relies on preconditioning guidelines for the neural network $\sthetaalone$. In our experiments, the ``DSM objective'' will systematically refer to this specific EDM training loss, enhanced with the antithetic trick, whose success has been widely proven over the last few years for generative tasks.

\paragraph{Target Score Matching (TSM) \citep{Bortoli2024target}.} Alternatively, by operating a change-of-variable in the Tweedie's formula, the following identity also holds
\begin{align*}
	\nabla \log p_t(x_t) = \frac{1}{S(t)}\mathrm{E}[\nabla \log \gamma(X_0)] \eqsp , \eqsp X_0 \sim q_{0|t}(\cdot|x_t) \eqsp .
\end{align*}
This gives rise to the following objective for estimating the score function $(t,x)\mapsto\nabla \log p_t(x)$ by a neural network $(t,x)\mapsto \stheta{t}{x}$
\begin{align*}
	\mathcal{L}_{\text{TSM}}(\theta)= & \mathrm{E}\left[\norm{\stheta{t}{X_t} - \frac{\nabla \log \gamma(X_0)}{S(t)}}_2^2\right] \eqsp, \eqsp X_t =  S(t) X_0 + r(t) Z \eqsp ,
\end{align*}
where $t \sim \mathrm{U}(0,T)$, $X_0\sim \pi$ and $Z \sim \densityGaussian(0, \Idd)$. In practice, one rather uses a reweighted version of this objective given by
\begin{align*}
	\tilde{\mathcal{L}}_{\text{TSM}}(\theta) = \mathrm{E}\left[S^2(t) \norm{\stheta{t}{X_t} - \frac{\nabla \log \gamma(X_0)}{S(t)}}_2^2\right] = \mathrm{E}\left[\norm{S(t) \stheta{t}{X_t} - \nabla \log \gamma(X_0)}_2^2\right]\eqsp,
\end{align*}
which itself can be improved via the antithetic trick as
\begin{align*}
	\tilde{\mathcal{L}}^{\text{anti}}_{\text{TSM}}(\theta) = \mathrm{E}\left[\frac{1}{2}\norm{S(t) \stheta{t}{X_t} - \nabla \log \gamma(X_0)}_2^2+\frac{1}{2}\norm{S(t) \stheta{t}{X_t^{-}}  - \nabla \log \gamma(X_0)}_2^2\right]\eqsp, \\
	\text{with }X_t =  S(t) X_0 + r(t) Z \eqsp , \eqsp X_t^{-} =  S(t) X_0 - r(t) Z .
\end{align*}
In our experiments, the ``TSM objective'' will systematically refer to the training loss function $\tilde{\mathcal{L}}^{\text{anti}}_{\text{TSM}}$. As originally proposed by \cite{Bortoli2024target}, this loss can also be combined with preconditioning schemes to reduce its variance in practice; however, since those are not compatible with the preconditioning directives from \cite{Karras2022elucidating}, we do not integrate them in our numerical experiments.

\paragraph{Time Score Matching (tSM)\citep{guth2025learningnormalizedimagedensities,yu2025density}.} Interestingly, the time score function has a similar decomposition
\begin{align*}
	\partial_t \log p_t(x_t) = \mathrm{E}[\partial_t \log q_{t|0}(x_t|X_0)] \eqsp , \eqsp X_0 \sim q_{0|t}(\cdot|x_t) \eqsp .
\end{align*}
Since the conditional time derivative $\partial_t \log q_{t|0}$ is as tractable as the conditional score $\nabla \log  q_{t|0}$, this gives rise to the following objective for estimating the time score function $(t,x)\mapsto\partial_t \log p_t(x)$ by a neural network $(t,x)\mapsto \utheta{t}{x}$
\begin{align*}
	\mathcal{L}_{\text{tSM}}(\theta)=  \mathrm{E}\left[\norm{\utheta{t}{X_t} - {u}^{\text{target}}_{\text{tSM}}(t,X_0,Z)}_2^2\right] \eqsp,  \eqsp X_t =  S(t) X_0 + r(t) Z \eqsp ,
\end{align*}
where ${u}^{\text{target}}_{\text{tSM}}(t,x_0,z)=-[\dot{r}(t)/r(t)]\left(d-\norm{z}^2\right) + [\dot{S}(t)/r(t)]x_0^\top z$, $t \sim \mathrm{U}(0,T)$, $X_0\sim \pi$ and $Z \sim \densityGaussian(0, \Idd)$. Similarly, this objective may include antithetic trick and rewrite as
\begin{align*}
	\mathcal{L}^{\text{anti}}_{\text{tSM}}(\theta)=  \mathrm{E}\left[\frac{1}{2}\norm{\utheta{t}{X_t} - {u}^{\text{target}}_{\text{tsM}}(t,X_0,Z)}_2^2+\frac{1}{2}\norm{\utheta{t}{X_t^{-}} - {u}^{\text{target}}_{\text{tsM}}(t,X_0,-Z)}_2^2\right] \eqsp , \\
	\text{with }X_t =  S(t) X_0 + r(t) Z \eqsp , \eqsp X_t^{-} =  S(t) X_0 - r(t) Z .
\end{align*}
As such, it has been observed that the derived objective exhibits very high variance, even more than score matching methods. While \cite{guth2025learningnormalizedimagedensities} explore a reweighting precisely adjusted to the VE noising scheme, \cite{yu2025density} propose an alternative reweighting in the context of the VP noising scheme; we implement the latter formulation with the antithetic trick, to which the ``tSM objective'' will systematically refer in our experiments. Note that by including a change-of-variable into the expression of the time score, we may obtain a target-like version of the tSM objective given by
\begin{align*}
	\mathcal{L}_{\text{tTSM}}(\theta)=  \mathrm{E}\left[\norm{\utheta{t}{X_t} -{u}^{\text{target}}_{\text{tTSM}}(t,X_0,Z) }_2^2\right] \eqsp, \eqsp X_t =  S(t) X_0 + r(t) Z \eqsp ,
\end{align*}
where ${u}^{\text{target}}_{\text{tTSM}}(t,x_0,z)=-\left\{[\dot{S}(t)/S(t)]x_0 + [\sigma(t)\dot{r}(t)/r(t)]z\right\}^\top \nabla \log \gamma(x_0)$,  $t \sim \mathrm{U}(0,T)$, $X_0\sim \pi$ and $Z \sim \densityGaussian(0, \Idd)$, along with its antithetic-like version
\begin{align*}
	\textstyle\mathcal{L}^{\text{anti}}_{\text{tTSM}}(\theta)=  \mathrm{E}\left[\frac{1}{2}\norm{\utheta{t}{X_t} -{u}^{\text{target}}_{\text{tTSM}}(t,X_0,Z)}_2^2 + \frac{1}{2}\norm{\utheta{t}{X_t^{-}} -{u}^{\text{target}}_{\text{tTSM}}(t,X_0,-Z) }_2^2\right]\eqsp , \\
	\text{with }X_t =  S(t) X_0 + r(t) Z \eqsp , \eqsp X_t^{-} =  S(t) X_0 - r(t) Z .
\end{align*}
While this objective is enriched with the information of the target score $\nabla \log \gamma$, we did not use this objective in our experiments due to its variance instability during training procedure.

\paragraph{Log-density Fokker-Planck-Equation (LFPE) \citep{lai2023fp,Shi2024diffusion,sun2024dynamicalmeasuretransportneural}.} A key property of the noising SDE \eqref{eq:sde-noising} is that the induced log-densities $(p_t)_{t\in [0,1]}$ can be described by a partial differential equation called the \emph{Fokker-Planck equation} \citep{oksendal2003stochastic}, whose formulation can be written as
\begin{align*}
	\partial_t \log p_t(x)= \mathcal{F}[\log p](t,x) \stackrel{\text{def}}{=} \frac{1}{2}g^2(t)\left[ \operatorname{div} (\nabla \log p_t)(x) + \norm{\nabla \log p_t(x)}_2^2 \right] - f(t) \{d + x^\top\nabla \log p_t(x) \} \eqsp ,
\end{align*}
where $\operatorname{div}$ denotes the divergence operator defined by $\operatorname{div} \mathrm{F}= \operatorname{Tr}[\nabla \mathrm{F}]$. This gives rise to the following objective for estimating the log-density $(t,x)\mapsto\log p_t(x)$ by a neural network $(t,x)\mapsto \Utheta{t}{x}$
\begin{align*}
	\mathcal{L}_{\text{LFPE}}(\theta)= \mathrm{E}\left[ \norm{\partial_t \Utheta{t}{X_t} - \operatorname{sg}\left\{\mathcal{F}[\Uthetaalone](t,X_t)\right\} }_2^2\right] \eqsp, \eqsp X_t =  S(t) X_0 + r(t) Z \eqsp ,
\end{align*}
where $\operatorname{sg}$ denotes the stop-gradient\footnote{While cited related works did not consider detaching the term $\mathcal{F}[\Uthetaalone]$ with respect to $\theta$ in their respective formulation, we made this choice to avoid backpropagation through both first and second-order derivatives of $\Uthetaalone$, which was computationally infeasible in the high-dimensional settings considered in this paper. Nonetheless, we emphasize that, in our early experiments, we observed unchanged results on pure log-density estimation tasks for small dimensional settings, thereby suggesting that our methodology remains sound.} operator with respect to parameter $\theta$, $t \sim \mathrm{U}(0,T)$, $X_0\sim \pi$ and $Z \sim \densityGaussian(0, \Idd)$. In this case too, we can derive an objective based on the antithetic trick
\begin{align*}
	\mathcal{L}^{\text{anti}}_{\text{LFPE}}(\theta)= \mathrm{E}\left[ \frac{1}{2}\norm{\partial_t \Utheta{t}{X_t} - \operatorname{sg}\left\{\mathcal{F}[\Uthetaalone](t,X_t)\right\} }_2^2+\frac{1}{2}\norm{\partial_t \Utheta{t}{X_t^{-}} - \operatorname{sg}\left\{\mathcal{F}[\Uthetaalone](t,X_t^{-})\right\} }_2^2\right] \eqsp, \\
	\text{with }X_t =  S(t) X_0 + r(t) Z \eqsp , \eqsp X_t^{-} =  S(t) X_0 - r(t) Z .
\end{align*}
In our experiments, the ``LFPE objective'' will always refer to the training loss $\mathcal{L}^{\text{anti}}_{\text{LFPE}}$.

\paragraph{Approximate LFPE (aLFPE) \citep{plainer2025consistent}.} In the case where the target term $\mathcal{F}[\Uthetaalone]$ is not detached with respect to $\theta$ in $\mathcal{L}_{\text{LFPE}}$, the main numerical burden lies in the computation of the divergence term $\operatorname{div} (\nabla \Uthetaalone)$ when the dimension is large. To reduce this overhead, \cite{plainer2025consistent} propose to instead consider a first-order statistical estimation of the residual term $\mathcal{R}^\theta(t,x)=\mathcal{F}[\Uthetaalone](t,x)-\partial_t \Utheta{t}{x}$ given by $\tilde{\mathcal{R}}^\theta(t,x)=\mathrm{E}_v[\tilde{\mathcal{R}}^\theta(t,x; v)]$, with $v\sim \densityGaussian(0, \sigma^2 \Idd)$ for a small $\sigma >0$, where\footnote{Even though this objective features an additional term compared to the one stated in Equation (12) from \cite{plainer2025consistent}, it is consistent with the related code available at \url{https://github.com/noegroup/ScoreMD}. This extra term actually originates from the use of the antithetic trick on the Gaussian variable $v$.}
\begin{align*}
	\tilde{\mathcal{R}}^\theta(t,x;v)
	 & = \frac{1}{2} g^2(t)
	\Bigg[
		\left( \frac{v}{\sigma} \right)^{\top}
		\frac{\nabla \Utheta{t}{x+v} - \nabla \Utheta{t}{x-v}}{2\sigma}
		\Bigg] \nonumber                     \\
	 & \quad + \frac{1}{2} \Bigg[
		                       \frac{1}{2}g^2(t)\left\| \nabla \Utheta{t}{x+v} \right\|_2^2
		                       - f(t) \{d + (x+v)^\top \nabla \Utheta{t}{x+v}\}
		                       - \partial_t  \Utheta{t}{x+v} \Bigg] \\
	 & \quad + \frac{1}{2} \Bigg[
		                       \frac{1}{2}g^2(t)\left\| \nabla \Utheta{t}{x-v} \right\|_2^2
		                       - f(t) \{d + (x-v)^\top \nabla \Utheta{t}{x-v}\}
		                       - \partial_t  \Utheta{t}{x-v} \Bigg] \eqsp .
\end{align*}
This gives rise to the following objective
\begin{align*}
	\mathcal{L}_{\text{aLFPE}}(\theta)= \mathrm{E}\left[\left(\sum_{i=1}^N \tilde{\mathcal{R}}^\theta(t,X_t; v_i^{X_t})\right)\left(\sum_{j=1}^N \tilde{\mathcal{R}}^\theta(t,X_t; v_j^{X_t})\right)\right] \eqsp , \eqsp X_t =  S(t) X_0 + r(t) Z \eqsp ,
\end{align*}
where $t \sim \mathrm{U}(0,T)$, $X_0\sim \pi$, $Z \sim \densityGaussian(0, \Idd)$ and $\{v_i^{X_t},v_j^{X_t}\}_{j=1}^N$ are $2N$ independent samples from $\densityGaussian(0, \sigma^2 \Idd)$ defined for each input $X_t$. Overall, this formulation avoids the need of the divergence computation (while maintaining backpropagation through the scores), at the cost of non-negligible statistical error. Following the guidelines from \cite{plainer2025consistent}, we consistently set in our experiments $\sigma=0.0001$, but choose $N=64$ (instead of $N=1$ as originally proposed) to reduce the variance of the loss, and bring it into the most favorable setting. We also choose to keep the use of auto-differentiation to compute the time derivative $\partial_t \Uthetaalone$ instead of using finite difference approximation as suggested by \cite{plainer2025consistent}, as it brings more stability during training. In our experiments, we will systematically refer to this version of $\mathcal{L}_{\text{aLFPE}}$ as the ``aLFPE objective''.

\paragraph{Radon-Nikodym Estimator (RNE) \citep{he2025rneplugandplaydiffusioninferencetime}.} Alternatively, a discrete-time formulation of the LFPE objective has been proposed to learn the log-densities $(\log p_t)_{t\in [0,T]}$, based on the Bayes's rule (ideally satisfied by DMs) stating that for any times $(s,t)\in [0,T]^2$ and any inputs $x_s$ and $x_t$, we have $p_t(x_t)q_{s|t}(x_s|x_t)=p_s(x_s)q_{t|s}(x_t|x_s)$, where $q_{s|t}$ and $q_{t|s}$ correspond to related stochastic transition kernels (see \Cref{sec:diffusion_models}). Enforcing this consistency with log-densities can thus be translated into the following objective for estimating the log-density $(t,x)\mapsto\log p_t(x)$ by a neural network $(t,x)\mapsto \Utheta{t}{x}$
\begin{align*}
	\mathcal{L}_{\text{RNE}}(\theta)= \mathrm{E}\left[\norm{\Utheta{t}{X_t}-\Utheta{s}{X_s} -\operatorname{sg}\{\log q_{t|s}(X_t|X_s)- \log q^{\theta}_{s|t}(X_s|X_t)\}}_2^2\right] \eqsp , \\
	\text{with }\eqsp X_s =S(s) X_0 + r(s) Z \eqsp, \eqsp X_t = [S(t)/S(s)] X_s + r(t) \{1-\sigma^2(s)/\sigma^2(t)\}^{1/2} \tilde{Z} \eqsp ,
\end{align*}
where $\operatorname{sg}$ denotes the stop-gradient operator with respect to parameter $\theta$, $(s,t) \sim \mathrm{U}\left(\{(t_k,t_{k+1}):k\in [0, K-1]\}\right)$ with $\{t_k\}_{k=0}^{K-1}$ being a discretization of time interval $[0,T]$, $X_0\sim \pi$ and $(Z,\tilde{Z})\sim \densityGaussian(0, \Idd)\otimes\densityGaussian(0, \Idd)$.
While $q_{t|s}$ denotes a \emph{noising} transition kernel\footnote{While \cite{he2025rneplugandplaydiffusioninferencetime} propose to replace $q_{t|s}$, though tractable, by its Euler-Maruyama estimation, our implementation relies rather on its exact formulation to avoid bringing additional approximation error into the loss.}, that is tractable by \eqref{eq:cond_noising}, $q^{\theta}_{s|t}$ is an approximate Gaussian \emph{denoising} transition kernel, that may be computed via the learned score $\nabla \Uthetaalone$. In this case too, one may consider the variant featuring the antithetic trick
\begin{align*}
	\mathcal{L}^{\text{anti}}_{\text{RNE}}(\theta)= \mathrm{E}\Bigg[&\frac{1}{2}\norm{ \Utheta{t}{X_t}-\Utheta{s}{X_s} -\operatorname{sg}\{\log q_{t|s}(X_t|X_s)- \log q^{\theta}_{s|t}(X_s|X_t)\}}_2^2\\
		                                                          &+\frac{1}{2}\norm{\Utheta{t}{X_t^{-}}-\Utheta{s}{X_s^{-}} -\operatorname{sg}\{\log q_{t|s}(X_t^{-}|X_s^{-})- \log q^{\theta}_{s|t}(X_s^{-}|X_t^{-})\}}_2^2\Bigg] \eqsp , \\
	\text{with } & \eqsp X_s =S(s) X_0 + r(s) Z \eqsp, \eqsp X_t = [S(t)/S(s)] X_s + r(t) \{1-\sigma^2(s)/\sigma^2(t)\}^{1/2} \tilde{Z} \eqsp ,                                                                                          \\
	\text{and }  & \eqsp X_s^{-} =S(s) X_0 - r(s) Z \eqsp, \eqsp X_t^{-} = [S(t)/S(s)] X_s^{-} + r(t) \{1-\sigma^2(s)/\sigma^2(t)\}^{1/2} \tilde{Z} \eqsp .
\end{align*}
Contrary to the LFPE objective, the obtained loss function does not require backpropagation through the time derivative $\partial_t \Uthetaalone$, which represents a significant computational advantage. However, $\mathcal{L}_{\text{RNE}}$ suffers from a severe bias-variance tradeoff with respect to the time gap $\delta=t-s$ for selected times $s$ and $t$: if $\delta$ is too large, then the denoising approximation obtained via $q^{\theta}_{s|t}$ may be ineffective and bring much bias; on the other hand, if $\delta$ is too small, the resulting objective may be prone to high variance. While \cite{he2025rneplugandplaydiffusioninferencetime} propose to use the Euler-Maruyama estimation for $q^{\theta}_{s|t}$, see \Cref{lemma:general_em}, we rather consider the Exponential Integration, see \Cref{subsec:general_VP} and \Cref{subsec:general-VE} for the formulas, which provides better accuracy for larger gap $\delta$. In our experiments, we will systematically refer to this version of $\mathcal{L}^{\text{anti}}_{\text{RNE}}$ as the ``RNE objective''.

\paragraph{Diffusive Classification (DiffCLF) \citep{ouyang2026diffusiveclassificationlosslearning}.} Rather than relying on differential constraints, an alternative strategy for learning the log-densities $(\log p_t)_{t\in [0,T]}$ consists in enforcing self-consistency through a classification objective across noise levels. Given a collection of times $\{t_i\}_{i=1}^N$ discretizing the interval $[0,T]$, the underlying idea is to treat a sample $y$ associated with a label $c=i$ as being drawn from the marginal distribution $p_{t_i}$, and to model the resulting class-conditional probabilities via a parametric energy-based family $p_t^\theta(y) \propto \exp(-\Utheta{t}{y}) / \mathcal{Z}_t(\theta)$, where the log-normalizing constant is learned as an additional time-dependent scalar parameter (in practice implemented as a bias on the last layer of $\Uthetaalone$). Under the uniform prior $p(c=i)=1/N$, the posterior probabilities derived from Bayes' rule are $p^\theta(c=i|y) = p_{t_i}^\theta(y)/\sum_{j=1}^N p_{t_j}^\theta(y)$, and the associated categorical cross-entropy gives rise to the following objective for estimating the log-density $\log p_t$
\begin{align*}
	\mathcal{L}_{\text{DiffCLF}}(\theta)= -\mathrm{E}\left[\frac{1}{N}\sum_{i=1}^N \log \frac{p_{t_i}^\theta(X_{t_i})}{\sum_{j=1}^N p_{t_j}^\theta(X_{t_i})}\right] \eqsp, \eqsp X_{t_i} = S(t_i) X_0 + r(t_i) Z_i \eqsp ,
\end{align*}
where $(t_1,\dots,t_N)\sim \mathrm{U}([0,T])^N$, $X_0\sim \pi$ and $(Z_1,\dots,Z_N)\sim \densityGaussian(0,\Idd)^{\otimes N}$. A key feature of this objective is that, unlike score-based losses or their time-derivative counterparts, it directly probes log-density values across noise levels and thus circumvents the mode blindness pathology inherent to gradient-only formulations \citep{Wenliang2021blindness, Zhang2022towards}: distributions sharing identical modes but differing mixture weights yield distinguishable classification posteriors. As shown in \cite{ouyang2026diffusiveclassificationlosslearning}, while the true marginals $(p_t)_{t\in [0,T]}$ are a minimizer of $\mathcal{L}_{\text{DiffCLF}}$, uniqueness only holds up to a positive multiplicative factor; combining this objective with the DSM loss restores identifiability and yields a consistent estimator of the log-densities. In the binary case $N=2$, it can be further shown that $\mathcal{L}_{\text{DiffCLF}}$ recovers the tSM objective in the continuous-time limit, which provides a natural bridge with time-score-matching approaches. Computationally, this objective only requires $N$ evaluations of $\Uthetaalone$ per sampled time and bypasses any backpropagation through higher-order derivatives, making it significantly cheaper than LFPE-based formulations.

\section{Details on (de)noising diffusion processes}\label{app:sde_noising}

In this section, we consider a target probability distribution $\pi \in \Pmeasure(\rset^d)$ and a pair of time points $(s,t)$ satisfying $T \geq t > s \geq 0$. We present technical derivations related to the integration of (de)noising diffusion processes under a unified framework, covering the generic setting (\Cref{subsec:noising-general}), the Variance-Preserving scheme (\Cref{subsec:general_VP}), and the Variance-Exploding scheme (\Cref{subsec:general-VE}). Throughout, the notation $\sfn{t}{x}$ and $\Hfn{t}{x}$ denotes, respectively, exact or approximate evaluation of the score $\nabla \log p_t(x)$ and the Hessian $\nabla^2 \log p_t(x)$. This unified formulation allows our computations to encompass both idealized and practical regimes considered in this paper.

For diffusion-based deterministic maps obtained by integrating the noising or denoising ODE with step size $\delta>0$, we use the standard numerical-analysis terminology: an integrator is called ``1st order'' if its integration error is $o(\delta)$, and ``2nd order'' if it is $o(\delta^2)$. This convention is unrelated to the terminology used in the main paper for Gaussian denoising kernels, where ``1st order'' refers to a mean-only parameterization, while ``2nd order'' refers to an additional covariance parameterization.

\subsection{General noising scheme}\label{subsec:noising-general}

Here, we consider the most general form of SDE \eqref{eq:sde-noising}, where $f$ and $g$ both verify \Cref{ass:coeff_SDE}, and provide below the related results of ODE and SDE integration, respectively obtained via Euler and EM schemes.

\begin{lemma}[Exact noising SDE integration - General case]
	The conditional distribution of $X_t$ given $X_s=x_s\in \rset^d$ is defined by the Gaussian kernel
	\begin{align*}
		q_{t|s}(\cdot|x_s)=\densityGaussian\left(\alpha_{t|s} x_s,\sigma^2_{t|s} \, \Idd\right)\eqsp, \text{ with } \alpha_{t|s} = S(t)/S(s) \text{ and } \sigma^2_{t|s} = S(t)^2 \{\sigma^2(t)-\sigma^2(s)\}\eqsp,
	\end{align*}
	where $S(t) = \exp(\int_0^t f(u) \rmd u)$ and $\sigma^2(t) = \int_0^t g^2(u) / S(u)^2 \rmd u$.
\end{lemma}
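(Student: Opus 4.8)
The plan is to obtain this statement as an immediate specialization of the SDE exponential integration lemma (\Cref{lemma:ei_sde}). Indeed, the noising SDE \eqref{eq:sde-noising} is exactly of the form $\rmd Y_t = f(t)(Y_t + b)\rmd t + g(t)\rmd B_t$ treated there, with the choice $b = 0$ and $(W_t)$ playing the role of $(B_t)$. Since $f$ and $g$ satisfy \Cref{ass:coeff_SDE} by the standing hypothesis of this subsection, \Cref{lemma:ei_sde} applies directly and gives that $q_{t|s}(\cdot \mid x_s)$ is the Gaussian distribution with mean $\exp(\int_s^t f(u)\rmd u)\,x_s$ and covariance $\bigl(\int_s^t g^2(u)\exp(2\int_u^t f(r)\rmd r)\rmd u\bigr)\Idd$.

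It then only remains to rewrite these two quantities in terms of $S$ and $\sigma$ using additivity of the integral. For the mean, $\exp(\int_s^t f(u)\rmd u) = \exp(\int_0^t f(u)\rmd u)/\exp(\int_0^s f(u)\rmd u) = S(t)/S(s)$. For the covariance coefficient, the same additivity yields $\exp(2\int_u^t f(r)\rmd r) = S(t)^2/S(u)^2$, hence
\begin{align*}
  \int_s^t g^2(u)\exp\Bigl(2\int_u^t f(r)\rmd r\Bigr)\rmd u = S(t)^2\int_s^t \frac{g^2(u)}{S(u)^2}\rmd u = S(t)^2\bigl(\sigma^2(t)-\sigma^2(s)\bigr)\eqsp,
\end{align*}
by the definition $\sigma^2(t) = \int_0^t g^2(u)/S(u)^2\rmd u$. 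Substituting these two expressions back into the Gaussian obtained from \Cref{lemma:ei_sde} yields precisely the claimed kernel.

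Since the argument is a direct instantiation of a previously established lemma followed by elementary manipulation of exponential integrals, there is essentially no obstacle here; the only point worth flagging is that one must invoke \Cref{ass:coeff_SDE} to guarantee that the integrals defining $S$ and $\sigma$ are finite and that \Cref{lemma:ei_sde} is applicable, which is exactly the running assumption of \Cref{subsec:noising-general}. (Alternatively, one could reprove the statement from scratch by the integrating-factor computation $Z_t = S(t)^{-1}X_t$ and Itô's isometry, but this merely duplicates the proof of \Cref{lemma:ei_sde}.)
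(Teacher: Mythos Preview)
Your proof is correct and follows exactly the paper's approach: the paper's own proof is the single line ``This is an immediate corollary of \Cref{lemma:ei_sde},'' and you have simply spelled out the specialization $b=0$ and the elementary rewriting of the mean and variance in terms of $S$ and $\sigma$.
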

\begin{proof}
	This is an immediate corollary of \Cref{lemma:ei_sde}.
\end{proof}

\begin{lemma}[Approximate denoising SDE integration - General case] \label{lemma:general_em} Denote $\delta=t-s$. Then, the conditional distribution of $X_t$ given $X_s=x_s\in \rset^d$ may be approximated by the Gaussian kernel
	\begin{align*}
		q_{s|t}(\cdot|x_t)=\densityGaussian\left((1-f(t)\delta) x_t + g^2(t)\delta\, \sfn{t}{x_t},g^2(t)\delta \, \Idd\right)\eqsp,
	\end{align*}
\end{lemma}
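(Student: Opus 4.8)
The plan is to derive this kernel as the one-step Euler--Maruyama (EM) discretization of the reverse-time denoising SDE \eqref{eq:sde-denoising} over the interval $[s,t]$, obtained by freezing its coefficients at the terminal point $(t,x_t)$; this is the general-schedule analogue of \eqref{eq:cond_denoising}. Throughout, $q_{s|t}(\cdot\mid x_t)$ denotes, as elsewhere in the paper, the (approximate) conditional law of $X_s$ given $X_t=x_t$.

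First I would recall that the denoising SDE \eqref{eq:sde-denoising} has drift $b(r,x)=f(r)x-g^2(r)\nabla\log p_r(x)$ and diffusion coefficient $g(r)$. The EM scheme run backward from $r=t$ to $r=s$ replaces this SDE on $[s,t]$ by the constant-coefficient SDE $\rmd X_r = c\,\rmd r + g(t)\,\rmd\tilde B_r$, where $c=f(t)\,x_t-g^2(t)\,\sfn{t}{x_t}$ and $\sfn{t}{\cdot}$ stands for the exact score $\nabla\log p_t$ or any estimate of it (EM only ever evaluates the drift, so the derivation is insensitive to which).

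Next I would integrate this constant-coefficient SDE on $[s,t]$, taking care that the time increment in the backward direction is $s-t=-\delta$: this gives $X_s = x_t + c\,(s-t) + g(t)\,(\tilde B_s-\tilde B_t) = x_t - \delta c + g(t)\,(\tilde B_s-\tilde B_t)$. Since $\tilde B_s-\tilde B_t$ is a centered Gaussian vector with covariance $\delta\,\Idd$ and, by the Markov property of the reverse process, independent of $X_t$, the conditional law of $X_s$ given $X_t=x_t$ is Gaussian with mean $x_t-\delta c = (1-f(t)\delta)\,x_t + g^2(t)\delta\,\sfn{t}{x_t}$ and covariance $g^2(t)\delta\,\Idd$, which is precisely the stated kernel; specializing $\sfn{t}{\cdot}=\nabla\log p_t$ recovers \eqref{eq:cond_denoising}.

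There is no real obstacle here: the statement simply records the EM kernel in the unified $(f,g,S,\sigma)$ notation of the appendix, so the argument is one discretization step followed by a one-line Gaussian computation. The only points needing care are bookkeeping ones, namely the sign of the time increment when integrating backward and the fact that the reverse-time Brownian increment over a window of length $\delta$ contributes covariance $g^2(t)\delta\,\Idd$ (not $g^2(t)\delta^2\,\Idd$). A sharper exponential-integrator version, which keeps the linear part $f(r)X_r$ exact and only freezes the score, is treated for the VP and VE schedules in \Cref{subsec:general_VP,subsec:general-VE}.
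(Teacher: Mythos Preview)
Your proposal is correct and matches the paper's own proof, which simply states that the result is a straightforward application of the Euler--Maruyama scheme to the denoising SDE \eqref{eq:sde-denoising}. You have spelled out the one discretization step and the sign bookkeeping that the paper leaves implicit, but the approach is identical.
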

\begin{proof}
	This result is a straightforward application of the Euler-Maruyama scheme applied to SDE \eqref{eq:sde-denoising}.
\end{proof}

\begin{lemma}[Approximate noising ODE integration - General case] \label{lemma:ode_euler_forward}  Denote $\delta=t-s$. Then, the solution at time $t$ of the forward probability flow ODE \eqref{eq:pf_ode} starting from $x_s\in \rset^d$ at time $s$ may be approximated in two ways:
	\begin{align*}
		\tilde{\mapT}_{t|s}(x_s) & = x_s + \delta v(s,x_s)                                                    &  & \text{(Euler method: explicit, 1st order)}     \\
		\mapT_{t|s}(x_s)         & = x_s + \delta v\left(\frac{s+t}{2}, \frac{x_s+\mapT_{t|s}(x_s)}{2}\right) &  & \text{(Midpoint method : implicit, 2nd order)} \\
		\text{where }            & v(u,x)=f(u)x -\frac{g^2(u)}{2}\sfn{u}{x} \eqsp .
	\end{align*}
\end{lemma}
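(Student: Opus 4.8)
The plan is to recognize that the two displayed formulas are nothing more than the explicit Euler and the implicit midpoint discretizations of the probability flow ODE \eqref{eq:pf_ode}, so the lemma is essentially a matter of unfolding definitions. First I would rewrite \eqref{eq:pf_ode} in the compact form $\dot X_u = v(u, X_u)$ with $v(u,x) = f(u)x - (g^2(u)/2)\sfn{u}{x}$, and put it in integral form on $[s,t]$: given $X_s = x_s$, one has $X_t = x_s + \int_s^t v(u, X_u)\,\rmd u$. The task then reduces to approximating this integral by a one-point quadrature rule, with the discretization error quantified by a Taylor expansion of $u \mapsto v(u, X_u)$.

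For the Euler update, I would approximate $\int_s^t v(u,X_u)\,\rmd u$ by the left-endpoint rule $(t-s)\,v(s, x_s) = \delta\, v(s, x_s)$, which immediately yields $\tilde{\mapT}_{t|s}(x_s) = x_s + \delta\, v(s, x_s)$; the first-order Taylor remainder gives a local error of $O(\delta^2)$, \ie, the scheme is first order. For the midpoint update, I would instead use the midpoint quadrature $\int_s^t v(u,X_u)\,\rmd u \approx \delta\, v((s+t)/2, X_{(s+t)/2})$ and then replace the unknown midpoint state $X_{(s+t)/2}$ by the chord midpoint $\tfrac12(x_s + X_t)$, which is itself a second-order central approximation. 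Substituting $X_t = \mapT_{t|s}(x_s)$ produces the implicit relation $\mapT_{t|s}(x_s) = x_s + \delta\, v((s+t)/2, (x_s + \mapT_{t|s}(x_s))/2)$; since both the midpoint rule and the chord-midpoint substitution cancel their leading error terms by symmetry, the Taylor argument improves the local error to $O(\delta^3)$, hence second order.

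The only step that is not routine quadrature bookkeeping is whether the implicit equation defining $\mapT_{t|s}$ genuinely determines a well-defined map --- its existence, uniqueness, and (later) invertibility. That is precisely where a Lipschitz/small-step-size condition on the score has to enter, and I would not re-establish it within this lemma: I would simply note that the displayed relation \emph{characterizes} the implicit midpoint update, and defer well-posedness, together with the per-level mutual invertibility property \eqref{eq:ass_deterministic_maps}, to \Cref{ass:fixed-point}, \Cref{prop:midpoint} and \Cref{prop:midpoint_fixed}. So the genuine difficulty lives not in this lemma but in those accompanying statements that make the implicit scheme usable in practice.
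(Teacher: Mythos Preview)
Your proposal is correct and, if anything, more detailed than what the paper provides. The paper does not give a proof for this lemma at all: it is treated as a definitional statement of the standard explicit Euler and implicit midpoint discretizations applied to the PF-ODE, with no accompanying argument. Your integral-form derivation via left-endpoint and midpoint quadrature, together with the Taylor-based order claims, is the textbook justification for these schemes and is entirely appropriate here; you are supplying content the paper simply omits. Your instinct to defer well-posedness of the implicit relation to \Cref{ass:fixed-point} and \Cref{prop:midpoint_fixed} also matches the paper's structure exactly.
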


\begin{lemma}[Approximate denoising ODE integration - General case] \label{lemma:ode_euler_backward}  Denote $\delta=t-s$. Then, the solution at time $s$ of the backward probability flow ODE \eqref{eq:pf_ode} starting from $x_t\in \rset^d$ at time $t$ may be approximated in two ways:
	\begin{align*}
		\tilde{\mapT}_{s|t}(x_t) & = x_t - \delta v(t,x_t)                                                    &  & \text{(Euler method : explicit, 1st order)}   \\
		\mapT_{s|t}(x_t)         & = x_t - \delta v\left(\frac{s+t}{2}, \frac{\mapT_{s|t}(x_t)+x_t}{2}\right) &  & \text{(Midpoint method: implicit, 2nd order)} \\
		\text{where }            & v(u,x)=f(u)x -\frac{g^2(u)}{2}\sfn{u}{x} \eqsp .
	\end{align*}
\end{lemma}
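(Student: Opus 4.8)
The plan is to recognise this statement as the time-reversed counterpart of \Cref{lemma:ode_euler_forward}: both lemmas concern the same velocity field $v(u,x)=f(u)x-\frac{g^2(u)}{2}\sfn{u}{x}$, and differ only in the direction in which the probability flow ODE \eqref{eq:pf_ode} is integrated. Writing $\delta=t-s$, I would first perform the change of time variable $Y_\tau = X_{t-\tau}$ for $\tau\in[0,\delta]$. Then the backward flow of \eqref{eq:pf_ode} on $[s,t]$ with terminal datum $X_t=x_t$ becomes the \emph{forward} initial-value problem $\dot Y_\tau = -v(t-\tau,Y_\tau)$ with $Y_0=x_t$, and the quantity of interest is $Y_\delta = X_s$. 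Everything then reduces to applying two classical one-step integrators to this IVP over a single step of length $\delta$.

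For the explicit scheme, one step of forward (explicit) Euler applied to $\dot Y_\tau = -v(t-\tau,Y_\tau)$ gives $Y_\delta \approx Y_0 - \delta\, v(t,Y_0) = x_t - \delta\, v(t,x_t)$, which is exactly $\tilde{\mapT}_{s|t}(x_t)$; a Taylor expansion of $\tau\mapsto Y_\tau$ at $\tau=0$ shows the local error is $O(\delta^2)$, justifying the ``explicit, 1st order'' label. For the implicit scheme, one step of the implicit midpoint rule reads $Y_\delta = Y_0 - \delta\, v\!\left(t-\tfrac{\delta}{2},\tfrac{Y_0+Y_\delta}{2}\right)$; since $t-\tfrac{\delta}{2}=\tfrac{s+t}{2}$ and $Y_0=x_t$, $Y_\delta=\mapT_{s|t}(x_t)$, this is precisely the implicit relation defining $\mapT_{s|t}$ in the statement. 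The order claim follows from the standard fact that the implicit midpoint rule is a symmetric (self-adjoint) Runge--Kutta method of order two, whose local truncation error over a step of size $\delta$ is $O(\delta^3)$, hence ``implicit, 2nd order''.

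Finally, I would address well-posedness of the implicit relation: under a local Lipschitz bound on $x\mapsto\nabla\log p_{(s+t)/2}(x)$ and $\delta$ small enough, the map $y\mapsto x_t - \delta\, v\!\left(\tfrac{s+t}{2},\tfrac{x_t+y}{2}\right)$ is a contraction, so $\mapT_{s|t}(x_t)$ is well defined and is the limit of the associated fixed-point iteration; this is exactly the content of \Cref{ass:fixed-point} and \Cref{prop:midpoint_fixed}, to which I would simply refer. Alternatively, one can bypass the time-reversal entirely and deduce the result by substituting $(s,t)\mapsto(t,s)$ (i.e.\ integrating time backward) in \Cref{lemma:ode_euler_forward}. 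The underlying numerical analysis (local truncation errors of explicit Euler and of the implicit midpoint rule) is entirely standard, so the only place where care is genuinely needed is the bookkeeping of the time direction --- in particular verifying that the midpoint time equals $(s+t)/2$ and that the sign of the increment comes out as $x_t - \delta\,v(\cdot)$ rather than $x_t + \delta\,v(\cdot)$.
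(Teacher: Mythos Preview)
Your proposal is correct and, in fact, considerably more detailed than the paper's own treatment: the paper states \Cref{lemma:ode_euler_backward} (and its forward counterpart \Cref{lemma:ode_euler_forward}) without proof, implicitly treating both as direct restatements of the standard explicit Euler and implicit midpoint schemes applied to the PF-ODE velocity field $v$. Your time-reversal argument and the order-of-accuracy discussion are entirely sound, but the paper simply takes these schemes as definitional rather than deriving them.
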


\paragraph{Remark on the mutual invertibility of the ODE integrators.} It is easy to verify that the noising and denoising implicit Midpoint integrators described above are mutual inversible maps, \ie, we have $\mapT_{s|t}\circ \mapT_{t|s} = \mapT_{t|s} \circ \mapT_{s|t}=\mathrm{Id}$. However, this is not the case for the Euler maps $\tilde{\mapT}_{s|t}$ and $\tilde{\mapT}_{t|s}$.

\begin{lemma}[Formula for the Jacobian of the Midpoint integrators]\label{lemma:jac_midpoint} Let $\delta>0$, and let define the numerical constants $c_1(\delta)$, $c_2(\delta)$ and $c_3(\delta)$ as
	\begin{align*}
		c_1(\delta) = \frac{1 + (\delta/2) f((s+t)/ 2)}{1 - (\delta/2) f((s+t)/ 2)}\eqsp,\eqsp c_2(\delta) = \frac{\delta}{4}\frac{g^2\left(\frac{s + t}{2}\right)}{1 - \frac{\delta}{2} f\left(\frac{s+t}{2}\right)}\eqsp,\eqsp c_3(\delta) = \frac{\delta}{4}\frac{g^2\left(\frac{s + t}{2}\right)}{1 + \frac{\delta}{2} f\left(\frac{s+t}{2}\right)} \eqsp.
	\end{align*}
	Consider the same notation as in \Cref{lemma:ode_euler_forward} and \Cref{lemma:ode_euler_backward}. Assume that there exists $L>0$ such that $\sfnnox{(s+t)/2}$ is $L$-Lipschitz. Then for any positive step-size $\delta=t-s$ such that $\max\left(\abs{c_2(\delta)}, \abs{c_3(\delta)}\right) <1/L$,
	the Jacobians of Midpoint integration maps $\mapT_{t|s}$ and $\mapT_{s|t}$, respectively denoted by $J_{t|s}$ and $J_{s|t}$, verify for any inputs $x_s\in \rset^d$ and $x_t\in \rset^d$
	\begin{align*}
		J_{t|s}(x_s) & = c_1(\delta) \left(\Idd + c_2(\delta) A(x_s)\right)^{-1} \left(\Idd -c_3(\delta) A(x_s)\right)\eqsp,     \\
		J_{s|t}(x_t) & = c_1(\delta)^{-1}\left(\Idd - c_3(\delta) B(x_t)\right)^{-1} \left(\Idd +c_2(\delta) B(x_t)\right)\eqsp,
	\end{align*}
	where $A(x_s)=\Hfn{(s+t)/2}{\frac{x_s+\mapT_{t|s}(x_s)}{2}}$ and $B(x_t)=\Hfn{(s+t)/2}{\frac{x_t+\mapT_{s|t}(x_t)}{2}}$.
\end{lemma}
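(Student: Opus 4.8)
The plan is to differentiate the implicit equations defining the Midpoint maps $\mapT_{t|s}$ and $\mapT_{s|t}$ with respect to their inputs, and then to solve the resulting linear matrix equations for the Jacobians $J_{t|s}$ and $J_{s|t}$. Throughout I would write $m=(s+t)/2$, $\lambda=\tfrac{\delta}{2}f(m)$ and $\mu=\tfrac{\delta}{4}g^2(m)$, so that the constants of the statement read $c_1(\delta)=\tfrac{1+\lambda}{1-\lambda}$, $c_2(\delta)=\tfrac{\mu}{1-\lambda}$ and $c_3(\delta)=\tfrac{\mu}{1+\lambda}$.

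First I would record that, under the step-size hypothesis, $\mapT_{t|s}$ and $\mapT_{s|t}$ are well-defined $\mathrm{C}^1$ maps, so that their Jacobians exist and are continuous. This is the contraction argument underlying \Cref{prop:midpoint_fixed}: with $G(x,y)=y-x-\delta\, v(m,(x+y)/2)$, the $L$-Lipschitz bound on $\sfnnox{m}$ gives $\norm{\nabla_x v(m,x)}\le\abs{f(m)}+\tfrac12 L g^2(m)$ for all $x$, hence $\partial_y G=\Idd-\tfrac{\delta}{2}\nabla_x v(m,\cdot)$ is invertible for $\delta$ small enough, and the implicit function theorem yields $y=\mapT_{t|s}(x)$ (and likewise $\mapT_{s|t}$) as a $\mathrm{C}^1$ function of its argument.

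For the forward map I would then differentiate $\mapT_{t|s}(x_s)=x_s+\delta\, v\big(m,\tfrac{x_s+\mapT_{t|s}(x_s)}{2}\big)$ by the chain rule, using the spatial Jacobian $\nabla_x v(m,w)=f(m)\Idd-\tfrac{g^2(m)}{2}\Hfn{m}{w}$ (with $\Hfnnox{m}=\nabla_x\sfnnox{m}$) evaluated at the midpoint $w=\tfrac12(x_s+\mapT_{t|s}(x_s))$. Writing $A=A(x_s)=\Hfn{m}{w}$, this yields $J_{t|s}=\Idd+(\lambda\Idd-\mu A)(\Idd+J_{t|s})$, that is, $\big((1-\lambda)\Idd+\mu A\big)J_{t|s}=(1+\lambda)\Idd-\mu A$. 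The hypothesis $\max(\abs{c_2(\delta)},\abs{c_3(\delta)})<1/L$, together with $g>0$ and $\delta>0$, forces $1-\lambda\ne0$ and $1+\lambda\ne0$ (otherwise $c_2$ or $c_3$ would be infinite), and $\norm{A}\le L$ gives $\norm{c_2(\delta)A}<1$, so $\Idd+c_2(\delta)A=(1-\lambda)^{-1}\big((1-\lambda)\Idd+\mu A\big)$ is invertible via a Neumann series. Factoring $(1-\lambda)$ out of the left factor and $(1+\lambda)$ out of the right one then produces exactly $J_{t|s}(x_s)=c_1(\delta)\big(\Idd+c_2(\delta)A(x_s)\big)^{-1}\big(\Idd-c_3(\delta)A(x_s)\big)$. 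The backward map is handled verbatim: differentiating $\mapT_{s|t}(x_t)=x_t-\delta\, v\big(m,\tfrac{\mapT_{s|t}(x_t)+x_t}{2}\big)$ — only the sign of $\delta$ changes — gives $\big((1+\lambda)\Idd-\mu B\big)J_{s|t}=(1-\lambda)\Idd+\mu B$ with $B=B(x_t)=\Hfn{m}{\tfrac12(x_t+\mapT_{s|t}(x_t))}$; since $\norm{c_3(\delta)B}<1$ makes $\Idd-c_3(\delta)B$ invertible, the same factoring yields $J_{s|t}(x_t)=c_1(\delta)^{-1}\big(\Idd-c_3(\delta)B(x_t)\big)^{-1}\big(\Idd+c_2(\delta)B(x_t)\big)$.

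The computation is elementary linear algebra once differentiability is in hand; the only two steps requiring genuine care are (i) establishing the smooth dependence of the implicitly defined maps on their inputs — handled by the implicit function theorem under the small-step-size hypothesis, exactly as in \Cref{prop:midpoint,prop:midpoint_fixed} — and (ii) checking the invertibility of the matrices being inverted, which is precisely what the bound $\max(\abs{c_2(\delta)},\abs{c_3(\delta)})<1/L$ guarantees via $\norm{A},\norm{B}\le L$.
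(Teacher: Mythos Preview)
Your proof is correct and follows essentially the same approach as the paper: differentiate the implicit defining relations of the midpoint maps, collect terms to obtain a linear matrix equation for the Jacobian, and factor out the scalar coefficients $1\pm\lambda$ to recover the $c_1,c_2,c_3$ form. The paper's proof simply records the intermediate identities $\big((1-\lambda)\Idd+\mu A\big)J_{t|s}=(1+\lambda)\Idd-\mu A$ and its backward analogue and remarks that the step-size assumption guarantees invertibility; you supply the chain-rule computation leading to these identities and add an explicit justification of $\mathrm{C}^1$-regularity via the implicit function theorem, which the paper leaves tacit here.
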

\begin{proof} The result from \Cref{lemma:jac_midpoint} follows from the factorization of the following identities, inherited from the implicit expressions of $\mapT_{t|s}$ and $\mapT_{s|t}$,
	\begin{align*}
		J_{t|s}(x_s) & = \left(\left(1 - \frac{\delta}{2} f\left(\frac{s+t}{2}\right)\right)\Idd + \frac{\delta}{4}g^2\left(\frac{s+t}{2}\right)A(x_s)\right)^{-1} \left(\left(1 + \frac{\delta}{2} f\left(\frac{s+t}{2}\right)\right)\Idd - \frac{\delta}{4}g^2\left(\frac{s+t}{2}\right)A(x_s)\right) \eqsp,  \\
		J_{s|t}(x_t) & = \left(\left(1 + \frac{\delta}{2} f\left(\frac{s+t}{2}\right)\right)\Idd - \frac{\delta}{4}g^2\left(\frac{s+t}{2}\right)B(x_t)\right)^{-1} \left(\left(1 - \frac{\delta}{2} f\left(\frac{s+t}{2}\right)\right)\Idd + \frac{\delta}{4}g^2\left(\frac{s+t}{2}\right)B(x_t)\right) \eqsp .
	\end{align*}
	Here, the assumption on $\delta$ guarantees the invertibility of the matrices $\Idd + c_2(\delta) A(x_s)$ and $\Idd - c_3(\delta) B(x_t)$.
\end{proof}

\begin{proposition}[Exact expression of the Jacobian log-determinants of the Midpoint integrators via power series]\label{prop:jac_expansion_series} Consider the same notation as in \Cref{lemma:jac_midpoint}. Assume that there exists $L>0$ such that $\sfnnox{(s+t)/2}$ is $L$-Lipschitz. Then, for any positive step-size $\delta=t-s$ such that $\max\left(\abs{c_2(\delta)}, \abs{c_3(\delta)}\right) <1/L$, for any inputs $x_s\in \rset^d$ and $x_{t}\in \rset^d$, we have
	\begin{align*}
		\log \abs{\operatorname{det}J_{t|s}(x_s)}   & = \sum_{i=0}^{\infty}a_{i}(s,t) \operatorname{Tr}([A(x_s)]^{i})\eqsp, \\
		\log \abs{\operatorname{det}J_{s|t}(x_{t})} & = \sum_{i=0}^{\infty}b_{i}(s,t) \operatorname{Tr}([B(x_t)]^{i})\eqsp,
	\end{align*}
	where $\{a_{i}(s,t), b_{i}(s,t)\}_{i=0}^{\infty}$ are numerical coefficients defined by
	\begin{align*}
		a_0(s,t) & = -b_0(s,t) = d \log\left[\abs{\frac{1 + (\delta/2) f((s+t)/ 2)}{1 - (\delta/2) f((s+t)/ 2)}}\right]\eqsp  ,                                                                                                                                                                                                      \\
		a_i(s,t) & = \frac{\delta^i}{4^i} g^{2i}\left(\frac{s + t}{2}\right) \frac{(-1)^i \left(1 + \frac{\delta}{2} f\left(\frac{s+t}{2}\right)\right)^i - \left(1 - \frac{\delta}{2} f\left(\frac{s+t}{2}\right)\right)^i}{i\left(1 - \frac{\delta^2}{4} f^2\left(\frac{s+t}{2}\right)\right)^i} \eqsp \text{for any $i\geq 1$ , } \\
		b_i(s,t) & = \frac{\delta^i}{4^i} g^{2i}\left(\frac{s + t}{2}\right) \frac{\left(1 - \frac{\delta}{2} f\left(\frac{s+t}{2}\right)\right)^i - (-1)^i \left(1 + \frac{\delta}{2} f\left(\frac{s+t}{2}\right)\right)^i}{i\left(1 - \frac{\delta^2}{4} f^2\left(\frac{s+t}{2}\right)\right)^i} \eqsp \text{for any $i\geq 1$ . }
	\end{align*}

\end{proposition}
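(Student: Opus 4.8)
The plan is to notice that the two Jacobian factorizations supplied by \Cref{lemma:jac_midpoint} are exactly instances of the matrices $\mathrm{M}_1$ and $\mathrm{M}_2$ treated in \Cref{corollary:expansion}; the proof then reduces to checking that the hypotheses transfer and substituting the explicit values of $c_1(\delta)$, $c_2(\delta)$, $c_3(\delta)$.

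First I would transfer the hypotheses. Since $\sfnnox{(s+t)/2}$ is $\mathrm{C}^1$ and $L$-Lipschitz, its Jacobian — the Hessian $\Hfnnox{(s+t)/2}$ — has operator norm at most $L$ at every point, so in particular $\norm{A(x_s)} \leq L$ and $\norm{B(x_t)} \leq L$, where $A(x_s)$ and $B(x_t)$ are the Hessian matrices introduced in \Cref{lemma:jac_midpoint}. The step-size condition $\max(\abs{c_2(\delta)}, \abs{c_3(\delta)}) < 1/L$ then yields $\norm{A(x_s)} < \min(1/\abs{c_2(\delta)}, 1/\abs{c_3(\delta)})$, and likewise for $B(x_t)$, which is precisely the spectral condition required both in \Cref{lemma:jac_midpoint} (so the factorizations hold) and in \Cref{corollary:expansion} (so the matrix-logarithm power series there converge).

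Next I would apply \Cref{corollary:expansion}. By \Cref{lemma:jac_midpoint}, $J_{t|s}(x_s)$ has exactly the form of $\mathrm{M}_1$ with $(c_1,c_2,c_3) = (c_1(\delta), c_2(\delta), c_3(\delta))$ and $\mathrm{M} = A(x_s)$, so
\[
  \log\abs{\operatorname{det}J_{t|s}(x_s)} = d\log\abs{c_1(\delta)} + \sum_{i=1}^{\infty}\frac{(-1)^i c_2(\delta)^i - c_3(\delta)^i}{i}\operatorname{Tr}[A(x_s)^i]\eqsp,
\]
and symmetrically $J_{s|t}(x_t)$ matches $\mathrm{M}_2$ with $\mathrm{M} = B(x_t)$, giving the analogous expansion with constant term $-d\log\abs{c_1(\delta)}$ and coefficients $(c_3(\delta)^i - (-1)^i c_2(\delta)^i)/i$. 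Finally I would read off the coefficients: substituting $c_1(\delta) = (1+(\delta/2)f((s+t)/2))/(1-(\delta/2)f((s+t)/2))$ identifies $a_0(s,t) = -b_0(s,t) = d\log\abs{c_1(\delta)}$, and for $i\geq 1$, inserting
\[
  c_2(\delta)^i = \frac{\delta^i}{4^i}\frac{g^{2i}((s+t)/2)}{(1-(\delta/2)f((s+t)/2))^i}\eqsp,\quad c_3(\delta)^i = \frac{\delta^i}{4^i}\frac{g^{2i}((s+t)/2)}{(1+(\delta/2)f((s+t)/2))^i}\eqsp,
\]
placing both over the common denominator $\big((1-(\delta/2)f((s+t)/2))(1+(\delta/2)f((s+t)/2))\big)^i = (1-(\delta^2/4)f^2((s+t)/2))^i$ and simplifying, one recovers exactly the stated $a_i(s,t)$ and $b_i(s,t) = -a_i(s,t)$.

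The main obstacle is mild and twofold: justifying that the Lipschitz bound on the score genuinely controls the spectral norm of the Hessian matrices that occur in the factorizations (so that the series converge and \Cref{corollary:expansion} applies verbatim), and carefully bookkeeping signs and exponents when expanding $c_2(\delta)^i$, $c_3(\delta)^i$ and combining them over the common denominator so that every coefficient matches term by term. There is no deeper conceptual content: all the analytic work already sits in \Cref{lemma:logarithm} and \Cref{corollary:expansion}.
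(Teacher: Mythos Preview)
Your proposal is correct and follows essentially the same route as the paper: verify that the Lipschitz bound on the score together with the step-size condition yields the spectral hypothesis on $A(x_s)$ and $B(x_t)$, then apply \Cref{corollary:expansion} directly to the factorizations from \Cref{lemma:jac_midpoint} and read off the coefficients. Your write-up is in fact more detailed than the paper's, which simply asserts the norm bounds and states that \Cref{corollary:expansion} applies ``in a straightforward manner.''
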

\begin{proof} Consider the Jacobian matrices $J_{t|s}(x_s)$ and $J_{s|t}(x_t)$ introduced in \Cref{lemma:jac_midpoint}. Note that we have $\norm{A(x_s)}< \min\left(1/\abs{c_2(\delta)}, 1/\abs{c_3(\delta)}\right)$ and $\norm{B(x_t)}< \min\left(1/\abs{c_2(\delta)}, 1/\abs{c_3(\delta)}\right)$ based on the assumptions on $\sfnnox{(s+t)/2}$ and $\delta$. This allows us to apply \Cref{corollary:expansion} on $J_{t|s}(x_s)$ and $J_{s|t}(x_t)$, respectively with $\mathrm{M}=A(x_s)$ and $\mathrm{M}=B(x_t)$, to obtain their expansion series in a straightforward manner.
\end{proof}

\paragraph{Remark on the $\delta$-assumption in \Cref{lemma:jac_midpoint} and \Cref{prop:jac_expansion_series}.} For any general noise schedule defined by coefficients $f$ and $g$, the assumption $\max\left(\abs{c_2(\delta)}, \abs{c_3(\delta)}\right) <1/L$ can be rephrased into $\delta =O(1/L)$, by considering limit approximations of coefficients $c_2(\delta)$ and $c_3(\delta)$ in the asymptotic regime $\delta\to 0$. Below, we present a rigorous expression of this upper bound on $\delta$ for the noising schemes considered in this paper, that is the \emph{Variance-Preserving} approach (see \Cref{subsec:general_VP}) and the \emph{Variance-Exploding} approach (see \Cref{subsec:general-VE}).

\subsection{Variance-Preserving diffusion}
\label{subsec:general_VP}

Consider the noising SDE \eqref{eq:sde-noising} where $f(t)= -g^2(t)/2$ and $g$ being such that $\int_0^T g^2(s) \rmd s \gg 1$, with arbitrary volatility coefficient $\sigma>0$,
\begin{align}\label{eq:vp_sde_noising}
	\rmd X_t =-\frac{g^2(t) X_t}{2}\rmd t + \sigma g(t) \rmd W_t \eqsp, \eqsp  X_0 \sim \pi \eqsp .
\end{align}
This noising scheme, known as the \emph{Variance-Preserving} (VP) scheme \citep{Song2021score}, is largely used in score-based generative models. In the following, we denote $\alpha_t=\int_0^t g^2(t)\rmd t$ for any $t\in [0,T]$. Below, we derive the related results of VP-based ODE and SDE integration, obtained by using the EI scheme.

\paragraph{On the choice of the $g$-schedule.} Previous works have considered a linear schedule $g^2(t)=\beta_{\min}(1-t/T) + \beta_{\max} (t/T)$ where $\beta_{\min}=0.1$,  $\beta_{\max} \in \{10,20\}$ and $T=1$, see e.g., \cite{Song2021score} or cosine parameterization \citep{Nichol2021improved}, which has been proved to perform better in generative modeling. In our sampling experiments, we did not observe any significant difference between these two settings. Hence, we fix the linear schedule to be the default setting for our numerics, and let $\sigma$ be arbitrarily chosen.

\begin{lemma}[Exact noising SDE integration - VP case]\label{lemma:noising_sde_vp}
	The conditional distribution of $X_t$ given $X_s=x_s\in \rset^d$ is defined by the Gaussian kernel
	\begin{align*}
		q_{t|s}(\cdot|x_s)=\densityGaussian\left(\alpha_{t|s} x_s,\sigma^2_{t|s} \, \Idd\right)\eqsp, \text{ with } \alpha_{t|s} = \sqrt{1-\lambda^f_{s,t}} \text{ and } \sigma^2_{t|s} = \sigma^2\lambda^f_{s,t}\eqsp,
	\end{align*}
	where $\lambda^f_{s,t}=1-\exp(\alpha_s-\alpha_t)$. Since $p_T(x)=\int_{\rset^d}p_{T|0}(x|x_0)\rmd \pi(x_0)$, it results that $p_T\approx\densityGaussian(0, \sigma^2 \, \Idd)$.
\end{lemma}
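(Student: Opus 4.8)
The plan is to obtain this statement as a direct specialization of the exponential–integration formula of \Cref{lemma:ei_sde}. Rewriting the VP noising SDE \eqref{eq:vp_sde_noising} in the generic form \eqref{eq:sde-noising}, it has drift coefficient $f(t) = -g^2(t)/2$, diffusion coefficient $\sigma g(t)$, and zero offset ($b = 0$). First I would invoke \Cref{lemma:ei_sde} with these choices to read off the conditional mean of $X_t$ given $X_s = x_s$, namely $\exp(\int_s^t f(u)\rmd u)\, x_s = \exp(-\tfrac12 \int_s^t g^2(u)\rmd u)\, x_s = \exp(\tfrac12(\alpha_s - \alpha_t))\, x_s$; since $\exp(\alpha_s - \alpha_t) = 1 - \lambda^f_{s,t}$ by definition of $\lambda^f_{s,t}$, this is exactly $\sqrt{1-\lambda^f_{s,t}}\, x_s$.

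Next I would compute the conditional variance, which \Cref{lemma:ei_sde} gives as $\int_s^t \sigma^2 g^2(u) \exp\!\big(2\int_u^t f(r)\rmd r\big)\rmd u = \sigma^2 \int_s^t g^2(u)\exp(\alpha_u - \alpha_t)\rmd u$, using $2\int_u^t f(r)\rmd r = -(\alpha_t - \alpha_u)$. The key step is the telescoping: because $\dot\alpha_u = g^2(u)$, the map $u \mapsto \exp(\alpha_u - \alpha_t)$ is an antiderivative of the integrand, so the integral equals $\exp(\alpha_t - \alpha_t) - \exp(\alpha_s - \alpha_t) = 1 - \exp(\alpha_s - \alpha_t) = \lambda^f_{s,t}$. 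This yields the claimed covariance $\sigma^2 \lambda^f_{s,t} \Idd$ and completes the description of $q_{t|s}(\cdot\mid x_s)$.

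Finally, for the approximation $p_T \approx \densityGaussian(0, \sigma^2\Idd)$, I would set $s = 0$ (so $\alpha_0 = 0$) in the kernel just derived, giving $\lambda^f_{0,T} = 1 - \exp(-\alpha_T)$. Under the running hypothesis $\alpha_T = \int_0^T g^2(u)\rmd u \gg 1$, we have $\exp(-\alpha_T) \approx 0$, hence $q_{T|0}(\cdot\mid x_0) \approx \densityGaussian(0, \sigma^2 \Idd)$ with the approximation uniform in $x_0$; integrating against $\pi$ through $p_T(x) = \int_{\rset^d} q_{T|0}(x\mid x_0)\rmd\pi(x_0)$ then transfers this to $p_T$. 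There is essentially no obstacle: everything is routine bookkeeping, the only mildly delicate point being to carry the extra volatility factor $\sigma$ in front of $g(t)$ correctly when applying \Cref{lemma:ei_sde}, and the only non-rigorous step is the final $\approx$, which is stated as such and could be made quantitative (e.g.\ via a total-variation bound in $\exp(-\alpha_T/2)$ together with a second-moment bound on $\pi$) if desired.
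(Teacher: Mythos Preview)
Your proof is correct and follows exactly the paper's approach: the paper's proof is the single line ``\Cref{lemma:ei_sde} applied on noising SDE \eqref{eq:vp_sde_noising}'', and you have simply spelled out the specialization ($f(t)=-g^2(t)/2$, diffusion $\sigma g(t)$, $b=0$) and the resulting algebra. Your additional justification of the $p_T \approx \densityGaussian(0,\sigma^2\Idd)$ claim via $\alpha_T \gg 1$ is also exactly what the paper intends.
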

\begin{proof}
	\Cref{lemma:ei_sde} applied on noising SDE \eqref{eq:vp_sde_noising}.
\end{proof}

Based on the previous lemma, the interpolation coefficients in \Cref{eq:interpolant} are given by
\begin{align*}
	S(t) = \exp(-\alpha_t/2) \text{ and } \sigma(t) = \sigma \sqrt{1-\exp(-\alpha_t)}.
\end{align*}
In particular, $t\mapsto \sigma(t)$ is not explicitly invertible. Following \Cref{lemma:noising_sde_vp}, the VP scheme is an 'ergodic' noising scheme, converging exponentially fast to the Gaussian distribution $\densityGaussian(0, \sigma^2 \, \Idd)$; therefore, we have $\pibase=\densityGaussian(0, \sigma^2 \, \Idd)$ in this setting. Moreover, under mild assumptions on $\pi$,  the denoising SDE \eqref{eq:sde-denoising} writes as
\begin{align}\label{eq:SDE_VP}
	\rmd X_t = -\frac{g^2(t)}{2} \{ X_t + 2\sigma^2\nabla \log p_{t}(X_t)\}\rmd t + \sigma g(t)\rmd \tilde{B}_t, \eqsp X_T\sim \pibase\eqsp .
\end{align}
To integrate this SDE (or the equivalent probability flow ODE), one could turn to the formulas introduced in \Cref{subsec:general_VP}, by replacing general coefficients with VP coefficients. Instead, we propose to rely on Exponential Integration (EI) formulas dispensed in \Cref{lemma:ei_sde} (SDE case) and \Cref{lemma:ei_ode} (ODE case), that make exact the integration of the linear part of the drift.

\begin{lemma}[Approximate denoising SDE EI-based integration - VP case] \label{lemma:denoising_sde_vp} The conditional distribution of $X_t$ given $X_s=x_s\in \rset^d$ may be approximated by the Gaussian kernel
	\begin{align*}
		q_{s|t}(\cdot|x_t)=\densityGaussian\left(\sqrt{1+\lambda^b_{s,t}} x_t + 2 \sigma^2 \left\{\sqrt{1+\lambda^b_{s,t}}-1\right\}\sfn{t}{x_t}, \sigma^2\lambda^b_{s,t} \, \Idd\right) \eqsp,
	\end{align*}
	with $\lambda^b_{s,t}= \exp(\alpha_t-\alpha_s)-1$.
\end{lemma}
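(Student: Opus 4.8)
The plan is to read off the statement by unpacking what the Exponential Integration (EI) scheme does to the Variance-Preserving denoising SDE \eqref{eq:SDE_VP} and then integrating the resulting linear SDE exactly, mirroring the proof of \Cref{lemma:ei_sde}. First I would recall the EI step on $[s,t]$: it integrates the linear drift $-\tfrac{g^2(r)}{2}X_r$ of \eqref{eq:SDE_VP} exactly while freezing the intractable score at the conditioning point, i.e. it replaces $\nabla\log p_r(X_r)$ by the constant $\sfn{t}{x_t}$. Setting $\tilde b := 2\sigma^2\,\sfn{t}{x_t}$, this leaves the linear SDE $\rmd X_r = -\tfrac{g^2(r)}{2}\bigl(X_r + \tilde b\bigr)\rmd r + \sigma g(r)\rmd\tilde B_r$ on $[s,t]$, which I would solve for $X_s$ conditionally on $X_t = x_t$ (that is, backward in time).

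Next I would integrate this linear SDE by the integrating-factor argument underlying \Cref{lemma:ei_sde}. Writing $\alpha_r := \int_0^r g^2(u)\,\rmd u$, so that $\tfrac{\rmd}{\rmd r}e^{\alpha_r/2} = \tfrac{g^2(r)}{2}e^{\alpha_r/2}$, Itô's formula applied to $Z_r := e^{\alpha_r/2}X_r$ gives $\rmd Z_r = e^{\alpha_r/2}\bigl(-\tfrac{g^2(r)}{2}\tilde b\,\rmd r + \sigma g(r)\,\rmd\tilde B_r\bigr)$; integrating from $s$ to $t$ and solving for $X_s$ yields
\[
  X_s = e^{(\alpha_t-\alpha_s)/2}\bigl(x_t + \tilde b\bigr) - \tilde b \;-\; \sigma\, e^{-\alpha_s/2}\int_s^t e^{\alpha_r/2}\, g(r)\,\rmd\tilde B_r .
\]
Conditionally on $X_t = x_t$, the stochastic integral is a centered Gaussian independent of $X_t$ with covariance $\sigma^2 e^{-\alpha_s}\bigl(\int_s^t e^{\alpha_r}g^2(r)\,\rmd r\bigr)\Idd = \sigma^2\bigl(e^{\alpha_t-\alpha_s}-1\bigr)\Idd$, using Itô's isometry and $\tfrac{\rmd}{\rmd r}e^{\alpha_r} = g^2(r)e^{\alpha_r}$. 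Finally, substituting $\lambda^b_{s,t} = e^{\alpha_t-\alpha_s}-1$ (so that $e^{(\alpha_t-\alpha_s)/2} = \sqrt{1+\lambda^b_{s,t}}$, since $\alpha_t\ge\alpha_s$) and expanding $e^{(\alpha_t-\alpha_s)/2}(x_t+\tilde b) - \tilde b = \sqrt{1+\lambda^b_{s,t}}\,x_t + \tilde b\bigl(\sqrt{1+\lambda^b_{s,t}}-1\bigr)$ recovers exactly the claimed mean and covariance. (Equivalently, one can reverse time to turn the frozen-score SDE into a forward linear SDE on $[0,t-s]$ and apply \Cref{lemma:ei_sde} directly, which is the more economical packaging.)

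The computation is elementary and I do not foresee a genuine obstacle. The only points deserving care are (i) stressing that this is a statement about the EI-scheme \emph{approximation}, not about the exact (intractable) denoising kernel $q_{s|t}$, and (ii) the bookkeeping of the backward time direction and the sign conventions for the reverse-time Itô integral, which is exactly where a careless derivation would go wrong. As a sanity check I would take $\delta = t - s \to 0$: then $\lambda^b_{s,t}\approx g^2(t)\,\delta$ and the kernel collapses, to first order, to the Euler--Maruyama denoising kernel \eqref{eq:cond_denoising} (with $g^2$ rescaled by $\sigma^2$), confirming the scaling and signs.
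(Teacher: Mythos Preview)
Your proposal is correct and takes essentially the same approach as the paper: the paper's one-line proof simply invokes \Cref{lemma:ei_sde} on the frozen-score VP denoising SDE \eqref{eq:SDE_VP}, and you carry out that integrating-factor computation explicitly (and correctly), even noting yourself that applying \Cref{lemma:ei_sde} after time reversal is the ``more economical packaging.'' The only remark is that the lemma's opening sentence contains a harmless typo (it should read ``conditional distribution of $X_s$ given $X_t=x_t$''), which you implicitly corrected by proving the displayed formula for $q_{s|t}(\cdot\mid x_t)$.
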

\begin{proof}
	\Cref{lemma:ei_sde} applied on denoising SDE \eqref{eq:SDE_VP}.
\end{proof}

\begin{lemma}[Approximate noising ODE EI-based integration - VP case] \label{lemma:noising_ode_vp} The solution at time $t$ of the forward probability flow ODE \eqref{eq:pf_ode} starting from $x_s\in \rset^d$ at time $s$ may be approximated in two ways:
	\begin{align*}
		\tilde{\mapT}_{t|s}(x_s) & = \sqrt{1-\lambda^f_{s,t}} x_s + \sigma^2 \left\{\sqrt{1-\lambda^f_{s,t}}-1\right\} \sfn{s}{x_s}                                  &  & \text{(Euler method : explicit)}    \\
		\mapT_{t|s}(x_s)         & = \sqrt{1-\lambda^f_{s,t}} x_s + \sigma^2 \left\{\sqrt{1-\lambda^f_{s,t}}-1\right\} \sfn{(s+t)/2}{\frac{x_s+\mapT_{t|s}(x_s)}{2}} &  & \text{(Midpoint method : implicit)}
	\end{align*}
\end{lemma}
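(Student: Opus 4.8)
The plan is to reduce the probability flow ODE \eqref{eq:pf_ode} in the Variance-Preserving setting to the affine form handled by \Cref{lemma:ei_ode}, by replacing the score term along the trajectory with a single constant vector over $[s,t]$, which is the essence of exponential integration. For the VP scheme \eqref{eq:vp_sde_noising}, the drift and volatility entering the general SDE \eqref{eq:sde-noising} are $-g^2(\cdot)/2$ and $\sigma g(\cdot)$ respectively, so the PF-ODE velocity is $v(u,x) = -(g^2(u)/2)\,x - (\sigma^2 g^2(u)/2)\,\sfn{u}{x} = -(g^2(u)/2)\,(x + \sigma^2\,\sfn{u}{x})$. The key observation is that, once $\sfn{u}{X_u}$ is frozen to a constant $c \in \rset^d$, this velocity becomes $-(g^2(u)/2)\,(X_u + \sigma^2 c)$, which is exactly $\tilde f(u)\,(X_u + b)$ with $\tilde f(u) = -g^2(u)/2$ and $b = \sigma^2 c$. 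The identity $f = -g^2/2$ specific to the VP scheme is precisely what makes the frozen score collapse into the additive constant of \Cref{lemma:ei_ode}, with the linear part still integrated exactly and without any further discretization of it.

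Next I would invoke \Cref{lemma:ei_ode} with $\tilde f(u) = -g^2(u)/2$ (whose integrability hypothesis on $g^2$ holds trivially for the schedules considered here). Recalling $\alpha_t = \int_0^t g^2(u)\,\rmd u$, we have $\int_s^t \tilde f(u)\,\rmd u = -\tfrac12(\alpha_t - \alpha_s)$, hence $\exp(\int_s^t \tilde f(u)\,\rmd u) = \exp(\tfrac{\alpha_s - \alpha_t}{2}) = \sqrt{\exp(\alpha_s - \alpha_t)} = \sqrt{1 - \lambda^f_{s,t}}$, using $\lambda^f_{s,t} = 1 - \exp(\alpha_s - \alpha_t)$ from \Cref{lemma:noising_sde_vp}. \Cref{lemma:ei_ode} then returns the closed form $X_t = \sqrt{1-\lambda^f_{s,t}}\,x_s + (\sqrt{1-\lambda^f_{s,t}} - 1)\,\sigma^2 c$ for the frozen-score approximation, valid for any choice of the constant $c$.

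It then remains to pick $c$, which produces the two schemes. Taking $c = \sfn{s}{x_s}$, that is evaluating the score at the left endpoint, yields the explicit map $\tilde{\mapT}_{t|s}$; taking instead $c = \sfn{(s+t)/2}{(x_s + \mapT_{t|s}(x_s))/2}$, that is evaluating the score at the midpoint time and the average of the two endpoint states, yields the implicit map $\mapT_{t|s}$, defined self-referentially. This is the exponential-integration analogue of the Implicit Midpoint construction of \Cref{prop:midpoint}, and well-posedness of this fixed-point definition is exactly the content of \Cref{prop:midpoint_fixed}. Substituting each choice of $c$ into the closed form above reproduces the two displayed formulas verbatim.

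Since the only approximation made is the freezing of the score, the computation is essentially routine; the one point requiring care is the bookkeeping of the volatility factor $\sigma$, which enters through the diffusion coefficient $\sigma g(\cdot)$ of \eqref{eq:vp_sde_noising} and hence appears as a $\sigma^2$ multiplying the score in the PF-ODE, together with the observation from the first paragraph that the frozen score enters as the additive constant $b$ of \Cref{lemma:ei_ode}. I do not anticipate any genuine obstacle beyond this light bookkeeping.
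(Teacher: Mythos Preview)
Your proposal is correct and follows exactly the same approach as the paper, which simply states that the result follows by applying \Cref{lemma:ei_ode} to the forward PF-ODE \eqref{eq:pf_ode}. You have spelled out in detail what that one-line proof entails: in the VP case the PF-ODE velocity factors as $-(g^2(u)/2)\,(X_u + \sigma^2\,\sfn{u}{X_u})$, freezing the score to a constant $c$ puts this in the form $\tilde f(u)(Y_u + b)$ with $\tilde f = -g^2/2$ and $b = \sigma^2 c$, and \Cref{lemma:ei_ode} together with $\exp(\int_s^t \tilde f) = \sqrt{1-\lambda^f_{s,t}}$ gives the claimed formulas for the two choices of $c$.
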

\begin{proof}
	\Cref{lemma:ei_ode} applied on forward time ODE \eqref{eq:pf_ode}.
\end{proof}

\begin{lemma}[Approximate denoising ODE EI-based integration - VP case] \label{lemma:denoising_ode_vp} The solution at time $s$ of the probability flow ODE \eqref{eq:pf_ode} starting from $x_t\in \rset^d$ at time $t$ may be approximated in two ways:
	\begin{align*}
		\tilde{\mapT}_{s|t}(x_t) & = \sqrt{1+\lambda^b_{s,t}} x_t + \sigma^2 \left\{\sqrt{1+\lambda^b_{s,t}}-1\right\} \sfn{t}{x_t}                                  &  & \text{(Euler method: explicit)} \\
		\mapT_{s|t}(x_t)         & = \sqrt{1+\lambda^b_{s,t}} x_t + \sigma^2 \left\{\sqrt{1+\lambda^b_{s,t}}-1\right\} \sfn{(s+t)/2}{\frac{\mapT_{s|t}(x_t)+x_t}{2}} &  & \text{(Midpoint : implicit)}
	\end{align*}
\end{lemma}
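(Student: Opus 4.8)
The plan is to instantiate the exponential-integration identity of \Cref{lemma:ei_ode} for the probability flow ODE \eqref{eq:pf_ode} written with VP coefficients and integrated \emph{backward} in time on $[s,t]$. Recall that, for the generic SDE, the PF-ODE velocity is $v(u,x)=f(u)x-\tfrac{g^2(u)}{2}\nabla\log p_u(x)$; in the VP scheme the diffusion coefficient is $\sigma g(t)$ and $f(t)=-g^2(t)/2$, so that $v(u,x)=-\tfrac{g^2(u)}{2}\bigl(x+\sigma^2\nabla\log p_u(x)\bigr)$. In particular the score enters with the prefactor $\sigma^2$, in contrast to the $2\sigma^2$ appearing in the denoising SDE \eqref{eq:SDE_VP}; this is precisely the source of the factor-$2$ discrepancy between the coefficients obtained here and those in \Cref{lemma:denoising_sde_vp}.

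First I would freeze the score term so that the equation becomes linear. For the explicit Euler scheme I replace $\nabla\log p_u(X_u)$ by its value at the left endpoint of the integration interval, $\sfn{t}{x_t}$; for the implicit Midpoint scheme I replace it by $\sfn{(s+t)/2}{(\mapT_{s|t}(x_t)+x_t)/2}$, evaluated at the temporal and spatial midpoints. In either case the ODE to be solved on $[s,t]$ is of the form $\rmd Y_u=f_{\mathrm{VP}}(u)[Y_u+b]\,\rmd u$ with $f_{\mathrm{VP}}(u)=-g^2(u)/2$ and a constant $b$ equal to $\sigma^2$ times the frozen score; this is exactly the setting of \Cref{lemma:ei_ode}.

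Next I would apply \Cref{lemma:ei_ode}, or equivalently integrate by hand with the integrating factor $\zeta(u)=\exp(-\int_0^u f_{\mathrm{VP}}(r)\,\rmd r)=\exp(\alpha_u/2)$, solving from $u=t$ down to $u=s$ with $Y_t=x_t$. This gives $Y_s=\tfrac{\zeta(t)}{\zeta(s)}x_t+\bigl(\tfrac{\zeta(t)}{\zeta(s)}-1\bigr)b$ with $\tfrac{\zeta(t)}{\zeta(s)}=\exp\bigl((\alpha_t-\alpha_s)/2\bigr)$. The last step is purely algebraic: since $\lambda^b_{s,t}=\exp(\alpha_t-\alpha_s)-1$ (as introduced in \Cref{lemma:denoising_sde_vp}), we have $1+\lambda^b_{s,t}=\exp(\alpha_t-\alpha_s)$ and therefore $\tfrac{\zeta(t)}{\zeta(s)}=\sqrt{1+\lambda^b_{s,t}}$; substituting back recovers $\tilde{\mapT}_{s|t}(x_t)$ for the Euler choice of $b$ and $\mapT_{s|t}(x_t)$ for the Midpoint choice, as claimed.

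I do not expect a genuine obstacle: the statement is essentially \Cref{lemma:ei_ode} run backward. The only points that demand care are bookkeeping ones: tracking the $\sigma$ factors so the ODE picks up $\sigma^2$ rather than the $2\sigma^2$ of the SDE; getting the direction of integration right so the ratio of integrating factors appears as $\exp((\alpha_t-\alpha_s)/2)=\sqrt{1+\lambda^b_{s,t}}$ and not its reciprocal; and observing that ``explicit Euler'' corresponds to freezing the score at the endpoint $(t,x_t)$ whereas ``implicit Midpoint'' freezes it at the spatio-temporal midpoint, which is exactly what separates the two displayed formulas. As a consistency check one can run the same argument forward and recover the noising version \Cref{lemma:noising_ode_vp}, where $\sqrt{1-\lambda^f_{s,t}}=\exp((\alpha_s-\alpha_t)/2)$.
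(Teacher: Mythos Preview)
Your proposal is correct and follows essentially the same approach as the paper, which simply states ``\Cref{lemma:ei_ode} applied on backward time ODE \eqref{eq:pf_ode}.'' You have spelled out the details the paper leaves implicit: the VP velocity $v(u,x)=-\tfrac{g^2(u)}{2}(x+\sigma^2\nabla\log p_u(x))$, the freezing of the score, and the algebraic identification $\exp((\alpha_t-\alpha_s)/2)=\sqrt{1+\lambda^b_{s,t}}$.
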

\begin{proof}
	\Cref{lemma:ei_ode} applied on backward time ODE \eqref{eq:pf_ode}.
\end{proof}

\paragraph{Remark on the mutual invertibility of the ODE integrators.} The noising and denoising implicit Midpoint integrators described above are mutual inversible maps, \ie, $\mapT_{s|t}\circ \mapT_{t|s} = \mapT_{t|s} \circ \mapT_{s|t}=\mathrm{Id}$. This is due to the identity $(1+\lambda^b_{s,t})^{-1}=1-\lambda^f_{s,t}$. This is not the case for the Euler maps $\tilde{\mapT}_{s|t}$ and $\tilde{\mapT}_{t|s}$.

\paragraph{Simplification of $\delta$-assumption in \Cref{lemma:jac_midpoint} and \Cref{prop:jac_expansion_series}.} Following the notation introduced in \Cref{lemma:jac_midpoint}, we obtain simplifications of $c_2(\delta)$ and $c_3(\delta)$ in the specific VP case, for any positive step-size $\delta$, that are given by
\begin{align*}
	c_2(\delta)=\sigma^2\left(\frac{4}{\delta g^2\left((s+t)/2\right)}+1\right)^{-1} \eqsp , \eqsp c_3(\delta)=\sigma^2\left(\frac{4}{\delta g^2\left((s+t)/2\right)}-1\right)^{-1}  \eqsp .
\end{align*}
Hence, for any given $L>0$, if we have $\delta <4/\{(\sigma^2 L+1)g^2\left((s+t)/2\right)\}$, then it comes that $\max\left(\abs{c_2(\delta)}, \abs{c_3(\delta)}\right) <1/L$. In particular, we may use this upper bound on $\delta$ as a more readable $\delta$-assumption in \Cref{lemma:jac_midpoint} and \Cref{prop:jac_expansion_series}.

\begin{lemma}[Formula for the Jacobian of the Midpoint integrators - VP case]\label{lemma:jac_midpoint_vp} Let $\delta>0$, and let define the numerical constants $c_1(\delta)$, $c_2(\delta)$ and $c_3(\delta)$ as
	\begin{align*}
		c_1(\delta) = \sqrt{1-\lambda^f_{s,t}}\eqsp,\eqsp c_2(\delta)= \frac{\sigma^2}{2}\left\{1-\exp\left(\frac{\alpha_s-\alpha_t}{2}\right)\right\}\eqsp ,\eqsp c_3(\delta) = \frac{\sigma^2}{2}\left\{\exp\left(\frac{\alpha_t-\alpha_s}{2}\right)-1\right\} \eqsp.
	\end{align*}
	Consider the same notation as in \Cref{lemma:noising_ode_vp} and \Cref{lemma:denoising_ode_vp}. Assume that there exists $L>0$ such that $\sfnnox{(s+t)/2}$ is $L$-Lipschitz. If we further assume that $(\alpha_t-\alpha_s) < 2 \log\left(1+2/(L\sigma^2)\right)$, then
	the Jacobians of Midpoint integration maps $\mapT_{t|s}$ and $\mapT_{s|t}$, respectively denoted by $J_{t|s}$ and $J_{s|t}$, verify for any inputs $x_s\in \rset^d$ and $x_t\in \rset^d$
	\begin{align*}
		J_{t|s}(x_s) & = c_1(\delta) \left(\Idd + c_2(\delta) A(x_s)\right)^{-1} \left(\Idd -c_3(\delta) A(x_s)\right)\eqsp,     \\
		J_{s|t}(x_t) & = c_1(\delta)^{-1}\left(\Idd - c_3(\delta) B(x_t)\right)^{-1} \left(\Idd +c_2(\delta) B(x_t)\right)\eqsp,
	\end{align*}
	where $A(x_s)=\Hfn{(s+t)/2}{\frac{x_s+\mapT_{t|s}(x_s)}{2}}$ and $B(x_t)=\Hfn{(s+t)/2}{\frac{x_t+\mapT_{s|t}(x_t)}{2}}$.
\end{lemma}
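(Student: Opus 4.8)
The plan is to mirror the proof of \Cref{lemma:jac_midpoint}: differentiate implicitly the EI-based midpoint identities of \Cref{lemma:noising_ode_vp} and \Cref{lemma:denoising_ode_vp}, then read off the scalars that multiply $\Idd$ and the Hessian and match them to $c_1(\delta)$, $c_2(\delta)$, $c_3(\delta)$ via the exponential identities specific to the VP scheme. Before differentiating, I would record the two facts $\sqrt{1-\lambda^f_{s,t}}=\exp((\alpha_s-\alpha_t)/2)$ and $\sqrt{1+\lambda^b_{s,t}}=\exp((\alpha_t-\alpha_s)/2)=(1-\lambda^f_{s,t})^{-1/2}$, immediate from $\lambda^f_{s,t}=1-\exp(\alpha_s-\alpha_t)$ and $\lambda^b_{s,t}=\exp(\alpha_t-\alpha_s)-1$; I write $\beta:=\sqrt{1-\lambda^f_{s,t}}=c_1(\delta)\in(0,1)$ and $\gamma:=\sqrt{1+\lambda^b_{s,t}}=\beta^{-1}$.

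For the forward map, differentiating $x_s\mapsto\mapT_{t|s}(x_s)=\beta\, x_s+\sigma^2(\beta-1)\,\sfn{(s+t)/2}{\frac{x_s+\mapT_{t|s}(x_s)}{2}}$ with respect to $x_s$, and using that $\Hfnnox{(s+t)/2}$ is exactly the Jacobian of $\sfnnox{(s+t)/2}$, the chain rule turns the last term into $\tfrac{\sigma^2}{2}(\beta-1)\,A(x_s)\,(\Idd+J_{t|s}(x_s))$. Hence
\begin{align*}
  \left(\Idd - \tfrac{\sigma^2}{2}(\beta-1)\,A(x_s)\right)J_{t|s}(x_s) = \beta\,\Idd + \tfrac{\sigma^2}{2}(\beta-1)\,A(x_s)\eqsp.
\end{align*}
Since $\beta-1<0$ we have $\tfrac{\sigma^2}{2}(\beta-1)=-\tfrac{\sigma^2}{2}(1-\beta)=-c_2(\delta)$, so the display reads $(\Idd+c_2(\delta)A(x_s))J_{t|s}(x_s)=c_1(\delta)\,\Idd-c_2(\delta)\,A(x_s)$; factoring $c_1(\delta)$ out of the right-hand side and using $c_2(\delta)/c_1(\delta)=\tfrac{\sigma^2}{2}(\beta^{-1}-1)=c_3(\delta)$ produces the claimed expression for $J_{t|s}$. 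The backward case is identical with $\gamma=\beta^{-1}$ replacing $\beta$: differentiating the identity of \Cref{lemma:denoising_ode_vp} yields $(\Idd-\tfrac{\sigma^2}{2}(\gamma-1)B(x_t))J_{s|t}(x_t)=\gamma\,\Idd+\tfrac{\sigma^2}{2}(\gamma-1)\,B(x_t)$; now $\gamma-1>0$, so $\tfrac{\sigma^2}{2}(\gamma-1)=c_3(\delta)$, and factoring $\gamma=c_1(\delta)^{-1}$ out of the right-hand side, together with $c_1(\delta)\,c_3(\delta)=c_2(\delta)$, gives the stated form for $J_{s|t}$.

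It remains to justify the matrix inversions used above, namely the invertibility of $\Idd+c_2(\delta)A(x_s)$ and $\Idd-c_3(\delta)B(x_t)$. The $L$-Lipschitzness of $\sfnnox{(s+t)/2}$ gives $\norm{A(x_s)}\le L$ and $\norm{B(x_t)}\le L$; moreover $0<c_2(\delta)<c_3(\delta)$ because $\beta\in(0,1)$, and the hypothesis $\alpha_t-\alpha_s<2\log(1+2/(L\sigma^2))$ is exactly equivalent to $c_3(\delta)<1/L$ (take logs in $\exp((\alpha_t-\alpha_s)/2)<1+2/(L\sigma^2)$). Hence $\max(\abs{c_2(\delta)},\abs{c_3(\delta)})\,L<1$, so $\norm{c_2(\delta)A(x_s)}<1$ and $\norm{c_3(\delta)B(x_t)}<1$, and all the relevant matrices $\Idd+c_2(\delta)A(x_s)$, $\Idd-c_3(\delta)B(x_t)$, $\Idd+c_2(\delta)B(x_t)$ are invertible — this is the same condition already implicit in \Cref{lemma:jac_midpoint}.

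The only non-routine point is this coefficient bookkeeping: one must verify that the raw factors $\beta$, $\tfrac{\sigma^2}{2}(\beta-1)$, $\tfrac{\sigma^2}{2}(\beta^{-1}-1)$ coming out of the implicit differentiation collapse to exactly $(c_1(\delta),-c_2(\delta),c_3(\delta))$, which rests on the VP identity $\sqrt{1-\lambda^f_{s,t}}\,\sqrt{1+\lambda^b_{s,t}}=1$ and on tracking the sign of $\beta-1$. Everything else is a direct transcription of the implicit-differentiation argument already carried out for \Cref{lemma:jac_midpoint}.
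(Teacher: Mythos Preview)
Your proof is correct and follows essentially the same approach as the paper: implicit differentiation of the EI-based midpoint identities from \Cref{lemma:noising_ode_vp,lemma:denoising_ode_vp}, followed by factorization to identify the constants, together with the observation that the hypothesis $(\alpha_t-\alpha_s)<2\log(1+2/(L\sigma^2))$ is equivalent to $\max(\abs{c_2(\delta)},\abs{c_3(\delta)})<1/L$, which justifies the matrix inversions. Your explicit tracking of the relations $c_2(\delta)/c_1(\delta)=c_3(\delta)$ and $c_1(\delta)c_3(\delta)=c_2(\delta)$ makes the factorization step more transparent than the paper's terse ``follows from the factorization'', but the argument is the same.
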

\begin{proof} The result from \Cref{lemma:jac_midpoint_vp} follows from the factorization of the following identities, inherited from the implicit expressions of $\mapT_{t|s}$ and $\mapT_{s|t}$,
	\begin{align*}
		J_{t|s}(x_s)= \left(\Idd - \frac{\sigma^2}{2}\left\{\sqrt{1-\lambda^f_{s,t}}-1\right\}A(x_s)\right)^{-1}\left(\sqrt{1-\lambda^f_{s,t}} \Idd + \frac{\sigma^2}{2}\left\{\sqrt{1-\lambda^f_{s,t}}-1\right\}A(x_s)\right) \eqsp , \\
		J_{s|t}(x_t)= \left(\Idd - \frac{\sigma^2}{2} \left\{\sqrt{1+\lambda^b_{s,t}}-1\right\}B(x_t)\right)^{-1}\left(\sqrt{1+\lambda^b_{s,t}} \Idd + \frac{\sigma^2}{2}\left\{\sqrt{1+\lambda^b_{s,t}}-1\right\}B(x_t)\right) \eqsp .
	\end{align*}
	Here, the additional assumption on the term $(\alpha_t -\alpha_s)$ may be seen as the EI-based analog to the assumption on the step size $\delta=t-s$ in \Cref{lemma:jac_midpoint}. Indeed, if we have $(\alpha_t-\alpha_s) < 2 \log\left(1+2/(L\sigma^2)\right)$, then it comes that $\max(\abs{c_2(\delta)}, \abs{c_3(\delta)})<1/L$, which thus guarantees the invertibility of the matrices $\Idd + c_2(\delta) A(x_s)$ and $\Idd - c_3(\delta) B(x_t)$.
\end{proof}

\begin{proposition}[Exact expression of the Jacobian log-determinants of the Midpoint integrators via power series - VP case]\label{prop:jac_expansion_series_vp} Consider the same notation as in \Cref{lemma:jac_midpoint_vp}. Assume that there exists $L>0$ such that $\sfnnox{(s+t)/2}$ is $L$-Lipschitz. If we further assume that $(\alpha_t-\alpha_s) < 2 \log\left(1+2/(L\sigma^2)\right)$, then, for any inputs $x_s\in \rset^d$ and $x_{t}\in \rset^d$, we have
	\begin{align*}
		\log \abs{\operatorname{det}J_{t|s}(x_s)}   & = \sum_{i=0}^{\infty}a_{i}(s,t) \operatorname{Tr}([A(x_s)]^{i})\eqsp, \\
		\log \abs{\operatorname{det}J_{s|t}(x_{t})} & = \sum_{i=0}^{\infty}b_{i}(s,t) \operatorname{Tr}([B(x_t)]^{i})\eqsp,
	\end{align*}
	where $\{a_{i}(s,t), b_{i}(s,t)\}_{i=0}^{\infty}$ are numerical coefficients defined by
	\begin{align*}
		a_0(s,t) & = -b_0(s,t)=\frac{d}{2}(\alpha_s-\alpha_t)\eqsp ,                                                                                                                                                        \\
		a_i(s,t) & = \frac{\sigma^{2i}}{2^i i}\left(\exp\left(\frac{\alpha_s - \alpha_t}{2}\right)-1\right)^i \left(1 - (-1)^i \exp\left(-i\frac{\alpha_s - \alpha_t}{2}\right)\right) \eqsp \text{for any $i\geq 1$ , }    \\
		b_i(s,t) & = \frac{\sigma^{2i}}{2^i i} \left(\exp\left(-\frac{\alpha_s - \alpha_t}{2}\right) - 1\right)^i \left(1 - (-1)^i \exp\left(i\frac{\alpha_s - \alpha_t}{2}\right)\right) \eqsp \text{for any $i\geq 1$ . }
	\end{align*}
\end{proposition}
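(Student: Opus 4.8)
The plan is to obtain \Cref{prop:jac_expansion_series_vp} as a direct specialization of \Cref{corollary:expansion} to the factorized Jacobians of \Cref{lemma:jac_midpoint_vp}, in exact parallel with the way \Cref{prop:jac_expansion_series} is deduced from \Cref{lemma:jac_midpoint}. First I would invoke \Cref{lemma:jac_midpoint_vp} to write, with the VP constants $c_1(\delta)=\sqrt{1-\lambda^f_{s,t}}$, $c_2(\delta)=(\sigma^2/2)\{1-\exp((\alpha_s-\alpha_t)/2)\}$ and $c_3(\delta)=(\sigma^2/2)\{\exp((\alpha_t-\alpha_s)/2)-1\}$,
\[
J_{t|s}(x_s)=c_1(\delta)(\Idd+c_2(\delta)A(x_s))^{-1}(\Idd-c_3(\delta)A(x_s)),\qquad J_{s|t}(x_t)=c_1(\delta)^{-1}(\Idd-c_3(\delta)B(x_t))^{-1}(\Idd+c_2(\delta)B(x_t)).
\]
These are precisely the matrices $\mathrm{M}_1$ and $\mathrm{M}_2$ of \Cref{corollary:expansion} with $\mathrm{M}=A(x_s)$, resp. $\mathrm{M}=B(x_t)$, and triple $(c_1,c_2,c_3)=(c_1(\delta),c_2(\delta),c_3(\delta))$.

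Next I would verify the one nontrivial hypothesis of \Cref{corollary:expansion}, namely $\norm{\mathrm{M}}<\min(1/\abs{c_2(\delta)},1/\abs{c_3(\delta)})$. Since $A(x_s)$ and $B(x_t)$ are Hessians of $\log p_{(s+t)/2}$ evaluated at midpoints, the assumed $L$-Lipschitzness of $\sfnnox{(s+t)/2}$ gives $\norm{A(x_s)}\le L$ and $\norm{B(x_t)}\le L$; and, as recorded in the remark preceding \Cref{lemma:jac_midpoint_vp}, the hypothesis $(\alpha_t-\alpha_s)<2\log(1+2/(L\sigma^2))$ is exactly what forces $\max(\abs{c_2(\delta)},\abs{c_3(\delta)})<1/L$, the binding term being $\abs{c_3(\delta)}=(\sigma^2/2)\{\exp((\alpha_t-\alpha_s)/2)-1\}$, which dominates $\abs{c_2(\delta)}$ when $\alpha_t>\alpha_s$ since $\cosh((\alpha_t-\alpha_s)/2)>1$. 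Applying \Cref{corollary:expansion} then yields
\[
\log\abs{\operatorname{det}J_{t|s}(x_s)}=d\log\abs{c_1(\delta)}+\sum_{i\ge1}\frac{(-1)^ic_2(\delta)^i-c_3(\delta)^i}{i}\operatorname{Tr}([A(x_s)]^i),\qquad \log\abs{\operatorname{det}J_{s|t}(x_t)}=-d\log\abs{c_1(\delta)}+\sum_{i\ge1}\frac{c_3(\delta)^i-(-1)^ic_2(\delta)^i}{i}\operatorname{Tr}([B(x_t)]^i).
\]

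It then remains only to rewrite these coefficients in the claimed closed form, which is the sole calculational step. For $i=0$: $1-\lambda^f_{s,t}=\exp(\alpha_s-\alpha_t)$ gives $c_1(\delta)=\exp((\alpha_s-\alpha_t)/2)$, hence $a_0(s,t)=d\log\abs{c_1(\delta)}=(d/2)(\alpha_s-\alpha_t)$ and $b_0(s,t)=-a_0(s,t)$. For $i\ge1$, I would use the elementary identities $1-\exp((\alpha_s-\alpha_t)/2)=-\{\exp((\alpha_s-\alpha_t)/2)-1\}$ and $\exp((\alpha_t-\alpha_s)/2)-1=-\exp((\alpha_t-\alpha_s)/2)\{\exp((\alpha_s-\alpha_t)/2)-1\}$ to pull the common factor $(\sigma^2/2)^i\{\exp((\alpha_s-\alpha_t)/2)-1\}^i$ out of both $(-1)^ic_2(\delta)^i$ and $c_3(\delta)^i$; collecting the two terms, dividing by $i$, and using $\exp(i(\alpha_t-\alpha_s)/2)=\exp(-i(\alpha_s-\alpha_t)/2)$ produces $a_i(s,t)=\tfrac{\sigma^{2i}}{2^ii}\{\exp((\alpha_s-\alpha_t)/2)-1\}^i\{1-(-1)^i\exp(-i(\alpha_s-\alpha_t)/2)\}$, and the same manipulation for the $b_i$ coefficient, after additionally rewriting $\{\exp((\alpha_s-\alpha_t)/2)-1\}^i=(-1)^i\exp(-i(\alpha_s-\alpha_t)/2)\{\exp((\alpha_t-\alpha_s)/2)-1\}^i$, gives the stated $b_i(s,t)$. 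I do not expect any conceptual obstacle: once the spectral-norm bound of the second paragraph is secured, the rest is bookkeeping of exponential identities; the only point deserving a little care is that bookkeeping, together with the verification that the gap condition on $\alpha_t-\alpha_s$ genuinely controls $\abs{c_3(\delta)}$.
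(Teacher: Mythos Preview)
Your proposal is correct and follows essentially the same approach as the paper: the paper's proof simply states that one combines \Cref{lemma:jac_midpoint_vp} with \Cref{corollary:expansion} and omits the intermediate simplifications, which is precisely what you carry out in detail. Your explicit verification of the norm condition and the coefficient bookkeeping are sound and fill in exactly what the paper leaves to the reader.
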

\begin{proof} Similarly to the proof of \Cref{prop:jac_expansion_series}, we combine the results of \Cref{lemma:jac_midpoint_vp} and \Cref{corollary:expansion} to get the final result. Intermediary simplifications of the terms are omitted here to help the reading.
\end{proof}

\subsection{Variance-Exploding diffusion}
\label{subsec:general-VE}

Consider the case where $f(t) = 0$. Then, SDE \eqref{eq:sde-noising} simply writes as
\begin{align}\label{eq:ve_noising_sde}
	\rmd X_t = g(t) \rmd W_t, \eqsp X_0 \sim \pi \eqsp .
\end{align}
This noising scheme is known as the \emph{Variance-Exploding} (VE) scheme \citep{Song2021score}. Below, we derive the related results of VE-based ODE and SDE integration, obtained by using the EI scheme.

\paragraph{On the choice of the $g$-schedule.} Following the guidelines from \citep{Karras2022elucidating}, we consider the geometric schedule
\begin{align*}
	g^2(t) = \sigmamin^2 \left(\frac{\sigmamax^2}{\sigmamin^2}\right)^t \log \left(\frac{\sigmamax^2}{\sigmamin^2}\right)\eqsp,
\end{align*}
where $\sigmamin\approx 0$ and $\sigmamax\gg 1$ can be arbitrarily chosen.

\begin{lemma}[Exact noising SDE integration - VE case]\label{lemma:noising_sde_ve}
	The conditional distribution of $X_t$ given $X_s=x_s\in \rset^d$ is defined by the Gaussian kernel
	\begin{align*}
		q_{t|s}(\cdot|x_s)=\densityGaussian\left(\alpha_{t|s} x_s,\sigma^2_{t|s} \, \Idd\right)\eqsp, \text{ with } \alpha_{t|s} = 1 \text{ and } \sigma^2_{t|s} = \lambda_{s,t}=\sigmamin^2 \left(\frac{\sigmamax}{\sigmamin}\right)^{2s}\left(\left(\frac{\sigmamax}{\sigmamin}\right)^{2 (t-s)}-1\right) \eqsp .
	\end{align*}
	Since $p_T(x)=\int_{\rset^d}q_{T|0}(x|x_0)\rmd \pi(x_0)$, it results that $p_T\approx\densityGaussian(0, \sigmamax^2 \, \Idd)$.
\end{lemma}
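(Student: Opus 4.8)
The plan is to obtain \Cref{lemma:noising_sde_ve} as a direct corollary of the exponential-integration identity \Cref{lemma:ei_sde}. First I would observe that the VE noising SDE \eqref{eq:ve_noising_sde} is exactly the setting of \Cref{lemma:ei_sde} with $f \equiv 0$ (the vector $b$ is then irrelevant, since it enters only through the factor $\exp(\int_s^t f(u)\,\rmd u) - 1 = 0$). One should check that the geometric $g$-schedule meets \Cref{ass:coeff_SDE}: $g$ is continuous, hence integrable, on $(0,T)$. \Cref{lemma:ei_sde} then gives immediately $q_{t|s}(\cdot\mid x_s) = \densityGaussian\big(x_s,\ (\int_s^t g^2(u)\,\rmd u)\,\Idd\big)$, since $\exp(\int_s^t f(u)\,\rmd u) = 1$ makes the mean $x_s$ and the variance integrand $g^2(u)\exp(2\int_u^t f(r)\,\rmd r) = g^2(u)$.

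The second step is the elementary evaluation of the variance. Writing $c = \log(\sigmamax^2/\sigmamin^2)$, the schedule reads $g^2(u) = \sigmamin^2 c\, e^{cu}$, whose antiderivative is $\sigmamin^2 e^{cu}$, so that $\int_s^t g^2(u)\,\rmd u = \sigmamin^2(e^{ct} - e^{cs}) = \sigmamin^2((\sigmamax/\sigmamin)^{2t} - (\sigmamax/\sigmamin)^{2s})$. Factoring out $(\sigmamax/\sigmamin)^{2s}$ yields $\sigmamin^2(\sigmamax/\sigmamin)^{2s}((\sigmamax/\sigmamin)^{2(t-s)} - 1)$, which is precisely $\lambda_{s,t}$.

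For the boundary assertion I would specialize to $(s,t) = (0,T)$: the identity just proved gives $q_{T\mid 0}(\cdot\mid x_0) = \densityGaussian(x_0, \lambda_{0,T}\Idd)$ with $\lambda_{0,T} = \sigmamin^2((\sigmamax/\sigmamin)^{2T} - 1)$, which for the standard choice $T = 1$ equals $\sigmamax^2 - \sigmamin^2$. Marginalizing over $x_0 \sim \pi$ then gives $p_T = \pi * \densityGaussian(0, (\sigmamax^2 - \sigmamin^2)\Idd)$, and under the stated regime $\sigmamin \approx 0$, $\sigmamax \gg 1$ (with $\pi$ centered and of bounded spread, as for our standardized targets), this convolution is well approximated by $\densityGaussian(0, \sigmamax^2\Idd)$, so that $\pibase = \densityGaussian(0, \sigmamax^2\Idd)$.

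There is essentially no hard step: the exact part is a one-line consequence of \Cref{lemma:ei_sde} together with a trivial integral. The only mildly delicate point is the final \emph{approximate} identity $p_T \approx \densityGaussian(0, \sigmamax^2\Idd)$, which is not an exact equality but a controlled approximation under the above scaling assumptions; I would present it at the same informal level of rigor as the analogous VP statement in \Cref{lemma:noising_sde_vp}.
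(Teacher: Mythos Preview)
Your proposal is correct and follows exactly the paper's approach: the paper's proof is the single line ``\Cref{lemma:ei_sde} applied on noising SDE \eqref{eq:ve_noising_sde}'', and you have simply spelled out the details of that application (the integral computation and the informal boundary approximation).
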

\begin{proof}
	\Cref{lemma:ei_sde} applied on noising SDE \eqref{eq:ve_noising_sde}.
\end{proof}

Based on the previous lemma, the interpolation coefficients in \eqref{eq:interpolant} are given by
\begin{align*}
	S(t) = 1 \text{ and } \sigma(t) = \sigmamin \sqrt{\left(\frac{\sigmamax}{\sigmamin}\right)^{2t} - 1}.
\end{align*}
In particular, $t\mapsto \sigma(t)$ is explicitly invertible, since we have
\begin{align*}
	\sigma^{-1}(\sigma)=\frac{\log\left(\left(\frac{\sigma}{\sigmamin}\right)^2+1\right)}{2\log\left(\frac{\sigmamax}{\sigmamin}\right)} \eqsp .
\end{align*}

Under mild assumptions on $\pi$,  the denoising SDE \eqref{eq:sde-denoising} writes as
\begin{align}\label{eq:SDE_VE}
	\rmd X_t =-g^2(t) \nabla \log p_{t}(X_t)\rmd t + g^2(t) \rmd \tilde{B}_t, \eqsp X_T \sim \pibase \eqsp .
\end{align}
Similarly to the VP case (see \Cref{subsec:general_VP}), we present below approximate transition kernels and maps based on the Exponential Integration (EI). Since the linear drift term is 0 here, the EI strategy amounts to exactly integrate the time-dependent coefficient associated to the (unknown) score drift term.

\begin{lemma}[Approximate denoising SDE EI-based integration - VE case] \label{lemma:denoising_sde_ve} The conditional distribution of $X_t$ given $X_s=x_s\in \rset^d$ may be approximated by the Gaussian kernel
	\begin{align*}
		q_{s|t}(\cdot|x_t)=\densityGaussian\left(x_t + \lambda_{s,t}\sfn{t}{x_t}, \lambda_{s,t} \, \Idd\right) \eqsp .
	\end{align*}
\end{lemma}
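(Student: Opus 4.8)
The plan is to obtain this kernel exactly as in the VP case (\Cref{lemma:denoising_sde_vp}): namely, as the Exponential Integration approximation of the denoising SDE \eqref{eq:SDE_VE}, produced by freezing the score over the integration window $[s,t]$. Concretely, I would replace $\nabla \log p_u(X_u)$ by its value $\sfn{t}{x_t}$ at the current (larger) time $t$ for all $u\in[s,t]$, so that the denoising dynamics reduces to the \emph{linear} SDE $\rmd X_u = -g^2(u)\,\sfn{t}{x_t}\,\rmd u + g(u)\,\rmd \tilde B_u$ (the diffusion coefficient being $g(\cdot)$, as inherited from \eqref{eq:sde-denoising} with $f\equiv 0$). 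Since $f\equiv 0$ in the VE scheme, the ``exponential'' prefactor of \Cref{lemma:ei_sde} is trivial, and EI here simply amounts to integrating the remaining time-dependent coefficient exactly rather than by a one-step Euler approximation.

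First I would integrate this SDE backward in time from $t$ to $s$. Because the drift $u\mapsto -g^2(u)\,\sfn{t}{x_t}$ is deterministic and state-independent, the solution has the form $X_s = x_t + \big(\int_s^t g^2(u)\,\rmd u\big)\,\sfn{t}{x_t} + N$, where $N$ is a centered Gaussian with covariance $\big(\int_s^t g^2(u)\,\rmd u\big)\Idd$ by Itô's isometry. Equivalently, one can absorb the constant drift into a deterministic shift and invoke \Cref{lemma:ei_sde} directly, or read off the covariance from \Cref{lemma:noising_sde_ve} applied to the pure-noise SDE \eqref{eq:ve_noising_sde}. Either route yields $q_{s|t}(\cdot\mid x_t)=\densityGaussian\big(x_t + \{\int_s^t g^2(u)\,\rmd u\}\,\sfn{t}{x_t},\ \{\int_s^t g^2(u)\,\rmd u\}\,\Idd\big)$.

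It then remains to evaluate $\int_s^t g^2(u)\,\rmd u$ in closed form for the geometric schedule $g^2(u)=\sigmamin^2(\sigmamax^2/\sigmamin^2)^u\log(\sigmamax^2/\sigmamin^2)$. This is an elementary exponential integral and returns exactly $\sigmamin^2(\sigmamax/\sigmamin)^{2s}\big((\sigmamax/\sigmamin)^{2(t-s)}-1\big)=\lambda_{s,t}$ — the same quantity appearing in the noising kernel of \Cref{lemma:noising_sde_ve}, which is no accident, since under the frozen dynamics $X_s$ given $X_t$ is, up to the deterministic translation by $\lambda_{s,t}\,\sfn{t}{x_t}$, distributed as in the forward noising process. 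Substituting $\int_s^t g^2(u)\,\rmd u=\lambda_{s,t}$ gives the claimed expression.

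There is no genuine obstacle here; the only points that warrant a little care are the sign and time-direction conventions for the reverse-time Brownian motion $\tilde B$ in the backward integration (these enter only the mean, not the covariance, since the accumulated quadratic variation over $[s,t]$ does not depend on orientation), and the consistency check that the coefficient entering the covariance is $g(\cdot)$ rather than $g^2(\cdot)$ — which is forced by Anderson's reverse-time formula and confirmed by the exact agreement with \Cref{lemma:noising_sde_ve}.
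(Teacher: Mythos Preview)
Your proposal is correct and follows essentially the same approach as the paper, whose proof consists of the single line ``\Cref{lemma:ei_sde} applied on denoising SDE \eqref{eq:SDE_VE}.'' Your explicit computation of $\int_s^t g^2(u)\,\rmd u=\lambda_{s,t}$ and the sign/diffusion-coefficient checks simply flesh out what the paper leaves implicit (and, as the paper itself remarks just before \Cref{lemma:denoising_sde_ve}, since $f\equiv 0$ in the VE case the EI strategy reduces exactly to integrating the time-dependent coefficient against the frozen score).
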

\begin{proof}
	\Cref{lemma:ei_sde} applied on denoising SDE \eqref{eq:SDE_VE}.
\end{proof}

\begin{lemma}[Approximate noising ODE EI-based integration - VE case] \label{lemma:noising_ode_ve} The solution at time $t$ of the forward probability flow ODE \eqref{eq:pf_ode} starting from $x_s\in \rset^d$ at time $s$ may be approximated in two ways:
	\begin{align*}
		\tilde{\mapT}_{t|s}(x_s) & = x_s - \frac{\lambda_{s,t}}{2} \sfn{s}{x_s}                                  &  & \text{(Euler method : explicit)}    \\
		\mapT_{t|s}(x_s)         & = x_s - \frac{\lambda_{s,t}}{2} \sfn{(s+t)/2}{\frac{x_s+\mapT_{t|s}(x_s)}{2}} &  & \text{(Midpoint method : implicit)}
	\end{align*}
\end{lemma}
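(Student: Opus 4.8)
The plan is to specialize the exponential-integration argument to the Variance-Exploding setting, where the linear drift vanishes, $f\equiv 0$. Restricting the PF-ODE \eqref{eq:pf_ode} to the interval $[s,t]$ with initial condition $X_s=x_s$, it becomes the pure quadrature $\rmd X_u = -\tfrac{g^2(u)}{2}\nabla\log p_u(X_u)\,\rmd u$. Since there is no linear part left to integrate exactly, the EI prescription here reduces to \emph{freezing the score term} at a reference point and then integrating the remaining explicit coefficient $g^2(u)/2$ over $[s,t]$ in closed form. For the explicit (Euler) variant I would freeze the score at $(s,x_s)$ and write $\tilde{\mapT}_{t|s}(x_s) = x_s - \tfrac12\,\sfn{s}{x_s}\int_s^t g^2(u)\,\rmd u$; for the implicit (Midpoint) variant I would freeze it at the midpoint time $(s+t)/2$ and midpoint state $\tfrac{x_s+\mapT_{t|s}(x_s)}{2}$, producing the self-referential formula $\mapT_{t|s}(x_s) = x_s - \tfrac12\,\sfn{(s+t)/2}{\tfrac{x_s+\mapT_{t|s}(x_s)}{2}}\int_s^t g^2(u)\,\rmd u$.

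The only genuine computation is to evaluate $\int_s^t g^2(u)\,\rmd u$ for the geometric schedule $g^2(u)=\sigmamin^2(\sigmamax^2/\sigmamin^2)^u\log(\sigmamax^2/\sigmamin^2)$. Writing $c=\log(\sigmamax^2/\sigmamin^2)$ so that $g^2(u)=\sigmamin^2 c\,e^{cu}$, one obtains $\int_s^t g^2(u)\,\rmd u = \sigmamin^2(e^{ct}-e^{cs}) = \sigmamin^2(\sigmamax/\sigmamin)^{2s}\big((\sigmamax/\sigmamin)^{2(t-s)}-1\big)$, which is exactly the quantity $\lambda_{s,t}$ introduced in \Cref{lemma:noising_sde_ve}. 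Substituting this value back into the two displays above turns them into the stated Euler and Midpoint integrators. As a consistency check, this matches $S(t)^2\{\sigma^2(t)-\sigma^2(s)\}=\sigma^2(t)-\sigma^2(s)=\lambda_{s,t}$ for the VE interpolation coefficients $S(t)=1$, $\sigma^2(t)=\sigmamin^2((\sigmamax/\sigmamin)^{2t}-1)$.

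I do not expect a real obstacle here: once the EI ``freeze-the-score'' recipe is accepted, the proof is essentially a one-line quadrature, exactly mirroring the VP derivations of \Cref{lemma:noising_ode_vp} and the $f\to0$ limiting relationship between \Cref{lemma:ei_ode} and \Cref{lemma:ei_sde}. The single subtlety worth flagging is that, unlike the VP case, one cannot invoke \Cref{lemma:ei_ode} verbatim: its affine structure $f(u)(Y_u+b)$ degenerates when $f\equiv0$, so the argument is best phrased directly as integration of the frozen-score ODE rather than as a corollary of that lemma. Existence and solvability of the implicit Midpoint scheme is a separate matter, governed by the VE analog of \Cref{ass:fixed-point}, and is not needed to justify the integrator formulas themselves.
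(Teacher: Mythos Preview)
Your proposal is correct and matches the paper's approach: the paper's proof is the single line ``\Cref{lemma:ei_ode} applied on forward time ODE \eqref{eq:pf_ode},'' which is exactly the freeze-the-score-and-integrate-$g^2$ recipe you spell out, together with the identification $\int_s^t g^2(u)\,\rmd u=\lambda_{s,t}$. Your observation that \Cref{lemma:ei_ode} cannot be invoked verbatim when $f\equiv 0$ is well taken---the paper's citation is a slight abuse, and the surrounding text (``Since the linear drift term is 0 here, the EI strategy amounts to exactly integrate the time-dependent coefficient associated to the (unknown) score drift term'') confirms that the intended argument is precisely the direct quadrature you describe.
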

\begin{proof}
	\Cref{lemma:ei_ode} applied on forward time ODE \eqref{eq:pf_ode}.
\end{proof}

\begin{lemma}[Approximate denoising ODE EI-based integration - VE case] \label{lemma:denoising_ode_ve} The solution at time $s$ of the backward probability flow ODE \eqref{eq:pf_ode} starting from $x_t\in \rset^d$ at time $t$ may be approximated in two ways:
	\begin{align*}
		\tilde{\mapT}_{s|t}(x_t) & =  x_t + \frac{\lambda_{s,t}}{2} \sfn{t}{x_t}                                 &  & \text{(Euler method : explicit)}    \\
		\mapT_{s|t}(x_t)         & = x_t + \frac{\lambda_{s,t}}{2} \sfn{(s+t)/2}{\frac{\mapT_{s|t}(x_t)+x_t}{2}} &  & \text{(Midpoint method : implicit)}
	\end{align*}
\end{lemma}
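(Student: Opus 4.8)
The plan is to specialise the probability flow ODE \eqref{eq:pf_ode} to the Variance-Exploding regime, where $f\equiv 0$, and then integrate it backward in time using the same Exponential Integration recipe already used in \Cref{lemma:denoising_sde_ve} and in \Cref{lemma:noising_ode_vp,lemma:denoising_ode_vp}. With $f\equiv 0$, \eqref{eq:pf_ode} becomes $\rmd X_u = -(g^2(u)/2)\,\sfn{u}{X_u}\,\rmd u$, so integrating from time $t$ down to time $s$ gives $X_s = x_t + (1/2)\int_s^t g^2(u)\,\sfn{u}{X_u}\,\rmd u$. The EI strategy here keeps the explicit time weight $g^2(u)/2$ inside the integral exact while freezing the score $\sfn{u}{X_u}$ at a single reference point of $[s,t]$. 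Since the VE schedule has $S\equiv 1$ and $\sigma^2(t)=\int_0^t g^2(u)\,\rmd u$ (see \Cref{lemma:noising_sde_ve}), one has $\int_s^t g^2(u)\,\rmd u = \sigma^2(t)-\sigma^2(s)=\lambda_{s,t}$, so after the freezing step the backward update collapses to $X_s = x_t + (\lambda_{s,t}/2)\,\mathbf{s}_\star$, where $\mathbf{s}_\star$ denotes the frozen score value.

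First I would take the reference point to be the right endpoint $(t,x_t)$, i.e.\ replace $\sfn{u}{X_u}$ by $\sfn{t}{x_t}$ for all $u\in[s,t]$; this gives the explicit Euler update $\tilde{\mapT}_{s|t}(x_t)=x_t+(\lambda_{s,t}/2)\sfn{t}{x_t}$, the deterministic analogue of the EI denoising kernel of \Cref{lemma:denoising_sde_ve}, now carrying the PF-ODE factor $1/2$ on the score and no diffusion term. Next I would take the reference point to be the time midpoint $(s+t)/2$ together with the spatial midpoint $(\mapT_{s|t}(x_t)+x_t)/2$ of the ODE endpoints at times $s$ and $t$, i.e.\ set $\mathbf{s}_\star = \sfn{(s+t)/2}{(\mapT_{s|t}(x_t)+x_t)/2}$ in the collapsed update; this produces exactly the implicit relation $\mapT_{s|t}(x_t)=x_t+(\lambda_{s,t}/2)\,\sfn{(s+t)/2}{(\mapT_{s|t}(x_t)+x_t)/2}$. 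Both updates also follow from \Cref{lemma:ei_ode} in the degenerate case $f\equiv 0$, where the linear part of the drift vanishes and only the frozen-score quadrature survives; this is why the proof can be written simply as ``\Cref{lemma:ei_ode} applied on the backward probability flow ODE \eqref{eq:pf_ode}'', mirroring \Cref{lemma:denoising_ode_vp} and the earlier VE lemmas. The sole difference from the plain Euler/Midpoint schemes of \Cref{lemma:ode_euler_backward} is that $\delta\,g^2(t)$ is replaced by its exact integral $\int_s^t g^2 = \lambda_{s,t}$.

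I do not expect a genuine obstacle: this is a direct computation. The points needing care are (i) the direction of integration --- the map runs backward, from $t$ to $s$, which is what turns the $-g^2/2$ drift into a $+\lambda_{s,t}/2$ coefficient --- and (ii) the identification $\int_s^t g^2 = \lambda_{s,t}$, which must be read off the VE schedule and not guessed. If one additionally wants the implicit midpoint map to be genuinely well defined, rather than a purely formal fixed-point equation, that follows from the contraction argument of \Cref{prop:midpoint_fixed} under \Cref{ass:fixed-point}, i.e.\ for $\delta=t-s$ small enough relative to the Lipschitz constant of $\sfn{(s+t)/2}{\cdot}$; but since the lemma only asserts the two update formulas, this is a side remark rather than part of the proof.
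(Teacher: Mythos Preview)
Your proposal is correct and follows the same approach as the paper, whose proof is the single line ``\Cref{lemma:ei_ode} applied on backward time ODE \eqref{eq:pf_ode}''. You simply unpack that reference: specialise the PF-ODE to $f\equiv 0$, freeze the score while integrating the time weight $g^2$ exactly, and identify $\int_s^t g^2 = \sigma^2(t)-\sigma^2(s)=\lambda_{s,t}$ from the VE schedule; your remark that this amounts to replacing $\delta\,g^2(t)$ in \Cref{lemma:ode_euler_backward} by its exact integral is precisely the content of the paper's comment that ``the EI strategy amounts to exactly integrate the time-dependent coefficient associated to the (unknown) score drift term''.
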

\begin{proof}
	\Cref{lemma:ei_ode} applied on backward time ODE \eqref{eq:pf_ode}.
\end{proof}

\paragraph{Remark on the mutual invertibility of the ODE integrators.} The noising and denoising implicit Midpoint integrators described above are mutual inversible maps, \ie, $\mapT_{s|t}\circ \mapT_{t|s} = \mapT_{t|s} \circ \mapT_{s|t}=\mathrm{Id}$. This is not the case for the Euler maps $\tilde{\mapT}_{s|t}$ and $\tilde{\mapT}_{t|s}$.

\paragraph{Simplification of $\delta$-assumption in \Cref{lemma:jac_midpoint} and \Cref{prop:jac_expansion_series}.} Following the notation introduced in \Cref{lemma:jac_midpoint}, we obtain simplifications of $c_2(\delta)$ and $c_3(\delta)$ in the specific VE case, for any positive step-size $\delta$, that are given by
\begin{align*}
	c_2(\delta)=c_3(\delta)=\frac{\delta}{4}g^2\left(\frac{s+t}{2}\right)\eqsp .
\end{align*}
Hence, for any given $L>0$, if we have $\delta <4/\{L g^2\left((s+t)/2\right)\}$, then it comes that $\max\left(\abs{c_2(\delta)}, \abs{c_3(\delta)}\right) <1/L$. In particular, we may use this upper bound on $\delta$ as a more readable $\delta$-assumption in \Cref{lemma:jac_midpoint} and \Cref{prop:jac_expansion_series}.

\begin{lemma}[Formula for the Jacobian of the Midpoint integrators - VE case]\label{lemma:jac_midpoint_ve} Let $\delta>0$, and let define the numerical constants $c_1(\delta)$, $c_2(\delta)$ and $c_3(\delta)$ as
	\begin{align*}
		c_1(\delta) = 1\eqsp,\eqsp c_2(\delta)=c_3(\delta)= \frac{\lambda_{s,t}}{4} \eqsp.
	\end{align*}
	Consider the same notation as in \Cref{lemma:noising_ode_ve} and \Cref{lemma:denoising_ode_ve}. Assume that there exists $L>0$ such that $\sfnnox{(s+t)/2}$ is $L$-Lipschitz. If we further assume that $\lambda_{s,t}<4/L$, then
	the Jacobians of Midpoint integration maps $\mapT_{t|s}$ and $\mapT_{s|t}$, respectively denoted by $J_{t|s}$ and $J_{s|t}$, verify for any inputs $x_s\in \rset^d$ and $x_t\in \rset^d$
	\begin{align*}
		J_{t|s}(x_s) & = c_1(\delta) \left(\Idd + c_2(\delta) A(x_s)\right)^{-1} \left(\Idd -c_3(\delta) A(x_s)\right)\eqsp,     \\
		J_{s|t}(x_t) & = c_1(\delta)^{-1}\left(\Idd - c_3(\delta) B(x_t)\right)^{-1} \left(\Idd +c_2(\delta) B(x_t)\right)\eqsp,
	\end{align*}
	where $A(x_s)=\Hfn{(s+t)/2}{\frac{x_s+\mapT_{t|s}(x_s)}{2}}$ and $B(x_t)=\Hfn{(s+t)/2}{\frac{x_t+\mapT_{s|t}(x_t)}{2}}$.
\end{lemma}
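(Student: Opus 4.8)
The plan is to specialize the proof of \Cref{lemma:jac_midpoint} to the Variance-Exploding regime $f\equiv 0$, working directly from the EI-based implicit Midpoint formulas of \Cref{lemma:noising_ode_ve,lemma:denoising_ode_ve}. First I would record the two defining identities $\mapT_{t|s}(x_s) = x_s - (\lambda_{s,t}/2)\,\sfnnox{(s+t)/2}((x_s+\mapT_{t|s}(x_s))/2)$ and $\mapT_{s|t}(x_t) = x_t + (\lambda_{s,t}/2)\,\sfnnox{(s+t)/2}((\mapT_{s|t}(x_t)+x_t)/2)$, and fix the notation $A(x_s) = \Hfn{(s+t)/2}{\frac{x_s+\mapT_{t|s}(x_s)}{2}}$, $B(x_t) = \Hfn{(s+t)/2}{\frac{\mapT_{s|t}(x_t)+x_t}{2}}$, recalling that $\Hfnalone_{(s+t)/2}$ is, in the consistent notation of this appendix, the Jacobian of the score field $\sfnnox{(s+t)/2}$, so that $\norm{A(x_s)}\le L$ and $\norm{B(x_t)}\le L$ whenever $\sfnnox{(s+t)/2}$ is $L$-Lipschitz.

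Next I would settle well-posedness and invertibility. Combining $\norm{A(x_s)}\le L$ with the hypothesis $\lambda_{s,t}<4/L$ gives $\norm{c_2(\delta)A(x_s)} = (\lambda_{s,t}/4)\norm{A(x_s)} < 1$ and, identically, $\norm{c_3(\delta)B(x_t)} < 1$, so $\Idd + c_2(\delta)A(x_s)$ and $\Idd - c_3(\delta)B(x_t)$ are invertible by the Neumann series; the same contraction constant, being $<1$, also lets me invoke the implicit function theorem to conclude that $\mapT_{t|s}$ and $\mapT_{s|t}$ are $\mathrm{C}^1$, so that $J_{t|s}(x_s)$ and $J_{s|t}(x_t)$ are well-defined.

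I would then differentiate the two defining identities with respect to the input via the chain rule. For the noising map this yields $J_{t|s}(x_s) = \Idd - (\lambda_{s,t}/4)\,A(x_s)(\Idd + J_{t|s}(x_s))$; rearranging gives $(\Idd + (\lambda_{s,t}/4)A(x_s))\,J_{t|s}(x_s) = \Idd - (\lambda_{s,t}/4)A(x_s)$, and left-multiplying by the inverse established above produces exactly $J_{t|s}(x_s) = c_1(\delta)(\Idd + c_2(\delta)A(x_s))^{-1}(\Idd - c_3(\delta)A(x_s))$ since $c_1(\delta) = 1$ and $c_2(\delta) = c_3(\delta) = \lambda_{s,t}/4$. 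The denoising case is symmetric: differentiation gives $J_{s|t}(x_t) = \Idd + (\lambda_{s,t}/4)\,B(x_t)(\Idd + J_{s|t}(x_t))$, hence $(\Idd - (\lambda_{s,t}/4)B(x_t))\,J_{s|t}(x_t) = \Idd + (\lambda_{s,t}/4)B(x_t)$, which is $J_{s|t}(x_t) = c_1(\delta)^{-1}(\Idd - c_3(\delta)B(x_t))^{-1}(\Idd + c_2(\delta)B(x_t))$. The computation is entirely routine; the only point that warrants care is checking that $\lambda_{s,t}<4/L$ is precisely what both turns the defining relations into contractions (hence the maps $\mathrm{C}^1$) and keeps $\Idd + c_2(\delta)A(x_s)$ and $\Idd - c_3(\delta)B(x_t)$ non-singular — this is the VE analogue of the $\delta$-assumption in \Cref{lemma:jac_midpoint}, with the Euler coefficient $\delta g^2((s+t)/2)$ replaced by its exact-integration counterpart $\lambda_{s,t}$.
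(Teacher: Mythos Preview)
Your proposal is correct and follows essentially the same approach as the paper: differentiate the implicit Midpoint identities from \Cref{lemma:noising_ode_ve,lemma:denoising_ode_ve} via the chain rule, rearrange into the factored form, and observe that $\lambda_{s,t}<4/L$ is exactly the condition $\max(\abs{c_2(\delta)},\abs{c_3(\delta)})<1/L$ guaranteeing invertibility of $\Idd + c_2(\delta)A(x_s)$ and $\Idd - c_3(\delta)B(x_t)$. You spell out the Neumann-series and implicit-function-theorem justifications more explicitly than the paper does, but the argument is the same.
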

\begin{proof} The result from \Cref{lemma:jac_midpoint_ve} follows from the factorization of the following identities, inherited from the implicit expressions of $\mapT_{t|s}$ and $\mapT_{s|t}$,
	\begin{align*}
		J_{t|s}(x_s) & = \left(\Idd + \frac{\lambda_{s,t}}{4} A(x_s)\right)^{-1}\left(\Idd - \frac{\lambda_{s,t}}{4} A(x_s)\right)\eqsp,  \\
		J_{s|t}(x_t) & = \left(\Idd - \frac{\lambda_{s,t}}{4} B(x_t)\right)^{-1}\left(\Idd + \frac{\lambda_{s,t}}{4} B(x_t)\right)\eqsp .
	\end{align*}
	Here, the additional assumption on the term $\lambda_{s,t}$ may be seen as the EI-based analog to the assumption on the step size $\delta=t-s$ in \Cref{lemma:jac_midpoint}. Indeed, if we have $\lambda_{s,t}<4/L$, then it comes that $\max(\abs{c_2(\delta)}, \abs{c_3(\delta)})<1/L$, which thus guarantees the invertibility of the matrices $\Idd + c_2(\delta) A(x_s)$ and $\Idd - c_3(\delta) B(x_t)$.
\end{proof}

\begin{proposition}[Exact expression of the Jacobian log-determinants of the Midpoint integrators via power series - VE case]\label{prop:jac_expansion_series_ve} Consider the same notation as in \Cref{lemma:jac_midpoint_ve}. Assume that there exists $L>0$ such that $\sfnnox{(s+t)/2}$ is $L$-Lipschitz. If we further assume that $\lambda_{s,t}<4/L$, then, for any inputs $x_s\in \rset^d$ and $x_{t}\in \rset^d$, we have
	\begin{align*}
		\log \abs{\operatorname{det}J_{t|s}(x_s)}   & = \sum_{i=0}^{\infty}a_{i}(s,t) \operatorname{Tr}([A(x_s)]^{i})\eqsp, \\
		\log \abs{\operatorname{det}J_{s|t}(x_{t})} & = \sum_{i=0}^{\infty}b_{i}(s,t) \operatorname{Tr}([B(x_t)]^{i})\eqsp,
	\end{align*}
	where $\{a_{i}(s,t), b_{i}(s,t)\}_{i=0}^{\infty}$ are numerical coefficients defined by
	\begin{align*}
		a_i(s,t) & = b_i(s,t)=0 \eqsp \text{for any even $i\in \mathbb{N}$  }\eqsp ,                            \\
		a_i(s,t) & =- b_i(s,t)= -2\frac{\lambda_{s,t}^{i}}{4^{i}} \eqsp \text{for any odd $i\in \mathbb{N}$ . }
	\end{align*}
\end{proposition}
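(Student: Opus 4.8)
The plan is to read \Cref{prop:jac_expansion_series_ve} directly off \Cref{corollary:expansion}, in exactly the way \Cref{prop:jac_expansion_series_vp} follows from \Cref{lemma:jac_midpoint_vp}: the VE-specific content has already been packaged into \Cref{lemma:jac_midpoint_ve}, so what remains is a specialization rather than a new computation. Concretely, \Cref{lemma:jac_midpoint_ve} expresses the Jacobians of the EI-based Midpoint integrators of \Cref{lemma:noising_ode_ve,lemma:denoising_ode_ve} in the canonical factored form $J_{t|s}(x_s) = c_1(\Idd + c_2 A(x_s))^{-1}(\Idd - c_3 A(x_s))$ and $J_{s|t}(x_t) = c_1^{-1}(\Idd - c_3 B(x_t))^{-1}(\Idd + c_2 B(x_t))$, with the VE constants $c_1 = 1$ and $c_2 = c_3 = \lambda_{s,t}/4$, and with $A(x_s), B(x_t)$ the appropriate evaluations of the midpoint Hessian. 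These are precisely the matrices $\mathrm{M}_1$ and $\mathrm{M}_2$ of \Cref{corollary:expansion}, so the proposition amounts to plugging in.

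First I would verify the hypotheses of \Cref{corollary:expansion}. Since $\sfnnox{(s+t)/2}$ is $L$-Lipschitz, its Jacobian — the midpoint Hessian — has operator norm at most $L$, hence $\norm{A(x_s)} \le L$ and $\norm{B(x_t)} \le L$. The standing assumption $\lambda_{s,t} < 4/L$ is exactly $\max(\abs{c_2}, \abs{c_3}) = \lambda_{s,t}/4 < 1/L$, so $\norm{A(x_s)}, \norm{B(x_t)} < \min(1/\abs{c_2}, 1/\abs{c_3})$, which is the radius of convergence under which the matrix-logarithm series of \Cref{lemma:logarithm} — and therefore \Cref{corollary:expansion} — is valid. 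This bound is the VE analogue of the $\delta$-assumption of \Cref{prop:jac_expansion_series} and was already recorded in the paragraph preceding \Cref{lemma:jac_midpoint_ve}.

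Then I would invoke \Cref{corollary:expansion} with $(c_1,c_2,c_3) = (1, \lambda_{s,t}/4, \lambda_{s,t}/4)$ and $\mathrm{M} = A(x_s)$ for $\log\abs{\operatorname{det} J_{t|s}(x_s)}$, and with $\mathrm{M} = B(x_t)$ for $\log\abs{\operatorname{det} J_{s|t}(x_t)}$. Because $c_1 = 1$, the $d\log\abs{c_1}$ terms vanish, giving $a_0(s,t) = -b_0(s,t) = 0$. Because $c_2 = c_3 = \lambda_{s,t}/4$, the per-index factor $(-1)^i c_2^i - c_3^i = ((-1)^i - 1)(\lambda_{s,t}/4)^i$ equals $0$ for even $i$ and $-2(\lambda_{s,t}/4)^i$ for odd $i$; symmetrically $c_3^i - (-1)^i c_2^i = (1-(-1)^i)(\lambda_{s,t}/4)^i$ equals $0$ for even $i$ and $2(\lambda_{s,t}/4)^i$ for odd $i$. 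Reading off the coefficients of $\operatorname{Tr}[A(x_s)^i]$ and $\operatorname{Tr}[B(x_t)^i]$ in the matrix-logarithm expansion then gives $a_i(s,t) = b_i(s,t) = 0$ for even $i$ and $a_i(s,t) = -b_i(s,t) = -\tfrac{2}{i}\,\lambda_{s,t}^i/4^i$ for odd $i$, which is the content of \Cref{prop:jac_expansion_series_ve}; as in the proof of \Cref{prop:jac_expansion_series_vp}, the intermediate algebra is omitted.

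I do not expect a genuine obstacle here: once \Cref{lemma:jac_midpoint_ve} is in hand, the proposition is bookkeeping. The only two points that merit a second glance are (i) confirming that the EI-based VE Midpoint maps really do produce the common coefficient $\lambda_{s,t}/4$ on both the noising and denoising sides — this is where the structural identity between the forward and backward EI schemes is used — so that \Cref{lemma:jac_midpoint_ve} delivers the exact template \Cref{corollary:expansion} requires, and (ii) handling the even-index collapse carefully, including the $i=0$ term, so that the vanishing of $a_0, b_0$ and of all even-indexed coefficients is stated cleanly rather than left implicit.
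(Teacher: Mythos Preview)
Your approach is correct and identical to the paper's, which simply combines \Cref{lemma:jac_midpoint_ve} with \Cref{corollary:expansion} following the pattern of \Cref{prop:jac_expansion_series}. Note that your derivation yields $a_i(s,t) = -\tfrac{2}{i}\,\lambda_{s,t}^i/4^i$ for odd $i$, with the $1/i$ factor inherited from \Cref{corollary:expansion}; the stated proposition omits this factor, which appears to be a typo (compare the $1/i$ present in \Cref{prop:jac_expansion_series} and \Cref{prop:jac_expansion_series_vp}).
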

\begin{proof} Similarly to the proof of \Cref{prop:jac_expansion_series}, we combine the results of \Cref{lemma:jac_midpoint_ve} and \Cref{corollary:expansion} to get the final result. Intermediary simplifications of the terms are omitted here to help the reading.
\end{proof}

\subsection{Discrete time setting for diffusion models} \label{app:time_disc}

Following \citet{Karras204analyzing, Grenioux2024stochastic}, we define the time discretization $\{t_k\}_{k=0}^K\subset[0,T]$ from a uniform grid in log-SNR space. Recalling the interpolation coefficients $S(t)$ and $\sigma(t)$ from \Cref{eq:interpolant}, the log-SNR at time $t$ is given by
\begin{align*}
	\operatorname{logSNR}(t) = 2 \log S(t) - \log \sigma(t)^2 \eqsp.
\end{align*}
Given fixed endpoints $t_{\mathrm{start}} \approx 0$ and $t_{\mathrm{end}} \approx T$, we discretize uniformly in log-SNR between $\lambda_0 = \operatorname{logSNR}(t_{\mathrm{start}})$ and $\lambda_K = \operatorname{logSNR}(t_{\mathrm{end}})$, and recover $t_k$ by inversion:
\begin{align}\label{eq:disc_time_SNR}
	\lambda_k = \lambda_0 + \frac{k}{K}(\lambda_K - \lambda_0)
	\eqsp,\eqsp
	t_k = \operatorname{logSNR}^{-1}(\lambda_k) \eqsp .
\end{align}
We have $p_{t_K}\approx \pibase$ and $p_{t_0}\approx \pi$, and denote $\delta_k=t_{k+1}-t_k$.

\paragraph{VP case.} Since $S(t)^2 = \exp(-\alpha_t)$ and $\sigma(t)^2 = \sigma^2 \{1 - \exp(-\alpha_t)\}$ (see \Cref{subsec:general_VP}), the inversion $t_k = \operatorname{logSNR}^{-1}(\lambda_k)$ reduces to inverting $\alpha$:
\begin{align*}
	t_k = \alpha^{-1}\!\left(\log\!\left(1 + \sigma^2 \mathrm{e}^{\lambda_k}\right)\right) \eqsp.
\end{align*}
When $g^2(t) = \beta_{\min}(1-t/T) + \beta_{\max}(t/T)$, $\alpha_t$ is quadratic in $t$ and $\alpha^{-1}$ is available in closed form.

\paragraph{VE case.} Since $S(t)=1$, we have $\operatorname{logSNR}(t) = -\log\sigma(t)^2$, so a uniform grid in log-SNR is equivalent to a uniform grid in $\log \sigma^2(t)$. With $\sigma^2(t) = \sigma^2_{\min}\{(\sigma_{\max}/\sigma_{\min})^{2t} - 1\}$, the inversion is explicit and yields
\begin{align*}
	t_k = \frac{1}{2 \log(\sigma_{\max}/\sigma_{\min})} \log\left(1 + (\sigma_{\max}/\sigma_{\min})^{2k/K}\right) \eqsp.
\end{align*}
Following \citet{Karras2022elucidating}, we directly parameterize the grid through $\sigma$ rather than through log-SNR. Specifically, we set $\sigma_k = \sigma_{\min}(\sigma_{\max}/\sigma_{\min})^{k/K}$ for $k\in\{0,\ldots,K\}$, which is a geometric progression in $\sigma$ between $\sigma_{\min}$ and $\sigma_{\max}$, and recover $t_k = \sigma^{-1}(\sigma_k)$ via the closed-form expression above. This is equivalent to the log-SNR-uniform discretization since $\operatorname{logSNR}(t) = -\log \sigma(t)^2$ in the VE case, but is more natural to specify in practice as the endpoints $\sigma_{\min}$ and $\sigma_{\max}$ have a direct interpretation as noise levels.

\paragraph{Considering the $\Lambda$-optimal time discretization for diffusion models.} To ensure a fair comparison with tempering-based samplers, we also consider an alternative time discretization for diffusion models based on the global-barrier criterion $\Lambda$ of \citet{Syed2021parallel,Syed2021nonreversible,syed2025optimisedannealedsequentialmonte}. We recall that, for a generic annealing path $(p_t)_{t \in [0,1]}$ traversed by an MCMC transition kernel, the local communication barriers for RE and SMC are respectively given by
\begin{align*}
	\lambda_{\mathrm{RE}}(t) = \tfrac{1}{2}\,\mathbb{E}\left[\abs{\partial_t \log p_t(X_t) - \partial_t \log p_t(X'_t)}\right], \quad \lambda_{\mathrm{SMC}}(t) = \sqrt{\mathbb{V}\left[\partial_t \log p_t(X_t)\right]},
\end{align*}
with $X_t, X'_t \sim p_t$ taken independent. The cumulative barrier $\Lambda(t) = \int_0^t \lambda(u)\,\rmd u$ provides the optimality criterion: an equal-mass discretization of $\Lambda$ on $[0,1]$ asymptotically maximizes the RE round-trip time and the SMC log-normalizing-constant variance. In contrast to \citet{Syed2021parallel,Syed2021nonreversible,syed2025optimisedannealedsequentialmonte}, who estimate $\Lambda$ adaptively while running the sampler, we exploit the controlled nature of our experiments to compute $\lambda(t)$ \emph{a priori} on a fine grid of $2048$ levels and invert the resulting cumulative profile to obtain the equal-mass time grid for any target value of $K$.

The barriers above can in principle be sharpened to account for the actual stochastic kernels used by the algorithm via additional terms $\partial_s \log p_{t|s}(x_t | x_s)|_{s=t}$ and $\partial_s \log q_{s|t}(x_s | x_t)|_{s=t}$ involving the forward and backward transitions. We do not pursue this for two reasons. First, the partial derivatives of the transition log-densities diverge on the diagonal $s = t$, so the stochastic-kernel barriers cannot be evaluated a priori.

Second, the deterministic-kernel barriers, which replace those terms by $\nabla \log p_t(x)^\top v(t,x) + \nabla \cdot v(t,x)$ via $\partial_s T_{s|t}|_{s=t} = v(t,\cdot)$, are well defined and numerically tractable, but empirically yield values numerically indistinguishable from their classic MCMC-kernel counterparts and therefore induce essentially the same discretization. We consequently work with the classic barriers above throughout.

The expectations and variances defining $\lambda(t)$ are estimated with $N = 2048$ Monte Carlo samples, drawn as follows.
\begin{itemize}
	\item \emph{Tempering path.} At each $t$, the intermediate density admits no closed-form sampler. We run a few Newton-Raphson steps on $\log p_t$ initialized at the modes of the target mixture to obtain their tempered counterparts, build a Laplace Gaussian-mixture proposal centred at those modes with covariances given by the inverse of the negative Hessian of $\log p_t$, and produce $N$ approximate samples by sampling-importance-resampling from a pool of size $5N$. The derivative $\partial_t \log p_t$ is available in closed form.
	\item \emph{Perfect diffusion path.} The intermediate marginals are Gaussian mixtures whose parameters are known analytically; we sample $X_0 \sim p_0$ and apply the analytic forward transition, and evaluate $\partial_t \log p_t$ in closed form.
	\item \emph{Learned diffusion path.} The learned model is energy-based, so $\log p_t^{\theta}$ and $\partial_t \log p_t^{\theta}$ are directly accessible. We draw samples from the perfect diffusion path and use them as a proposal in self-normalized importance sampling against the learned marginal $p_t^{\theta}$, then evaluate the barriers on the reweighted samples.
\end{itemize}
In all three cases, the $\lambda$-curve is integrated by the trapezoidal rule on the $2048$-level grid to produce $\Lambda$, which is then inverted by linear interpolation to yield the $\Lambda$-optimal discretization for each target $K$.

We report the comparison between $\Lambda$-optimal and log-SNR discretizations for diffusion-based samplers in idealized setting \ref{item:setting_perfect} in \Cref{app:ablation_study}. Overall, the $\Lambda$-optimal discretization brings no substantial improvement and can even lead to severe performance degradation. We also ran the full ablation in the learned setting and observed the same pattern, typically more pronounced: $\Lambda$-optimal discretization degraded performance more frequently and more severely than in the perfect-density case. Since even the idealized comparison fails to favour it, we omit the learned-setting numbers and retain log-SNR as the default discretization for all DM-based aMC-BGs throughout the paper.

\section{Proofs of \Cref{sec:determinist}} \label{app:proofs_deter}

In the main paper, we present our methodology to design diffusion-based aMC-BGs via deterministic transitions between noise levels, by relying on the general noising framework presented in \Cref{subsec:noising-general} to maintain a certain generality. We highlight that these results still hold within the specific EI-based framework of VP noising (see \Cref{subsec:general_VP}) and VE noising (see \Cref{subsec:general-VE}), based on the formulas introduced in the respective sections. We leave the proof for the reader.

\begin{proof}[Proof of \Cref{prop:midpoint}] This is a restatement of the results of \Cref{lemma:ode_euler_forward} and \Cref{lemma:ode_euler_backward} in the case where $s=t_k$, $t=t_{k+1}$ and $\sfnnox{t}=\nabla \log p_t$. The mutual invertibility property is immediate.
\end{proof}
We give below a formal version of \Cref{ass:fixed-point}.
\begin{assumption}[Score smoothness \& discretization error - Formal version of \Cref{ass:fixed-point}]\label{ass:fixed-point-formal}
	(a) There exists $L_k>0$ such that $\nabla \log p_{t_{k+1/2}}$ is $L_k$-Lipschitz and (b) the step-size $\delta_k$ verifies
	\begin{align*}
		\max\left(L_k\abs{c_2(\delta_k)}, L_k\abs{c_3(\delta_k)}, c_4(\delta_k, L_k) \right) <1 \eqsp,
	\end{align*}
	where $c_2$ and $c_3$ are given in \Cref{lemma:jac_midpoint}, and $c_4(\delta, L)=\frac{\delta}{2} \left(\abs{f(t_{k+1/2}}+ L g^2(t_{k+1/2})/2\right)$.
\end{assumption}

\begin{proof}[Proof of \Cref{prop:midpoint_fixed}] Assume \Cref{ass:fixed-point-formal}. Fix current states $x_k$ and $x_{k+1}$. Respectively, denote the sequences $\{\mapT^{(n)}_{k+1|k}(x_k)\}_{n\in \mathbb{N}}$ and $\{\mapT^{(n)}_{k|k+1}(x_{k+1})\}_{n\in \mathbb{N}}$ by $\{y_n\}_{n\in \mathbb{N}}$ and $\{z_n\}_{n\in \mathbb{N}}$. Respectively define the \textit{forward} map $\Psi_{k+1|k}:\rset^d \to \rset^d$ and the \textit{backward} map $\Psi_{k|k+1}:\rset^d \to \rset^d$ by
	\begin{align*}
		\Psi_{k+1|k}(y) =  x_k + \delta_k v\left(t_{k+1/2}, \frac{x_k + y}{2}\right)\eqsp, \eqsp
		\Psi_{k|k+1}(z) = x_{k+1} - \delta_k v\left(t_{k+1/2}, \frac{z + x_{k+1}}{2}\right) \eqsp ,
	\end{align*}
	such that $y_{n+1}=\Psi_{k+1|k}(y_n)$ and $z_{n+1}=\Psi_{k|k+1}(z_n)$ for any $n\in \mathbb{N}$. By combining \Cref{ass:fixed-point-formal}-(a) and $c_4(\delta_k, L_k) <1$ from \Cref{ass:fixed-point-formal}-(b), it is easy to see that both maps $\Psi_{k+1|k}$ and $\Psi_{k|k+1}$ are contractive Lipschitz mappings. We directly obtain the result by application of Banach fixed-point theorem.
\end{proof}

\begin{lemma}[Formula for the Jacobian of the IM integrator]\label{prop:midpoint_jac}
	Following the same notation as in \Cref{prop:midpoint}, under \Cref{ass:fixed-point}, the Jacobians of $\mapT_{k|k+1}$ and $\mapT_{k+1|k}$ verify
	\begin{align*}
		J_{\mapT_{k+1|k}}(x_k)     & = c_1 \left(\Idd + c_2 A(x_k)\right)^{-1} \left(\Idd -c_3 A(x_k)\right)\eqsp,             \\
		J_{\mapT_{k|k+1}}(x_{k+1}) & = c_1^{-1}\left(\Idd - c_3 B(x_{k+1})\right)^{-1} \left(\Idd +c_2 B(x_{k+1})\right)\eqsp,
	\end{align*}
	where
	\begin{align*}
		A(x_k) = \Hfnalone_{t_{k+1/2}}\left(\frac{x_k + \mapT_{k+1|k}(x_k)}{2}\right) \eqsp, \eqsp B(x_{k+1}) = \Hfnalone_{t_{k+1/2}}\left(\frac{x_{k+1} + \mapT_{k|k+1}(x_{k+1})}{2}\right) \eqsp,
	\end{align*}
	$\Hfnalone_{t_{k+1/2}}$ is the Hessian of $\log p_{t_{k+1/2}}$, and $c_1$, $c_2$, $c_3$ are the numerical constants given in \Cref{lemma:jac_midpoint}.
\end{lemma}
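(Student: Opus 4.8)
The plan is to implicitly differentiate the fixed-point equations defining $\mapT_{k+1|k}$ and $\mapT_{k|k+1}$ in \Cref{prop:midpoint}, rearrange the resulting linear relation for the Jacobian, and factor it into the stated form; this reduces to exactly the computation already performed for \Cref{lemma:jac_midpoint} after the substitution $s=t_k$, $t=t_{k+1}$ (so $(s+t)/2=t_{k+1/2}$ and $\delta=\delta_k$), $\sfnnox{t}=\nabla\log p_t$, $\Hfnnox{t}=\nabla^2\log p_t$. First, recalling that $\mapT_{k+1|k}$ is a $\mathrm{C}^1$-diffeomorphism by \Cref{prop:midpoint} and writing $v(t,x)=f(t)x-(g^2(t)/2)\nabla\log p_t(x)$, I differentiate $\mapT_{k+1|k}(x_k)=x_k+\delta_k v(t_{k+1/2},(x_k+\mapT_{k+1|k}(x_k))/2)$ in $x_k$. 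With $J=J_{\mapT_{k+1|k}}(x_k)$ and $\partial_x v(t_{k+1/2},\cdot)=f(t_{k+1/2})\Idd-(g^2(t_{k+1/2})/2)A(x_k)$ evaluated at the midpoint, the chain rule gives
\begin{align*}
  J = \Idd + \frac{\delta_k}{2}\Big(f(t_{k+1/2})\Idd - \frac{g^2(t_{k+1/2})}{2}A(x_k)\Big)(\Idd+J)\eqsp.
\end{align*}

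Next I collect the terms containing $J$ on the left, obtaining
\begin{align*}
  \Big[\big(1-\tfrac{\delta_k}{2}f(t_{k+1/2})\big)\Idd + \tfrac{\delta_k}{4}g^2(t_{k+1/2})A(x_k)\Big]J = \big(1+\tfrac{\delta_k}{2}f(t_{k+1/2})\big)\Idd - \tfrac{\delta_k}{4}g^2(t_{k+1/2})A(x_k)\eqsp,
\end{align*}
which is precisely the first factorized identity in the proof of \Cref{lemma:jac_midpoint}. Pulling $1-(\delta_k/2)f(t_{k+1/2})$ out of the left bracket and $1+(\delta_k/2)f(t_{k+1/2})$ out of the right-hand side, and using the definitions of $c_1,c_2,c_3$ (at $\delta=\delta_k$), yields $J=c_1(\Idd+c_2A(x_k))^{-1}(\Idd-c_3A(x_k))$. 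The backward case is identical: differentiating $\mapT_{k|k+1}(x_{k+1})=x_{k+1}-\delta_k v(t_{k+1/2},(\mapT_{k|k+1}(x_{k+1})+x_{k+1})/2)$ and rearranging in the same way gives $J_{\mapT_{k|k+1}}(x_{k+1})=c_1^{-1}(\Idd-c_3B(x_{k+1}))^{-1}(\Idd+c_2B(x_{k+1}))$.

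The one genuine point to check is that the matrix inverses appearing in the factorization are well defined. Here I use that $A(x_k)$ and $B(x_{k+1})$ are values of the Hessian $\nabla^2\log p_{t_{k+1/2}}$, hence symmetric with operator norm at most $L_k$ since $\nabla\log p_{t_{k+1/2}}$ is $L_k$-Lipschitz by \Cref{ass:fixed-point}; combined with $\max(L_k|c_2(\delta_k)|,L_k|c_3(\delta_k)|)<1$ from the formal form \Cref{ass:fixed-point-formal} of \Cref{ass:fixed-point}, this gives $\|c_2A(x_k)\|<1$ and $\|c_3B(x_{k+1})\|<1$, so $\Idd+c_2A(x_k)$ and $\Idd-c_3B(x_{k+1})$ are invertible (e.g.\ via a Neumann series). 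The main obstacle is therefore not the algebra but justifying the implicit differentiation in the first place, i.e.\ that $\mapT_{k+1|k}$ and $\mapT_{k|k+1}$ are indeed $\mathrm{C}^1$; this is supplied by \Cref{prop:midpoint} (it follows from the strict contraction established in the proof of \Cref{prop:midpoint_fixed} together with the implicit function theorem, using that $v(t_{k+1/2},\cdot)$ is $\mathrm{C}^1$). Given that, the cleanest write-up is simply to invoke \Cref{lemma:jac_midpoint} with the substitutions above, checking that \Cref{ass:fixed-point-formal} implies its hypothesis $\max(|c_2(\delta_k)|,|c_3(\delta_k)|)<1/L_k$.
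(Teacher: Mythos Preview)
Your proposal is correct and takes essentially the same approach as the paper: the paper's proof is simply the one-line observation that this is a restatement of \Cref{lemma:jac_midpoint} with $s=t_k$, $t=t_{k+1}$, exact score/Hessian, and that \Cref{ass:fixed-point-formal} supplies the needed hypothesis $\max(|c_2(\delta_k)|,|c_3(\delta_k)|)<1/L_k$. You spell out the implicit differentiation and the invertibility check explicitly before arriving at exactly that invocation, so your write-up is more detailed but not different in substance.
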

\begin{proof}[Proof of \Cref{prop:midpoint_jac}] This is a restatement of \Cref{lemma:jac_midpoint} in the case where $s=t_k$, $t=t_{k+1}$, with exact score and Hessian functions ($\nabla \log p_{t_{k+1/2}}$ and $\Hfnalone_{t_{k+1/2}}$) used for $\sfnalone$ and $\Hfnalone$. In particular, the assumptions that are needed for this result are verified by \Cref{ass:fixed-point-formal}.
\end{proof}

Note that similar constants to those introduced in \Cref{prop:midpoint} are derived for VP, respectively VE, noising scheme combined with exponential integration in \Cref{lemma:jac_midpoint_vp}, respectively \Cref{lemma:jac_midpoint_ve}, under small change in the assumption.

\begin{proof}[Proof of \Cref{prop:compute_log_det}]Assume \Cref{ass:fixed-point-formal}. Consider a prescribed fixed-point range $M\geq 1$ satisfying \eqref{eq:tolerance_fixed_point}. We first consider the approximations of $A(x_k)$ and $B(x_{k+1})$, introduced in \Cref{prop:midpoint_jac}, obtained by replacing the intractable implicit map evaluations with their fixed-point estimations, see \eqref{eq:fixed_point_forward} and \eqref{eq:fixed_point_backward}. This leads to the following expressions
	\begin{align*}
		A^{(M)}(x_{k})   = \Hfnalone_{t_{k+1/2}}\left(\frac{x_k + \mapT^{(M)}_{k+1|k}(x_k)}{2}\right)\eqsp,    \eqsp
		B^{(M)}(x_{k+1}) = \Hfnalone_{t_{k+1/2}}\left(\frac{x_{k+1} + \mapT^{(M)}_{k|k+1}(x_{k+1})}{2}\right)\eqsp.
	\end{align*}
	By \Cref{ass:fixed-point-formal}(a)-(b) we verify that we have $\norm{A(x_k)}< \min(1/\abs{c_2(\delta_k)}, 1/\abs{c_3(\delta_k)})$ and $\norm{B(x_{k+1})}< \min(1/\abs{c_2(\delta_k)}, 1/\abs{c_3(\delta_k)})$. Therefore, we encounter the same theoretical requirements as in the proof of \Cref{prop:jac_expansion_series} where $s=t_k$, $t=t_{k+1}$, with exact score and Hessian functions ($\nabla \log p_{t_{k+1/2}}$ and $\Hfnalone_{t_{k+1/2}}$) used for $\sfnalone$ and $\Hfnalone$, which allows us to define the \emph{exact} expression of the Jacobian log-determinants
	\begin{align}
		\log \abs{\operatorname{det}J_{\mapT_{k+1|k}}(x_k)} & = \sum_{i=0}^{\infty}a_{i}(t_k,t_{k+1}) \operatorname{Tr}([A(x_k)]^{i})\eqsp,\label{eq:jac_1}     \\
		\log \abs{\operatorname{det}J_{\mapT_{k+1|k}}(x_k)} & = \sum_{i=0}^{\infty}b_{i}(t_k,t_{k+1}) \operatorname{Tr}([B(x_{k+1})]^{i})\eqsp,\label{eq:jac_2}
	\end{align}
	using the numerical coefficients introduced in \Cref{prop:jac_expansion_series}. On the other hand, we also have that $\norm{A^{(M)}(x_k)}< \min(1/\abs{c_2(\delta_k)}, 1/\abs{c_3(\delta_k)})$ and $\norm{B^{(M)}(x_{k+1})}< \min(1/\abs{c_2(\delta_k)}, 1/\abs{c_3(\delta_k)})$, which allows us to \emph{exactly} define the following expansion series based on replacing Jacobian terms $A^{(M)}(x_k)$ and $B^{(M)}(x_{k+1})$
	\begin{align}
		\sum_{i=0}^{\infty}a_{i}(t_k,t_{k+1}) \operatorname{Tr}([A^{(M)}(x_k)]^{i})\eqsp \text{and } \sum_{i=0}^{\infty}b_{i}(t_k,t_{k+1}) \operatorname{Tr}([B^{(M)}(x_{k+1})]^{i})\eqsp. \label{eq:jac_3}
	\end{align}
	Since we expect to have $A(x_k)\approx A^{(M)}(x_k)$ and $B(x_{k+1})\approx B^{(M)}(x_{k+1})$, we may substitute the trace terms in \eqref{eq:jac_1} and \eqref{eq:jac_2} by those in \eqref{eq:jac_3} to approximate $\log \abs{\operatorname{det}J_{\mapT_{k+1|k}}(x_k)}$ and $\log \abs{\operatorname{det}J_{\mapT_{k|k+1}}(x_{k+1})}$.
	Finally, we obtain the result from \Cref{prop:compute_log_det} under additional approximation induced by the truncation of the power series at a given order $I\geq 1$, letting $a_{k,i}= a_{i}(t_k,t_{k+1})$ and $b_{k,i}= b_{i}(t_k,t_{k+1})$.
\end{proof}

In our experiments based on the VP noising scheme combined with exponential integration, we adapt the result from \Cref{prop:compute_log_det} by using the coefficients introduced in \Cref{prop:jac_expansion_series_vp}. Similarly, one could use the coefficients from \Cref{prop:jac_expansion_series_ve} for the VE noising scheme combined with exponential integration.
\vspace{-0.2cm}

\subsection{The penalty correction} \label{app:subsec:unbiased_det}

The content of this subsection is a textbook recipe in the statistical-physics literature that, to our knowledge, has rarely been used in machine learning; we recap it here as a reminder. The correction dates back to \citet{Zwanzig1954hightemperature} in the context of free-energy perturbation. We adopt the more refined take of \citet{Ceperley1999ThePenalty}, which (i) sharpens the bias correction to an \emph{exact} transform and (ii) proves its applicability to MCMC schemes. In the context of Boltzmann Generators, this recipe was leveraged only very recently in the concurrent work of \citet{hoffmann2026boltzmanngeneratorscondensedmatter}.

To disambiguate notation: throughout this subsection, $\mapT$ denotes a generic $\mathrm{C}^1$-diffeomorphism on $\rset^d$, distinct the DM maps $\{\mapT_{k+1|k},\mapT_{k|k+1}\}_{k}$ of the main text. The role of $\mapT$ also differs between the IS and MH cases below: in the IS case $\mapT$ may be any diffeomorphism; in the MH case $\mapT$ must additionally be an involution.

\paragraph{Working assumption.} We assume access to an unbiased stochastic estimator $\widehat{\ell}(x)$ of $\log\abs{\det J_{\mapT}(x)}$ with finite variance. Given $N$ i.i.d.\ copies, we form the empirical mean and the empirical variance of the mean
\begin{align*}
	\textstyle \widehat{L}(x) = \tfrac{1}{N}\sum_{n=1}^{N} \widehat{\ell}^{\,n}(x) \eqsp, \quad
	\widehat{V}(x)= \tfrac{1}{N(N-1)} \sum_{n=1}^{N} \left(\widehat{\ell}^{\,n}(x)-\widehat{L}(x)\right)^2 \eqsp,
\end{align*}
where $\widehat{V}(x)$ is an unbiased estimator of the variance $\nu^2(x)$ of $\widehat{L}(x)$. Following the classical derivation of \citet{Ceperley1999ThePenalty}, what follows treats $\widehat{\ell}(x)$ as Gaussian; under this idealization, Cochran's theorem then ensures jointly $\widehat{L}(x) \sim \densityGaussian(\log\abs{\det J_{\mapT}(x)}, \nu^2(x))$, $(N-1)\widehat{V}(x)/\nu^2(x)\sim \chi^2_{N-1}$, and $\widehat{L}(x) \perp \widehat{V}(x)$.

Strictly speaking, exact Gaussianity does not hold in our setting: $\widehat{\ell}(x)$ is a quadratic form in Gaussian probes against the midpoint Hessian via the Hutchinson identity (see below), and as such follows a generalized $\chi^2$ distribution rather than a Gaussian. However, by applying the Central Limit Theorem across the $d^{\mathrm{eff}}$ eigendirections of the Hessian sensed by the probe (with $d^{\mathrm{eff}}=d$ for the standard Hutchinson estimator and typically smaller for Hutch++), this quadratic form is well-approximated by a Gaussian as soon as $d^{\mathrm{eff}}$ is moderately large. In practice, $d^{\mathrm{eff}}\geq 10$ is more than sufficient, a regime comfortably met by our multi-modal targets ($d\geq 16$) and our Hutch++ design ($d^{\mathrm{eff}}=13$). \Cref{app:ablation_study} empirically confirms that the resulting penalty estimators behave as expected.

\paragraph{(a) The importance-sampling case.} Whenever $\log\abs{\det J_{\mapT}}$ appears additively inside the log-importance weight of \eqref{eq:w_ais_deterministic}, the naive plug-in $\widehat{\ell} \leftarrow \log\abs{\det J_{\mapT}}$ is biased by a multiplicative factor $\exp(\nu^2(x)/2)$ on the weight itself, by Jensen's inequality and the Gaussian MGF. With $\nu^2$ known, subtracting $\nu^2(x)/2$ from the log-weight removes the bias exactly. With $\nu^2$ unknown, plugging in $\widehat{V}/2$ leaves an order $\nu^4/(N-1)$ residual via the $\chi^2_{N-1}$ MGF. The exact fix of \citet{Ceperley1999ThePenalty} replaces $\widehat{V}/2$ by the \emph{penalty} $u_{B}$ defined as
\begin{align*}
	\textstyle  u_{B}(\widehat{V}(x),N) = -\log h_B\left(\widehat{V}(x),\, \mu\right) \eqsp, \quad
	h_B(\gamma^2,\mu) := \sum_{k=0}^{\infty} \tfrac{(-1)^k}{2^k k!}\, \tfrac{\mu^k\,\Gamma(\mu)}{\Gamma(\mu+k)}\, (\gamma^2)^k \eqsp, \quad \mu = \tfrac{N-1}{2} \eqsp,
\end{align*}
which has the closed form $h_B(\gamma^2,\mu)=\Gamma(\mu)(2/(\mu\gamma^2))^{(\mu-1)/2}\mathcal{J}_{\mu-1}(\sqrt{2\mu\gamma^2})$, with $\mathcal{J}_\nu$ the Bessel function of the first kind. By construction, $\mathbb{E}[h_B(\widehat{V}(x),\mu)] = \exp(-\nu^2(x)/2)$, so substituting $\log\abs{\det J_{\mapT}(x)}$ by $\widehat{L}(x) - u_{B}(\widehat{V}(x),N)$ in the log IS weight yields an estimator that is \emph{exactly} unbiased on the weight itself.

Bessel functions being numerically unstable, we approximate $u_B$ by truncating its power series, the first terms reading
\begin{align}\label{eq:uB-approx}
	\textstyle  u_B(\gamma^2,N) = \tfrac{\gamma^2}{N} + \tfrac{\gamma^4}{4(N+1)} + \tfrac{\gamma^6}{3(N+1)(N+3)} + O(\gamma^8) \eqsp.
\end{align}
This re-introduces a residual bias of strictly higher order than $\nu^4/(N-1)$, shown to be empirically negligible in \Cref{app:ablation_study}. This entire correction extends verbatim to AIS and SMC weights, applied at each level of the deterministic ladder.

\paragraph{(b) The Metropolis--Hastings case.} Assume now $\mapT$ is an involution ($\mapT\circ \mapT = \Idd$). With a noisy log-Jacobian, define the random acceptance probability
\begin{align*}
	\textstyle \widehat{\alpha}(x) = \min\left(1,\, \tfrac{\pi(\mapT(x))}{\pi(x)} \exp\left(\widehat{L}(x) - u_B(\widehat{V}(x), N)\right)\right) \eqsp.
\end{align*}
A direct Gaussian-shift computation \citep[Section II]{Ceperley1999ThePenalty} shows that $\mathbb{E}[\widehat{\alpha}(x)]$ satisfies the detailed-balance criterion w.r.t.\ $\pi$, provided that $\nu^2(\mapT(x))=\nu^2(x)$ (and, in fact, $\widehat{V}(\mapT(x))=\widehat{V}(x)$ deterministically). This recipe extends verbatim to RE swaps, by viewing the joint swap $\bar{\mapT}_k$ on the extended space as the relevant involution.

\paragraph{Constructing the noisy log-Jacobian estimator.} It remains to provide a concrete $\widehat{\ell}(x)$ satisfying the working assumption. In our setting, the matrix logarithm of \Cref{lemma:logarithm} gives an exact power-series expansion
\begin{align*}
	\textstyle \log\abs{\det J_{\mapT}(x)} = \operatorname{Tr}\log J_\mapT(x) = \sum_{i=1}^{\infty} a_i\, \operatorname{Tr}(M(x)^i) \eqsp,
\end{align*}
where $M(x)$ is the symmetric matrix derived from the midpoint Hessian (cf.\ \Cref{prop:midpoint_jac} for the Euler-IM case, \Cref{lemma:jac_midpoint_vp,lemma:jac_midpoint_ve} for the EI counterparts) and $\{a_i\}_{i\geq 1}$ are explicit scalar coefficients. Truncating at order $I$ introduces a deterministic error that is empirically negligible (cf.\ \Cref{app:ablation_study}). We then form $\widehat{\ell}(x)$ as a stochastic estimate of the truncated sum, \emph{sharing the same random probes across all powers $i=1,\ldots,I$ within a single draw}; only the draws across $n=1,\ldots,N$ are independent.

\textit{(i) Hutchinson estimator.} With $v\sim \densityGaussian(0,\Idd)$, set
\begin{align*}
	\widehat{\ell}(x) = \textstyle\sum_{i=1}^I a_i\, v^\top M(x)^i v \eqsp.
\end{align*}
Each $v^\top M^i v$ requires $i$ successive Hessian-vector products against the same probe $v$. Averaging $N$ i.i.d.\ probes $\{v_n\}_{n=1}^N$ yields $\widehat{L}$, $\widehat{V}$.

\textit{(ii) Hutch++ estimator.} With $S\sim \densityGaussian(0,\Idd\otimes \mathrm{I}_r)$, $Q=\operatorname{QR}(M(x)S)$, $P=\Idd - QQ^\top$ and $g\sim \densityGaussian(0,\Idd)$, set
\begin{align*}
	\widehat{\ell}(x) = \textstyle\sum_{i=1}^I a_i\left\{\operatorname{Tr}(Q^\top M(x)^i Q) + (Pg)^\top M(x)^i (Pg) \right\} \eqsp.
\end{align*}
Sharing the same sketch $Q$ across all powers is principled because $M(x)$ is symmetric, so $M(x)$ and all its powers share eigenspaces: a sketch capturing the top eigenspace of $M$ is informative for every $M^i$. We treat $S$ as fixed (its randomness is absorbed in the deterministic prefactor) and average over i.i.d.\ probes $\{g_n\}_{n=1}^N$, again shared across powers, yielding $\widehat{L}$, $\widehat{V}$.

In the RE case, the joint log-Jacobian of the swap $\bar{\mapT}_k$ decouples as a sum of forward and backward midpoint contributions, see \eqref{eq:re_ode_rate} and \Cref{prop:compute_log_det}. The working assumption only requires an unbiased Gaussian estimator of the \emph{sum}, so we directly form a single $\widehat{\ell}(x_k,x_{k+1})$ by adding the Hutchinson (resp.\ Hutch++) estimators of the two contributions, sharing the same probes $v_n$ (resp.\ sketch $S$ and probes $g_n$) between them within each draw, and resampling across draws. This shared-probe construction is what enforces $\widehat{V}(\bar{\mapT}_k(x))=\widehat{V}(x)$ deterministically and thus preserves detailed balance under the MH correction of (b).

\paragraph{Limitations of the proposed statistical estimation.} The penalty correction of \citet{Ceperley1999ThePenalty} relies on a Gaussianity assumption on the log-determinant estimator, which is justified empirically in the considered dimensional settings. This approximation may however deteriorate in very low-dimensional settings ($d \leq 10$), which fall outside the scope of our empirical study. A natural direction for future work would be to tailor the correction more precisely to the distribution of our Hutchinson estimators, for instance by replacing the chi-squared assumption on the variance estimate with a more accurate generalized chi-squared model, potentially yielding tighter corrections in such edge cases.

\newpage
\section{Experimental details} \label{app:exp_details}

\subsection{Target details} \label{app:target_details}

\paragraph{Definition of the \textit{TwoModes} target distribution.}

For our target $\pi$, we first consider the Gaussian mixture introduced in \cite{Grenioux2025improving}, whose density is defined over $\rset^d$ as
\begin{align*}
	\textstyle\gamma(x) = \frac{2}{3}\densityGaussian(x; -a\mathbf{1}_d, \Sigma_1) + \frac{1}{3}\densityGaussian(x; a\mathbf{1}_d, \Sigma_2) \eqsp ,
\end{align*}
where $\Sigma_1, \Sigma_2 \in \rset^{d \times d}$ are diagonal covariance matrices. The diagonal entries of $\Sigma_1$ are given by $(\Sigma_1)_{i,i} = \frac{i}{d} \sigma^2_{\max} + \frac{d - i}{d} \sigma^2_{\min}$, and those of $\Sigma_2$ are the reverse of $\Sigma_1$: $(\Sigma_2)_{i,i} = (\Sigma_1)_{d-i, d-i}$, with $\sigma^2_{\max} = 0.2$ and $\sigma^2_{\min} = 0.01$ (hence, the conditioning number of each covariance matrix is $20$). We consider three main hyperparameter settings throughout the paper: $(a, d)=(1.0, 128)$, denoted by ``Low distance -- High dimension'', $(a,d)=(10.0, 16)$, denoted by ``High distance -- Low dimension'', and $(a,d)=(5.0, 64)$, denoted by ``Medium distance -- Medium dimension''.
In our experiments, we rather consider the ``standardized'' version of $\gamma$ (\ie, with zero mean and unit covariance), given by the unnormalized density $x \mapsto \gamma(\Sigma_\pi^{1/2}x + \mathbf{m}_\pi)$, where $\mathbf{m}_\pi$ is the exact mean of $\pi$, and $\Sigma_\pi$ is a diagonal covariance matrix whose entries correspond to the exact marginal variances of $\pi$ along each coordinate.
For this target, the mode-weight metric evaluates the Monte Carlo estimate, computed from generated samples, of the largest mode weight (\ie, $66.67\%$); see \cite[Section 3.1]{Grenioux2025improving} for details. In our experiments, we display boxplots of this estimate to assess both its bias and variance, following the methodology \citet{Grenioux2025improving}.

\paragraph{Definition of the \textit{ManyModes} target distribution.}

We also consider the $d$-dimensional Gaussian mixture with $L > 2$ components introduced in \cite[Appendix H.1]{Noble2024learned} defined for any $x \in \rset^d$ by its density $\textstyle\gamma(x) = \sum_{\ell=1}^L w_\ell \densityGaussian(x; \mathbf{m}_\ell, 0.5 \Idd)$, where the means $\{\mathbf{m}_\ell\}_{\ell=1}^L$ are sampled independently from $\mathrm{U}([-L, L]^d)$, and the weights $\{w_\ell\}_{\ell=1}^L$ form a strictly increasing geometric sequence such that $w_L / w_1 = 3$. We will consider $L\in \{4,16,64\}$ with fixed dimension $d=32$. Moreover, we apply the same standardization procedure as for the \emph{TwoModes} targets. To evaluate how well mode weights are recovered, we compute the Total Variation (TV) distance between the true mode weight histogram and its Monte Carlo estimate.

\paragraph{Definition of the \textit{ManyWell} target distribution.}

To move beyond synthetic Gaussian-mixture targets, we additionally consider the $d$-dimensional \textit{ManyWell} distribution, defined for even $d$ as the product of $d/2$ independent copies of the two-dimensional Double-Well distribution \citep{noe2019boltzmann}, whose density is
\begin{align*}
	\gamma_{1,2}(x_1,x_2)
	=
	\frac{1}{Z_{1,2}}
	\exp\left(
	\frac{x_1}{2}
	+ 6x_1^2
	- x_1^4
	- \frac{x_2^2}{2}
	\right) \eqsp ,
\end{align*}
where $Z_{1,2}$ denotes its tractable normalizing constant. Following the construction of \citet{Midgley2023flow}, each Double-Well factor is evaluated on a distinct pair of coordinates of the $d$-dimensional input. The resulting density therefore factorizes as
\begin{align*}
	\gamma^{(d)}(x)
	=
	\prod_{i=1}^{d/2}
	\gamma_{1,2}\left(x_{2i-1},x_{2i}\right) \eqsp ,
\end{align*}
and its log-normalizing constant is given by $\log Z^{(d)} =(d/2)\log Z_{1,2}$. Whereas \citet{Midgley2023flow} considered only the case $d=32$, we evaluate the target at dimensions $d\in\{16,32,64\}$. The resulting distributions are highly multimodal, with $2^{d/2}$ modes corresponding to all possible combinations of the two modes of each Double-Well factor. Consequently, the number of modes grows exponentially with the dimension, making the higher-dimensional instances increasingly challenging for sampling methods. To generate exact reference samples, we follow the procedure described in \cite[Appendix E.1]{Midgley2023flow}. Before applying the density-learning and sampling methods, we also use the same standardization procedure as for the Gaussian-mixture targets. For this target, directly assessing recovery of all mode weights is impractical because of the high number of modes. Instead, we exploit the factorization of the target into $d/2$ independent Double-Well pairs, whose two mode weights depend solely on the first coordinate and are tractable. For each pair, we estimate the mode probabilities from the generated samples and compute their TV distance from the true weights as done for the \emph{ManyModes} target. We then average this quantity across all pairs to obtain a global mode-weight-like recovery metric coined ``averaged TV'' (aTV).

\subsection{Training and sampling parameters} \label{app:training_details}
\paragraph{Diffusion model training details.}

As explained in \Cref{subsec:learning_framework}, we consider two types of architectures $\Ethetaalone$ to learn the log-densities of DMs: (i) a \underline{pinned} architecture, ensuring exact recovery of the target distribution $\pi$ at $t_0$ and (ii) an \underline{hardcoded} architecture, without any boundary condition fixed at training stage. For both of these models, we rely on an enhanced version of the score-like architecture advocated by \cite{Richter2023improved}, denoted by $\sthetaalone:(t,x)\in \rset^{d+1} \to \stheta{t}{x}\in \rset^{d}$, which is a 4-layer 128-width fully connected network with GeLU activations,
position-input preconditioning (based on target mean and scalar variance), time-input preconditioning (based on Fourier embedding) and time-input skip-connections at every layer. Our models are the following:
\begin{enumerate*}[label=(\alph*)]
	\item \textit{Pinned}: given \eqref{eq:pinned}, we set $\gtheta{t}{x}=\frac{1}{2}\norm{\stheta{t}{x}}_2^2$ and $\fthetaalone$ to be a scalar-to-scalar 4-layer 64-width fully connected network with GeLU activations and the same time-input preconditioning as in $\sthetaalone$;
	\item \textit{Hardcoded}: we adopt the network preconditioning strategy proposed by \cite{thornton2025controlled} on $\sthetaalone$.
\end{enumerate*}

For each \emph{TwoModes}, \emph{ManyModes}, \emph{ManyWell} setting varying in dimension, mode separation, and/or number of modes, we train both network architectures with the seven objectives described in \Cref{app:score_matching}. We use the parameterizations $\Uthetanox{t}=-\Ethetanox{t}$, $\sthetanox{t}=-\nabla_x \Ethetanox{t}$, and $\uthetanox{t}=-\partial_t \Ethetanox{t}$. In the case of DiffCLF, the log-normalizing constant $\log \normcte_t(\theta)$ is modeled by a scalar-input scalar-output neural network. For all targets, diffusion models are trained on datasets of size $60,000$, with batch size $1024$.

\begin{table}[t]
	\centering
	\caption{Hyperparameter grid for each DSM-regularized objective on \textit{TwoModes} and \textit{ManyModes}. The initialization scheme is fixed per objective family (warm-start: $1000$-epoch DSM pretrain $+$ $1000$-epoch training with the target loss; scratch: $2000$ epochs from random initialization).}
	\label{tab:hp_grid}
	\small
	\setlength{\tabcolsep}{5pt}
	\begin{tabular}{l|ccc}
		\toprule
		Objective   & $\lambda_{\text{reg}}$              & Other hyperparameters                                    & Initialization \\
		\midrule
		DSM         & ---                                 & ---                                                      & warm-start     \\
		TSM+DSM     & ---                                 & ---                                                      & warm-start     \\
		tSM+DSM     & ---                                 & ---                                                      & scratch        \\
		LFPE+DSM    & $\{10^{-3},10^{-2}, 10^{-1},1,10\}$ & ---                                                      & scratch        \\
		aLFPE+DSM   & $\{10^{-3},10^{-2}, 10^{-1},1,10\}$ & ---                                                      & scratch        \\
		RNE+DSM     & $\{10^{-3},10^{-2}, 10^{-1},1,10\}$ & $\delta\in\{10^{-5},10^{-4}, 10^{-3}, 10^{-2},10^{-1}\}$ & warm-start     \\
		DiffCLF+DSM & $\{10^{-2}, 10^{-1},1,10\}$         & $n_{\text{clf}}\in\{4,6,8\}$                             & warm-start     \\
		\bottomrule
	\end{tabular}
\end{table}

\begin{table}[t]
	\centering

	\caption{Hyperparameter grid for each DSM-regularized objective on \textit{ManyWell}. The initialization scheme is fixed
		per objective family (warm-start: $1000$-epoch DSM pretrain $+$ $1000$-epoch training with the target loss; scratch: $2000$
		epochs from random initialization).}
	\label{tab:hp_grid_manywell}
	\small
	\setlength{\tabcolsep}{5pt}
	\begin{tabular}{l|ccc}
		\toprule
		Objective   & $\lambda_{\text{reg}}$ & Other hyperparameters          & Initialization \\
		\midrule
		DSM         & ---                    & ---                            & warm-start     \\
		TSM+DSM     & ---                    & ---                            & warm-start     \\
		tSM+DSM     & ---                    & ---                            & scratch        \\
		LFPE+DSM    & $\{10^{-3},10^{-2}\}$  & ---                            & scratch        \\
		aLFPE+DSM   & $\{10^{-3},10^{-2}\}$  & ---                            & scratch        \\
		RNE+DSM     & $\{10^{-3},1,10\}$     & $\delta\in\{10^{-4},10^{-2}\}$ & warm-start     \\
		DiffCLF+DSM & $\{10^{-2},1,10\}$     & $n_{\text{clf}}\in\{4,8\}$     & warm-start     \\
		\bottomrule
	\end{tabular}
\end{table}

\begin{table}[t]
	\vspace{-0.5cm}
	\centering
	\caption{Selected hyperparameters per (target, architecture). Entries show $\lambda_{\text{reg}}$ for LFPE+DSM and aLFPE+DSM, $(\lambda_{\text{reg}},\delta)$ for RNE+DSM, and $(\lambda_{\text{reg}},n_{\text{clf}})$ for DiffCLF+DSM. DSM, TSM+DSM and tSM+DSM have no tunable hyperparameter and are omitted.}
	\label{tab:hp_selected}
	\small
	\setlength{\tabcolsep}{4pt}
	\begin{tabular}{lr|cccc}
		\toprule
		Target & Architecture & LFPE+DSM  & aLFPE+DSM & RNE+DSM              & DiffCLF+DSM     \\
		\midrule
		\multirow{2}{*}{\textsc{TwoModes}\,($d{=}16$,\,$a{=}10$)}
		       & precond      & $10^{-3}$ & $10^{-2}$ & $(1,10^{-3})$        & $(10^{-2},4)$   \\
		       & pinned       & $10^{-1}$ & $10^{-3}$ & $(10^{-1},10^{-4})$  & $(10^{-2},8)$   \\
		\midrule
		\multirow{2}{*}{\textsc{TwoModes}\,($d{=}64$,\,$a{=}5$)}
		       & precond      & $10^{-3}$ & $10^{-3}$ & $(1,10^{-3})$        & $(10^{-2},6)$   \\
		       & pinned       & $10^{-2}$ & $10^{-2}$ & $(10,10^{-3})$       & $(10^{-1},4)$   \\
		\midrule
		\multirow{2}{*}{\textsc{TwoModes}\,($d{=}128$,\,$a{=}1$)}
		       & precond      & $10^{-3}$ & $10^{-3}$ & $(1,10^{-2})$        & $(10^{-2},8)$   \\
		       & pinned       & $10^{-1}$ & $10$      & $(10^{-1},10^{-2})$  & $(10,8)$        \\
		\midrule
		\multirow{2}{*}{\textsc{ManyModes}\,($d{=}32$,\,$L{=}4$)}
		       & precond      & $10^{-3}$ & $10^{-3}$ & $(1,10^{-3})$        & $(10^{-1},4)$   \\
		       & pinned       & $10^{-1}$ & $10$      & $(10^{-2},10^{-5})$  & $(10,8)$        \\
		\midrule
		\multirow{2}{*}{\textsc{ManyModes}\,($d{=}32$,\,$L{=}16$)}
		       & precond      & $10^{-3}$ & $10^{-2}$ & $(10^{-1},10^{-5})$  & $(10^{-2},6)$   \\
		       & pinned       & $10^{-2}$ & $10^{-2}$ & $(10^{-1},10^{-4})$  & $(10,8)$        \\
		\midrule
		\multirow{2}{*}{\textsc{ManyModes}\,($d{=}32$,\,$L{=}64$)}
		       & precond      & $10^{-1}$ & $10^{-1}$ & $(10^{-3},10^{-2})$  & $(10^{-2},6)$   \\
		       & pinned       & $10$      & $10^{-2}$ & $(10^{-1},10^{-4})$  & $(10^{-1},8)$   \\
		\midrule
		\multirow{2}{*}{\textsc{ManyWell}\,($d{=}16$)}
		       & precond      & $10^{-3}$ & $10^{-3}$ & $(10^{1},\,10^{-2})$ & $(1,\,8)$       \\
		       & pinned       & $10^{-3}$ & $10^{-2}$ & $(10^{1},\,10^{-2})$ & $(10^{1},\,8)$  \\
		\midrule
		\multirow{2}{*}{\textsc{ManyWell}\,($d{=}32$)}
		       & precond      & $10^{-3}$ & $10^{-3}$ & $(10^{1},\,10^{-2})$ & $(10^{-2},\,8)$ \\
		       & pinned       & $10^{-3}$ & $10^{-2}$ & $(10^{1},\,10^{-2})$ & $(10^{1},\,8)$  \\
		\midrule
		\multirow{2}{*}{\textsc{ManyWell}\,($d{=}64$)}
		       & precond      & $10^{-2}$ & $10^{-2}$ & $(10^{1},\,10^{-4})$ & $(10^{-2},\,4)$ \\
		       & pinned       & $10^{-3}$ & $10^{-2}$ & $(10^{1},\,10^{-2})$ & $(10^{1},\,8)$  \\
		\bottomrule
	\end{tabular}
	\vspace{-0.1cm}
\end{table}

Following the energy-matching literature, we consider the DSM-regularized objectives $\mathcal{L}_{\text{TSM+DSM}} = \mathcal{L}_{\text{TSM}} + \mathcal{L}_{\text{DSM}}$, $\mathcal{L}_{\text{tSM+DSM}} = \mathcal{L}_{\text{tSM}} + \mathcal{L}_{\text{DSM}}$, and $\mathcal{L}_{X\text{+DSM}} = \lambda_{\text{reg}}\, \mathcal{L}_{X} + \mathcal{L}_{\text{DSM}}$ for $X \in \{\text{LFPE}, \text{aLFPE}, \text{RNE}, \text{DiffCLF}\}$, with $\lambda_{\text{reg}}>0$ being a tunable hyperparameter. Score-matching losses (DSM, TSM, tSM) are rescaled by $d^{-1}$, and energy-matching losses (LFPE, aLFPE, RNE) by $d^{-2}$; time steps are sampled uniformly in log-SNR space \citep{kingma2021variational}.

\newpage
The initialization scheme is fixed per objective family: TSM+DSM, RNE+DSM, and DiffCLF+DSM are warm-started from a $1000$-epoch DSM pretrain and trained for $1000$ further epochs with their target loss, while tSM+DSM, LFPE+DSM, and aLFPE+DSM are trained from random initialization for $2000$ epochs (for the latter methods, DSM warmstart led to degraded performance); the standalone DSM baseline likewise uses the $1000$-epoch pretrain followed by $1000$ further
training epochs, so the total compute budget is $2000$ epochs in every case. The full hyperparameter grid is summarized in Tables \ref{tab:hp_grid} and \ref{tab:hp_grid_manywell}. All trainings use AdamW \citep{loshchilov2018decoupled} with default hyperparameters and learning rate $10^{-4}$.

Assessing log-density learning is notoriously difficult and remains an active research topic; we rank trained models lexicographically by three criteria of decreasing priority: (a) the effective sample size between learned and ground-truth marginals (to maximize); (b) the Fisher divergence between the same marginals (to minimize); and (c) the global classification loss of \citet{ouyang2026diffusiveclassificationlosslearning} (to minimize). Criteria (a) and (b) are averaged over $t\in[0,T]$ and computed from exact marginal samples. For each of the twelve main configurations (two architectures $\times$ six targets) and each DSM-regularized objective, this rule selects $\lambda_{\text{reg}}$ and the objective-specific hyperparameter; the resulting choices are listed in \Cref{tab:hp_selected}. This produces a single neural network per (target, architecture, objective) triple, reused by all aMC samplers to ensure a fair comparison.

\paragraph{General remarks on annealed sampling methods.} Since the considered targets are systematically standardized, we set the base distribution as their Gaussian approximation $\pibase=\densityGaussian(0, \Idd)$, for both tempering and diffusion-based approaches.
We recall that, when using second-order approaches, \ie, methods that require access to the Hessians of the bridging log-densities, we only exploit the diagonal of these Hessians to ensure a good compromise between accuracy and computational efficiency in high dimensional scenarios.

All SMC variants (also including diffusion-enhanced SMC samplers), as well as the standard AIS sampler, apply 160 MCMC steps (including 128 warm-up steps) for local exploration at each level $k\in \{0, \hdots, K\}$, using \emph{Metropolis-Adjusted Langevin Algorithm} (MALA) \citep{Roberts1996exponential}. Following \cite{Chopin2020Introduction}, we do not perform resampling systematically, but instead apply it adaptively based on the current IS weights, using an effective sample size threshold of 30\% with systematic resampling scheme.

For RE-based sampling methods, we perform a total of 24,576 MCMC steps (including 8,192 warm-up steps), with local exploration made via MALA and swaps occurring every 8 steps, thereby \emph{defining the computational budget of RE (with or without transition kernels) to be comparable to the footprint of the SMC setting
	with the largest number of levels} (\ie, $K=256$ where SMC performs the best), see the last row of \Cref{fig:diff_two_modes}.
For RE, we consider two intermediate-level initializations: a \emph{score-informed} one, where each level is populated by simulating the denoising SDE \eqref{eq:sde-denoising} from $\pibase$, and a \emph{base} one, sampling each level independently from $\pibase$ (as in tempering). We use the base initialization by default. The ablation in \Cref{app:ablation_study} shows this choice is essentially neutral for deterministic and second-order stochastic backbones, but \emph{degrades} performance with first-order stochastic kernels which is consistent with \Cref{subsec:pitfalls_methods}, where these kernels were already found to be uninformative for between-level transitions.

For all variants of annealed samplers based on deterministic transitions, we use by default $M = 4$ fixed-point iterations, truncate the power series at order $I = 3$, and use 39 samples in the Hutch++ estimator for the first-order variant. In \Cref{app:ablation_study}, we provide a precise ablation study of these three hyperparameters to evaluate their individual effect.

Finally, all local MALA steps are performed with an initial step size of 0.01; then, its is geometrically adapted during both warm-up and effective sampling based on local MH acceptance rates, targeting 70\% acceptance.

\paragraph{Inference and sampling details.}

For diffusion-based methods, whether the path is learned or fixed, we adopt by default the SNR-adapted discretization from \Cref{app:time_disc} to establish the annealing levels : when combined with a learned path, this ensures consistency between learning and inference stages. For tempering paths, the sequence of densities defined by \eqref{eq:tempering} is employed with the $\Lambda$-optimal schedule (computed for each value of $K$), proposed by \cite{syed2025optimisedannealedsequentialmonte} in the case of AIS and SMC, and \cite{Syed2021parallel, Syed2021nonreversible} in the case of RE. For all AIS/SMC samplers, we use 8,192 particles, and keep, for each particle, when it is available, the last 32 MCMC samples generated at the last level (properly reweighted using the associated importance weights) to compute the metrics. For all RE methods, we use 4 parallel RE chains; once the fixed global number of MCMC steps is reached, each of these chains is subsampled by retaining only the last local MCMC state before each swap. For all annealed samplers, we repeat the sampling run 8 times to produce averaged results in the plots.

\paragraph{Estimation of $\log \mathcal{Z}$.} We summarize how each scheme of \Cref{sec:monte_carlo} estimates $\log \normcte$ and the bias each estimator carries. Throughout, $\rho$ is a proposal density and $\{p_k\}_{k=0}^K$ is an \emph{arbitrary} density path with $p_0=\pi$, $p_K=\piprior$, and per-level normalising constants $\normcte_k$ (so $\normcte_0=\normcte$ and $\normcte_K=1$).

\textbf{IS.} With $X^i\stackrel{\text{iid}}{\sim}\rho$, the estimator $\widehat{\normcte}_{\text{IS}}=N^{-1}\sum_{i=1}^N \tilde\pi(X^i)/\rho(X^i)$ is \emph{unbiased} and a.s.\ consistent for $\normcte$; by Jensen, $\log \widehat{\normcte}_{\text{IS}}$ is \emph{negatively biased} but consistent.

\textbf{SNIS.} When the proposal is normalised ($\normcte_\rho=1$), the unnormalised weights $w^i=\tilde\pi(X^i)/\rho(X^i)$ satisfy $\mathbb{E}_\rho[w]=\normcte$, so for the normalising constant itself SNIS collapses to IS: $N^{-1}\sum_i w^i$ is unbiased and consistent for $\normcte$ and $\log\widehat{\normcte}$ is Jensen-biased.

\textbf{AIS.} For any forward/backward kernels $q_{k+1|k}, q_{k|k+1}$ defining $\pi_{0:K},\rho_{0:K}$ as in \eqref{eq:ais_extended}, drawing $x_{0:K}^i\sim\rho_{0:K}$ and averaging $w^{\text{AIS}}(x_{0:K}^i)$ from \eqref{eq:w_ais_general} yields $\widehat{\normcte}_{\text{AIS}}=N^{-1}\sum_i w^{\text{AIS}}(x_{0:K}^i)$, which is \emph{unbiased} and consistent for $\normcte$; $\log\widehat{\normcte}_{\text{AIS}}$ is negatively biased and consistent. AIS reduces to IS at $K=0$ or in the deterministic setting.

\textbf{SMC.} With the same incremental weights as AIS plus intermediate resampling, the following product-of-averages estimator remains \emph{unbiased} for $\normcte$ \citep{DelMoral2006sequential}
\begin{align*}
	\textstyle\widehat{\normcte}_{\text{SMC}} \;=\; \prod_{k=0}^{K-1}\left(\tfrac{1}{N}\sum_{i=1}^N w_k^i\right) \eqsp .
\end{align*}
Therefore, $\log\widehat{\normcte}_{\text{SMC}}$ is negatively biased and consistent. Resampling cuts weight-degeneracy variance without breaking unbiasedness.

\textbf{RE (classic swap).} For \emph{any} path $\{p_k\}_{k=0}^K$, RE targets $\bar\pi_{0:K}$ and supplies, at stationarity, $T$ samples $\{X_t^k\}_{t=1}^T \sim p_k$ on every chain. The log-normaliser is then recovered \emph{post hoc} by the telescoping identity
\begin{align*}
	\textstyle\log\normcte \;=\; \sum_{k=0}^{K-1}\log\frac{\normcte_k}{\normcte_{k+1}},\qquad
	\frac{\normcte_k}{\normcte_{k+1}} \;=\; \mathbb{E}_{p_{k+1}}\left[\frac{\tilde p_k(X)}{\tilde p_{k+1}(X)}\right]\eqsp,
\end{align*}
estimated layer-wise by free-energy perturbation (FEP) \cite{Zwanzig1954hightemperature} or, when samples from both adjacent chains are used, by the Bennett acceptance ratio (BAR) \citep{Bennett1976Efficient}. Each layer-ratio estimator is unbiased \emph{given exact samples}; with finite-$T$ MCMC samples and chains coupled through swaps the product is biased and $\log\widehat{\normcte}$ carries an additional Jensen bias at every layer, both vanishing as $T\to\infty$.

\textbf{Generalised RE.} Following \citet{zhang2025acceleratedparalleltemperingneural}, the classic deterministic swap of \eqref{eq:w_re} between adjacent levels $k$ and $k+1$ is replaced by stochastic refinements issued from the same forward/backward kernels $q_{k+1|k}, q_{k|k+1}$ already used in AIS \eqref{eq:ais_extended}: given $(x_k, x_{k+1})$, draw $y_{k+1}\sim q_{k+1|k}(\cdot\mid x_k)$ and $y_k\sim q_{k|k+1}(\cdot\mid x_{k+1})$ and accept the swap $(x_k, x_{k+1})\mapsto(y_k, y_{k+1})$ via the corresponding MH correction extending \eqref{eq:rate_MH_deterministic}. The same telescoping identity governs $\log\normcte$, with each layer ratio now estimated from the path-weights collected at every swap attempt:
\begin{align*}
	\textstyle \frac{\normcte_{k+1}}{\normcte_k} \;=\; \mathbb{E}\left[w_{k+1|k}(x_k, y_{k+1})\right]\eqsp,\qquad w_{k+1|k}(x_k, y_{k+1}) \;:=\; \frac{\tilde p_{k+1}(y_{k+1})\,q_{k|k+1}(x_k\mid y_{k+1})}{\tilde p_k(x_k)\,q_{k+1|k}(y_{k+1}\mid x_k)}\eqsp,
\end{align*}
with $x_k\sim p_k$, $y_{k+1}\mid x_k \sim q_{k+1|k}$, and a symmetric backward weight $w_{k|k+1}(x_{k+1}, y_k)$ yielding $\normcte_k/\normcte_{k+1}$. Averaging the two directions yields the geometric-mean estimator, which admits a BAR refinement \citep{Bennett1976Efficient}. Bias/consistency match classical RE (layer-unbiased given exact samples, $\log\widehat{\normcte}$ Jensen-biased, consistent as $T\to\infty$); and classical RE is recovered when both kernels collapse to Dirac masses (identity or deterministic swap).

\subsection{Additional metrics and results} \label{app:bonus_exp}

This section presents extended experiments that complement the main findings by considering additional configurations and evaluation metrics. We first verify in \Cref{fig:sanity_metrics_mc_variance} that the number of samples used for metrics computation ($N = 8192$) is sufficient to obtain reliable estimates, by measuring the Monte Carlo variance of sliced $W_2$ and mode weights on ground-truth samples across both target families.

\paragraph{$\Lambda$-optimal tempering path vs log-SNR diffusion path.} As a complement to the Sliced $W_2$ metric of \Cref{fig:temp_vs_noising}, \Cref{fig:temp_vs_noising_log_z} reports log-normalization constant estimates in the same setting and with the same visualization convention. Tempering paths yield systematically biased estimates across all aMC samplers (especially AIS), whereas diffusion paths perform substantially better, with estimates improving as $K$ grows and SMC/RE already uniformly accurate at $K \geq 64$. This further confirms the advantage of diffusion over tempering paths on multi-modal targets.

\paragraph{Complementary results in the idealized setting \ref{item:setting_perfect}.}
To complement \Cref{fig:diff_two_modes}, we report mode-weight (\Cref{fig:diff_two_modes_mode_weight}) and log-normalization constant (\Cref{fig:diff_two_modes_log_z}) estimates on the same idealized multi-modal experiments.
\begin{itemize}[wide, labelindent=0pt]
	\item \textit{Mode weights.} Within each aMC class, samplers split into two groups. The first, composed of zeroth and first order stochastic samplers (standard aMC baseline in red, prior diffusion-based methods in blue), performs uniformly poorly, except for SMC on \emph{TwoModes}. The second, composed of second-order stochastic kernels (green), deterministic with Hessian (pink) and deterministic with Hutchinson (yellow), performs substantially better, with fairly uniform results within each class. As in the main paper, our first-order deterministic method matches its second-order counterpart for $K\geq 64$ in AIS/SMC and across all $K$ in RE.
	\item \textit{Log-normalization constant.} Among stochastic kernels, only second-order variants give accurate AIS estimates; for SMC and RE, first-order methods are low-bias but high-variance, while second-order methods are both low-bias and low-variance. Deterministic variants display a noticeable bias on AIS and SMC but only a limited one on RE, where they also achieve substantially lower variance than their stochastic counterparts.
\end{itemize}

\newpage

\begin{figure}[h!]
	\vspace{-0.6cm}
	\centering
	\includegraphics[width=0.9\linewidth]{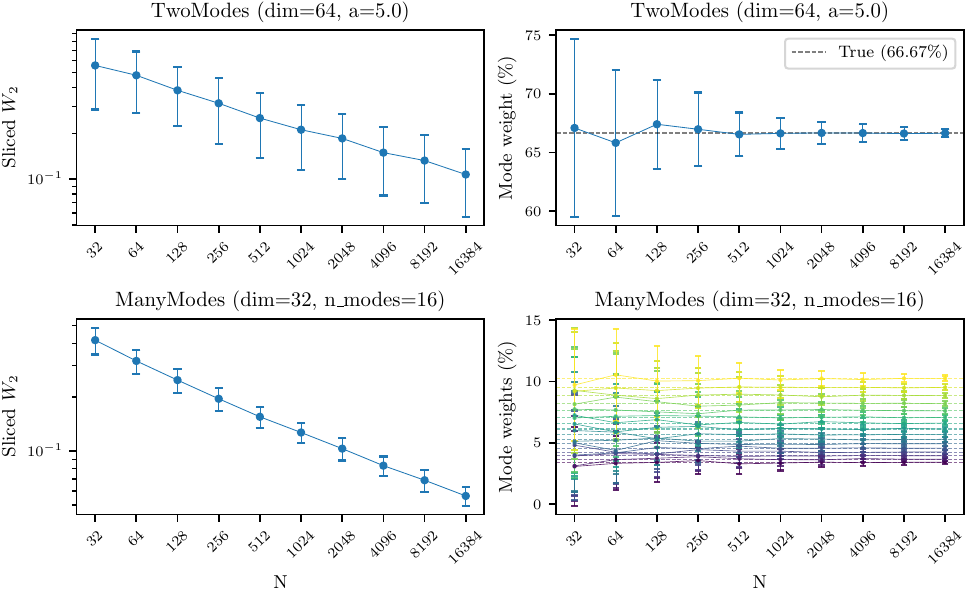}
	\vspace{-0.4cm}
	\caption{\textbf{Monte Carlo variance of standard metrics on ground-truth samples.} For each target, $N$ samples are drawn directly from the target distribution and passed to the standard evaluation pipeline; the procedure is repeated $128$ times independently to estimate mean $\pm$ standard deviation. \textbf{(Left column)} Sliced $W_2$ distance for \textit{TwoModes} (Medium distance -- Medium dimension) and \textit{ManyModes} ($16$ modes) as a function of $N$ (log scale). \textbf{(Right column)} Estimated mode weights as a function of $N$: for \textit{TwoModes}, the weight of the dominant mode is shown together with its ground-truth value (dashed line); for \textit{ManyModes}, each of the $16$ mode weights is displayed in a distinct color, with the corresponding ground-truth value shown as a matching dashed line. At the sample size used throughout our experiments ($N = 8{,}192$), the standard deviation of sliced $W_2$ is below $0.06$ on both targets, and all estimated mode weights lie within $\pm 0.5\%$ of their ground-truth value, confirming that the reported metrics are not dominated by Monte Carlo noise.}
	\label{fig:sanity_metrics_mc_variance}
\end{figure}

\begin{figure}[h!]
	\centering
	\includegraphics[width=\linewidth]{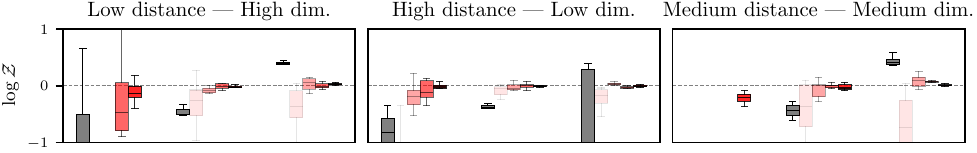}
	\vfill
	\vspace{0.1cm}
	\includegraphics[width=\linewidth]{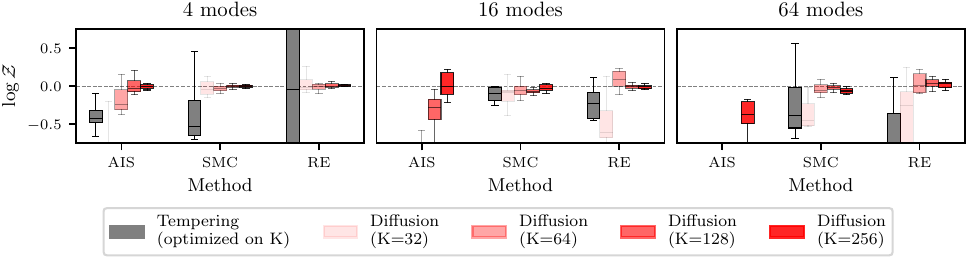}
	\vspace{-0.6cm}
	\caption{\textbf{\emph{Log-normalization constant estimation} results for classic annealed samplers with diffusion (\tcmr{red}) and tempering (\tcmgrey{grey}) density paths}, when targeting \textit{TwoModes} \textbf{(Top)} and \textit{ManyModes} \textbf{(Bottom)} in idealized setting \ref{item:setting_perfect}. This is complementary to \Cref{fig:temp_vs_noising}. }
	\label{fig:temp_vs_noising_log_z}
\end{figure}

\newpage
\begin{figure}[h!]
	\centering
	\vspace{1.5cm}
	\includegraphics[width=\linewidth]{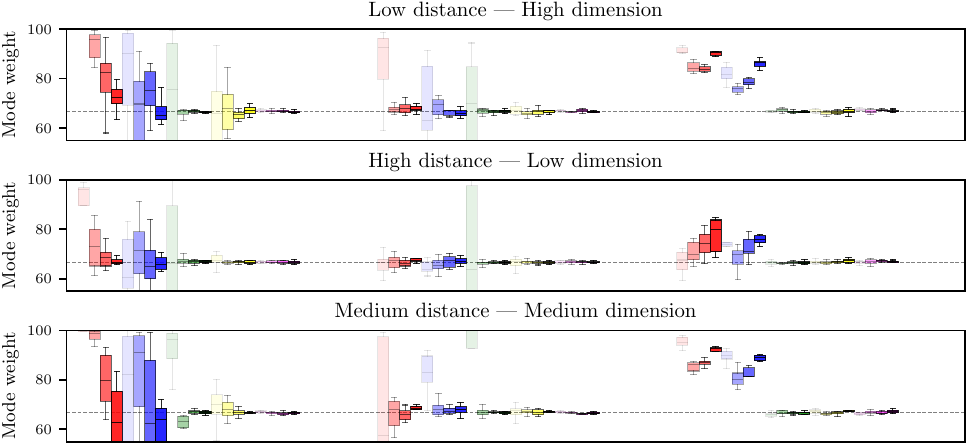}
	\includegraphics[width=\linewidth]{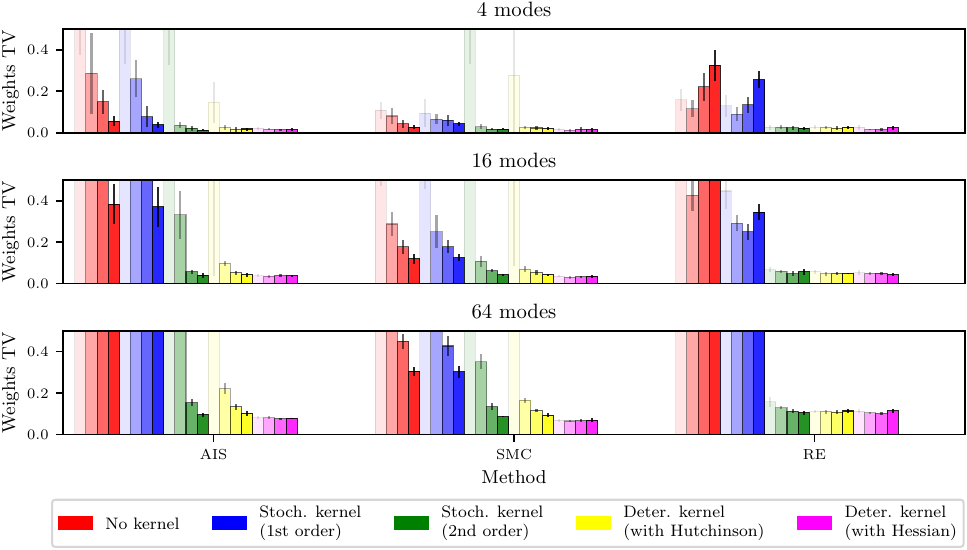}
	\vspace{-0.65cm}
	\caption{\textbf{\textit{Mode weight estimation} results of DM-based aMC-BGs using different mechanisms}, when targeting \textit{TwoModes} \textbf{(Top three rows)} and \textit{ManyModes} \textbf{(Bottom three rows)} distributions in idealized setting \ref{item:setting_perfect}. This is complementary to the Sliced $W_2$ results displayed \Cref{fig:diff_two_modes}, with the same visualization convention and experimental design.}
	\label{fig:diff_two_modes_mode_weight}
\end{figure}

\newpage
\begin{figure}[h!]
	\centering
	\vspace{1.5cm}
	\includegraphics[width=\linewidth]{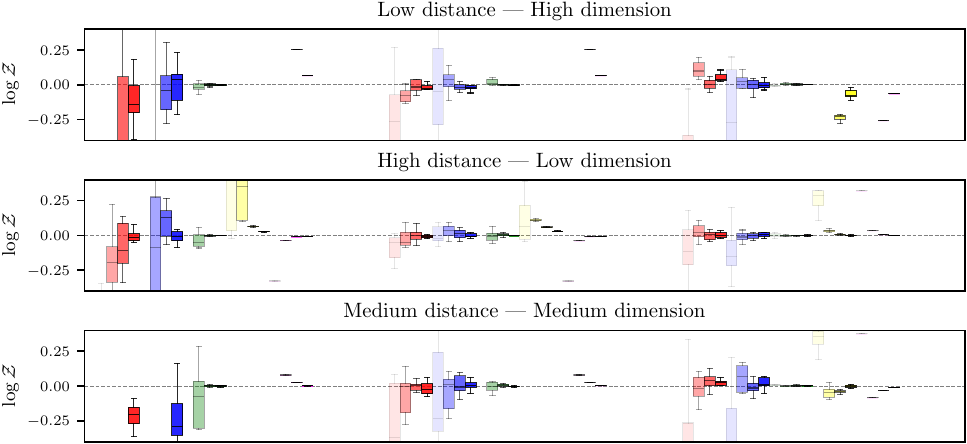}
	\includegraphics[width=\linewidth]{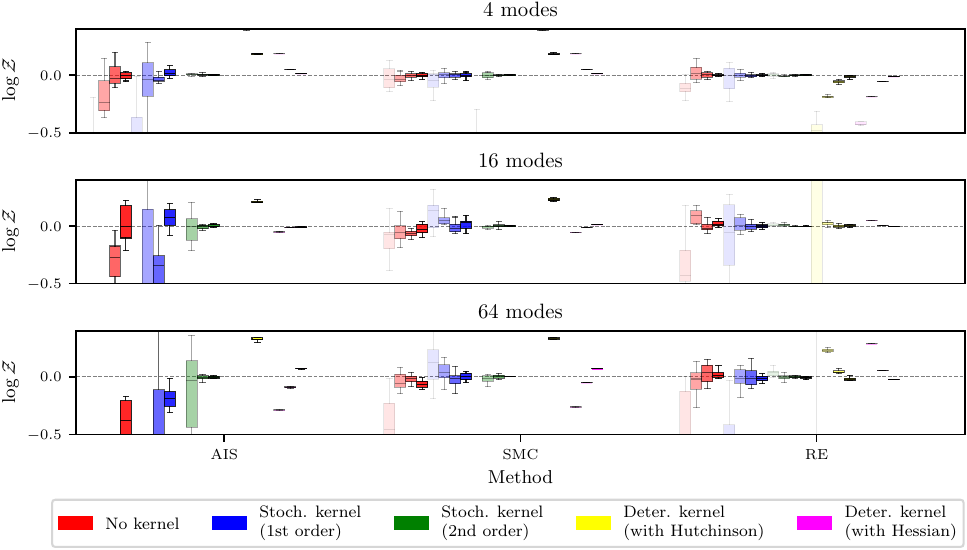}
	\vspace{-0.65cm}
	\caption{\textbf{\textit{Log-normalization constant estimation} results of DM-based aMC-BGs using different mechanisms}, when targeting \textit{TwoModes} \textbf{(Top three rows)} and \textit{ManyModes} \textbf{(Bottom three rows)} distributions in idealized setting \ref{item:setting_perfect}. This is complementary to the Sliced $W_2$ results displayed \Cref{fig:diff_two_modes}, with the same visualization convention and experimental design.}
	\label{fig:diff_two_modes_log_z}
\end{figure}

\newpage
\begin{figure}[h!]
	\centering
	\vspace{-0.6cm}
	\includegraphics[width=\linewidth]{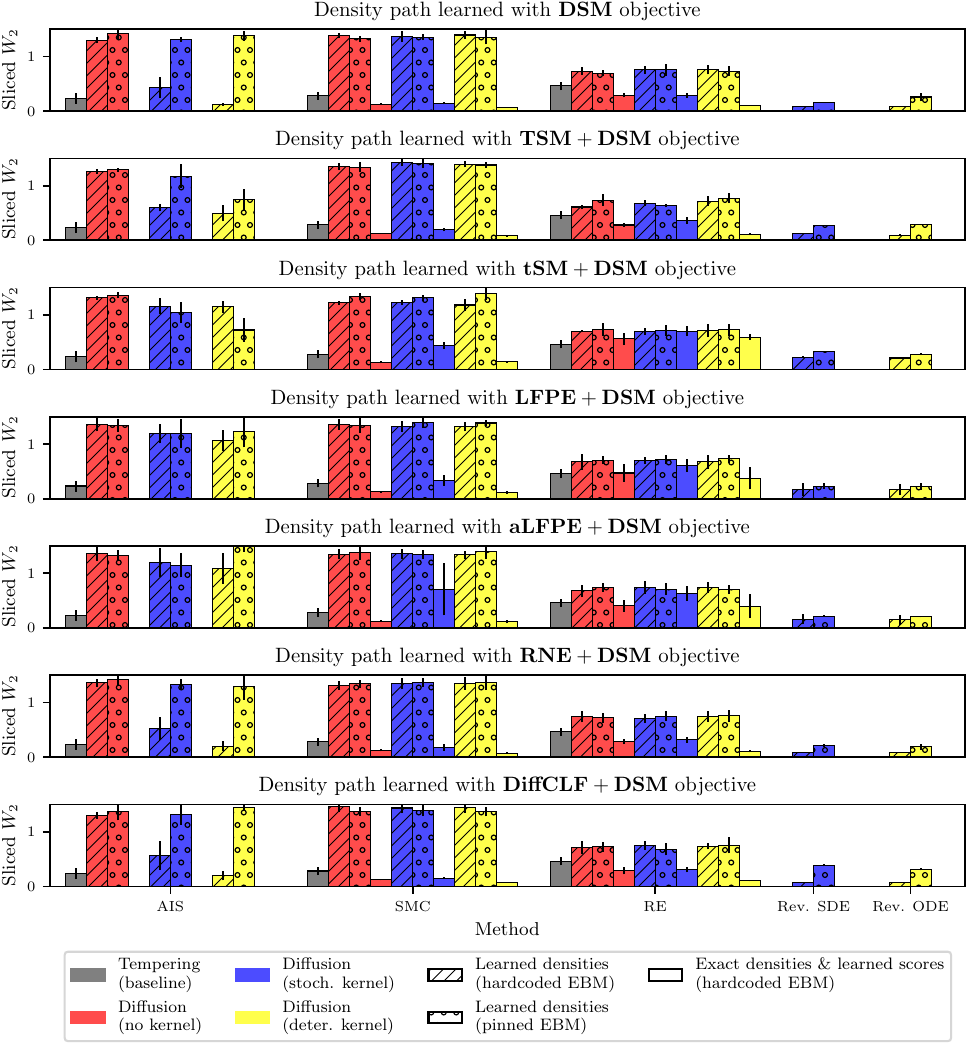}
	\vspace{-0.6cm}
	\caption{\textbf{\emph{Sliced $W_2$} results of DM-based aMC-BG}, when targeting \textit{ManyModes} distribution with \textbf{16 modes} in setting \ref{item:setting_practical} : \textbf{(From top to bottom)} the DM is trained via DSM, TSM+DSM, tSM+DSM, LFPE+DSM, aLFPE+DSM, RNE+DSM or DiffCLF+DSM objective with identical computational budget. We use the same visualization convention and experimental design as in \Cref{fig:sm_is_bad}.
	}
	\vspace{-0.3cm}
	\label{fig:sm_many_modes_target_1_w2}
\end{figure}

\paragraph{Complementary results in realistic setting \ref{item:setting_practical}.}

We provide the \emph{ManyModes} (16 modes) counterpart of \Cref{fig:sm_is_bad} in \Cref{fig:sm_many_modes_target_1_w2}, reporting the Sliced $W_2$ metric of DM-based aMC-BGs across all training objectives of \Cref{subsec:learning_framework}. We further provide a per-loss zoom of \Cref{fig:mode_witching_learned} in Figures~\ref{fig:all_mode_switching_two_modes} and \ref{fig:all_mode_switching_many_modes}. All of these results are fully consistent with the conclusions of \Cref{sec:numerics}. When carrying out this realistic experiment on the remaining target distributions, we observe the same results, across all considered metrics; we omit them to avoid overloading the manuscript. For the \textit{ManyWell} target, we additionally report the mode-weight estimation (aTV) and log-normalization constant estimation results in \Cref{fig:many_well_atv,fig:many_well_logz}, complementing the Sliced $W_2$ results of \Cref{fig:many_well_w2}; these results are consistent with the conclusions drawn in \Cref{sec:numerics}.

\newpage

\begin{figure}[h!]
	\vspace{-0.9cm}
	\centering
	\includegraphics[width=\linewidth]{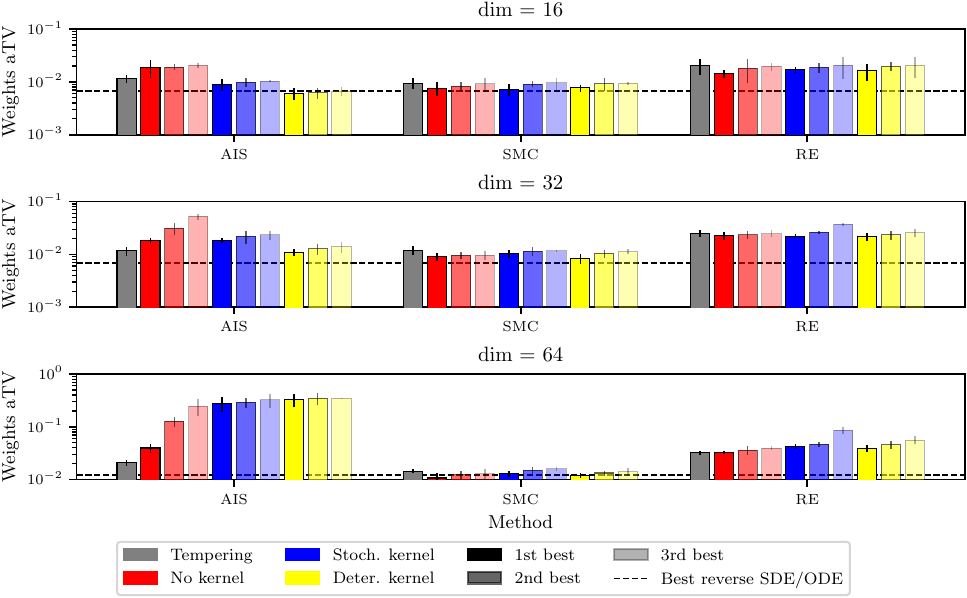}
	\vspace{-0.7cm}
	\caption{\textbf{\textit{Mode-weight estimation} (aTV) results of DM-based aMC-BGs}, when targeting the \textit{ManyWell} distribution in the practical setting~\ref{item:setting_practical}, for $d=16$ \textbf{(top)}, $d=32$ \textbf{(middle)}, and $d=64$ \textbf{(bottom)}. This is complementary to \Cref{fig:many_well_w2}, with the same visualization convention and experimental design.}
	\label{fig:many_well_atv}
	\vspace{-0.4cm}
\end{figure}

\begin{figure}[h!]
	\centering
	\includegraphics[width=\linewidth]{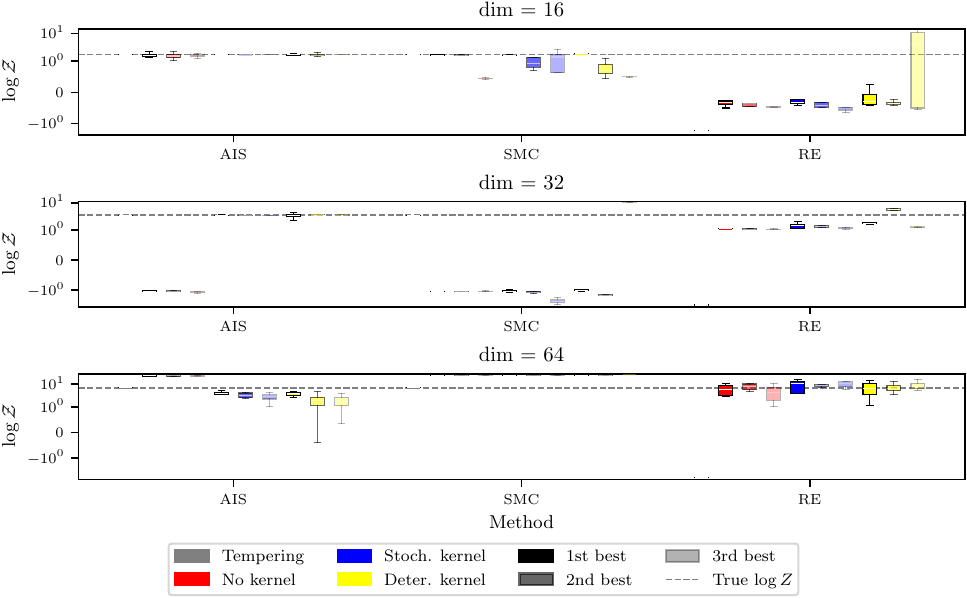}
	\vspace{-0.6cm}
	\caption{\textbf{\textit{Log-normalization constant estimation} results of DM-based aMC-BGs}, when targeting the \textit{ManyWell} distribution in the practical setting~\ref{item:setting_practical}, for $d=16$ \textbf{(top)}, $d=32$ \textbf{(middle)}, and $d=64$ \textbf{(bottom)}. This is complementary to \Cref{fig:many_well_w2}, with the same visualization convention and experimental design. The dashed line indicates the true log-normalization constant.}
	\vspace{-0.5cm}
	\label{fig:many_well_logz}
	\vspace{-2cm}
\end{figure}

\newpage
\begin{figure}[h!]
	\centering
	\vspace{-1cm}
	\includegraphics[width=\linewidth]{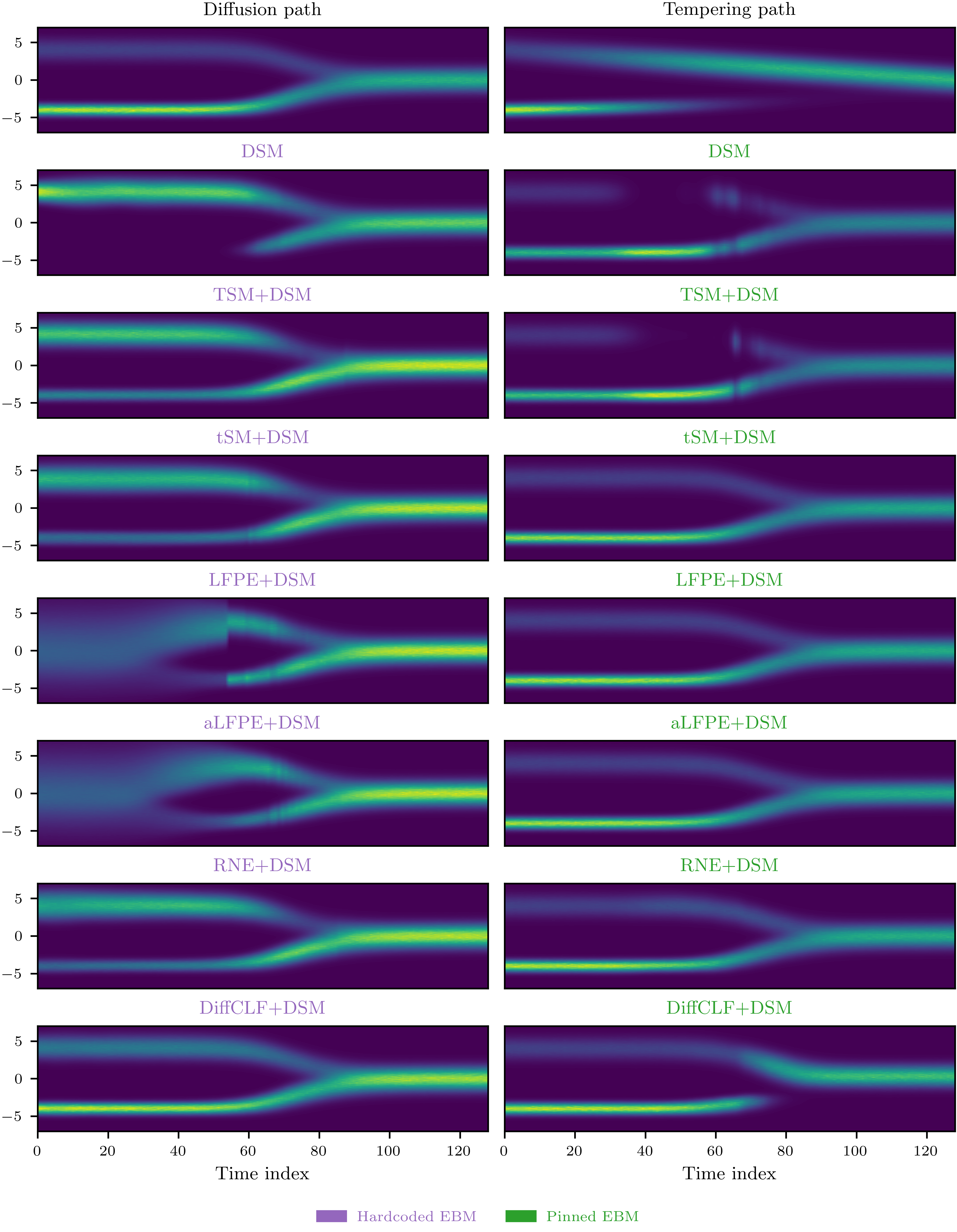}
	\vspace{-0.6cm}
	\caption{\textbf{Diffusion density paths bridging $\pibase$ (last time index) to the same \emph{TwoModes} target as in \Cref{fig:mode_witching_exact} (first time index).} \textbf{(First row)} the \underline{exact} diffusion path is displayed on the left, the \underline{exact} tempering path on the right, \textbf{(From second to last row)} we display the \underline{learned} density path when using the DSM, TSM+DSM, tSM+DSM, LFPE+DSM, aLFPE+DSM, RNE+DSM or DiffCLF+DSM objective, with identical computational budget, \textbf{(Left)} use of hardcoded EBM, \textbf{(Right)} use of pinned EBM. This is complementary to \Cref{fig:mode_witching_learned}. Zoom in to get more details.}
	\label{fig:all_mode_switching_two_modes}
	\vspace{-2cm}
\end{figure}

\newpage
\begin{figure}[h!]
	\centering
	\vspace{-1cm}
	\includegraphics[width=\linewidth]{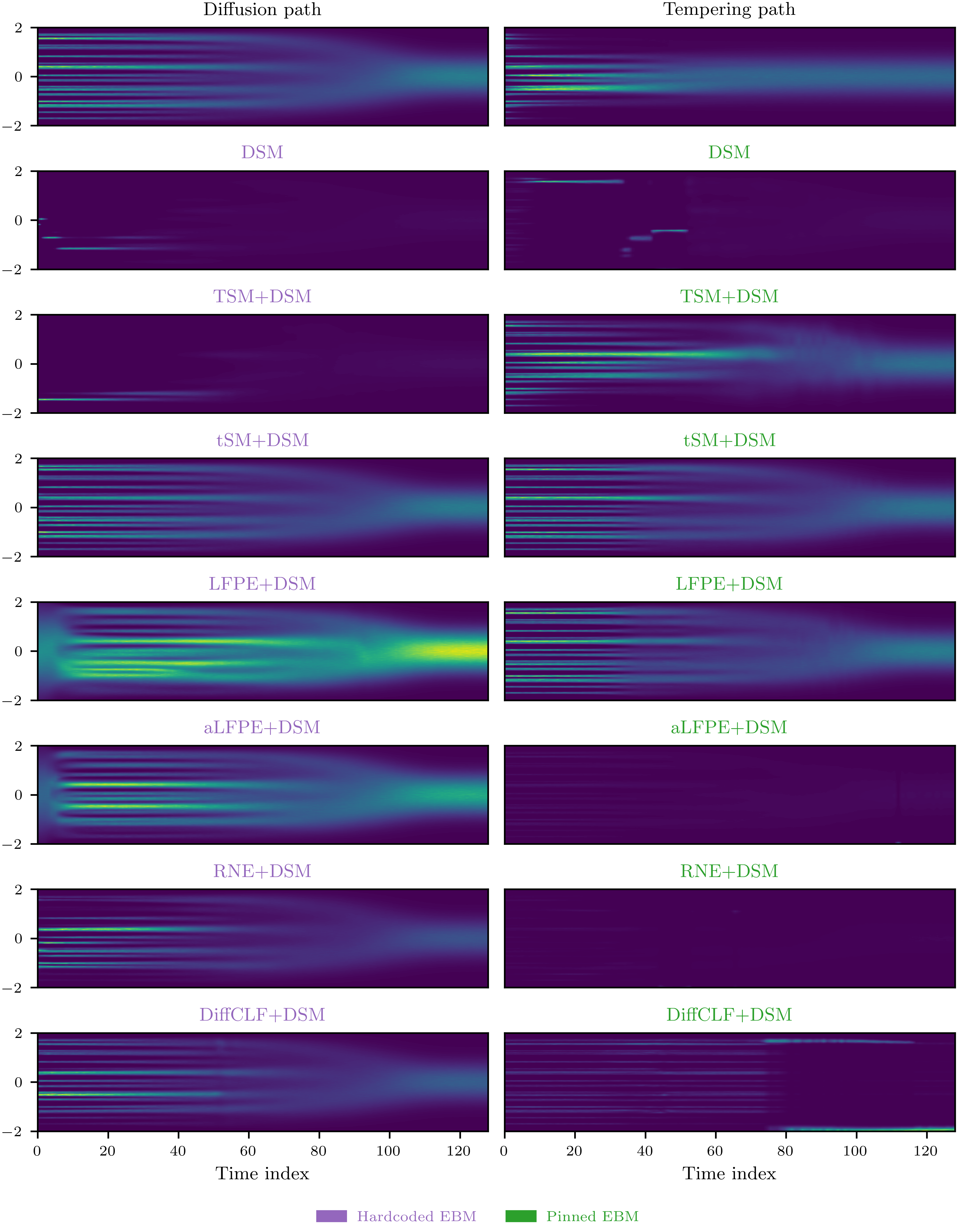}
	\vspace{-0.6cm}
	\caption{\textbf{Diffusion density paths bridging $\pibase$ (last time index) to the same \emph{ManyModes} target as in \Cref{fig:mode_witching_exact} (first time index).} \textbf{(First row)} the \underline{exact} diffusion path is displayed on the left, the \underline{exact} tempering path on the right, \textbf{(From second to last row)} we display the \underline{learned} density path when using the DSM, TSM+DSM, tSM+DSM, LFPE+DSM, aLFPE+DSM, RNE+DSM or DiffCLF+DSM objective, with identical computational budget, \textbf{(Left)} use of hardcoded EBM, \textbf{(Right)} use of pinned EBM. This is complementary to \Cref{fig:mode_witching_learned}. Zoom in to get more details.}
	\label{fig:all_mode_switching_many_modes}
	\vspace{-2cm}
\end{figure}

\newpage

\subsection{Ablation studies on DM-based aMC methods}
\label{app:ablation_study}

For clarity in the given ablation studies, we report sampling performance solely using the Sliced $W_2$ metric.

\paragraph{$\Lambda$-optimal vs log-SNR discretization.}

In idealized setting \ref{item:setting_perfect}, we compare the default log-SNR discretization with the $\Lambda$-optimal schedule originally developed for tempering paths, which we can pre-compute thanks to the tractability of our continuous-time diffusion path (see \Cref{app:time_disc}). \Cref{fig:diff_two_modes_opt} shows that the $\Lambda$-optimal schedule yields slight gains for the standard aMC baseline and first-order stochastic methods, but is comparable or worse on the methods identified by \Cref{fig:diff_two_modes} as the most effective under log-SNR with clear failures for all remaining AIS variants and second-order stochastic SMC, and similar performance in RE. The main conclusions therefore stand: deterministic methods and second-order stochastic kernels remain the most effective designs. Being target-independent and easy to compute, we conjecture log-SNR to be the most practical choice for general targets.

\paragraph{EI vs DDPM parameterization for stochastic first-order kernels.}
Previous diffusion-based BGs with first-order stochastic transition kernels relied on EI or EM discretizations \citep{Phillips2024particle,zhang2025acceleratedparalleltemperingneural}; see \Cref{lemma:general_em} for an arbitrary noising schedule, \Cref{lemma:denoising_sde_vp} for the VP case and \Cref{lemma:denoising_sde_ve} for the VE case. In contrast, we use the DDPM kernel \eqref{eq:ddpm_1st} in our implementation. This choice is motivated by the idealized experiments of \Cref{fig:ei_vs_ddpm}, where DDPM yields substantially better performance than EI across all aMC samplers. To our knowledge, none of the prior diffusion-based BG works rely on this kernel; we hope our results encourage its broader use.

\paragraph{Base vs score-informed RE initialization.}

Unlike sequential AIS/SMC, RE samplers using diffusion paths can warm-start each annealing level by simulating the reverse SDE from $\pibase$. \Cref{fig:re_init} ablates this \emph{score-informed initialization} against the \emph{base initialization} (independent samples from $\pibase$, used by default in the main experiments for fair comparison) on all \emph{TwoModes} and \emph{ManyModes} targets. Score-informed initialization does not improve over the base one for most variants; for first-order stochastic RE, the effect is inconsistent (beneficial on \emph{TwoModes} but detrimental on \emph{ManyModes}). This is consistent with the conclusions of \Cref{subsec:pitfalls_methods}: first-order stochastic transitions are not informative enough, a weakness visible for any initialization.

\newpage

\begin{figure}[h!]
	\centering
	\vspace{0.4cm}
	\includegraphics[width=\linewidth]{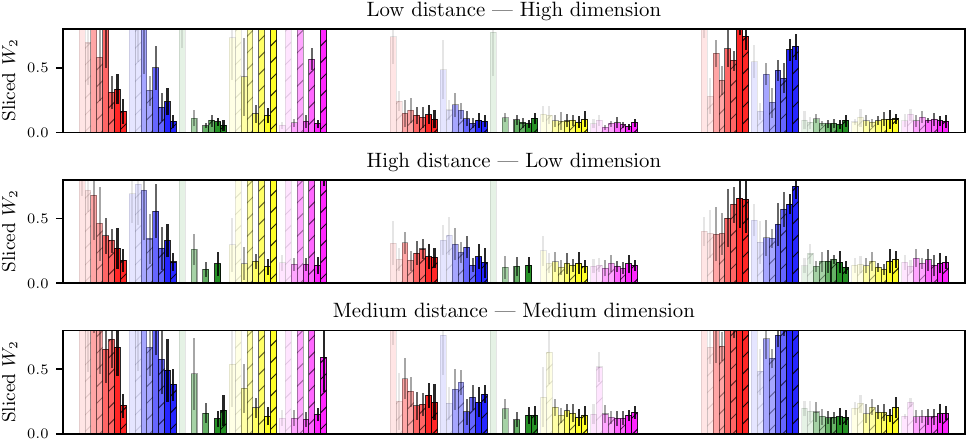}
	\includegraphics[width=\linewidth]{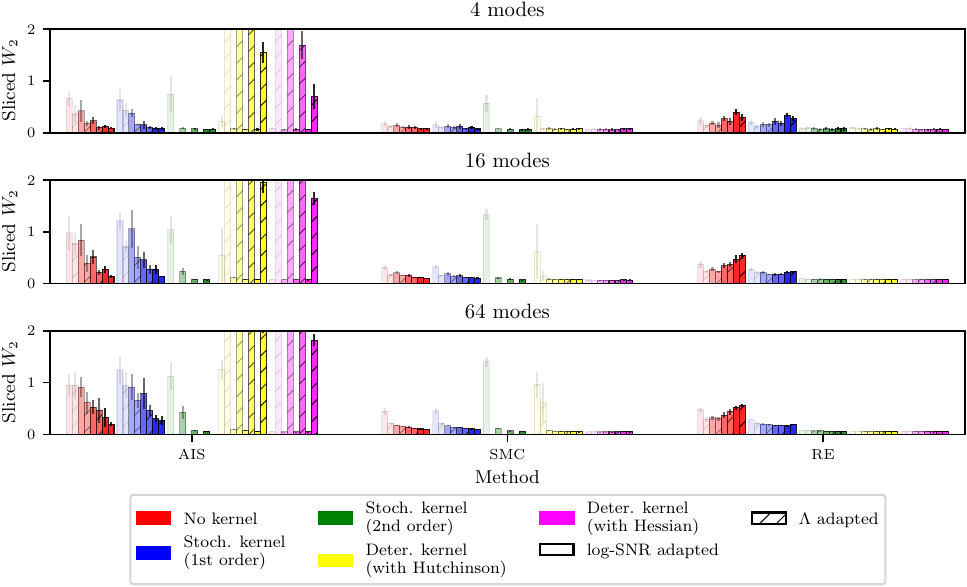}
	\vspace{-0.65cm}
	\caption{\textbf{DM-based aMC-BG results with annealed samplers based on log-SNR and $\Lambda$-optimal scheduling, using different mechanisms}, when targeting \textit{TwoModes} \textbf{(Top three rows)} and \textit{ManyModes} \textbf{(Bottom three rows)} distributions in idealized setting \ref{item:setting_perfect}. This figure follows the same visualization convention as \Cref{fig:diff_two_modes}: each group of bars with the same color corresponds to a given aMC method. We additionally distinguish samplers using the $\Lambda$-optimal diffusion time discretization, shown with bar hatching, from those using the default log-SNR discretization, shown without hatching. The latter correspond exactly to the bars reported in \Cref{fig:diff_two_modes}.}
	\label{fig:diff_two_modes_opt}
\end{figure}

\newpage
\begin{figure}[h!]
	\vspace{-1cm}
	\centering
	\includegraphics[width=\linewidth]{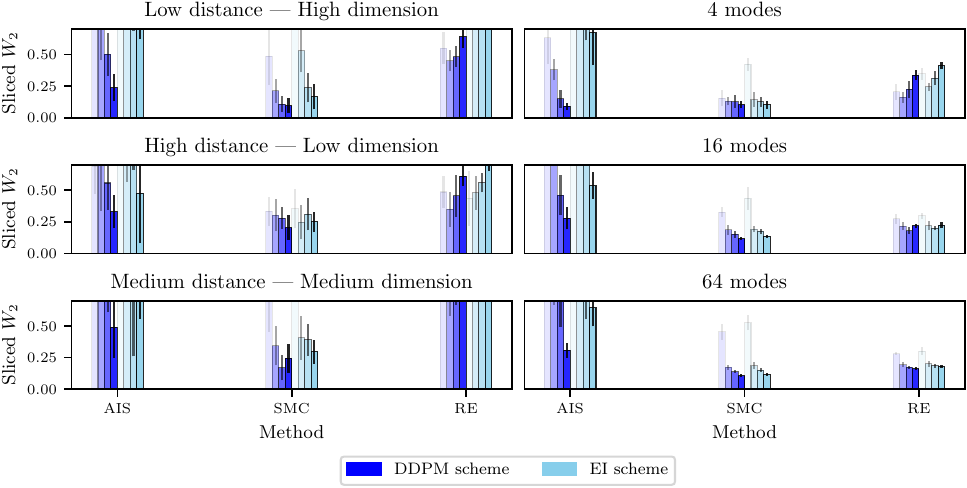}
	\vspace{-0.75cm}
	\caption{ \textbf{Impact of the first-order stochastic denoising transition kernel in DM-based aMC-BGs}, when targeting \emph{TwoModes} distributions \textbf{(left)} and \emph{ManyModes} distributions \textbf{(right)} in idealized setting \ref{item:setting_perfect}. We compare two variants: one based on the DDPM scheme \eqref{eq:ddpm_1st}, used in our main experiments (\tcmb{blue} bars, identical to those in \Cref{fig:diff_two_modes}), and one based on the EI scheme (\tcmblight{cyan}), as proposed in prior work. Our results show that, for all aMC samplers and all values of $K$, the DDPM scheme substantially improves sampling performance over EI, especially for AIS and SMC.}
	\label{fig:ei_vs_ddpm}
	\vspace{-0.1cm}
\end{figure}

\begin{figure}[h!]
	\centering
	\includegraphics[width=\linewidth]{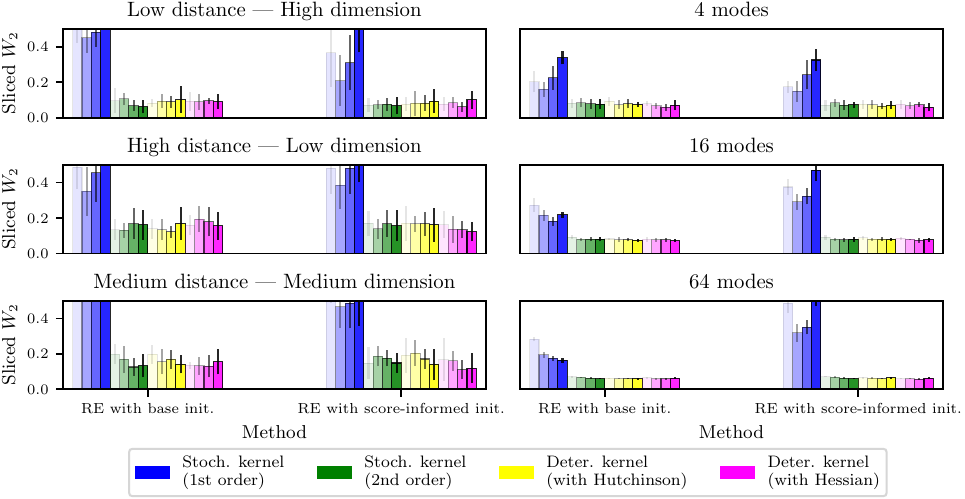}
	\vspace{-0.75cm}
	\caption{
		\textbf{Sensitivity of diffusion-based RE-BGs to per-level initialization}, when targeting \emph{TwoModes} distributions \textbf{(left)} and \emph{ManyModes} distributions \textbf{(right)} in idealized setting \ref{item:setting_perfect}. For each target distribution, we compare two RE initializations: ``RE with score-informed init.'', where each level is initialized by simulating the reverse SDE associated with the DM for the given value of $K$; and ``RE with base init'', where each level is initialized independently from the base distribution, as in tempering approaches. The latter is our default setting in the main experiments, and the corresponding bars coincide with the RE bars in \Cref{fig:diff_two_modes}. Overall, the initialization choice has little effect on sampling performance, except for first-order stochastic transition kernels, where score-informed initialization degrades performance for challenging targets. This further highlights the limitations of this specific RE variant. }
	\label{fig:re_init}
	\vspace{-2cm}
\end{figure}

\newpage
\begin{figure}[h!]
	\vspace{-0.6cm}
	\centering
	\includegraphics[width=\linewidth]{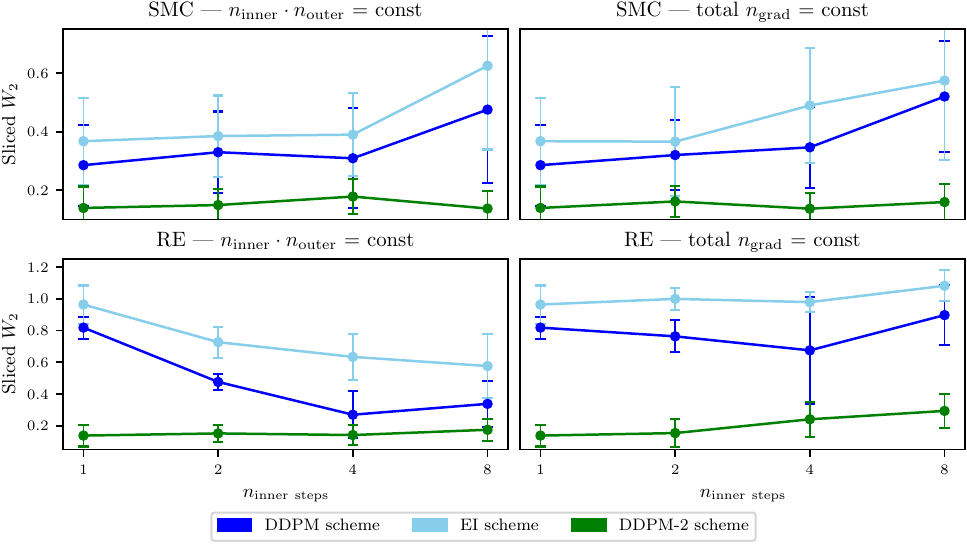}
	\vspace{-0.65cm}
	\caption{\textbf{Impact of multi-step stochastic transition kernels in DM-based SMC \textbf{(top two rows)} and RE \textbf{(bottom two rows)}}, on the intermediate \emph{TwoModes} target in idealized setting \ref{item:setting_perfect}. The reported metric is the Sliced $W_2$ distance, with the same visualization convention as in \Cref{fig:diff_two_modes,fig:ei_vs_ddpm}. We compare first-order methods based on the DDPM scheme (\tcmb{blue}) and on the EI scheme (\tcmblight{cyan}), as well as second-order methods that additionally use the log-density Hessians (\tcmv{green}). See the main text for the experimental setup ($n_{\text{inner}}$, $n_{\text{outer}}$, two budget regimes) and the conclusions.}
	\label{fig:many_steps}
\end{figure}

\paragraph{Effect of multi-step transition kernels in SMC and RE.}

Following the diffusion-based RE design of \citet{zhang2025acceleratedparalleltemperingneural}, DM-based SMC and RE can be equipped with \emph{multi-step} stochastic transition kernels, obtained by chaining $n_{\text{inner}}\geq 1$ single-step noising or denoising kernels into a single between-level transition. This construction is natural for DMs and applies in both the first- and second-order cases; the rest of the SMC and RE procedures is unchanged relative to our main implementation (which corresponds to $n_{\text{inner}}=1$), up to the corresponding adaptations of the importance weights and Metropolis-Hastings acceptance probabilities. Each multi-step transition is, however, $n_{\text{inner}}$ times more expensive than a single-step one, so its net benefit is not obvious a priori.

In \Cref{fig:many_steps}, we study this effect on the intermediate \emph{TwoModes} target in idealized setting \ref{item:setting_perfect}, for $n_{\text{inner}}\in\{1,2,4,8\}$. To preserve the same underlying time grid as in our main experiments ($K=128$ timesteps with $n_{\text{inner}}=1$), we use $n_{\text{outer}}=128/n_{\text{inner}}$ annealing levels in all cases, so that each multi-step transition simply spans $n_{\text{inner}}$ steps of that 128-step grid. We then consider two budget regimes:
\begin{itemize}[wide, labelindent=0pt]
	\vspace{-0.3cm}
	\item \textbf{(Left)} The per-level MCMC step counts (between adjacent levels in SMC, between swaps in RE) are kept fixed across $n_{\text{inner}}$. Since the number of annealing levels $n_{\text{outer}}$ shrinks as $n_{\text{inner}}$ grows, the total compute is not held constant.
	      \vspace{-0.2cm}
	\item \textbf{(Right)} The per-level MCMC counts are adapted so that the total number of score evaluations stays constant across $n_{\text{inner}}$.
	      \vspace{-0.3cm}
\end{itemize}

Increasing $n_{\text{inner}}$ leaves SMC unchanged or slightly degraded in both regimes, and helps only first-order RE in the Left regime. Once the comparison is rebalanced to equal budget (Right), even this benefit on first-order RE disappears, with multi-step transitions consistently degrading performance across all settings. Overall, these results support our default choice $n_{\text{inner}}=1$, which additionally avoids the need to tune this hyperparameter. That said, we acknowledge that our Hutchinson-based deterministic transitions also require multiple gradient calls per transition, blurring the line between that regime and the multi-step setting studied here. Reassuringly, however, deterministic transitions uniformly improve all aMC samplers whereas multi-step transitions do not, suggesting that the gain does not stem from the extra compute alone.

\paragraph{Hyperparameter sensitivity of deterministic approaches.}

We assess the robustness of the deterministic diffusion-based aMC framework of \Cref{sec:determinist} with respect to its three hyperparameters: the number of fixed-point iterations $n_{\text{iter}}$ for the Implicit Midpoint integrator (\Cref{prop:midpoint_fixed}), the truncation order $n_{\text{trunc}}$ of the Jacobian log-determinant power series (\Cref{prop:compute_log_det}), and the number of Hutchinson auxiliary variables $n_{\text{hutch}}$ used in the first-order variant (\Cref{app:subsec:unbiased_det}). For each, we measure the error introduced in the relevant deterministic-aMC component, rather than its effect on final sampling performance, in order to isolate the approximation.

\textit{(i) Fixed-point convergence ($n_{\text{iter}}$).} We numerically verify the geometric convergence guaranteed by \Cref{prop:midpoint_fixed} on all \emph{TwoModes} and \emph{ManyModes} targets, by measuring the error in the mutual invertibility condition \eqref{eq:ass_deterministic_maps} across all timestep pairs $(s,t)$ at $K=128$, for $n_{\text{iter}}\in[2,32]$. Across all targets and both VP and VE schedules (\Cref{fig:midpoint_vp,fig:midpoint_ve}), the error reaches numerical precision ($\sim 10^{-8}$) for $n_{\text{iter}}\geq 6$, supporting our default $n_{\text{iter}}=4$.

\textit{(ii) Sanity check on the penalty correction ($n_{\text{trunc}}$, $n_{\text{hutch}}$).} For AIS/SMC, the penalty correction (\Cref{app:subsec:unbiased_det}) is designed to produce an unbiased estimator of the IS weight, so reporting its bias and variance against the deterministic ground truth is a meaningful check. For RE, however, the correction is designed to preserve $\bar\pi$-invariance of the MH kernel rather than to make the acceptance probability itself unbiased. The small gap to the deterministic acceptance is therefore not a defect to chase to zero, but simply a numerical witness that the stochastic kernel stays close to its deterministic counterpart. With this caveat in mind, we compare the first-order penalty-corrected estimator (Hutchinson + Bessel) to the second-order deterministic value (using the exact Hessian) on neighboring annealing levels at $K=128$ in the VP setting. The two agree to within a few percent on both IS weights and MH acceptance probabilities, well within the precision relevant for sampling. Varying $n_{\text{trunc}}$ has virtually no effect (\Cref{fig:truncation_two_modes,fig:truncation_many_modes}), supporting our default $n_{\text{trunc}}=3$, while $n_{\text{hutch}}$ reduces both bias and variance only gradually (\Cref{fig:n_hutch_two_modes,fig:n_hutch_many_modes}); since Jacobian--vector products are memory-bound, we set $n_{\text{hutch}}=39$ as the largest feasible value in our setups.

\newpage

\begin{figure}[h!]
	\centering
	\includegraphics[width=\linewidth]{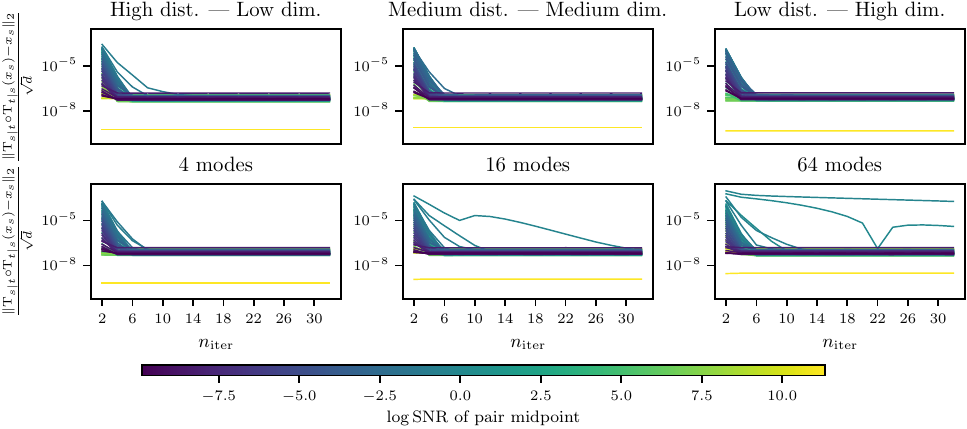}
	\vspace{-0.65cm}
	\caption{\textbf{Validity of the mutual invertibility condition for the Implicit Midpoint transition maps defined by the VP noising schedule}, evaluated on all \emph{TwoModes} and \emph{ManyModes} targets with $K=128$. For each consecutive pair of discretization times $(s,t)$, with $s<t$, we report the average mutual invertibility error $\mathbb{E}[\|\mapT_{s|t}\circ \mapT_{t|s}(X_s)-X_s\|]$, rescaled by $d^{-1/2}$ to enable comparison across dimensions. Curves are colored according to the midpoint time $(s+t)/2$ in log-SNR space. Here, $\mapT_{s|t}$ denotes the backward IM map and $\mapT_{t|s}$ the forward IM map. We vary the number of fixed-point iterations $n_{\text{iter}}$ from $2$ to $32$, with $n_{\text{iter}}=4$ used in our main experiments. Across all midpoint times and targets, the error decreases rapidly with $n_{\text{iter}}$ and shows little to no further improvement beyond $n_{\text{iter}}\geq 6$.}
	\vspace{0.5cm}
	\label{fig:midpoint_vp}
\end{figure}

\begin{figure}[h!]
	\centering
	\includegraphics[width=\linewidth]{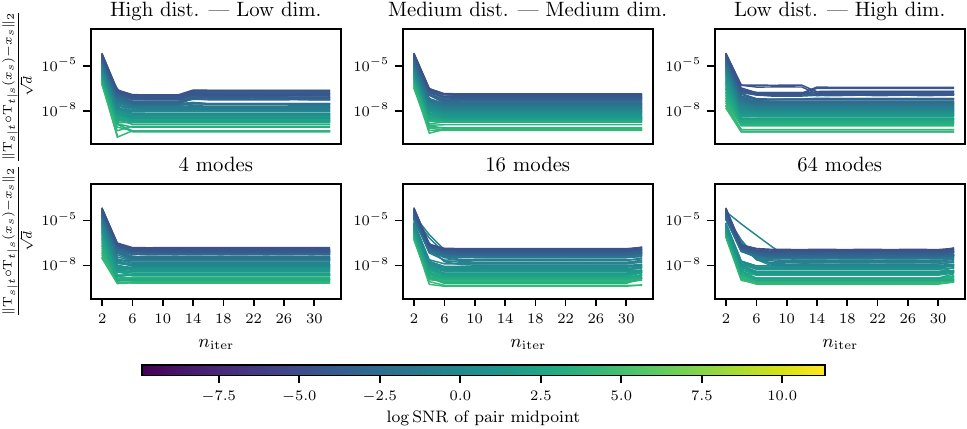}
	\vspace{-0.65cm}
	\caption{\textbf{Validity of the mutual invertibility condition for the Implicit Midpoint transition maps defined by the VE noising schedule}, evaluated on all \emph{TwoModes} and \emph{ManyModes} targets with $K=128$. The experimental setting and visualization convention is the same as in \Cref{fig:midpoint_vp}, with the same conclusions.}
	\label{fig:midpoint_ve}
\end{figure}

\newpage

\begin{figure}[h!]
	\vspace{-0.5cm}
	\centering
	\includegraphics[width=0.95\linewidth]{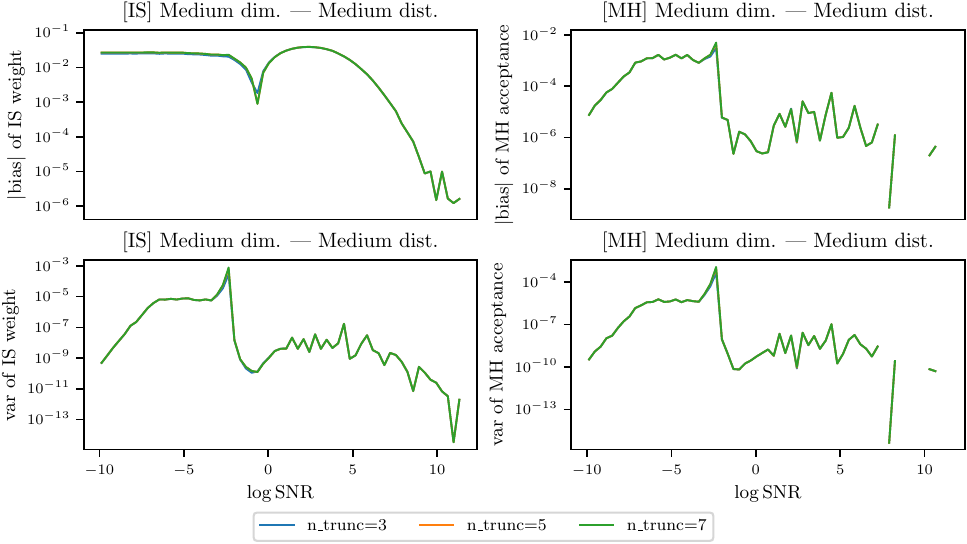}
	\vspace{-0.3cm}
	\caption{\textbf{Accuracy of the Hutchinson-based statistical estimates with respect to the truncation order $n_{\text{trunc}}$} along the VP density path, evaluated on the intermediate \emph{TwoModes} target with $K=128$. \textbf{(Left):} IS weight estimation. \textbf{(Right)}: MH rate estimation. For each consecutive timestep pair $(s,t)$ induced by the discretization, displayed in log-SNR space, we report both the bias \textbf{(top)} and variance \textbf{(bottom)} of the estimator. We fix the number of Hutchinson auxiliary variables to $32$ and the number of fixed-point iterations to $4$, and vary the truncation order $n_{\text{trunc}}\in\{3,5,7\}$, with $n_{\text{trunc}}=3$ used in our main experiments. Increasing $n_{\text{trunc}}$ does not noticeably affect performance. }
	\label{fig:truncation_two_modes}
	\vspace{0.3cm}
\end{figure}

\begin{figure}[h!]
	\centering
	\includegraphics[width=0.95\linewidth]{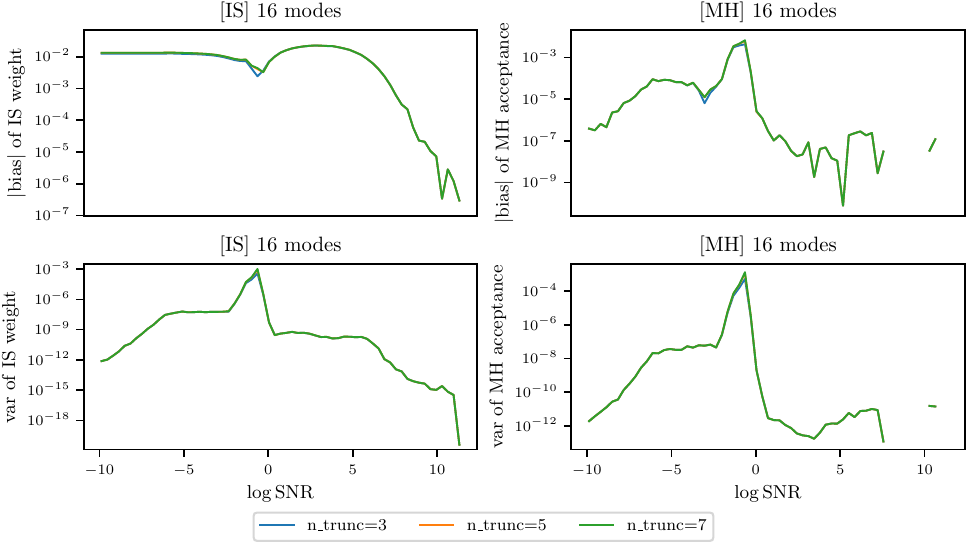}
	\vspace{-0.3cm}
	\caption{\textbf{Accuracy of the Hutchinson-based statistical estimates with respect to the truncation order $n_{\text{trunc}}$} along the VP density path, evaluated on the intermediate \emph{ManyModes} target with $K=128$. The experimental setting and visualization convention is the same as in \Cref{fig:truncation_two_modes}, with the same conclusions.}
	\label{fig:truncation_many_modes}
	\vspace{-2cm}
\end{figure}

\newpage

\begin{figure}[h!]
	\vspace{-1cm}
	\centering
	\includegraphics[width=0.95\linewidth]{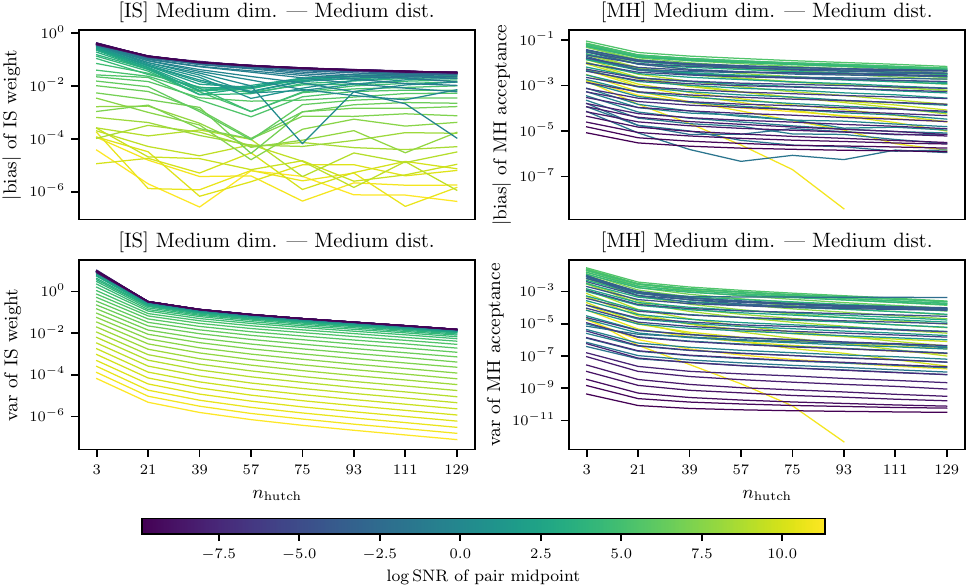}
	\vspace{-0.3cm}
	\caption{\textbf{Accuracy of the Hutchinson-based statistical estimates with respect to the number of auxiliary variables $n_{\text{hutch}}$} along the VP density path, evaluated on the intermediate \emph{TwoModes} target with $K=128$. \textbf{(Left):} IS weight estimation. \textbf{(Right):} MH rate estimation. For each consecutive pair of discretization times $(s,t)$, with $s<t$, we report the estimator bias \textbf{(top)} and variance \textbf{(bottom)}. Curves are colored according to the midpoint time $(s+t)/2$ in log-SNR space. We fix $n_{\text{iter}}=4$ and $n_{\text{trunc}}=3$, and vary $n_{\text{hutch}}$ from $3$ to $129$ (multipliers of 3 due to our Hutch++-based formulation), with $n_{\text{hutch}}=39$ used in our main experiments.}
	\label{fig:n_hutch_two_modes}
\end{figure}

\begin{figure}[h!]
	\centering
	\includegraphics[width=0.95\linewidth]{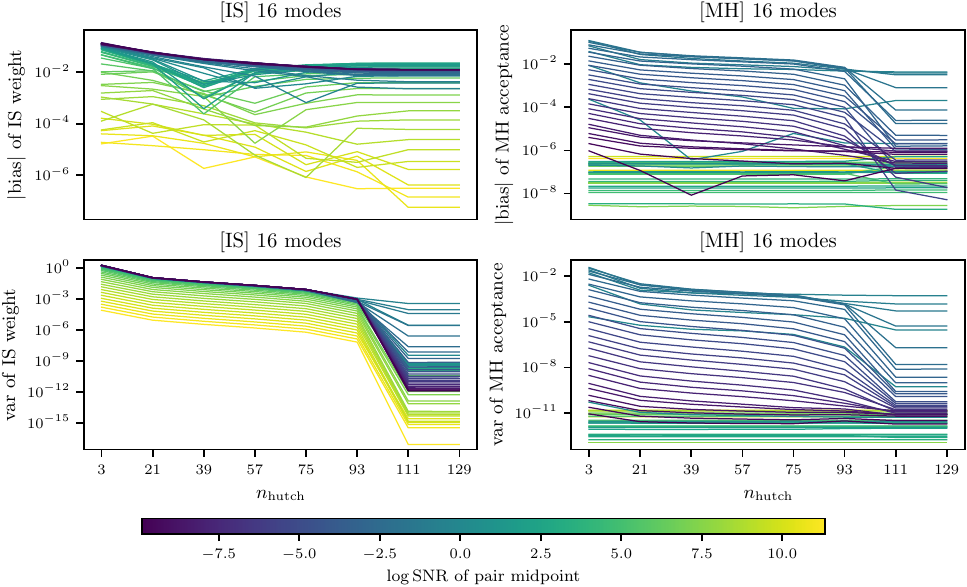}
	\vspace{-0.3cm}
	\caption{\textbf{Accuracy of the Hutchinson-based statistical estimates with respect to the number of auxiliary variables $n_{\text{hutch}}$} along the VP density path, evaluated on the intermediate \emph{ManyModes} target with $K=128$. The experimental setting and visualization convention is the same as in \Cref{fig:n_hutch_two_modes}, with the same conclusions.}
	\label{fig:n_hutch_many_modes}
	\vspace{-2cm}
\end{figure}

\end{document}